%% file: main.tex
\documentclass{article}

\PassOptionsToPackage{numbers, sort&compress}{natbib}

 \usepackage[preprint]{nips2026/neurips_2026}

\usepackage[utf8]{inputenc} 
\usepackage[T1]{fontenc}    
\usepackage[colorlinks=true,allcolors=perfblue]{hyperref}
\usepackage{url}            
\usepackage{booktabs}       
\usepackage{amsfonts}       
\usepackage{nicefrac}       
\usepackage{microtype}      
\usepackage{xcolor}         

\definecolor{perfblue}{RGB}{120, 140, 250}

\usepackage{amsmath}
\usepackage{amssymb}
\usepackage{mathtools}
\usepackage{amsthm}
\usepackage{float}
\input{notations}

\usepackage{wrapfig}
\usepackage{multirow}
\usepackage{subcaption}

\theoremstyle{plain}
\newtheorem{theorem}{Theorem}[section]
\newtheorem{proposition}[theorem]{Proposition}
\newtheorem{lemma}[theorem]{Lemma}

\theoremstyle{definition}

\theoremstyle{remark}
\newtheorem{remark}[theorem]{Remark}

\title{Extracting Algorithms in Pre-trained LLMs: \\A Case on Hidden Markov Models}

\author{
  Yijia~Dai\thanks{Correspondence to \texttt{yd73@cornell.edu}.} \quad Zhaolin~Gao \quad Yahya~Sattar \quad Jennifer~J.~Sun \quad Sarah~Dean\\\\
  Cornell University
}

\begin{document}

\maketitle

\begin{abstract}
   Large language models (LLMs) display a striking ability to predict next observations from Hidden Markov Models (HMMs) via in-context learning (ICL), but the algorithm underlying this capability remains undetermined: prior work has proposed several candidates without consensus, and none has been grounded in the model's internal activations. We close this gap with a three-stage pipeline. First, we empirically compare LLM behavior against a suite of candidate algorithms and narrow the space to three classes---though no single class explains LLM behavior across all HMM settings and sequence lengths. Second, we derive theoretical connections between the three classes and show how each can be implemented in-context by a Transformer, validating the construction in a small trained Transformer. Third, returning to pre-trained LLMs, we introduce the \textit{Principal Activations Probe} (PAP), a layer-wise probing and intervention method that isolates algorithmic signals in model activations. PAP reveals low-dimensional linear representations that causally drive model predictions and track empirical ICL performance. PAP further reveals how these representations shift with properties of the underlying HMM regime; distinct computational stages are localized to different layers. Together, our results connect the in-context behavior of pre-trained LLMs to the underlying internal mechanisms and advance our understanding of how LLMs perform ICL on HMMs.
\end{abstract}

\section{Introduction}
\label{sec:intro}
\input{introduction}

\section{In-context Learning on HMMs}
\label{sec:icl_on_hmms}
\input{section_icl_on_hmms}

\section{Transformers are Capable of Implementing Any of the Ansatzes}
\label{sec:theory}

In Section \ref{sec:llm_emp}, we found that the three classes of algorithms (\texttt{Spectral}, \texttt{Linear $n$-gram}, and \texttt{Non-linear $n$-gram}) closely match LLM predictions empirically. We refer to these as \emph{ansatzes} and show that each can, in principle, be implemented by a Transformer as in-context learning.

\subsection{Construction of Each Ansatz}
\label{sec:construction_ansatz}
\input{section_construction_ansatzes}

\subsection{Which Algorithm does a Small Trained Transformer use?}
\input{section_alg_trained_transformer}

\section{Probing Algorithmic Representations in LLMs Internal Activations}
\label{sec:llm_internal}
\input{section_llm_internal}

\section{Related Works}
\input{related_works}

\section{Limitations and Conclusions}


We study how pre-trained LLMs perform in-context sequence prediction on Hidden Markov 
Models, connecting observed behavior to mechanistic implementation through a three-stage 
pipeline.

\paragraph{Empirically.} No single classical algorithm matches LLM predictions across HMM regimes, but three algorithm classes---\texttt{Linear $n$-gram}, \texttt{Non-linear 
$n$-gram}, and \texttt{Spectral}---collectively span the observed behavior.

\paragraph{Theoretically.} Transformers can implement all three through a common 
nonlinear-feature construction; moreover, a small Transformer trained on a fixed HMM 
recovers the finite-window \texttt{Linear $n$-gram} predictor almost exactly.

\paragraph{Mechanistically.} Our Principal Activations Probe (PAP) reveals that the 
operative algorithmic representation shifts with the HMM regime. Grouping \texttt{Linear $n$-gram} and \texttt{Non-linear $n$-gram} into a unified \texttt{Soft $n$-gram} class, 
we find that $n$-gram feature learning is the most causally effective representation 
across all regimes: bigram-like statistics suffice when emissions are nearly deterministic, while the richer \texttt{Soft $n$-gram} representation drives prediction when genuine 
belief tracking is required.

The central conclusion is that LLMs do not implement statistically optimal, iterative 
inference. Instead, they appear to learn finite-window, gradient-descent-like 
approximations well-described by \texttt{Soft $n$-gram} statistics---even in regimes where longer-range integration would yield strictly lower cross-entropy loss.

Our analysis is restricted to synthetic HMMs with small finite alphabets; linear probing further limits what we can conclude about nonlinear or distributed representations. Extending PAP to natural-language tasks with latent structure and to circuit-level rather than layer-level analysis are natural next steps.

\clearpage
\bibliographystyle{plainnat}
\bibliography{reference}

\newpage
\appendix
\input{appendix}

\input{checklist}

\end{document}

%% file: notations.tex
\usepackage{amsmath,amssymb}
\usepackage[ruled]{algorithm2e}
\usepackage{algpseudocode}

\newcommand{\bgl}{{~\big |~}}
\renewcommand{\P}{\operatorname{\mathbb{P}}}

\newcommand{\bv}[1]{\mathbf{#1}}		

\newcommand{\vA}{{\bv{A}}}
\newcommand{\vB}{{\bv{B}}}

\newcommand{\vW}{{\bv{W}}}

\newcommand{\Gc}{{\mathcal{G}}}
\newcommand{\Kc}{{\mathcal{K}}}

\newcommand{\R}{\mathbb{R}}				

\usepackage{stackengine}
\usepackage{scalerel}

\newcommand{\onenorm}[1]{\left\|#1\right\|_{\ell_1}}

%% file: introduction.tex
Hidden Markov Models (HMMs) describe systems whose observations are governed by latent Markov chain dynamics, and arise broadly across the natural and behavioral sciences --- from animal decision-making to ecological and climate processes~\citep{glennie2023hidden,mcclintock2020uncovering,zucchini1991hidden}. Recent work has shown that pre-trained large language models (LLMs) can perform in-context learning (ICL) on Markov-structured sequences, approaching optimal predictors~\citep{edelman2024the,makkuva2025attention,rajaraman2024transformers}. The setting becomes considerably richer under HMMs, where the underlying Markov chain is unobserved and optimal prediction requires integrating noisy emissions to track a latent state. Given a single emission stream as a prompt, LLMs produce next-token predictions that approach the Bayes-optimal posterior as context grows, regularly outperforming classical inference algorithms applied to the same data~\citep{dai2026pretrained}.

How the model achieves this remains an open question. Several candidate mechanisms have been proposed for Transformer-based sequence modeling on Markovian and hidden-Markovian data, including spectral methods, belief-state tracking, and gradient-based feature learning~\citep{hu2024limitation,shai2024transformers,hao2025transformers}. Each is consistent with some subset of the observed empirical behavior, yet the field has not converged on which algorithm, if any, the LLM actually implements. Crucially, no proposal has been validated against the model's own internal activations.

We close this gap with a three-stage pipeline that connects in-context learning behavior to the underlying internal mechanisms on HMM prediction tasks. First, we systematically benchmark $12$ pre-trained LLMs against a suite of classical and learning-based HMM predictors across $75$ HMM configurations and a wide range of context lengths. No single baseline matches LLM behavior in every regime, but the comparison narrows the space of plausible mechanisms to three algorithm classes, which we call \emph{ansatzes}: Linear $n$-gram, Non-linear $n$-gram, and Spectral. Second, we ask whether Transformers are \emph{capable} of implementing these algorithms. We derive a unifying construction that expresses next-token probabilities as linear functions of nonlinear features built from past observations, and use it to give explicit Transformer implementations of all three classes; we verify that one such implementation -- a finite-window predictor -- arises naturally in a small Transformer trained on a single HMM. Third, we move from \emph{capability} to \emph{actual use}: we introduce the \emph{Principal Activations Probe} (PAP), a layer-wise probing-and-intervention pipeline that isolates low-dimensional algorithmic signals in the residual stream and tests whether they causally drive the model's predictions. PAP reveals that pre-trained LLMs recruit different algorithmic representations across different HMM regimes, and that distinct computational stages localize to different layers. PAP also shows that decodability and causal use must be tested separately: algorithmic information is present in earlier layers but bypassed by downstream computation.

\paragraph{Contributions.}
\begin{enumerate}
  \item We empirically compare in-context LLM predictions against classical HMM algorithms across a controlled set of HMM configurations, and show that no single algorithm class accounts for LLM behavior in all regimes, narrowing the space of plausible mechanisms to three candidate classes (Section~\ref{sec:icl_on_hmms}).
  \item We derive theoretical constructions showing that Transformers can implement each of the three candidate ansatzes in-context, unify them through a common nonlinear-feature representation, and validate the resulting predictions in small trained Transformers (Section~\ref{sec:theory}).
  \item We introduce the \emph{Principal Activations Probe} (PAP), a layer-wise probing and causal-intervention pipeline that isolates low-dimensional algorithmic signals in the residual stream, demonstrates their causal effect on predictions, and reveals a link between the model's internal strategy and the underlying HMM regime (Section~\ref{sec:llm_internal}).
\end{enumerate}

\begin{figure}
  \centering
  \vspace{-1em}
  \includegraphics[width=\textwidth]{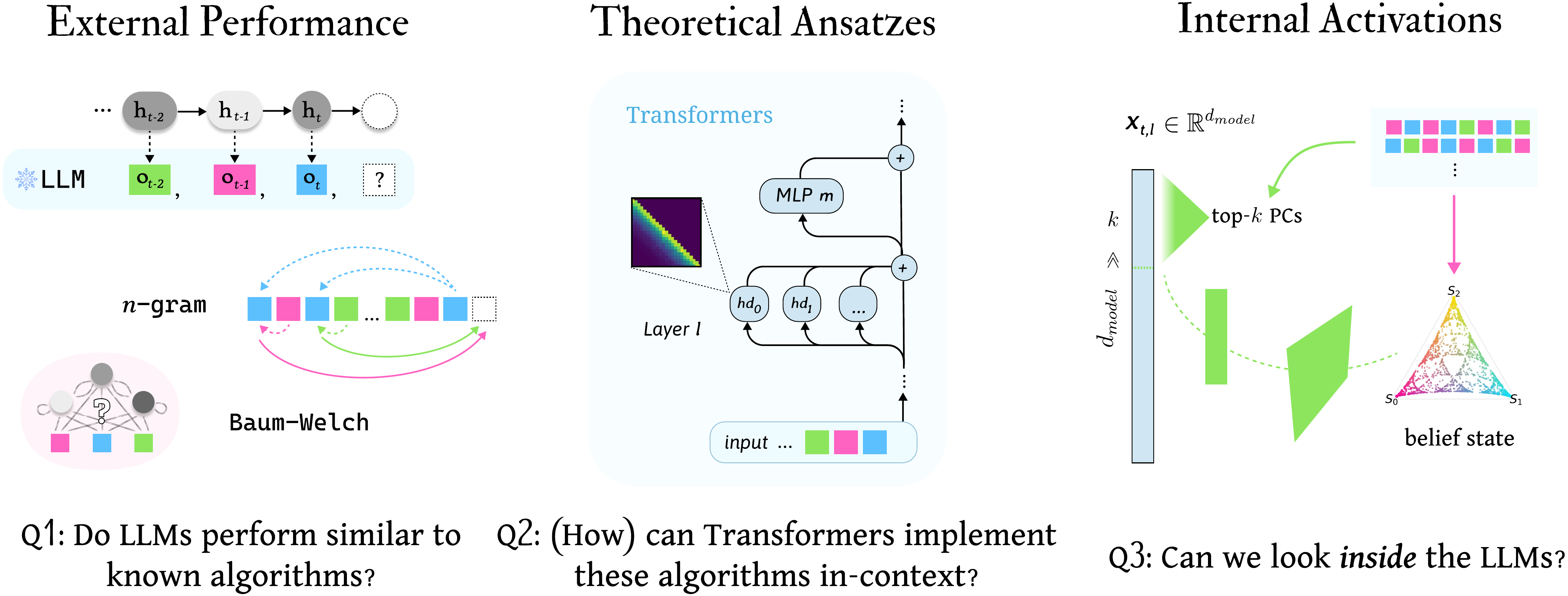}
  \caption{\textbf{Overview of our study.}}
  \label{fig:main_figure}
  \vspace{-1em}
\end{figure}

%% file: section_icl_on_hmms.tex
We study how pre-trained LLMs perform in-context learning on synthetic HMM sequences.
We first review key HMM properties (Section~\ref{sec:background}) and theoretical algorithms (Section~\ref{sec:alg_descriptions}).
We then empirically demonstrate that the performance of LLMs consistently converges to the theoretical optimum, with nuances when compared against theoretical algorithms (Section~\ref{sec:llm_emp}).

\subsection{HMM Background}
\label{sec:background}
\paragraph{Hidden Markov Model.}

At each time step $t$, an HMM's \emph{hidden state} $H_t \in\mathcal H$ emits an \emph{observation} $O_t$ from an \emph{emission distribution}, then transitions to $H_{t+1}$ according to \emph{transition probabilities}. Implicit in this description are three standard assumptions: the \emph{Markov property}, that each state depends only on its immediate predecessor; \emph{output independence}, that each observation depends only on the current hidden state; and \emph{stationarity}, that neither the transition nor the emission probabilities depend on $t$.

In the finite-alphabet setting, take $\mathcal{H} = \{1, \ldots, M\}$ and $\mathcal{O} = \{1, \ldots, N\}$. An HMM is specified by $\boldsymbol{\lambda} = (\boldsymbol{\pi}, \vA, \vB)$, where $\boldsymbol{\pi} \in \mathbb{R}^M$ is the initial state distribution, $\vA \in \mathbb{R}^{M \times M}$ the transition matrix, and $\vB \in \mathbb{R}^{M \times N}$ the emission matrix.

Under standard conditions, the chain converges to a unique \emph{stationary distribution} $\boldsymbol{\mu} = \boldsymbol{\mu}\vA$ \citep{1003838}, governing long-run behavior of the hidden state and central to prediction and parameter learning. The rate of convergence is the \emph{mixing rate}: the hidden-state distribution approaches $\boldsymbol{\mu}$ geometrically, with smaller rate meaning faster mixing. For finite-alphabet HMMs this rate equals $\lambda_2$, the second-largest eigenvalue of $\vA$; we restrict to transition matrices with real positive eigenvalues (Appendix~\ref{app:background}). 

To quantify where an HMM sits on the deterministic-to-random spectrum, we use normalized entropy
\vspace{-6pt}
\[
\mathcal{E}(\vA) = -\frac{1}{\log M}\sum_{i,j} \mu_i a_{ij} \log a_{ij}, \qquad \mathcal{E}(\vB, \boldsymbol{\mu}) = -\frac{1}{\log N}\sum_{j,k} \mu_j b_{jk} \log b_{jk},
\]
measuring stationary-distribution uncertainty in the next hidden state and in the emission; both lie in $[0,1]$, with smaller values indicating more predictable dynamics.

\begin{figure}
  \centering
  \includegraphics[width=\textwidth]{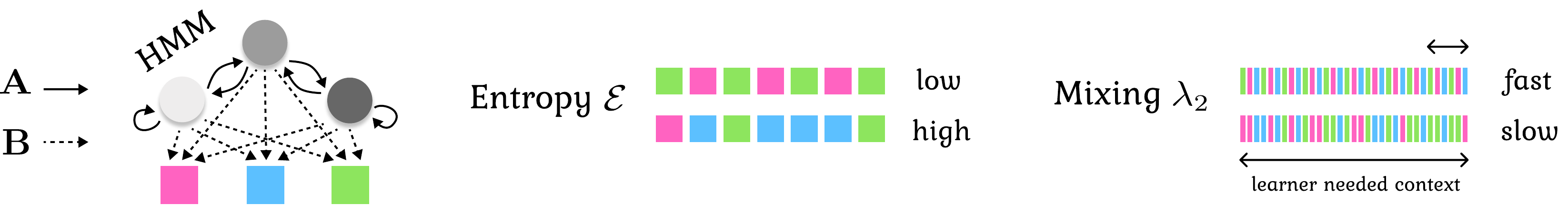}
  \caption{Properties of HMMs that impact the difficulty of next observation prediction task.}
  \label{fig:hmm_property}
  \vskip -0.5em
\end{figure}

\begin{wrapfigure}{r}{0.32\textwidth}
  \centering
  \vspace{-1.2em}
  \includegraphics[width=0.32\textwidth]{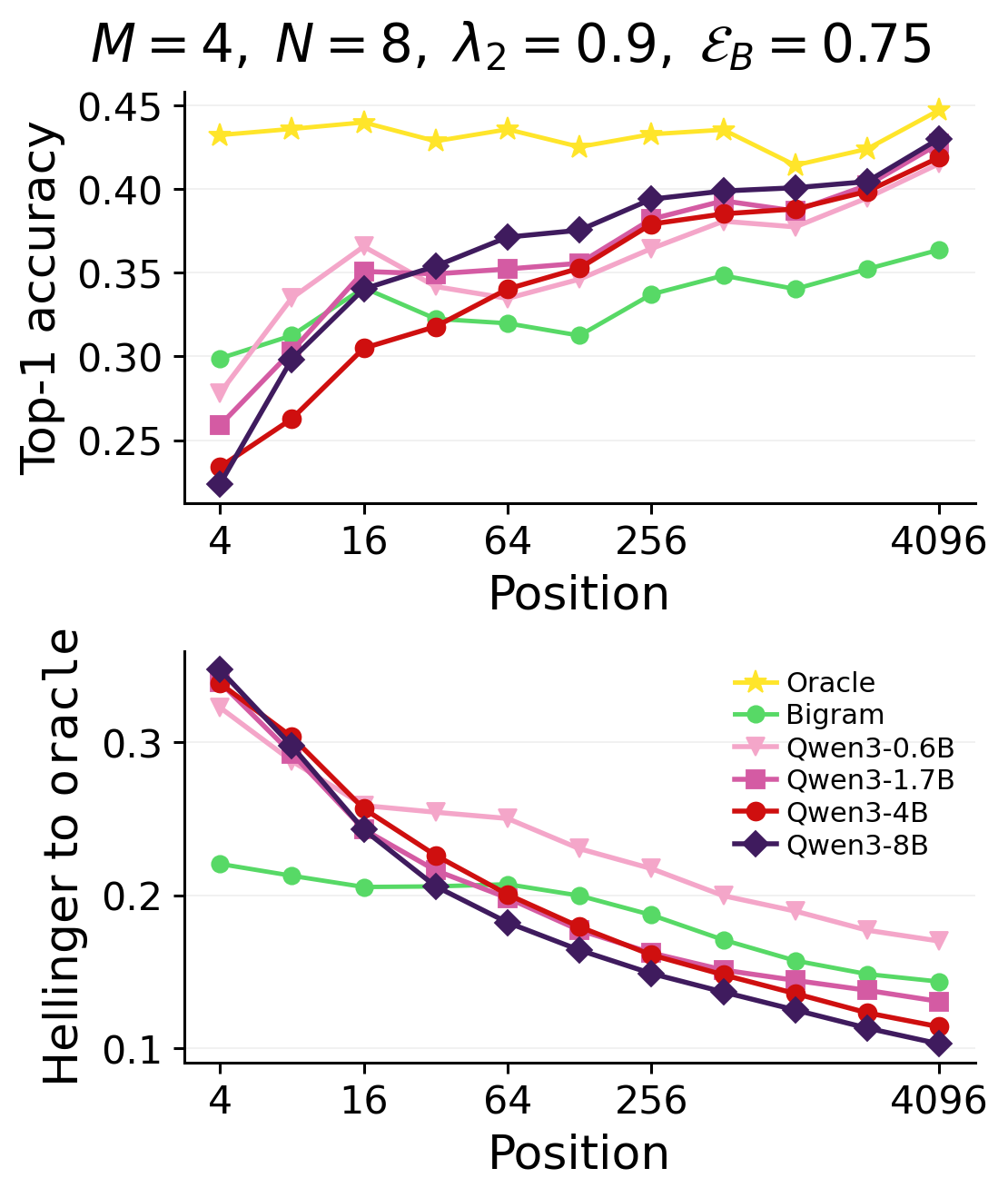}
  \caption{An example of LLMs converging to $\texttt{Oracle}$. \texttt{Bigram} is added for comparison, as it's the theoretical optimal learner for Markov sequences.} 
  \label{fig:llm_optimal}
  \vspace{-1.2em}
\end{wrapfigure}

\subsection{Theoretical Algorithms}
\label{sec:alg_descriptions}

\begin{table}
\centering
\small
\begin{tabular}{@{}ll@{}}
\toprule
Class & Description \\
\midrule
Oracle                 & Bayes-optimal posterior using the true HMM parameters (forward recursion) \\
Spectral                        & Observation operators from empirical low-order moments \\
Linear $n$-gram                 & Linear predictor on the last $n$ observations with one-hot features \\
Non-linear $n$-gram             & Linear predictor on the Kronecker feature of the last $n$ observations \\
Baum--Welch        & Expectation-Maximization (EM) for HMM parameter estimation \\
\bottomrule
\end{tabular}
\vspace{0.5em}
\caption{Algorithmic classes benchmarked for HMM next-observation prediction. Class variants and full implementation details are in Appendix~\ref{app:bench}.}
\label{tab:bench}
\vskip -0.7cm
\end{table}

We benchmark pre-trained LLMs against four classes of HMM predictors, with full specifications in Appendix~\ref{app:bench} and a summary in Table~\ref{tab:bench}. The \texttt{Oracle} serves as an upper bound, accessing the true parameters $(\boldsymbol{\pi}, \mathbf{A}, \mathbf{B})$ and returning the Bayes-optimal next-observation distribution via the forward recursion. \texttt{Spectral} methods estimate empirical uni-, bi-, and trigram statistics and combine them into observation operators~\citep{hsu2012spectral,ma2023bridging}; the two variants \texttt{Norm} and \texttt{SVD} differ in how the bigram statistic is inverted. \texttt{Linear $n$-gram} methods fit a linear map from the one-hot encoding of the last $n$ observations to the next-token distribution, with three variants: \texttt{Gradient CE} (OGD on cross-entropy), \texttt{Gradient MSE} (OGD on squared error), and \texttt{Ridge MSE} (ridge regression). \texttt{Non-linear $n$-gram} methods exploit Lemma~\ref{lemma:kron_prediction_main}, which shows that the Bayes-optimal distribution is linear in the Kronecker product of the last $n$ one-hot observations; we benchmark an explicit form \texttt{Kron} and a kernelized form \texttt{Kernel} that avoids the exponential-in-$n$ blowup in feature dimension. Finally, \texttt{BW(EM)}~\citep{baum1970maximization} represents a qualitatively different paradigm: rather than predicting directly from observations, it uses EM to iteratively recover the HMM parameters given the number of hidden states.

\subsection{Empirical Comparisons}
\label{sec:llm_emp}

\paragraph{Experimental Setup.}
For each HMM configuration we sample observation sequences $\mathbf{o}_{1:T}$ from $\boldsymbol{\lambda} = (\boldsymbol{\pi}, \vA, \vB)$ and evaluate next-observation prediction $o_{t+1}$ given $\mathbf{o}_{1:t}$.

We fix $M = 4$ and vary three controls across $75$ configurations: transition entropy $\mathcal{E}(\vA) \in \{0,\,0.25,\,0.5,\,0.75,\,1\}$ (inducing mixing rates $|\lambda_2(\vA)| \in \{1, 0.9, 0.75, 0.5, 0\}$); emission entropy $\mathcal{E}(\vB) \in \{0,\,0.25,\,0.5,\,0.75,\,1\}$ (from injective when $N \geq M$ to uniform); and alphabet size $N \in \{2, 4, 8\}$, where $N < M$ induces hidden-state aliasing. We set $\boldsymbol{\pi} = \boldsymbol{\mu}$. Per configuration, we sample $4{,}096$ sequences of length $4{,}097$ and evaluate at context lengths $\{4, 8, \ldots, 4096\}$.

We report two metrics, averaged over $4{,}096$ samples per HMM. \emph{Accuracy} is the fraction of positions whose argmax prediction matches the truth. The Hellinger distance between predicted next-token distributions plays two roles: \emph{Hellinger-to-oracle} measures distance to the Bayes-optimal posterior; \emph{Hellinger-to-baseline} measures algorithmic similarity to each baseline.

We evaluate $12$ open-weight pre-trained LLMs from the Qwen, Llama, Gemma, and OLMo families ($0.6$B--$8$B parameters), without any fine-tuning. Each model sees raw observation tokens via a fixed alphabet-to-vocabulary map; at every position, we restrict the next-token logits to the observation vocabulary and renormalize via softmax.

\begin{wrapfigure}{l}{0.7\textwidth}
  \centering
  \vspace{-1.2em}
  \includegraphics[width=0.7\textwidth]{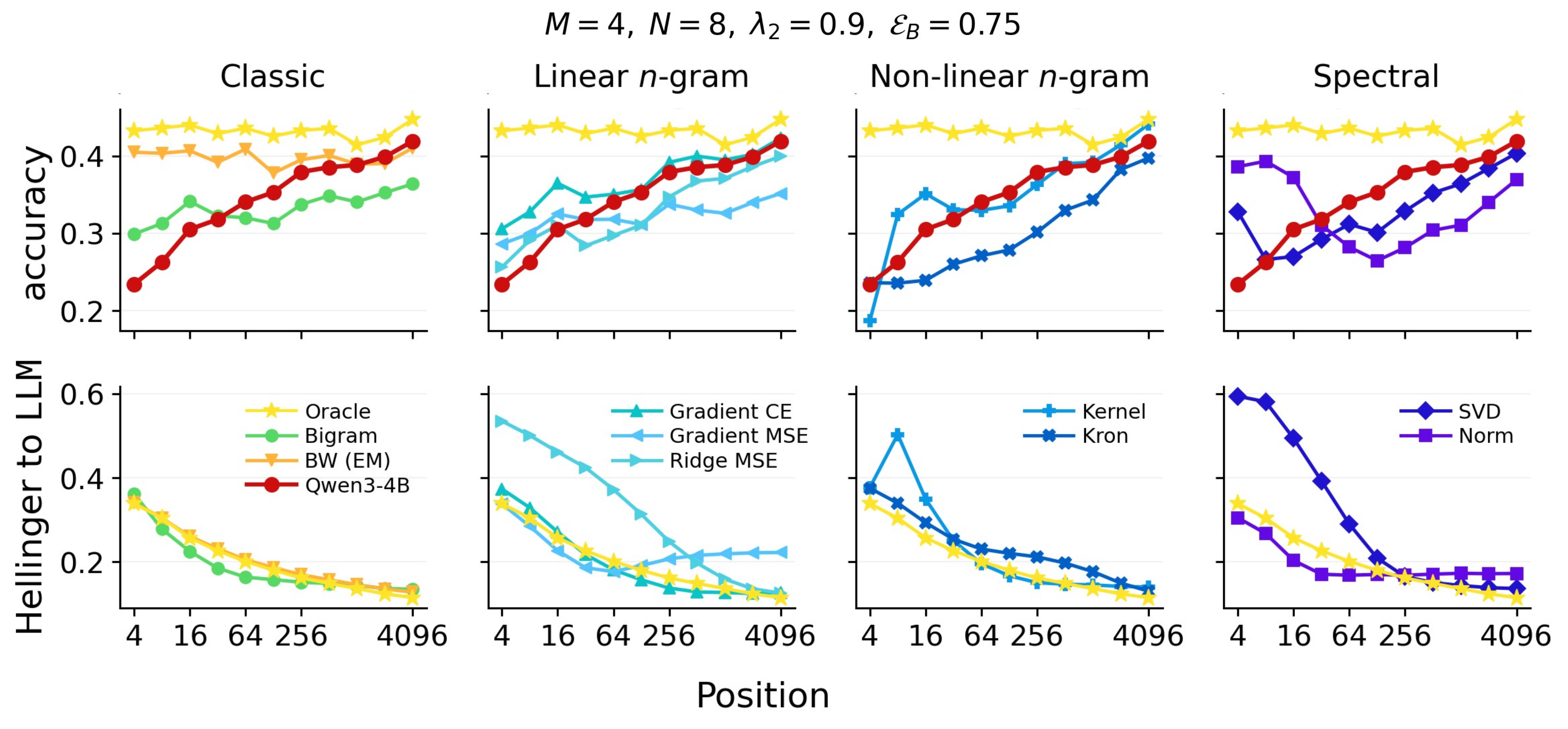}
  \caption{An example HMM setting, showing how LLMs empirically compare with other algorithms. Accuracy $\uparrow$, Hellinger $\downarrow$.}
  \label{fig:llm_empirical}
  \vspace{-1.8em}
\end{wrapfigure}

\paragraph{LLMs converge to Bayes-optimal predictor~\citep{dai2026pretrained}.} Across HMM configurations, LLM predictions approach \texttt{Oracle} as context length grows, and this trend is consistent across model sizes (Figure~\ref{fig:llm_optimal}). Appendix \ref{app:llm_emp_perf} provides analysis of more model families and HMM configurations, including a discussion of cases where convergence fails or unexpected behavior arises.

\paragraph{No single algorithm explains LLM behavior across all regimes.} When compared against the baselines, no method consistently matches LLM predictions across all HMM configurations and context lengths (Figure~\ref{fig:llm_empirical}). Among the baselines, \texttt{BW(EM)} (Baum--Welch) is the one method that can be confidently ruled out: its strong performance at short context lengths reflects a fundamentally different learning mechanism, namely explicit parameter estimation, rather than the gradual in-context generalization characteristic of LLMs. We therefore exclude it from the subsequent theoretical analysis. See Appendix~\ref{app:baseline_compare} for detailed comparisons across all baseline classes.

%% file: section_construction_ansatzes.tex


In this section, we show that Transformers can implement these ansatzes (described in Section \ref{sec:alg_descriptions}) to predict the probability vector $q_{t+1} := \P\left(O_{t+1} \bgl O_{1:t} = o_1, o_2, \dots, o_t\right)$. We begin with the setting in which the HMM parameters $(\boldsymbol{\pi}, \vA, \vB)$ are known, and show that $q_{t+1}$ can be expressed in terms of novel non-linear features constructed from the one-hot vectors of past observations.

\begin{lemma}\label{lemma:kron_prediction_main}
Consider an HMM specified by $\boldsymbol{\lambda} = (\boldsymbol{\pi}, \vA, \vB)$. Let $\vA_k := \operatorname{diag} \left( \vB[:,k] \right)\vA^\top$ for all $k\in[N]$, where $\vB[:,k]$ denotes the $k$-th column of the emission matrix $\vB$. Let 
\vspace{-6pt}
\begin{align}
\phi_t^{(n)}
:=
u_t\otimes u_{t-1}\otimes \cdots \otimes u_{t-n+1}\in\mathbb{R}^{N^n}, \quad \psi_t^{(n)} := [u_{t}^\top~~u_{t-1}^\top~~\cdots~~u_{t-n+1}^\top]^\top \in \R^{nN}\label{eqn:kron_features_main}
\end{align}
denote a nonlinear and the corresponding linear feature vector, constructed from $n$ past observations, for $u_t\in \mathbb R^N$ one-hot vector encoding $o_t$.
Then, there exists a matrix $\Gc_{t,n}\in \mathbb{R}^{N\times N^n}$ such that 
$q_{t+1} = \Gc_{t,n}\,\phi_t^{(n)}$. Moreover, there exists matrices $\vW_{lin} \in \mathbb{R}^{nN \times N^n}$, $\vW_{kron} \in \mathbb{R}^{N^n \times nN}$, and a vector $v \in \R^{N^n}$ such that, $\psi_t^{(n)} = \vW_{lin} \phi_t^{(n)}$, and $\phi_t^{(n)} = \mathrm{ReLU}( \vW_{kron} \psi_t^{(n)} + v)$. 
    where $\mathrm{ReLU}(x):= x^+ = \max(0,x)$, and is applied entry-wise to vectors.
\end{lemma}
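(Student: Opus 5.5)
The plan is to exploit the fact that $\phi_t^{(n)}$ is itself a one-hot vector in $\R^{N^n}$. Its single nonzero entry sits at the multi-index $(o_t, o_{t-1},\dots,o_{t-n+1})$. Any function of the last $n$ observations, with the earlier prefix held fixed, is therefore linear in $\phi_t^{(n)}$. The matrix $\Gc_{t,n}$ is indexed by $t$, and I read it as allowed to depend on the prefix $o_{1:t-n}$; the whole argument rests on that reading.

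\textbf{First claim: $q_{t+1} = \Gc_{t,n}\phi_t^{(n)}$.} I would write the forward recursion in the notation of the statement. Let $\alpha_0 := \boldsymbol{\pi}$ and, for $s\ge 1$,
\[
\alpha_s := \operatorname{diag}(\vB[:,o_s])\,\vA^\top \alpha_{s-1},
\]
the unnormalized filtering vector. In the indexing convention where $\boldsymbol{\pi}$ is the distribution of $H_1$, the first step uses $\operatorname{diag}(\vB[:,o_1])\boldsymbol{\pi}$ instead; this is a harmless bookkeeping change. The prediction is then
\[
q_{t+1} = \frac{\vB^\top \vA^\top \alpha_t}{\mathbf 1^\top \alpha_t},
\qquad
\alpha_t = \vA_{o_t}\vA_{o_{t-1}}\cdots\vA_{o_{t-n+1}}\,\alpha_{t-n}.
\]
The vector $\alpha_{t-n}$ depends only on $o_{1:t-n}$. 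For each multi-index $\kappa = (k_1,\dots,k_n)\in[N]^n$, I set the column $\kappa$ of $\Gc_{t,n}$ to
\[
g_\kappa := \frac{\vB^\top \vA^\top \vA_{k_1}\cdots \vA_{k_n}\alpha_{t-n}}{\mathbf 1^\top \vA_{k_1}\cdots\vA_{k_n}\alpha_{t-n}}.
\]
If the denominator vanishes, that window has probability zero and I set $g_\kappa$ to an arbitrary distribution. Because $\phi_t^{(n)} = e_{(o_t,\dots,o_{t-n+1})}$, we get $\Gc_{t,n}\phi_t^{(n)} = g_{(o_t,\dots,o_{t-n+1})} = q_{t+1}$.

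The step I expect to need the most care is this one. The normalization $\mathbf 1^\top\alpha_t$ makes $q_{t+1}$ a nonlinear function of the window, so $\Gc_{t,n}$ cannot be a fixed product of the $\vA_k$. The device that resolves this is to absorb the nonlinearity column by column, which is legitimate only because $\phi_t^{(n)}$ is one-hot. Two further points need stating. First, $\Gc_{t,n}$ depends on the prefix through $\alpha_{t-n}$. Second, when $t<n$ the window can be padded with a fixed dummy symbol, or $n$ can be truncated to $t$.

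\textbf{Second claim, $\vW_{lin}$.} Write $\mathbf 1_N$ for the all-ones vector in $\R^N$. The block of $\vW_{lin}$ that produces $u_{t-i+1}$ is
\[
\mathbf 1_N^\top\otimes\cdots\otimes I_N\otimes\cdots\otimes\mathbf 1_N^\top,
\]
with $I_N$ in the $i$-th factor. By the mixed-product rule this block sends $\phi_t^{(n)}$ to
\[
\prod_{j\neq i}\bigl(\mathbf 1_N^\top u_{t-j+1}\bigr)\cdot u_{t-i+1} = u_{t-i+1},
\]
since every $u$ is one-hot and so $\mathbf 1_N^\top u = 1$. Stacking the $n$ blocks gives $\psi_t^{(n)}$.

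\textbf{Second claim, $\vW_{kron}$ and $v$.} Let the row of $\vW_{kron}$ indexed by $\kappa=(k_1,\dots,k_n)$ be
\[
\bigl[e_{k_1}^\top~~e_{k_2}^\top~~\cdots~~e_{k_n}^\top\bigr],
\]
and set $v := -(n-1)\mathbf 1_{N^n}$. The $\kappa$-th pre-activation is then
\[
\sum_{i=1}^{n} u_{t-i+1}[k_i] - (n-1).
\]
Each term in the sum is $0$ or $1$, so the pre-activation equals $1$ exactly when the window matches $\kappa$ and is at most $0$ otherwise. After the ReLU this yields the indicator of $\kappa$, which is exactly the $\kappa$-th entry of $\phi_t^{(n)}$.
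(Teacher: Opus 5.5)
Your proof is correct. The second claim matches the paper exactly: your rows of $\vW_{kron}$ and the bias $v=-(n-1)\mathbf{1}$ are the paper's choices. Your mixed-product description of $\vW_{lin}$ is the same matrix the paper defines column by column, where the column indexed by $(k_1,\dots,k_n)$ is the stack of $e_{k_1},\dots,e_{k_n}$.

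The first claim is handled differently. The paper unrolls the normalized recursion and expands $\prod_{j}\bigl(\sum_k u_{t-j}[k]\vA_k\bigr)$ multilinearly. It then reads off the columns $\vB^\top\vA^\top\vA_{k_0}\cdots\vA_{k_{n-1}}b_{t-n}/\alpha_{t,n}$ with one common scalar $\alpha_{t,n}$. That scalar is the product of the normalizers along the \emph{realized} window, so it depends on $o_{t-n+1:t}$. Consequently the paper's $\Gc_{t,n}$ depends on the very feature it multiplies, and only the selected column is a normalized distribution.

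Your per-column normalization removes this, and it is legitimate exactly because $\phi_t^{(n)}$ is one-hot. Your $\Gc_{t,n}$ depends only on $o_{1:t-n}$, and every column is the conditional law of $O_{t+1}$ given the prefix and the window $\kappa$. The realized column agrees with the paper's, because the product of step normalizers telescopes to $\mathbf{1}^\top\vA_{o_t}\cdots\vA_{o_{t-n+1}}b_{t-n}$.

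This gives a stronger and more meaningful form of the statement. It also better supports the paper's later reading of the lemma as a linear prediction problem in $\phi_t^{(n)}$. The paper's expansion, for its part, makes the multilinear structure in the $u$'s explicit before normalization. You also handle $t<n$ and zero-probability windows, which the paper leaves implicit.
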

The proof of Lemma~\ref{lemma:kron_prediction_main} is deferred to the Appendix~\ref{app:$n$-gram_prediction}. Lemma~\ref{lemma:kron_prediction_main} implies that, when $(\boldsymbol{\pi}, \vA, \vB)$ are unknown, the prediction problem reduces to learning a linear operator. Concretely, we can treat the observed sequence $\{o_t\}_{t \geq n}$ as training data and minimize the cross-entropy loss of predicting $o_{t+1}$ from either the nonlinear feature $\phi_t^{(n)}$ or the linear feature $\psi_t^{(n)}$. Since both mappings from feature to prediction are linear, this amounts to fitting a single linear matrix via standard regression or online gradient descent. Our next result shows that Transformers can learn the features $\phi_t^{(n)}$ and $\psi_t^{(n)}$ from raw observation sequences (one-hot), and can emulate GD algorithm to learn a linear map from $\phi_t^{(n)}$ and $\psi_t^{(n)}$ to the probability distribution $q_{t+1}$.
\begin{theorem}\label{thrm_CE_GD_main}
Consider the same setup of Lemma~\ref{lemma:kron_prediction_main}. Furthermore, consider a dataset
    \begin{align}
        \mathcal{D}_m=\{(\mathfrak{h}_t,y_t)\}_{t=n}^{n+m-1},\qquad
        \mathfrak{h}_t=(u_{t},u_{t-1},\ldots,u_{t-n+1}),
        \qquad y_t\in \{e_1,\ldots,e_N\}. \label{eqn:dataset}
    \end{align}
     Let $\vW\in \mathbb{R}^{N\times N^n}$ be the parameter of multinomial logistic regression, $q_\vW(\cdot\mid \phi)=\operatorname{softmax}(\vW\phi)$, 
    trained by full-batch gradient descent on the empirical cross-entropy loss
    \vspace{-6pt}
    \[
        \mathcal{L}_m(\vW)
        :=
        \frac{1}{m}\sum_{t=n}^{n+m-1}
        \ell(\vW\phi_{t}^{(n)},y_t),
        \qquad
        \ell(z,y):=-y^\top \log \operatorname{softmax}(z),
    \]
    If the number of GD iterations satisfies $T=\mathcal{O}(\log(n))$, then there exists a
    Transformer of depth $\mathcal{O}(\log(n))$ that emulates $T$ steps of full-batch GD on the CE loss of the logistic-regression model $q_\vW(\cdot\mid \phi)=\operatorname{softmax}(\vW\phi)$.
\end{theorem}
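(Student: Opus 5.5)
The construction splits into two phases: a feature-construction phase that builds $\phi_t^{(n)}$ at every position, and a GD-emulation phase that performs $T$ steps of full-batch gradient descent on $\mathcal{L}_m$ in context. Each token $t$ enters with embedding $u_t$ (one-hot $o_t$) plus positional information, and the residual stream reserves slots for $\psi_t^{(n)}$, $\phi_t^{(n)}$, the one-hot label $y_t = u_{t+1}$, the current parameter iterate $\vW^{(s)}$ (flattened, or kept implicitly as in the linear-attention GD constructions), and scratch space.

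\textbf{Step 1 (gathering the window, depth $\mathcal O(\log n)$).} Obtain $\psi_t^{(n)}=[u_t^\top\cdots u_{t-n+1}^\top]^\top$ by a doubling scheme. At level $\ell$, each token already holds the block $(u_t,\dots,u_{t-2^{\ell}+1})$. One attention layer with relative-position (e.g. sinusoidal/ALiBi-type) scores attends sharply to position $t-2^\ell$ and copies that token's block into the next $2^\ell$ slots. After $\lceil\log_2 n\rceil$ layers every token holds $\psi_t^{(n)}$. One further attention head copies $u_{t+1}$ backward to obtain $y_t$; under causal masking, this is done instead by letting token $t+1$ hold the pair $(\psi_t^{(n)},u_{t+1})$. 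Next, apply the MLP realization from Lemma~\ref{lemma:kron_prediction_main}: $\phi_t^{(n)}=\mathrm{ReLU}(\vW_{kron}\psi_t^{(n)}+v)$. Concretely, the entry indexed by $(k_1,\dots,k_n)$ is $\mathrm{ReLU}(\sum_i u_{t-i+1}[k_i]-(n-1))$. This is exact on one-hot inputs, so a single MLP layer suffices.

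\textbf{Step 2 (one GD step per $\mathcal O(1)$ layers).} The gradient is
\[
\nabla\mathcal L_m(\vW)=\tfrac1m\sum_t\bigl(\operatorname{softmax}(\vW\phi_t)-y_t\bigr)\phi_t^\top .
\]
Each token first computes $\vW^{(s)}\phi_t$. Because $\phi_t$ is one-hot in $\R^{N^n}$, this is a column lookup, implementable with an MLP or a bilinear attention read. The token then computes $p_t=\operatorname{softmax}(\vW^{(s)}\phi_t)$. Softmax over $N$ coordinates is produced exactly by a softmax-attention head whose keys encode the $N$ logits and whose values are the basis vectors; alternatively, an MLP approximation to arbitrary precision can be used. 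Next, a linear (or uniformly averaging softmax) attention head whose values carry $(p_t-y_t)$ and whose keys/queries carry $\phi_t$ aggregates the outer-product sum $\frac1m\sum_t(p_t-y_t)\phi_t^\top$ into every position, following the standard ``Transformers learn in context by GD'' construction (von Oswald et al.; Akyürek et al.; Bai et al.). Finally, the residual connection performs $\vW^{(s+1)}=\vW^{(s)}-\eta\nabla\mathcal L_m$. Stacking $T=\mathcal O(\log n)$ such blocks gives total depth $\mathcal O(\log n)+\mathcal O(T)=\mathcal O(\log n)$. The prediction at the query token is read off as $\operatorname{softmax}(\vW^{(T)}\phi_t)$.

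\textbf{Main obstacle.} The difficulty lies in Step 2, specifically in representing and updating a parameter of size $N\times N^n$ inside the residual stream while computing the nonlinear softmax residual exactly rather than approximately. My first approach is to avoid storing $\vW$ explicitly. Since $\vW^{(s)}$ is always of the form $\vW^{(0)}-\sum_{t}c_t\phi_t^\top$ with vector coefficients $c_t$, we have $\vW^{(s)}\phi_{t'}=\vW^{(0)}\phi_{t'}-\sum_t c_t\langle\phi_t,\phi_{t'}\rangle$. This is exactly what a linear-attention head computes with keys and queries equal to $\phi$. Each token therefore tracks only its own running logit vector $z_{t'}^{(s)}$, updated via $z^{(s+1)}_{t'}=z^{(s)}_{t'}-\frac{\eta}{m}\sum_t(p_t^{(s)}-y_t)\langle\phi_t,\phi_{t'}\rangle$. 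This keeps the width polynomial in $N$ plus $N^n$ for $\phi$, and leaves the softmax as the only nonlinear step; that step is handled by an MLP approximation with error bounds propagated through the $T$ steps using Lipschitzness of the GD map.
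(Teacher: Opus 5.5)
Your proposal is correct but takes a different route from the paper. You run the generic in-context-GD construction in function space: each token carries its own logit vector $z_{t'}^{(s)}$, and each GD step is one linear-attention pass with kernel $\langle\phi_t,\phi_{t'}\rangle$ plus an approximate softmax. The $\mathcal{O}(\log n)$ then comes from gathering the window, and each of the $T$ steps costs $\mathcal{O}(1)$ layers.

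The paper instead uses the one-hotness of $\phi_t^{(n)}$ to decouple GD column-wise. The query column obeys the closed $N$-dimensional recursion $\vW[:,\alpha_q]^{(s+1)}=\vW[:,\alpha_q]^{(s)}-\frac{\eta}{m}\bigl(N_{\alpha_q}\operatorname{softmax}(\vW[:,\alpha_q]^{(s)})-c_{\alpha_q}\bigr)$, which sees the context only through the match count $N_{\alpha_q}$ and the label counts $c_{\alpha_q}$. These are aggregated once (history matching in $\mathcal{O}(\log n)$ layers, a ReLU threshold, one attention sum). The $T$ steps are then pure MLP applications of the update map at a single token.

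Your kernel is exactly the paper's selector $\mathbf{1}\{\alpha_t=\alpha_{t'}\}$, and tokens in the same history class share a logit. So your dynamics collapse to that recursion; you just don't exploit it.

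What the paper's reduction buys is that the GD phase needs no attention and no per-token state, and only the query token ever aggregates, which lets it run under causal masking. Your Step 2, by contrast, silently needs bidirectional attention, as in the ICL-GD constructions you cite. It writes full-batch sums ``into every position'', and the query's update reads other tokens' $p_t^{(s)}$, which under a causal mask would be built from prefix-only updates for $s\ge 1$. This is harmless for the theorem as stated. It is fixable by substituting the query's own $p_q^{(s)}$ for the matching tokens' values, which is precisely the paper's observation.

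What your route buys is generality: it never needs the feature to be one-hot. It therefore applies verbatim to the linear feature $\psi_t^{(n)}$, with kernel $\langle\psi_t,\psi_{t'}\rangle$ and width only $nN$. The main text claims Transformers can fit this case by GD, but the paper's column-decoupling proof does not cover it, since a weight acting on $\psi_t^{(n)}$ has coupled columns.
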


The proof of Theorem~\ref{thrm_CE_GD_main} is deferred to Appendix~\ref{app:$n$-gram_prediction}.
Our next result shows that a sufficiently deep Transformer can also emulate the spectral learning
algorithm in-context, with proof deferred to Appendix~\ref{app:spectral}.

\begin{remark}\label{lemma:spectral}
Consider the setup of Lemma~\ref{lemma:kron_prediction_main} and the dataset in
\eqref{eqn:dataset}. The spectral learning algorithm (Appendix~\ref{app:bench:spectral})
estimates low-order moment statistics $P_1, P_2, P_3$ from the observation prefix, constructs
observation operators $B_o := P_2^{-1}P_3(o)$ for each $o \in \mathcal{O}$, and predicts
$O_{t+1}$ by recursively updating a belief state via $b_t \propto B_{o_t} b_{t-1}$.
By combining several existing results (see Appendix~\ref{app:spectral}), there exists a
Transformer of depth $\mathcal{O}(\log(n+m))$ that emulates this algorithm in-context.
\end{remark}


%% file: section_alg_trained_transformer.tex






When a Transformer is trained directly on sequences from a single fixed HMM, the transition and emission structure is shared across all training sequences, so the model need not learn a general-purpose in-context inference procedure. This adjacent and arguably easier task (the model can memorize the HMM parameters rather than learning it in-context) lets us probe what type of algorithm the Transformer architecture naturally converges to.

\begin{wrapfigure}{l}{0.28\textwidth}
    \vspace{-1em}
    \centering
    \includegraphics[width=0.28\textwidth]{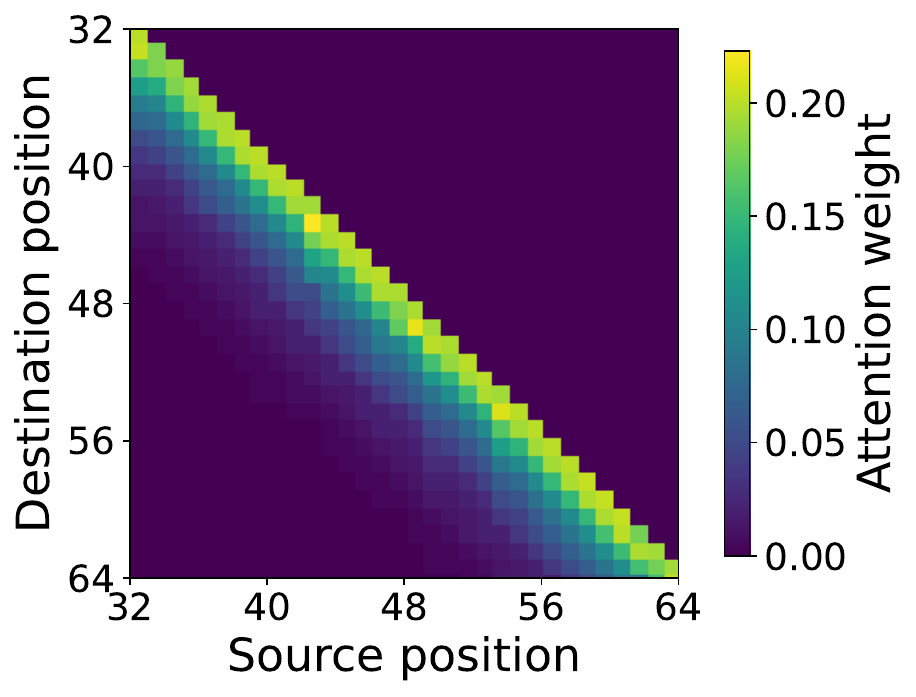}
    \vspace{-1.5em}
    \caption{Attention pattern for a small Transformer.}
    \label{fig:attention_pattern}
    \vspace{-1em}
\end{wrapfigure}

We first identify the smallest Transformer that reliably solves the task by ablating over layers, heads, model dimension, and feedforward dimension (Appendix~\ref{app:1_hmm_transformer}), following the trained-HMM setup of~\citet{hu2024limitation}. For a $3$-state HMM with $M = N = 3$, $\mathcal{E}(\mathbf{A}) = 0.25$, and $\mathcal{E}(\mathbf{B}) = 0.75$, a single layer with one attention head, model dimension $64$, and feedforward dimension $256$ proves sufficient. This is notably more efficient than the $\mathcal{O}(\log(n + m))$ layers required for a Transformer to implement the Spectral algorithm from theoretical operators without in-context learning~\citep{hu2024limitation}.


Figure~\ref{fig:attention_pattern} visualizes the learned attention pattern. The attention mass is concentrated on a small number of recent tokens rather than being spread over the full history, which is consistent with a finite-window prediction mechanism --- namely, the linear and non-linear $n$-gram classes of algorithms.

Lemma~\ref{lemma:kron_prediction_main} implies that the optimal logits are given by a linear map applied to the finite-window feature $z_t := \psi_t^{(n)}$ or $\phi_t^{(n)}$. This motivates a two-part investigation of the trained Transformer's internal computation. First, we ask whether the Transformer hidden state $x_t \in \mathbb{R}^{d_{\mathrm{model}}}$ linearly encodes $z_t$, by fitting a linear probe $R \in \mathbb{R}^{Nn \times d_{\mathrm{model}}}$ such that $Rx_t \approx z_t$. Second, we ask whether the model's learned hidden-to-logit map $W \in \mathbb{R}^{N \times d_{\mathrm{model}}}$ is consistent with the theoretically predicted one: if the Transformer hidden state encodes $z_t$, then the composition $W_{\mathrm{th}} R$, where $W_{\mathrm{th}} \in \mathbb{R}^{N \times Nn}$ is the optimal linear predictor from Lemma~\ref{lemma:kron_prediction_main}, should closely approximate $W$.


\begin{wrapfigure}{r}{0.26\textwidth}
    \vspace{-1em}
    \centering
    \includegraphics[width=0.26\textwidth]{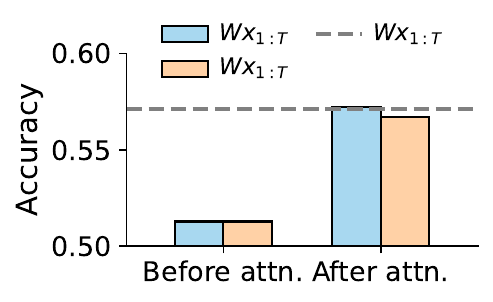}
    \vspace{-1.5em}
    \caption{Prediction accuracy using hidden states before and after attention.}
    \label{fig:attention_probe_accuracy}
    \vspace{-1em}
\end{wrapfigure}

\textbf{Attention constructs a linearly decodable finite-window representation.}
We probe using $z_t = \psi_t^{(n)}$, the linear $n$-gram feature; results for $\phi_t^{(n)}$ are in Appendix~\ref{app:feature_compare}. We test whether the attention layer constructs a representation of recent observations by comparing linear probes applied to hidden states before and after attention. As shown in Figure~\ref{fig:attention_probe_accuracy}, before attention, both $Wx_{1:T}$ and $W_{\mathrm{th}}Rx_{1:T}$ achieve similarly low accuracy. After attention, both improve substantially and closely match the accuracy of the theoretical finite-window predictor $W_{\mathrm{th}}z_{1:T}$. This supports two conclusions: (1) the attention layer aggregates recent observation tokens into a linearly decodable $n$-gram feature in the hidden state; and (2) the learned hidden-to-logit map $W$ is well-approximated by the composition $W_{\mathrm{th}}R$, meaning the output projection effectively applies the theoretically predicted linear map to the extracted feature.

\begin{figure}[h]  
    \centering  
    \includegraphics[width=\linewidth]{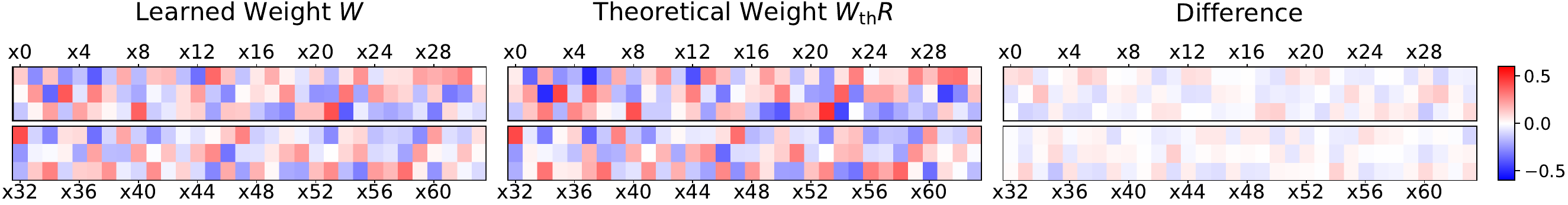}  
    \vspace{-1em}
    \caption{Comparison between the trained Transformer's learned output map and the theoretical finite-window predictor ($n=8$). \textit{Left:} the learned hidden-to-logit map $W \in \mathbb{R}^{3 \times 64}$. \textit{Middle:} the theoretical predictor $RW_{\mathrm{th}}^\top$, where $W_{\mathrm{th}}$ is the CE-optimal weight on the length-$8$ observation window and $R$ is the linear probe from hidden states to $n$-gram features. \textit{Right:} their difference, which is near zero, indicating that the learned readout closely recovers the theoretical finite-window predictor.}
    \label{fig:weight_visualization}
\end{figure}

\textbf{The trained Transformer recovers the theoretical finite-window predictor almost exactly.}
Figure~\ref{fig:weight_visualization} directly compares the learned and theoretical weights for window size $n=8$, using hidden states after attention. The left panel shows the learned weight $W$, the middle panel shows the theoretical weight $RW_{\mathrm{th}}^\top$, and the right panel shows their difference, which is small relative to the scale of the weights. Together with the probing results above, this alignment supports a coherent two-step picture of the Transformer's computation: attention constructs a linearly decodable finite-window representation of recent observations, and the output projection applies an approximately optimal linear predictor to that representation. Additional ablations over HMM configurations and window sizes are provided in Appendices~\ref{app:hmm_config} and~\ref{app:window_size}; notably, the gap between learned and theoretical weights decreases monotonically with $n$ and converges around $n = 6$.

%% file: section_llm_internal.tex
\newcommand{\alg}{PAP}


Section~\ref{sec:construction_ansatz} shows each ansatz \emph{can} be implemented in a Transformer, and our small-Transformer experiments confirm Linear $n$-gram implementations arise in practice. We now analyze pre-trained LLM activations and ask which algorithm best explains their in-context learning predictions.

\subsection{Methods}

Transformers process information through a sequence of layers that read from and write to a shared residual stream, with the activation at layer $\ell$ aggregating all computation up to that depth~\citep{elhage2021mathematical}. We adopt this layer-wise view as our unit of analysis: rather than localizing computation to specific heads or neurons, we ask what each layer computes. Linear probing is a natural fit, since task-relevant quantities are often encoded linearly in the residual stream~\citep{alain2017understanding,jiang2024on}. But probing the full stream tends to detect the presence of a quantity rather than its computational role. To restrict the probe to features the model uses, we project activations onto their top principal components~\citep{nanda2023progress,piotrowski2025constrained}.

\paragraph{Principal Activations Probe (\alg{}).}
Let $\mathbf{o}_{1:T} = (o_1, \ldots, o_T)$ be a sequence of observations. We map each observation to an abstract token, feed the sequence into the LLM, and record the residual stream activation $x_{\ell, t} \in \mathbb{R}^{d_{\text{model}}}$ at every layer $\ell \in \{1, \ldots, L\}$ and position $t \in \{1, \ldots, T\}$. In parallel, we compute a target \emph{algorithmic representation} $r_t \in \mathbb{R}^{d_r}$, a sequence- and position-specific quantity produced by some candidate algorithm applied to $\mathbf{o}_{1:t}$; concrete choices for $r_t$ are deferred to Section~\ref{sec:probe_alg_representations}.

\alg{} requires that the train and test sequences be sampled from a single distribution. Since $x_{\ell, t}$ and $r_t$ are both deterministic functions of $\mathbf{o}_{1:t}$, this ensures the activation--representation pairs $(x_{\ell, t}, r_t)$ are identically distributed across the two splits, so that test performance reflects whether $r_t$ is decodable from $x_{\ell, t}$ rather than distribution shift between them. \alg{} then fits two stages independently per layer on the training split. First, PCA on centered activations yields a top-$k$ orthonormal basis $U_\ell \in \mathbb{R}^{d_{\text{model}} \times k}$ and mean $\bar{x}_\ell$, with $k \ll d_{\text{model}}$. Second, ridge regression maps the $k$ leading PCA coordinates to $r_t$, yielding $M_\ell \in \mathbb{R}^{d_r \times k}$ and $c_\ell \in \mathbb{R}^{d_r}$.

The probe's prediction is
\begin{equation}
\hat{r}_{\ell, t} \;=\; M_\ell U_\ell^{\!\top} (x_{\ell, t} - \bar{x}_\ell) + c_\ell.
\label{eq:pap_probe}
\end{equation}
The rank $k$ is a hyperparameter we vary in our experiments.
 
\paragraph{Probe evaluation metrics.}
We report two complementary metrics: mean squared error, $\mathbb{E}\|\hat{r}_{\ell, t} - r_t\|^2$, which measures absolute reconstruction error, and the coefficient of determination $R^2 = 1 - \text{MSE}/\text{Var}(r_t)$, which normalizes by the target's variance. The two are needed together because $R^2$ degenerates when $r_t$ is nearly constant at a given time step --- a perfect probe can yield low or undefined $R^2$ despite near-zero MSE. Both metrics are computed and reported on the test split.

\paragraph{Causal Intervention with \alg{}.}
A successful probe shows that $r_t$ is \emph{decodable}; it does not show that the model \emph{uses} this representation downstream~\citep{belinkov2022probing}. To test causal use, we perform activation patching at layer $\ell$ and position $t$~\citep{zhang2023towards}. Given a source sequence $\mathbf{o}_{1:t}^{\mathrm{src}}$ and a target sequence $\mathbf{o}_{1:t}^{\mathrm{tgt}}$, we run the model on the source sequence while intervening on the activation at $(\ell,t)$ using information derived from a forward pass on the target sequence. If the intervened representation is part of the computation that determines the next-token prediction, then the patched output should shift toward the prediction implied by $r_t^{\mathrm{tgt}}$ rather than remain aligned with $r_t^{\mathrm{src}}$.
 
We use three patches of increasing specificity. The \textbf{full residual patch} substitutes the residual vector at layer $\ell$ and position $t$ with its counterpart from the target run,
$x_{\ell,t}^{\mathrm{src}} \leftarrow x_{\ell,t}^{\mathrm{tgt}}$.
While this intervention generally affects downstream predictions, it provides evidence for causal use only if the resulting prediction shifts toward that implied by $r_t^{\mathrm{tgt}}$ rather than remaining aligned with $r_t^{\mathrm{src}}$. We therefore treat it as the least restrictive intervention at $(\ell,t)$, against which more targeted patches can be compared. The \textbf{PCA-subspace patch} replaces only the top-$k$ subspace,
\(
x_{\ell, t}^{\text{src}} \leftarrow U_\ell U_\ell^{\!\top} \big(x_{\ell, t}^{\text{tgt}} - \bar{x}_\ell\big) + \big(\mathbf{I} - U_\ell U_\ell^{\!\top}\big)\big(x_{\ell, t}^{\text{src}} - \bar{x}_\ell\big) + \bar{x}_\ell,
\)
testing whether the subspace \alg{} flags as informative is also causally sufficient. The \textbf{probe-inverse patch} applies the smallest perturbation $\Delta x$ that makes the probe read out $r_t^{\text{tgt}}$; the minimum-norm solution is
\begin{equation}
\Delta x = U_\ell\, M_\ell^{+} \big(r_t^{\text{tgt}} - \hat{r}_{\ell, t}^{\text{src}}\big),
\label{eq:pap_inverse_patch}
\end{equation}
where $M_\ell^{+}$ is the right pseudoinverse. By construction, $\Delta x$ lies in the row space of $M_\ell U_\ell^{\!\top}$ --- the directions the probe actually reads from --- giving the strictest test of causal relevance.

\paragraph{Intervention metrics.}
Both metrics are computed against the expected next-token distribution $p_{t+1}^{\text{tgt}}$ under $r_t^{\text{tgt}}$. \emph{KL divergence} $\mathrm{KL}(p_{t+1}^{\text{tgt}} \,\|\, y_{t+1}^{\text{patched}})$ captures distributional agreement and is sensitive to diffuse predictions; \emph{interchange intervention accuracy} (IIA)~\citep{geiger2021causal,pmlr-v236-geiger24a} reports the fraction of pairs on which $\arg\max y_{t+1}^{\text{patched}} = \arg\max p_{t+1}^{\text{tgt}}$, reflecting whether the patch shifts the model's discrete prediction.
 
The components above: layer-wise linear probing~\citep{alain2017understanding}, PCA projection, and activation patching~\citep{zhang2023towards,geiger2021causal} are individually standard, and probing internal representations against postulated hidden states has been used extensively in the world-model literature. Our contribution lies in probing on candidate algorithmic representations drawn from the predictor families of Section~\ref{sec:construction_ansatz}. This lets us ask not just whether a hidden quantity is encoded, but which algorithm's intermediate state best explains and causally drives the model's predictions.

\subsection{Algorithmic Representations in LLMs on HMM Tasks}
\label{sec:probe_alg_representations}

\begin{wraptable}{r}{0.46\textwidth}
\vspace{-1em}
\centering
\footnotesize
\caption{The representations to be probed for each ansatz algorithm class.}
\label{tab:alg_representations}
\begin{tabular}{lcc}
\toprule
\text{Algorithm } 
& Belief $r^{state}_t$
& Operator $r^{op}_t$ \\
\midrule
\texttt{Oracle} & $\mathbb P(h_t|\mathbf{o}_{1:t},\boldsymbol{\lambda})$ & $-$ \\
\texttt{Soft $n$-gram}  & $W_t z_t$ & $W_t$ \\
\texttt{Spectral} & $b_t$ & $B_t$ \\
\bottomrule
\end{tabular}
\vspace{-1em}
\end{wraptable}

\paragraph{Representations for HMM baseline algorithms.}


\begin{figure}
  \centering
  \vspace{-1em}
  \includegraphics[width=\textwidth]{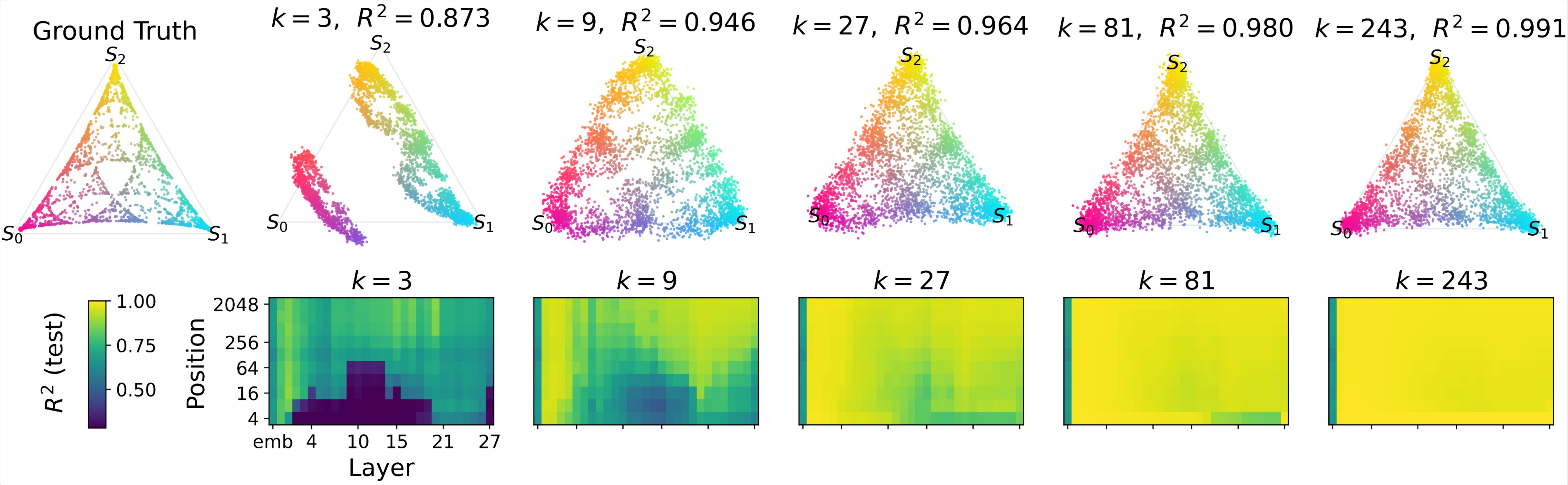}
  \caption{Quality of the belief-geometry mapping for an HMM with $M=3$, $N=3$, $\mathcal{E}(\vA)=0.5$ ($\lambda_2=0.75$), and $\mathcal{E}(\vB)=0.75$. \emph{(top)} Probe reconstruction of the belief simplex at Layer~20, position 2048, for varying PCA rank $k$. \emph{(bottom)} Probe quality measured by $R^2$ on the test set, with each subplot corresponding to a different PCA rank $k$.}
  \label{fig:quality_geometry}
  \vspace{-1.5em}
\end{figure}

Sections~\ref{sec:llm_emp} and~\ref{sec:construction_ansatz} narrow the ansatzes to three classes: \texttt{Linear $n$-gram}, \texttt{Non-Linear $n$-gram}, and \texttt{Spectral}. To probe which computation the Transformer implements, we associate each class with two representations of the observation history $\mathbf{o}_{1:t}$ (Table~\ref{tab:alg_representations}); the two $n$-gram variants share representational structure and are grouped as \texttt{Soft $n$-gram}.

The two representation types capture different aspects of the algorithm's behavior over the observation sequence. The \emph{belief} $r^{state}_t$ tracks the algorithm's current estimate of the hidden state and shifts at every position as new observations arrive. The \emph{operator} $r^{op}_t$, by contrast, encodes the estimated parameters underlying the prediction rule; it is updated incrementally but converges toward the ground-truth values as the sequence length grows. The belief at any position thus reflects both the accumulated observations and the current accuracy of the operator: as the operator converges, the resulting belief becomes more precise. Notably, the belief is meaningful even in the single-HMM setting where the parameters are known and no in-context learning is required, whereas the operator is specific to the ICL setting where the algorithm must estimate the underlying HMM parameters from the observed sequence. 

For the \texttt{Oracle}, the belief is the Bayes-optimal posterior $\mathbb{P}(h_t \mid \mathbf{o}_{1:t}, \boldsymbol{\lambda})$, and no operator is needed since the parameters are known. For \texttt{Soft $n$-gram}, the belief is the predicted logit $W_t z_t$, where $z_t \in \{\psi_t^{(n)}, \phi_t^{(n)}\}$ is the finite-window feature from Lemma~\ref{lemma:kron_prediction_main} and $W_t \in \mathbb{R}^{N \times \dim(z_t)}$ is the in-context learned weight; $W_t$ itself serves as the operator. For \texttt{Spectral} methods~\citep{hsu2012spectral}, the belief is the observation-operator state $b_t \in \mathbb{R}^M$, updated recursively as $b_t \propto B_{o_t} b_{t-1}$, where $B_x \in \mathbb{R}^{M \times M}$ is the observation operator for symbol $x$ estimated from empirical bigram and trigram statistics (see Appendix~\ref{app:bench:spectral}); the operator representation is the estimated $B_t := \{B_x\}_{x \in \mathcal{O}}$.

\vspace{-0.7em}
\paragraph{Experimental Setup.} We use Qwen-3-1.7B (28 layers, $d_{\mathrm{model}}=2048$) and reuse the 75 HMM configurations defined in Section \ref{sec:llm_emp}. Per HMM $\boldsymbol{\lambda} = (\boldsymbol{\pi}, \vA, \vB)$, we sample 4,096 sequences of length 2,049 with an 80/20 train-test split by sequence at random, and at $t\in\{4,8,\ldots,2048\}$ compute $r_t$ for each algorithm and record the residuals $x_{\ell,t}\in\mathbb R^{2048}$. We sweep PCA rank $k\in\{M, M^2, M^3, \ldots\}$, fit the probe by ridge regression on train, and evaluate on test. For causal intervention, we sample 1,024 pairs of observation histories $(\mathbf{o}_{1:t}^{\mathrm{src}},\mathbf{o}_{1:t}^{\mathrm{tgt}})$ whose algorithmic states at position $t$ disagree in their most probable value, $\arg\max r^{src}_t\neq \arg\max r^{tgt}_t$. We run the model on the source history $\mathbf{o}_{1:t}^{\mathrm{src}}$ and intervene on the representation at $(\ell,t)$ using information derived from the target history, then measure whether the resulting next-token prediction shifts toward that implied by $r_t^{\mathrm{tgt}}$.

\subsection{Results}

\begin{figure}
  \centering
  \vspace{-1em}
  \includegraphics[width=\textwidth]{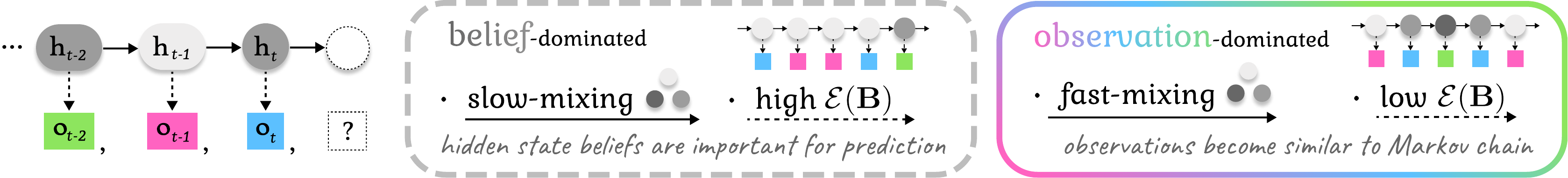}
  \caption{Two HMM regimes: \textit{belief}-dominated (slow mixing, high $\mathcal{E}(\mathbf{B})$), where hidden state beliefs drive prediction, and \textit{observation}-dominated (fast mixing, low $\mathcal{E}(\mathbf{B})$), where observations behave like a Markov chain.}
  \label{fig:belief_observation}
  \vspace{-1.2em}
\end{figure}

\paragraph{The ground-truth belief is linearly decodable from the residual stream.}
Figure~\ref{fig:quality_geometry} probes the \texttt{Oracle} belief on a 3-state 
HMM. At low rank ($k=9$), $R^2$ is high at early (0--3) and late (20--26) layers, 
with longer sequences shifting the high-quality region toward earlier layers. At 
sufficient rank ($k=243$), the belief is near-perfectly decodable from layer~1 
onward. Following~\citet{shai2024transformers}, we visualize the recovered belief 
simplex alongside $R^2$: the same fractal-like structure emerges in our pre-trained 
LLM despite differences in scale, training data, and objective---though the simplex 
remains visibly noisy even at high $R^2$, indicating the probe captures the dominant 
geometry but not every fine-grained detail.

\begin{wrapfigure}{r}{0.68\textwidth}
  \centering
  \vspace{-1em}
  \includegraphics[width=0.68\textwidth]{figs/gt_representations.png}
  \caption{\texttt{Oracle} belief probing ($R^2$, left) and causal patching (IIA, right) across layers and positions ($k{=}8$), for the belief-dominated (top) and observation-dominated (bottom) regimes. Three patches of increasing specificity: full residual, PCA-subspace, probe-inverse (Eq.~\ref{eq:pap_inverse_patch}). $R^2{\uparrow}$, IIA${\uparrow}$.}
  \label{fig:h_vs_m_gt_belief}
  \vspace{-1em}
\end{wrapfigure}

We compare two HMM regimes ($M=N=4$). The \emph{belief-dominated} regime has slow 
mixing and noisy emissions ($\mathcal{E}(\vA)=0.25$, $\lambda_2=0.9$, 
$\mathcal{E}(\vB)=0.75$), so the optimal predictor must integrate observations over 
long contexts to maintain a meaningful belief over hidden states. The 
\emph{observation-dominated} regime has fast mixing and low-entropy emissions 
($\mathcal{E}(\vA)=0.75$, $\lambda_2=0.5$, $\mathcal{E}(\vB)=0.25$), so the most 
recent observation nearly identifies the hidden state and a bigram suffices for 
near-optimal prediction.

\vspace{-0.3em}
\paragraph{Decodability does not imply causal use.} 
Figure~\ref{fig:h_vs_m_gt_belief} shows results at PCA rank $k{=}8$. In both 
regimes, all three patching methods---full residual, PCA-subspace, and 
probe-inverse---successfully steer the model's output, confirming that the probed 
subspace is causally sufficient. The regimes diverge in one qualitative respect: at 
early layers (0--2) of the belief-dominated regime, probe $R^2$ is already high yet 
patching fails, even with a full residual patch. This early-layer divergence 
illustrates why decodability and causal use must be tested 
separately~\citep{belinkov2022probing}.

\vspace{-0.3em}
\paragraph{The model recruits different representations across regimes.}
Under the same comparison (Figure~\ref{fig:alg_reps}), we investigate all algorithmic 
belief representations in Table~\ref{tab:alg_representations}, and additionally include 
\texttt{Bigram} which is the Bayes-optimal predictor for simple Markov chains and is well understood as a baseline. 
\texttt{Bigram} is highly decodable in both regimes at $k{=}8$. Yet causal interventions tell the opposite story: \texttt{Bigram} patching has essentially no effect in the 
belief-dominated regime, while in the observation-dominated regime it steers the output nearly as effectively as intervening on the \texttt{Oracle}. This dissociation suggests the model bypasses bigram-like statistics in favor of a richer representation when genuine 
belief tracking is required. Among the candidates tested, \texttt{Soft $n$-gram} emerges as the most causally viable across both regimes.

\vspace{-0.5em}
\paragraph{\texttt{Soft $n$-gram} is the most causally effective algorithmic 
representation.}

\begin{wrapfigure}{r}{0.6\textwidth}
  \centering
  \vspace{-1em}
  \includegraphics[width=0.6\textwidth]{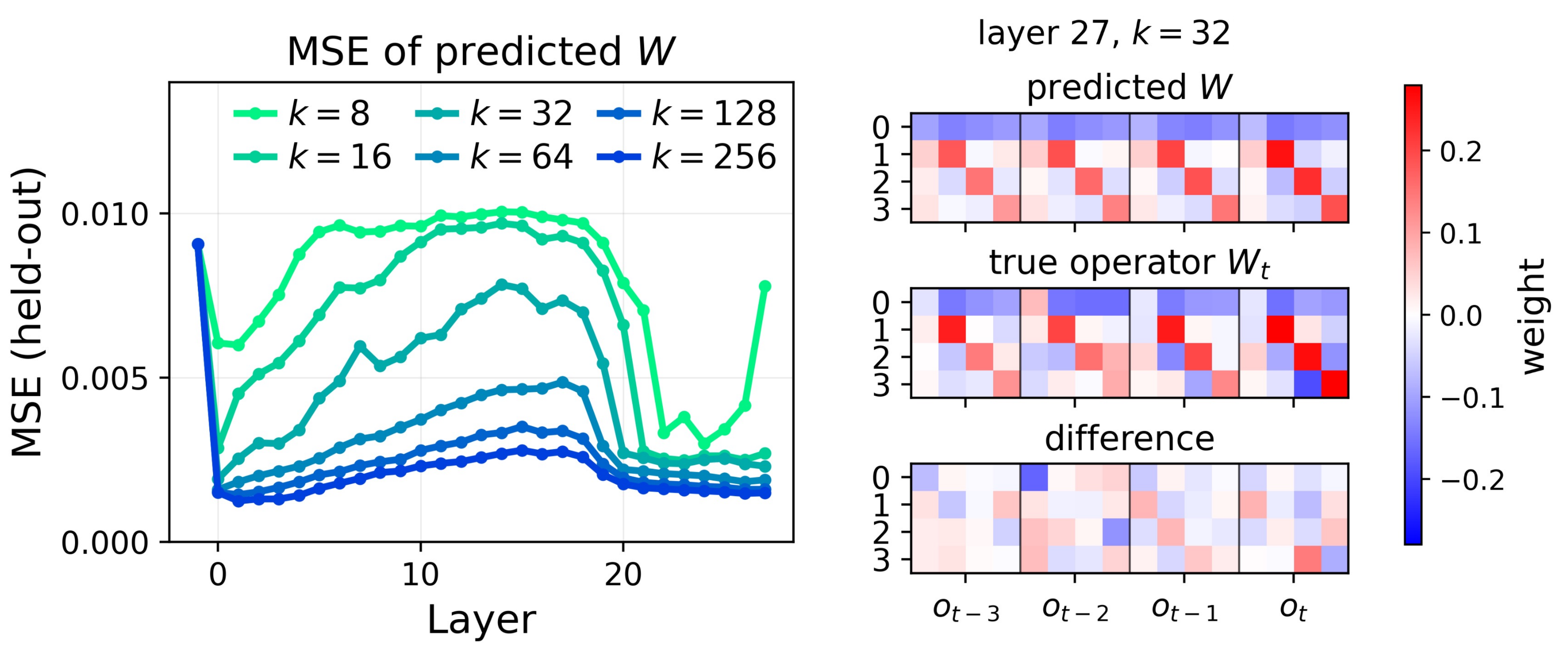}
  \caption{Probing \texttt{Linear $n$-gram} operator $W_t$ in LLM.}
  \label{fig:LLM_W}
  \vspace{-1em}
\end{wrapfigure}

Across both regimes, the \texttt{Soft $n$-gram} belief exerts a stronger causal 
effect on the model's output than \texttt{Bigram} or \texttt{Spectral}, motivating 
a closer look at how it is realized internally. We probe for the \texttt{Linear 
$n$-gram} operator $W_t$ in the belief-dominated regime (Figure~\ref{fig:LLM_W}). 
Probe MSE decreases monotonically with PCA rank. Across depth, the error follows a 
characteristic U-shape: low at early layers, rising through the middle layers 
(1--17), then falling and converging at late layers (18--27). The right panel 
visualizes the recovered operator weights at a representative layer.

\begin{figure}
  \centering
  \includegraphics[width=\textwidth]{figs/probe_inverse_alg_multi_target_cfg23_cfg51_k8.png}
  \caption{Algorithmic belief representations and their causal effects in the belief-dominated vs. the observation-dominated regime. The \texttt{Soft $n$-gram} column uses \texttt{Linear $n$-gram} with $n=4$.}
  \label{fig:alg_reps}
  \vspace{-2em}
\end{figure}

\paragraph{Causal representation quality predicts LLM empirical performance.}

\begin{wrapfigure}{l}{0.42\textwidth}
  \centering
  \vspace{-1em}
  \includegraphics[width=0.42\textwidth]{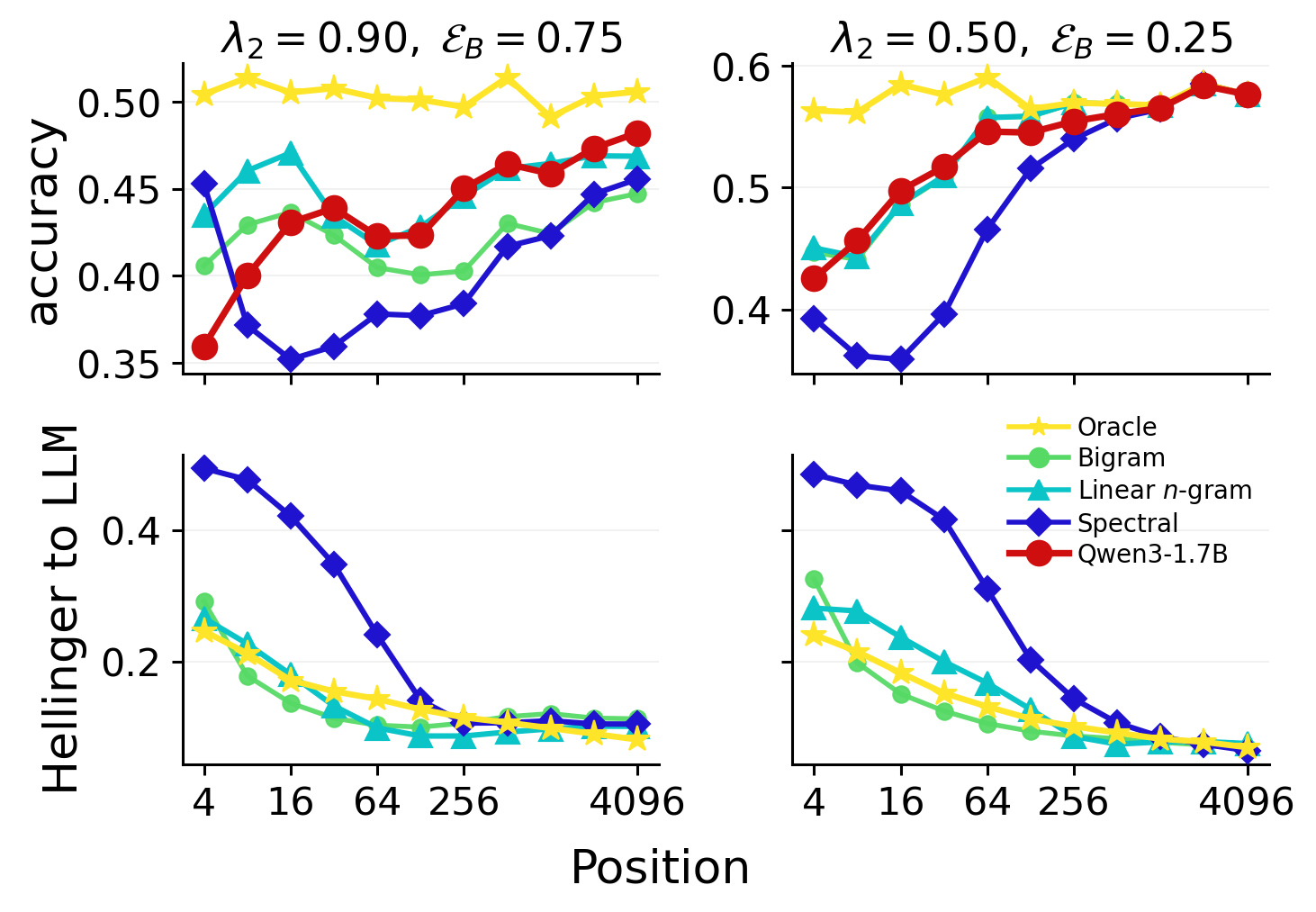}
  \caption{LLM empirical performance in the two HMM regimes.}
  \label{fig:internal_to_emp}
  \vspace{-1em}
\end{wrapfigure}

Returning to the two regimes (Figure~\ref{fig:alg_reps}): in the belief-dominated regime, \texttt{Linear $n$-gram} has a distinctly stronger causal intervention 
effect than \texttt{Bigram} or \texttt{Spectral}; in the observation-dominated 
regime, all three have strong effects. This difference tracks empirical performance 
(Figure~\ref{fig:internal_to_emp}): in the belief-dominated regime, the LLM converges to \texttt{Linear $n$-gram} by position 64 while the other two trail through position 4096; in the observation-dominated regime, all three converge to the \texttt{Oracle} by position 1024. The association between causal intervention effect and LLM predictive accuracy holds consistently across settings 
(Appendix~\ref{app:llm_emp_perf}).

The same pattern appears contrastively in a model that performs suboptimally on the 
task. OLMo-2-1B admits a high-$R^2$ probe on the \texttt{Oracle} belief (Appendix~\ref{app:llm_emp_perf}), yet causal intervention fails entirely even 
under a full residual patch (Figure~\ref{fig:olmo_probe}; contrast with 
Figure~\ref{fig:h_vs_m_gt_belief})---a further instance of decodability without causal use.

\begin{wrapfigure}{r}{0.45\textwidth}
  \centering
  \vspace{-1em}
  \includegraphics[width=0.45\textwidth]{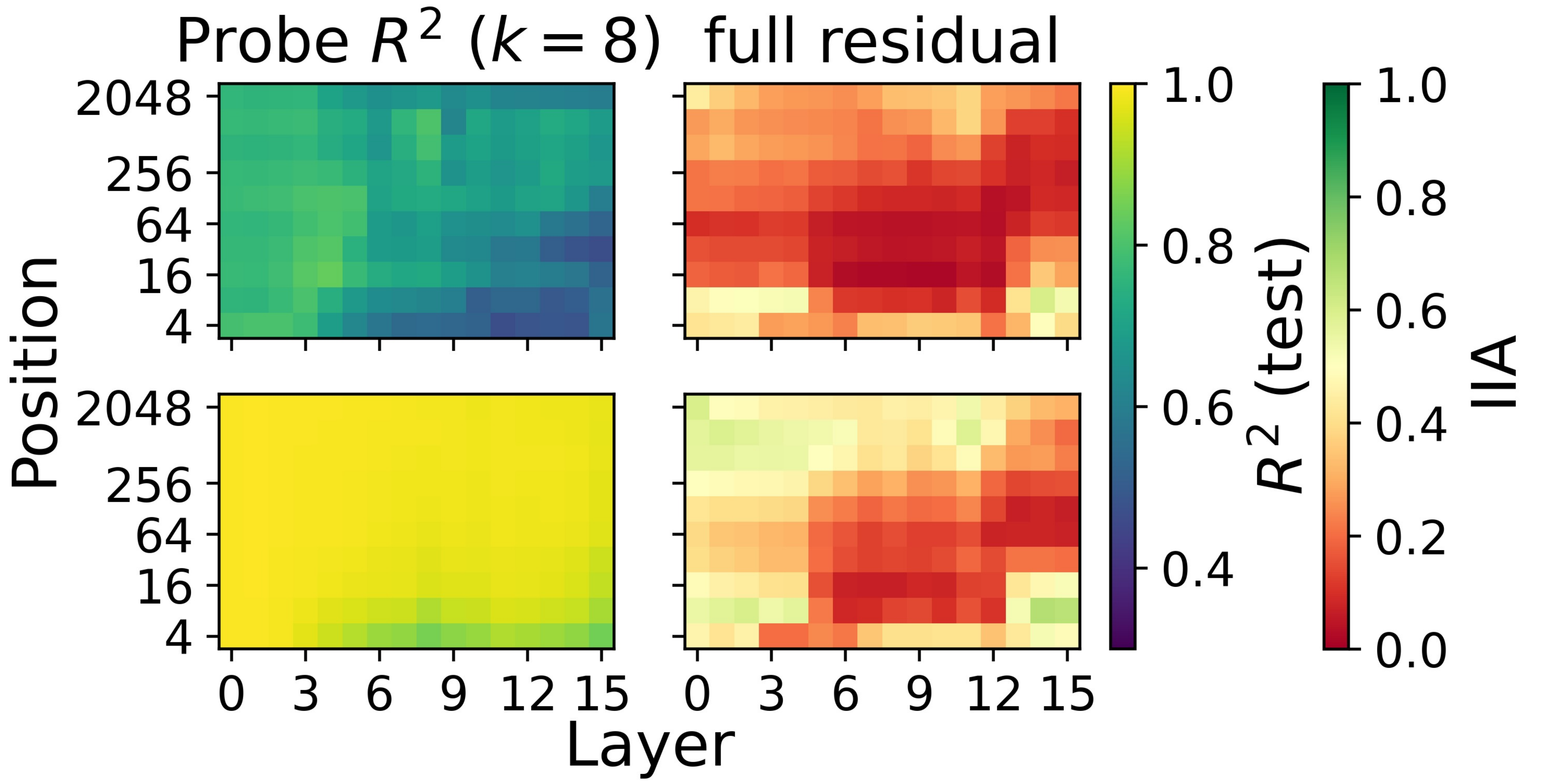}
  \caption{OLMo-2-1B probing and causal patching on the two HMM regimes: high $R^2$ but failed intervention, contrasting with Figure~\ref{fig:h_vs_m_gt_belief}.}
  \label{fig:olmo_probe}
  \vspace{-1em}
\end{wrapfigure}




%% file: related_works.tex


\paragraph{Algorithmic Identification in LLMs.}
Modern interpretability research has investigated which algorithms underlie in-context learning (ICL), finding that the mechanisms employed are sensitive to network depth and input noise~\cite{akyurek2023learning}. Complementary work demonstrates that transformers can implement a diverse repertoire of ICL algorithms
at inference time~\cite{bai2023transformers, garg2022transformers, akram2026transformers}. However, these works largely infer algorithms from input--output behaviour or via theoretical weight
constructions, rather than probing model internals directly. A notable exception is \citet{akyurek2023learning}, which partially probes hidden activations to test whether an implicit model is encoded and updated across layers.
At a finer mechanistic level, \citet{nanda2023progress} go furthest in probing internals: they fully
reverse-engineer the learned algorithm for modular addition by directly analysing weights and activations and performing ablations in Fourier space. Similarly, \citet{dangelo2026transformers} argue on theoretical grounds that LLMs learn to predict in-context via mirror descent within a latent mixture-model framework. Closer to our own approach, \citet{li2025how} identify which of two concrete state-tracking algorithms
a language model has learned for permutation composition, an associative scan or a parity-based
heuristic refinement, and show that the choice of mechanism can be predicted and controlled via
intermediate training tasks.

\paragraph{Mechanistic Interpretability and Probing.}
A foundational line of work initiated by \citet{alain2017understanding} uses linear probes to characterize the representations learned at intermediate layers of neural networks~\cite{adi2017fine, ijcai2018p796, conneau2018cram}.
\citet{li2023emergent} extended this paradigm to transformers trained on synthetic tasks, and \citet{nanda2023emergent} subsequently demonstrated linear decodability of an emergent world model under specific parametrisations, a result later generalised to Chess and additional variables such as player rating~\cite{karvonen2024emergent}.
More recent work combines activation patching with probing to show that pre-trained and fine-tuned LLMs employ identifiable state-tracking strategies and recover structured world models~\cite{vafa2025foundation, liu2026kepler}.

%% file: appendix.tex
\begin{center}\LARGE
\textbf{Appendices}
\end{center}

\section*{Table of Contents}
\begin{itemize}
\item Appendix~\ref{app:background}: Additional HMM Background
\item Appendix~\ref{app:bench}: Benchmark Algorithmic Details
\item Appendix~\ref{app:$n$-gram_prediction}: $n$-gram Prediction Constructions
\item Appendix~\ref{app:trained_transformer}: Details of the Small Trained Transformer
\item Appendix~\ref{app:llm_emp_perf}: LLMs Empirical Performance
\item Appendix~\ref{app:pap_results}: Principal Activation Probe (PAP) and Results
\end{itemize}

\section{Additional HMM Background}
\label{app:background}
\input{appendix_hmm_background}
\clearpage

\input{appendix_benchmark_methods}
\clearpage

\input{appendix_nGram_construction}
\clearpage
\section{Spectral Learning for Prediction}\label{app:spectral}

\input{appendix_approximation}
\clearpage

\section{Small Trained Transformer}
\label{app:trained_transformer}
\input{appendix_small_trained_transformer}
\clearpage

\section{LLMs Empirical Performance}
\label{app:llm_emp_perf}
\input{appendix_llm_emp}

\clearpage

\section{Principal Activations Probe (\alg{}) and Results}
\label{app:pap_results}
\input{appendix_principal_probe}
\clearpage


%% file: appendix_hmm_background.tex
In this section, we present the HMM setting in detail, along with the conditions under which the underlying Markov chain converges to a unique stationary distribution. Recall that an HMM is characterized by the Markov chain's \emph{initial state distribution} and \emph{state transitions}, together with the \emph{emission probabilities} of observations given hidden states. We focus on the finite-alphabet setting, taking states in $\mathcal{X} = \{1, 2, \ldots, M\}$ and observations in $\mathcal{O} = \{1, 2, \ldots, N\}$ without loss of generality. The model parameters comprise an initial distribution $\boldsymbol{\pi} \in \mathbb{R}^M$, with $\pi_j$ the probability of starting in state $j$; a transition matrix $\vA \in \mathbb{R}^{M \times M}$, with $a_{ij}$ the probability of moving from state $i$ to state $j$; and an emission matrix $\vB \in \mathbb{R}^{M \times N}$, with $b_{jk}$ the probability of emitting observation $k$ from state $j$. The triple $\lambda = (\boldsymbol{\pi}, \vA, \vB)$ fully parameterizes a finite-alphabet HMM.

Let $\{X_1, X_2, \ldots\}$ be a discrete-time Markov chain on $\mathcal{X}$ with transition matrix $\vA$, and let $p_{ij}^{(n)} = \mathbb{P}(X_{t+n} = j \mid X_t = i)$ denote the $n$-step transition probability from $i$ to $j$. State $j$ is \textit{accessible} from state $i$ if $p_{ij}^{(n)} > 0$ for some $n \geq 1$, and a subset $\mathcal{C} \subseteq \mathcal{X}$ is \textit{irreducible} if every pair of states in $\mathcal{C}$ is mutually accessible. The \textit{period} of state $i$ is $c(i) = \gcd\{n \geq 1 : p_{ii}^{(n)} > 0\}$, the greatest common divisor of its possible return times; state $i$ is \textit{aperiodic} when $c(i) = 1$. A Markov chain is \textit{geometrically ergodic} if it is irreducible and aperiodic, in which case it converges to a unique \textit{stationary distribution} $\boldsymbol{\mu} \in \mathbb{R}^M$ satisfying $\boldsymbol{\mu} = \boldsymbol{\mu} \vA$. Its \textit{mixing rate} is the smallest $\rho \in [0, 1)$ for which $|p_{ij}^{(n)} - \mu_j| \leq C \rho^n$ holds with some constant $C \geq 0$, uniformly over $i, j \in \mathcal{X}$ and $n \geq 1$; for a finite-alphabet HMM, $\rho$ equals $\lambda_2$, the second-largest eigenvalue of $\vA$ in modulus. To avoid dependence on the initial state, the bulk of our HMMs use ergodic transitions, though we also include a handful of non-ergodic cases in our experiments.

The \textit{entropy} of a discrete random variable $X$ is $H(X) = -\sum_{x \in \mathcal{X}} p(x) \log p(x)$. A fundamental property is that conditioning reduces uncertainty: $H(X \mid Y) \leq H(X)$ for any random variables $X$ and $Y$, with equality if and only if they are independent \citep{10.5555/1146355}. The chain rule then expresses the joint entropy of a stochastic process as $H(X_1, X_2, \ldots, X_n) = \sum_{i=1}^n H(X_i \mid X_{i-1}, \ldots, X_1)$. For a Markov chain with stationary distribution $\boldsymbol{\mu}$, the \textit{entropy rate} simplifies to $H(\mathcal{X}) = \lim_{n \to \infty} \tfrac{1}{n} H(X_1, \ldots, X_n) = -\sum_{i,j} \mu_i a_{ij} \log a_{ij}$, depending only on the transition matrix $\vA$. By analogy, we define the entropy of the emission matrix as $-\sum_{j,k} \mu_j b_{jk} \log b_{jk}$, which captures the average uncertainty in observations given the underlying state. The entropy rate of the observation process in an HMM admits no closed form, but it is sandwiched between $H(O_n \mid O_{n-1}, X_{n-1}, \ldots, O_1, X_1) \leq H(\mathcal{O}) \leq H(O_n \mid O_{n-1}, \ldots, O_1)$. Since $\vA$ governs the transitions $X_t \to X_{t+1}$ and $\vB$ governs the emission of $O_t$ from $X_t$, the entropies of $\vA$ and $\vB$ jointly control a lower bound on the entropy of the sampled HMM sequence.

%% file: appendix_benchmark_methods.tex
\section{Benchmark Algorithm Details}
\label{app:bench}

This appendix details the implementation of each benchmark predictor used in Section~\ref{sec:alg_descriptions} and Section~\ref{sec:llm_emp}. Throughout, $u_t = e_{o_t} \in \mathbb{R}^N$ denotes the one-hot encoding of observation $o_t$, 
and $q_{t+1} \in \Delta^{N-1}$ denotes the predicted next-observation distribution.

\subsection{Oracle: Forward Algorithm}
\label{app:bench:forward}
Given the true HMM parameters $\boldsymbol{\lambda} = (\boldsymbol{\pi}, \vA, \vB)$, 
the Bayes-optimal predictor is computed by the standard forward recursion. The belief 
state $b_t \in \Delta^{M-1}$ is updated as
\[
b_t = \frac{\operatorname{diag}(\vB u_t)\,\vA^\top b_{t-1}}
           {\onenorm{\operatorname{diag}(\vB u_t)\,\vA^\top b_{t-1}}},
\qquad b_0 = \boldsymbol{\pi},
\]
and the next-observation distribution is read out as $q_{t+1} = \vB^\top \vA^\top b_t$.

\subsection{Spectral Methods}
\label{app:bench:spectral}

Spectral methods estimate empirical low-order moments $\hat{P}_1 \in \mathbb{R}^N$, 
$\hat{P}_2 \in \mathbb{R}^{N \times N}$, and $\hat{P}_3(o) \in \mathbb{R}^{N \times N}$ 
for each $o \in \mathcal{O}$ from observation triples in the prefix $\mathbf{o}_{1:t}$, then form an observation-operator predictor that bypasses iterative parameter estimation. The observation operator for symbol $o$ is $\hat{B}_o := \hat{P}_2^{+} \hat{P}_3(o)$, 
so the belief state $b_t \in \mathbb{R}^M$ evolves recursively as $b_t \propto \hat{B}_{o_t} b_{t-1}$, with $b_0 \propto \hat{P}_2^{+} \hat{P}_1$. 
The resulting predictor takes the form
\[
\mathbb{P}(O_{t+1} = o \mid \mathbf{o}_{1:t}) \;\propto\;
\hat{P}_1^\top\, \hat{B}_{o_t} \cdots \hat{B}_{o_1}\, \hat{P}_2^{+} \hat{P}_1,
\]
where the operator $\hat{B}_t := \{\hat{B}_o\}_{o \in \mathcal{O}}$ converges to its ground-truth value as more observations are used to estimate the moments.

\paragraph{\texttt{Spectral}.} Standard implementation following \citet{hsu2012spectral}, using the rank-$M$ truncated pseudoinverse of $\hat{P}_2$ via SVD as above.

\paragraph{\texttt{Spectral-Norm}.} Replaces $\hat{P}_2^{+}$ with $\hat{P}_2^\top$, 
defining $\hat{B}_o = \hat{P}_2^\top \hat{P}_3(o)$:
\[
\mathbb{P}(O_{t+1} = o \mid \mathbf{o}_{1:t}) \;\propto\;
\hat{P}_1^\top\, \hat{P}_2^\top \hat{P}_3(o_t) \cdots 
\hat{P}_2^\top \hat{P}_3(o_1)\, \hat{P}_2^\top \hat{P}_1.
\]
This avoids ill-conditioning from small singular values of $\hat{P}_2$ at the cost of biased moment matching.

\subsection{Linear $n$-gram Methods}
\label{app:bench:linear_ngram}
Let $\psi_t^{(n)} := [u_t^\top, u_{t-1}^\top, \ldots, u_{t-n+1}^\top]^\top \in 
\mathbb{R}^{nN}$ be the stacked one-hot feature of the $n$ most recent observations. 
We fit a linear predictor $W \in \mathbb{R}^{N \times nN}$ in three variants, corresponding to \texttt{Gradient CE}, \texttt{Gradient MSE}, and \texttt{Ridge MSE} 
in Table~\ref{tab:bench}.

\paragraph{\texttt{Gradient CE}.} Online gradient descent on the cross-entropy loss
\[
\ell_{\mathrm{CE}}(W; t) = -u_{t+1}^\top \log \operatorname{softmax}(W \psi_t^{(n)}),
\]
with predictor $q_{t+1} = \operatorname{softmax}(W_t \psi_t^{(n)})$ and update 
$W_{t+1} = W_t - \eta\,(\operatorname{softmax}(W_t \psi_t^{(n)}) - u_{t+1})\,
(\psi_t^{(n)})^\top$.

\paragraph{\texttt{Gradient MSE}.} Online gradient descent on the squared error
\[
\ell_{\mathrm{MSE}}(W; t) = \tfrac{1}{2}\|W \psi_t^{(n)} - u_{t+1}\|_2^2,
\]
with $q_{t+1} = W_t \psi_t^{(n)}$ projected onto the simplex.

\paragraph{\texttt{Ridge MSE}.} At each evaluation position, we solve the closed-form ridge regression
\[
\hat W_t = \arg\min_W \sum_{s=n}^{t-1} \|W \psi_s^{(n)} - u_{s+1}\|_2^2 
           \;+\; \lambda \|W\|_F^2,
\]
and predict $q_{t+1} = \hat W_t \psi_t^{(n)}$ projected onto the simplex. Regularization $\lambda$ is selected on a held-out validation split.

For the \texttt{Soft $n$-gram} entry of Table~\ref{tab:alg_representations} 
(Section~\ref{sec:llm_internal}), we use \texttt{Ridge MSE} weights at position $t$: the operator representation is $W_t$ and the belief representation is 
$W_t \psi_t^{(n)}$.

\subsection{Non-linear $n$-gram Methods}
\label{app:bench:nonlinear_ngram}
Let $\phi_t^{(n)} := u_t \otimes u_{t-1} \otimes \cdots \otimes u_{t-n+1} \in 
\mathbb{R}^{N^n}$ be the Kronecker feature. By Lemma~\ref{lemma:kron_prediction_main}, $q_{t+1} = \mathcal{G}_{t,n}\,\phi_t^{(n)}$ for some $\mathcal{G}_{t,n} \in \mathbb{R}^{N \times N^n}$.

\paragraph{\texttt{Kron}.} Ridge regression of the next observation on $\phi_t^{(n)}$. Because $\phi_t^{(n)}$ is an indicator for a single $n$-gram (exactly one entry equals one, the rest zero), ridge regression reduces to bucketing the prefix by its $n$-gram suffix and tabulating smoothed empirical conditional distributions.

\paragraph{\texttt{Kernel}.} Kernel ridge regression on the $n$-observation window, capturing the same nonlinear interactions as $\phi_t^{(n)}$ without materializing the $\mathcal{O}(N^n)$ feature vector, which permits larger $n$.

\subsection{Baum--Welch}
\label{app:bench:bw}
We run the standard Baum--Welch EM algorithm~\citep{baum1970maximization} with $M$ assumed known. Rows of $\vA$ and $\vB$ are initialized by drawing uniformly from the 
simplex (i.e., symmetric Dirichlet with concentration $1$); we run to convergence in negative log-likelihood (tolerance $10^{-6}$, max $200$ iterations). At each evaluation position, the converged parameters are plugged into the forward recursion of Appendix~\ref{app:bench:forward}.

%% file: appendix_nGram_construction.tex
\section{$n$-gram Prediction}\label{app:$n$-gram_prediction}
\subsection{Nonlinear Features for $n$-gram Prediction}
Recall that, an HMM is specified by the triple $\boldsymbol{\lambda} = (\boldsymbol{\pi}, \vA, \vB)$, where $\boldsymbol{\pi} \in \mathbb{R}^M$ is the initial state distribution ($\pi_j$ the probability of starting in state $j$), $\vA \in \mathbb{R}^{M \times M}$ is the transition matrix ($a_{ij}$ the probability of moving from state $i$ to $j$), and $\vB \in \mathbb{R}^{M \times N}$ is the emission matrix ($b_{jk}$ the probability of emitting $k$ from state $j$). 
Let $u_t$ denote the one-hot vector in $\R^{N}$ corresponding to the observation $o_t$, that is, $u_t = e_{o_t}$, where $e_1,\dots,e_N$ denote the standard basis vectors of $\mathbb{R}^N$. Let $q_t \in \R^N$ denote the probability distribution of next observation given the history of observations, that is, $q_{t+1}[i] :=  \P\left(O_{t+1} = i \bgl O_{1:t} = o_1,o_2, \dots, o_t \right)$. Another important quantity is the belief state $b_t \in \R^M$ which is defined as the posterior probability of the HMM being at each state given the history of observations, that is, $b_{t}[i] :=  \P\left(X_{t+1} = i \bgl O_{1:t} = o_1,o_2, \dots, o_t \right)$. Using Bayes' rule, we have
\begin{equation}
\begin{aligned} \label{eqn:belief_update_orig}
    b_t &= \frac{\operatorname{diag}(\vB u_t) \vA^\top b_{t-1}}{\onenorm{\operatorname{diag}(\vB u_t) \vA^\top b_{t-1}}},  \\
    q_{t+1} &= \vB^\top \vA^\top b_t
\end{aligned}
\end{equation}
Opening up the recursion till $t-n$, and defining the scaling $\alpha_{t,n} {:=} \prod_{i=1}^n \onenorm{\operatorname{diag}(\vB u_{t-i+1}) \vA^\top b_{t-i}}$, we have 
\begin{align}
    q_{t+1} = \vB^\top \vA^\top  \cdot \operatorname{diag}(\vB u_t) \vA^\top  \cdot \operatorname{diag}(\vB u_{t-1}) \vA^\top \cdots \operatorname{diag}(\vB u_{t-n+1}) \vA^\top b_{t-n}/\alpha_{t,n} \label{eqn:output_prediction_orig}
\end{align}
Let $\vB[:,k]$ denote the $k$-th column of the emission matrix $\vB$, and let $\vA_k := \operatorname{diag} \left( \vB[:,k] \right)\vA^\top$. Then \eqref{eqn:belief_update_orig}, and \eqref{eqn:output_prediction_orig} can be alternately expressed as as a bilinear dynamical equation~\cite{sattar2025finite} as follows,
\begin{equation}
\begin{aligned} \label{eqn:belief_update_bilinear}
    b_t &= \operatorname{diag}\left(\sum_{k=1}^N u_t[k] \vB[:,k]\right) \vA^\top b_{t-1}/\alpha_{t,1} = \sum_{k=1}^N u_t[k] \operatorname{diag}\left(\vB[:,k]\right) \vA^\top b_{t-1}/\alpha_{t,1} \\   
    & = \sum_{k=1}^N u_t[k] \vA_k b_{t-1}/\alpha_{t,1}\\
    q_{t+1} &= \vB^\top \vA^\top b_t
\end{aligned}
\end{equation}
Again, opening up the recursion~\eqref{eqn:belief_update_bilinear} till $t-n$, we have
\begin{align}
    q_{t+1} = \vB^\top \vA^\top \left( \sum_{k=1}^N u_t[k] \vA_k \right) \left( \sum_{k=1}^N u_{t-1}[k] \vA_k \right)   \cdots \left( \sum_{k=1}^N u_{t-n+1}[k] \vA_k \right) b_{t-n}/\alpha_{t,n} \label{eqn:output_prediction_bilinear}
\end{align}

Using \eqref{eqn:output_prediction_bilinear}, we can write the next observation prediction as a function of a nonlinear feature vector as follows. 
\begin{lemma}[$n$-gram prediction] \label{lemma:kron_prediction}
Consider an HMM specified by $\boldsymbol{\lambda} = (\boldsymbol{\pi}, \vA, \vB)$. Let $\vA_k := \operatorname{diag} \left( \vB[:,k] \right)\vA^\top$ for all $k\in[N]$, where $\vB[:,k]$ denotes the $k$-th column of the emission matrix $\vB$. Let 
\begin{align}
\phi_t^{(n)}
:=
u_t\otimes u_{t-1}\otimes \cdots \otimes u_{t-n+1}\in\mathbb{R}^{N^n}, \label{eqn:kron_features}
\end{align}
denote a nonlinear feature of $n$ past observations. Let $b_t$ denotes the belief state at time $t$, and $\alpha_{t,0} >0$ be a normalizing scalar. 
Then, there exists a matrix $\Gc_{t,n}\in \mathbb{R}^{N\times N^n}$ such that
\begin{align}
q_{t+1} = \Gc_{t,n}\,\phi_t^{(n)},
\end{align}
where the column of $\Gc_{t,n}$ indexed by $(k_0,\dots,k_{n-1}) \in[N]^n$ is given by
\begin{align}
\Gc_{t,n}[:,(k_0,\dots,k_{L-1})]
=
\vB^\top \vA^\top \vA_{k_0} \vA_{k_1} \cdots \vA_{k_{n-1}}\, b_{t-n}/\alpha_{t,n}.
\end{align}
\end{lemma}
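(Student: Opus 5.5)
The plan is to start from the unrolled bilinear form \eqref{eqn:output_prediction_bilinear} and use multilinearity to expand the product of sums into a sum over index tuples. Write
\[
\prod_{i=0}^{n-1}\Big(\sum_{k=1}^N u_{t-i}[k]\,\vA_k\Big)=\sum_{(k_0,\dots,k_{n-1})\in[N]^n} u_t[k_0]\,u_{t-1}[k_1]\cdots u_{t-n+1}[k_{n-1}]\;\vA_{k_0}\vA_{k_1}\cdots\vA_{k_{n-1}}.
\]
This is a purely algebraic identity: the matrix product distributes over the finite sums, and the scalars $u_{t-i}[k_i]$ commute past the matrices. Substituting into \eqref{eqn:output_prediction_bilinear} then gives
\[
q_{t+1}=\sum_{(k_0,\dots,k_{n-1})} \big(u_t[k_0]\cdots u_{t-n+1}[k_{n-1}]\big)\,\vB^\top\vA^\top\vA_{k_0}\cdots\vA_{k_{n-1}}\,b_{t-n}/\alpha_{t,n}.
\]

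Next I identify the scalar coefficients with the entries of the Kronecker feature. By the definition of the Kronecker product, with the lexicographic ordering of $[N]^n$ in which $k_0$ is the most significant index, the entry of $\phi_t^{(n)}=u_t\otimes\cdots\otimes u_{t-n+1}$ indexed by $(k_0,\dots,k_{n-1})$ is exactly $u_t[k_0]u_{t-1}[k_1]\cdots u_{t-n+1}[k_{n-1}]$. I then define $\Gc_{t,n}$ column by column by the stated formula. The sum above becomes $\sum_{\kappa}\Gc_{t,n}[:,\kappa]\,\phi_t^{(n)}[\kappa]=\Gc_{t,n}\phi_t^{(n)}$, which proves the claim. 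The statement writes $k_{L-1}$ in one place, and I read this as $k_{n-1}$.

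Two points need care, though neither is a real obstacle. The first is that the unrolling \eqref{eqn:output_prediction_orig} is valid. I would verify it by induction on $n$ using \eqref{eqn:belief_update_orig}. At each step, $b_{t-i+1}$ is the unnormalized update divided by its $\ell_1$ norm, and these norms multiply to give $\alpha_{t,n}$. They are positive whenever the observed prefix has positive probability, which is the only case in which $q_{t+1}$ is defined. The second is that $\Gc_{t,n}$ depends on $t$ through $b_{t-n}$ and $\alpha_{t,n}$, and $\alpha_{t,n}$ itself depends on the window. The lemma only asserts existence for each $t$, so this causes no problem.

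Because $\phi_t^{(n)}$ is one-hot, only the single column indexed by the observed $n$-gram survives. This gives a convenient sanity check: that column must be a probability vector, and it is, because after normalization it equals $\vB^\top\vA^\top b_t$.
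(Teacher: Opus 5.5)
Your proposal is correct and follows essentially the same route as the paper. Both arguments expand the unrolled bilinear recursion by distributing the product over the sums, identify each coefficient $u_t[k_0]\cdots u_{t-n+1}[k_{n-1}]$ with the corresponding entry of $\phi_t^{(n)}$, and define $\Gc_{t,n}$ column by column. Your extra checks are sound additions that the paper leaves implicit: the inductive justification of the unrolling with positive normalizers, reading $k_{L-1}$ as $k_{n-1}$, and the one-hot sanity check.
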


\begin{proof}
From equation \eqref{eqn:output_prediction_bilinear}, we have
\[
 q_{t+1} = \vB^\top \vA^\top \left( \sum_{k=1}^N u_t[k] \vA_k \right) \left( \sum_{k=1}^N u_{t-1}[k] \vA_k \right)   \cdots \left( \sum_{k=1}^N u_{t-n+1}[k] \vA_k \right) b_{t-n}/\alpha_{t,n}.
\]
Distributing the product over the sums yields
\[
q_{t+1}
{=}
\sum_{k_0=1}^N \sum_{k_1=1}^N \cdots \sum_{k_{n-1}=1}^N
\left(\vB^\top \vA^\top \vA_{k_0} \vA_{k_1}\cdots \vA_{k_{n-1}} \frac{b_{t-n}}{\alpha_{t,n}}\right)
\left(u_t[k_0]u_{t-1}[k_1]\cdots u_{t-n+1}[k_{n-1}]\right).
\]
Next, note that the coordinate of $ \phi_t^{(n)} = u_t \otimes u_{t-1}\otimes \cdots \otimes u_{t-n+1}$ indexed by $(k_0,\dots,k_{n-1})$ is precisely $
\phi_t^{(n)}[(k_0,\dots,k_{n-1})] = u_t[k_0] u_{t-1}[k_1]\cdots u_{t-n+1}[k_{n-1}]$. Therefore, defining $\Gc_{t,n}$ column-wise by
\begin{align}
\Gc_{t,n}[:,(k_0,\dots,k_{n-1})]
=
\vB^\top \vA^\top \vA_{k_0} \vA_{k_1} \cdots \vA_{k_{n-1}}\, b_{t-n}/\alpha_{t,n}
\end{align}
we obtain $q_{t+1} = \Gc_{t,n}\phi_t^{(n)}$. This completes the proof. 
\end{proof}

\begin{remark}
Note that, since each $u_{t-j}$ is a one-hot vector, the tensor product $u_t\otimes u_{t-1}\otimes \cdots \otimes u_{t-n+1}$ is also one-hot. Hence $q_{t+1}$ is obtained by selecting exactly one column of $\Gc_{t,n}$. 
\end{remark}
One of the drawbacks of the nonlinear feature $\phi_t^{(n)}$ is that the size of the unknown matrix/operator $\Gc_{t,n}$ grows exponentially with the history length $n$. Hence, the sample complexity of learning $\Gc_{t,n}$ also scales with exponentially with $n$. In order to mitigate this issue, we note that the column space of $\Gc_{t,n}$ lies in a low dimensional subspace space, because the effective dimensionality of the unknown parameter space is $M^2 + MN$ (instead of $N^{n+1}$).
\subsection{Linear Features for $n$-gram Prediction}
In this section, we show that, we can learn an efficient $n$-gram prediction function for $q_{t+1}$ by exploiting the special structure of the one-hot nonlinear feature $\phi_t^{(n)}$ in \eqref{eqn:kron_features}. Specifically, we will show how the nonlinear features $\phi_t^{(n)}$ encodes the linear features studied in \cite{hao2025transformers}.
\begin{lemma}[Features relation]\label{lemma:lin_kron_conversion}
    Consider the same setup of Lemma~\ref{lemma:kron_prediction}. Furthermore, let $\psi_t^{(n)} := [u_{t}^\top~~u_{t-1}^\top~~\cdots~~u_{t-n+1}^\top]^\top \in \R^{nN}$ denote a linear feature of $n$ past observations. Then, there exists matrices $\vW_{lin} \in \mathbb{R}^{nN \times N^n}$, $\vW_{kron} \in \mathbb{R}^{N^n \times nN}$, and a vector $v \in \R^{N^n}$ such that,
    \begin{align}
        \psi_t^{(n)} = \vW_{lin} \phi_t^{(n)}, \qquad \text{and} \qquad \phi_t^{(n)} = \mathrm{ReLU}\left( \vW_{kron} \psi_t^{(n)} + v\right)
    \end{align}
    where $\mathrm{ReLU}(x):= x^+ = \max(0,x)$, and is applied entry-wise to vectors.
\end{lemma}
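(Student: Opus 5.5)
Both identities can be read off coordinatewise, using that every $u_{t-j}$ is a standard basis vector; I would give explicit matrices and check them on the basis. Index the coordinates of $\phi_t^{(n)}$ by tuples $(k_0,\dots,k_{n-1})\in[N]^n$, so that
\[
\phi_t^{(n)}[(k_0,\dots,k_{n-1})]=\prod_{j=0}^{n-1}u_{t-j}[k_j].
\]
Index the coordinates of $\psi_t^{(n)}$ by pairs $(j,k)\in\{0,\dots,n-1\}\times[N]$, so that $\psi_t^{(n)}[(j,k)]=u_{t-j}[k]$.

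For the linear map $\vW_{lin}$, set the entry in row $(j,k)$ and column $(k_0,\dots,k_{n-1})$ to $\mathbf 1\{k_j=k\}$. Equivalently, the $j$-th block of rows is $\mathbf 1_N^\top\otimes\cdots\otimes I_N\otimes\cdots\otimes\mathbf 1_N^\top$, with $I_N$ in slot $j$. This block marginalizes the tensor over every slot except slot $j$. Because each $u_{t-i}$ sums to one, we get $(\mathbf 1^\top u_{t-i})=1$, and the Kronecker mixed-product rule gives
\[
(\mathbf 1^\top\otimes\cdots\otimes I\otimes\cdots\otimes\mathbf 1^\top)\,(u_t\otimes\cdots\otimes u_{t-n+1})=u_{t-j}.
\]
Stacking these blocks over $j$ gives $\psi_t^{(n)}=\vW_{lin}\phi_t^{(n)}$. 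As a check, the one-hot $\phi$ selects a single column of $\vW_{lin}$, and that column is exactly the stacked one-hot vectors.

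For the ReLU identity, I would use the standard AND-gate construction. Set the row of $\vW_{kron}$ indexed by $(k_0,\dots,k_{n-1})$ to have ones at the positions $(j,k_j)$ for $j=0,\dots,n-1$ and zeros elsewhere, and set $v=-(n-1)\mathbf 1_{N^n}$. The preactivation at coordinate $(k_0,\dots,k_{n-1})$ is then $s-(n-1)$, where
\[
s=\sum_{j}u_{t-j}[k_j]\in\{0,\dots,n\}
\]
counts how many slots match. Since each $u_{t-j}[k_j]\in\{0,1\}$, we have $\mathrm{ReLU}(s-(n-1))=1$ when $s=n$ and $0$ when $s\le n-1$. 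This equals $\prod_j u_{t-j}[k_j]$, the corresponding entry of $\phi_t^{(n)}$.

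There is no real obstacle. The only point to watch is that the ReLU identity holds only on the set of one-hot inputs, not for general $\psi$. That is all the statement needs, since $\psi_t^{(n)}$ is always built from one-hot vectors. I would state this restriction explicitly, together with the index conventions that match the Kronecker ordering in Lemma~\ref{lemma:kron_prediction}.
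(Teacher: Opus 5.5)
Your proof is correct and is the paper's proof. Your $\vW_{lin}$ is exactly the paper's matrix, whose column indexed by $(k_0,\dots,k_{n-1})$ is the stack $[e_{k_0};\dots;e_{k_{n-1}}]$; the Kronecker-marginalization description is an equivalent view of the same matrix. Your $\vW_{kron}$, with $v=-(n-1)\mathbf{1}$ and the count-then-threshold argument on one-hot inputs, is the paper's construction verbatim, and, as the paper remarks, it equals $\vW_{lin}^\top$.
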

\begin{proof}
We first show the existence of a matrix $\vW_{lin} \in \mathbb{R}^{nN \times N^n}$ such that, $\psi_t^{(n)} = \vW_{lin} \phi_t^{(n)}$. To begin, let $e_1,\dots,e_N$ denote the standard basis vectors of $\mathbb{R}^N$. If we set the column of $\vW_{lin}$ indexed by $(k_0,\dots,k_{n-1}) \in[N]^n$ to be
\begin{align}
    \vW_{lin}[:,(k_0,\dots,k_{n-1})] = \begin{bmatrix}
e_{k_0}\\
e_{k_1}\\
\vdots\\
e_{k_{n-1}}
\end{bmatrix}
\in \mathbb{R}^{nN} \label{eqn:W_lin_def}
\end{align}
then, it is easy to see that $\phi_t^{(n)}$ will pick the column of $\vW_{lin}$ which is exactly $\psi_t^{(n)}$, that is, we have $\psi_t^{(n)} = \vW_{lin} \phi_t^{(n)}$. Next, we show the existence of a matrix $\vW_{kron} \in \mathbb{R}^{N^n \times nN}$, and a vector $v \in \R^{N^n}$ such that, $ \phi_t^{(n)} = \mathrm{ReLU}\left( \vW_{kron} \psi_t^{(n)} + v\right)$. Let $\varphi_t^{(n)}$ be a vector in $\R^{N^n}$ such that, its entries indexed by $(k_0,\dots,k_{n-1}) \in[N]^n$ are given by
\begin{align}
    \varphi_t^{(n)}[(k_0,\dots,k_{n-1})] = \sum_{j=0}^{n-1} u_{t-j}[k_j].
\end{align}

Since $\{u_t\}_{t\geq 0}$ are one-hot vectors, we have
\begin{align}
     \varphi_t^{(n)}[(k_0,\dots,k_{n-1})] \quad 
     \begin{cases}
        =n,& \text{if } u_t {=} e_{k_0},u_{t-1} {=} e_{k_1},\dots,u_{t-n+1} {=} e_{k_{n-1}},\\
        < n,& \text{otherwise.}
    \end{cases}
\end{align}
Applying ReLU function after subtracting $n-1$ from each entry of $\varphi_t^{(n)}$, we have
\begin{align}
    \mathrm{ReLU}\left(\varphi_t^n[(k_0,\dots,k_{n-1})] - n +1 \right) &= 
     \begin{cases}
        1,& \text{if } u_t {=} e_{k_0},u_{t-1} {=} e_{k_1},\dots,u_{t-n+1} {=} e_{k_{n-1}},\\
        0,& \text{otherwise.}  
    \end{cases} \label{eqn:ReLU_on_varphi}
\end{align}
Comparing \eqref{eqn:ReLU_on_varphi} with \eqref{eqn:kron_features}, we have
\begin{align}
    \mathrm{ReLU}\left(\varphi_t^{(n)} -(n-1)\cdot \mathbf{1} \right) =  u_t \otimes u_{t-1}\otimes \cdots \otimes u_{t-n+1} = \phi_t^{(n)}, \label{eqn:phi_equals_ReLU_on_varphi}
\end{align}
where $\mathbf{1}$ is a vector of all ones in $\R^{N^n}$. To complete the proof, we need to show the existence of a matrix $\vW_{kron} \in \mathbb{R}^{N^n \times nN}$ such that $\varphi_t^{(n)} = \vW_{kron} \psi_t^{(n)}$. If we set the row of $\vW_{kron}$ indexed by $(k_0,\dots,k_{n-1}) \in[N]^n$ to be
\begin{align}
\vW_{kron}[(k_0,\dots,k_{n-1}),:] = \begin{bmatrix}
e_{k_0}^\top &
e_{k_1}^\top &
\hdots &
e_{k_{n-1}}^\top
\end{bmatrix}
\in \mathbb{R}^{nN} \label{eqn:W_kron_def}
\end{align}
then, it is easy to see that $\vW_{kron}\psi_t^{(n)} = \varphi_t^{(n)}$. Combining this with \eqref{eqn:phi_equals_ReLU_on_varphi}, we have
\begin{align}
    \mathrm{ReLU}\left(\vW_{kron}\psi_t^{(n)} -(n-1)\cdot \mathbf{1} \right) =  u_t \otimes u_{t-1}\otimes \cdots \otimes u_{t-n+1} = \phi_t^{(n)}. \label{eqn:phi_equals_ReLU_on_psi}
\end{align}
This completes the proof.
\end{proof}
\begin{remark}
    From the Proof of Lemma~\ref{lemma:lin_kron_conversion}, it is easy to see that $\vW_{kron} = \vW_{lin}^\top$.
\end{remark}
Lemma~\ref{lemma:lin_kron_conversion} shows that the nonlinear features $\phi_t^{(n)}$ can be obtained from the linear features $\psi_t^{(n)}$ after passing it through a ReLU network. However, this increases the dimension of the feature space exponentially. In the following, we will identify conditions, under which linear features $\psi_t^{(n)}$ can be used to approximate the prediction $q_{t+1}$. 

\subsection{When does Linear Features $\psi_t^{(n)}$ provide accurate prediction?}
\label{app:when_linear_acc}
First, we show that the matrix $\vW_{lin}$ in Lemma~\ref{lemma:lin_kron_conversion} is not invertible (rank-deficient). Hence, we cannot construct the nonlinear features $\phi_t^{(n)}$ from the linear features $\psi_t^{(n)}$ by simple matrix inversion.
\begin{proposition}[Linear approximation]\label{prop:linear_approx}
    Consider the same setup of Lemmas~\ref{lemma:kron_prediction}, and \ref{lemma:lin_kron_conversion}. We have, (a) $\operatorname{rank}(\vW_{lin}) < nN$; and (b) The linear features $\psi_t^{(n)}$ can be used to approximate the prediction $q_{t+1}$, if there exists a matrix/operator $\Kc_{t,n} \in \R^{N \times nN}$ such that $\Gc_{t,n} \approx \Kc_{t,n} \vW_{lin}$.
\end{proposition}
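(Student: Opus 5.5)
For part (a), I will exhibit a nonzero vector in the left null space of $\vW_{lin}$, i.e. a nonzero linear functional on $\R^{nN}$ that annihilates every column of $\vW_{lin}$. By \eqref{eqn:W_lin_def}, each column indexed by $(k_0,\dots,k_{n-1})$ is the stacked vector $[e_{k_0}^\top~\cdots~e_{k_{n-1}}^\top]^\top$. Each of the $n$ blocks is a standard basis vector, so its entries sum to one.

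Let $\mathbf{1}_N\in\R^N$ denote the all-ones vector. For $n\ge 2$ I take
\[
w := [\mathbf{1}_N^\top,\,-\mathbf{1}_N^\top,\,0,\dots,0]^\top \in \R^{nN}.
\]
For every column $c$ of $\vW_{lin}$ we get $w^\top c = 1-1 = 0$, so $w^\top \vW_{lin}=0$. Since $w\neq 0$, the row rank of $\vW_{lin}$ is strictly less than $nN$.

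The same construction gives $n-1$ linearly independent such functionals: the differences between the block sums of block $1$ and block $j$, for $j=2,\dots,n$. This yields the sharper bound $\operatorname{rank}(\vW_{lin})\le nN-n+1$. The bound is attained, since the columns span the affine set of stacked one-hot vectors.

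The case $n=1$ needs separate attention. There $\vW_{lin}=I_N$ has full rank, so the strict inequality fails. I would state (a) for $n\ge 2$ and note that for $n=1$ the features coincide. This edge case is the only real obstacle in (a).

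For part (b), I read ``approximate'' as an operator-norm statement and prove a bound. Start from Lemma~\ref{lemma:kron_prediction}, $q_{t+1}=\Gc_{t,n}\phi_t^{(n)}$, and from Lemma~\ref{lemma:lin_kron_conversion}, $\psi_t^{(n)}=\vW_{lin}\phi_t^{(n)}$. Set the linear-feature predictor to $\hat q_{t+1}:=\Kc_{t,n}\psi_t^{(n)}=\Kc_{t,n}\vW_{lin}\phi_t^{(n)}$. Then
\[
q_{t+1}-\hat q_{t+1}=(\Gc_{t,n}-\Kc_{t,n}\vW_{lin})\phi_t^{(n)}.
\]
Because $\phi_t^{(n)}$ is one-hot, this difference is exactly one column of $\Gc_{t,n}-\Kc_{t,n}\vW_{lin}$. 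Hence
\[
\|q_{t+1}-\hat q_{t+1}\|\le \max_{\text{cols}}\|\Gc_{t,n}-\Kc_{t,n}\vW_{lin}\|,
\]
and this is at most the $\ell_1\to\ell_2$ operator norm. Consequently, whenever $\Gc_{t,n}\approx\Kc_{t,n}\vW_{lin}$ column-wise, the linear features approximate the prediction uniformly over all histories.

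The rank deficiency from (a) explains why (b) is only sufficient and not automatic. The row space of $\Kc_{t,n}\vW_{lin}$ is confined to the row space of $\vW_{lin}$. So exact equality $\Gc_{t,n}=\Kc_{t,n}\vW_{lin}$ requires each row of $\Gc_{t,n}$, viewed as a function on $[N]^n$, to be additively separable, that is, a sum of single-coordinate functions. Optionally I would add this characterization: exact representability holds if and only if the rows of $\Gc_{t,n}$ are orthogonal to $\ker\vW_{lin}$. The best $\Kc_{t,n}$ is then the least-squares one, $\Gc_{t,n}\vW_{lin}^{+}$, and the residual is the projection of $\Gc_{t,n}$ onto $\ker\vW_{lin}$.
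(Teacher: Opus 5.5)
Your proposal is correct and follows the paper's proof. Part (a) uses the same $n-1$ block-sum dependencies among the rows of $\vW_{lin}$, with the same $n>1$ caveat the paper states in its proof. Part (b) is the same substitution $q_{t+1}=\Gc_{t,n}\phi_t^{(n)}\approx \Kc_{t,n}\vW_{lin}\phi_t^{(n)}=\Kc_{t,n}\psi_t^{(n)}$; your exact rank $nN-n+1$, column-wise error bound, and additive-separability characterization are valid sharpenings of points the paper leaves qualitative.
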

\begin{proof}
Since every column of $\vW_{lin}$ contains exactly one $1$ in each of the $n$ blocks, the block sums are identical. Thus there are $n-1$ linear dependencies among the rows. This implies that
\[
\operatorname{rank}(\vW_{lin}) = \operatorname{rank}(\vW_{lin}\vW_{lin}^\top) < nN
\qquad \text{for } n>1.
\]
To prove the second statement of Proposition~\ref{prop:linear_approx}, recall from Lemma~\ref{lemma:kron_prediction}, that we have, $q_{t+1} = \Gc_{t,n}\,\phi_t^{(n)}$. Hence, if there exists a matrix/operator $\Kc_{t,n} \in \R^{N \times nN}$ such that $\Gc_{t,n} \approx \Kc_{t,n} \vW_{lin}$, then we have
\begin{align}
   q_{t+1} \approx \Kc_{t,n} \vW_{lin}\,\phi_t^{(n)} = \Kc_{t,n} \,\psi_t^{(n)}, 
\end{align}
where we get the last equality from Lemma~\ref{lemma:lin_kron_conversion}. This completes the proof. 
\end{proof}

\subsection{Transformer Construction for $\phi_t^{(n)}$ and $\psi_t^{(n)}$}
In this section, we can show that we can construct a transformer architecture with either $\log(n)$ layers or a single layer with $n$ attention heads to extract the nonlinear feature $\phi_t^{(n)}$ (hence the linear feature $\psi_t^{(n)}$) from the one-hot vectors $u_t, u_{t-1}, \dots, u_{0}$.

\begin{lemma}[Transformer realization of $\phi_t^{(n)}$]
\label{prop:transformer-realizes-phi}
Fix the history length $n \ge 1$. Suppose the lagged observations (one-hot) $u_t,u_{t-1},\dots,u_{0}$ are provided to a transformer as separate tokens together with positional encoding that uniquely identify the lags $0,1,\dots,t$. Let $\phi_t^{(n)}$, $\psi_t^{(n)}$ be as defined in Lemma~\ref{lemma:lin_kron_conversion}. Then, there exists, either (a) finite-depth multi-head transformer ($n$ heads), or (b) $\mathcal{O}(\log(n))$-depth transformer, with feed-forward MLP sub-layers that computes $\phi_t^{(n)}$ (and $\psi_t^{(n)}$ ) exactly.
\end{lemma}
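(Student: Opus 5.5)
The plan is to build the linear feature $\psi_t^{(n)}$ first and then obtain $\phi_t^{(n)}$ from it with a single feed-forward sub-layer, using Lemma~\ref{lemma:lin_kron_conversion}. That lemma gives $\phi_t^{(n)} = \mathrm{ReLU}(\vW_{kron}\psi_t^{(n)} - (n-1)\mathbf{1})$, which is exactly one ReLU MLP layer with weight $\vW_{kron}$ and bias $v=-(n-1)\mathbf{1}$. So the whole task reduces to showing that attention can assemble the $nN$-dimensional stack $\psi_t^{(n)} = [u_t^\top~\cdots~u_{t-n+1}^\top]^\top$ in the residual stream at position $t$. Because $\psi_t^{(n)} = \vW_{lin}\phi_t^{(n)}$, we also get $\psi_t^{(n)}$ for free, either directly or by one more linear map.

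\textbf{Part (a): one layer with $n$ heads.} Embed each token $s$ as $x_s = (u_s, p_s, 0_{nN})$. Here $p_s$ is a positional code chosen so that an inner product $\langle Q_j p_t, K p_s\rangle$ is uniquely maximized at $s = t-j$. A concrete choice is $p_s = (\cos\theta s, \sin\theta s)$ with distinct angles, with $Q_j$ acting as rotation by $j\theta$. Alternatively, use the standard quadratic trick $-\beta(s-(t-j))^2$, expanded in the features $(1,s,s^2)$.

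Head $j\in\{0,\dots,n-1\}$ uses this query/key pair scaled by a temperature $\beta$. Its value matrix reads $u_s$ and writes it into block $j$ of the zero slot. With hardmax attention, which we take as the idealized model (or softmax with $\beta\to\infty$, giving an error that is exponentially small in $\beta$ and can be removed by the subsequent ReLU thresholding since entries are $\{0,1\}$), head $j$ returns exactly $u_{t-j}$. Summing the heads through the residual connection places $\psi_t^{(n)}$ in the reserved coordinates. The MLP then computes $\phi_t^{(n)}$ as above, giving a depth-one construction.

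\textbf{Part (b): $\mathcal{O}(\log n)$ depth, single or constant heads.} Use doubling. Assume that after layer $\ell$ every position $s$ stores the window $\psi_s^{(2^\ell)}$, starting from $\psi_s^{(1)} = u_s$. Layer $\ell+1$ uses one head attending from $t$ to $t-2^\ell$, with the same positional mechanism and offset $2^\ell$. It copies that position's stored window $\psi_{t-2^\ell}^{(2^\ell)}$ into the next block. This yields $\psi_t^{(2^{\ell+1})}$ after concatenation. The concatenation is done by having the value matrix write into fixed coordinate slots, which are reserved in advance with total width $nN$ rounded up to a power of two.

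After $\lceil\log_2 n\rceil$ layers we linearly truncate to the first $n$ blocks and apply the MLP from Lemma~\ref{lemma:lin_kron_conversion}. Boundary positions with $t-j<0$ are handled by padding with a null token.

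\textbf{Main obstacle.} The main difficulty is making the attention selection exact: realizing relative-offset lookup with positional encodings and softmax, and making sure the residual blocks do not interfere. I would state the result for hardmax or with an explicit $\beta$ bound. I would then argue that the final ReLU with threshold $n-1$ is robust to perturbations below $1/(2n)$, so exact $\phi_t^{(n)}$ is recovered after an additional rounding step (the ReLU difference $\mathrm{ReLU}(2x-1)-\mathrm{ReLU}(2x-2)$ per coordinate).
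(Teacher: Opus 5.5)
Your part (a) is the paper's construction. It uses one head per lag to copy $u_{t-j}$ into reserved block $j$, giving $\psi_t^{(n)}$, followed by the single ReLU layer $\mathrm{ReLU}(\vW_{kron}\psi_t^{(n)}-(n-1)\mathbf{1})$. You only make explicit the positional lookup that the paper leaves as ``choose the query so that it attends only to the token carrying lag $j$.''

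Your part (b) is correct but takes a different route. The paper builds a binary tree of \emph{Kronecker} products, $v_j^{(k+1)}=v_{2j}^{(k)}\otimes v_{2j+1}^{(k)}$. It invokes the $n=2$ case of Lemma~\ref{lemma:lin_kron_conversion} as a ReLU sub-layer at every level, so the intermediate representations are one-hot vectors of width $N^{2^k}$. You instead double the \emph{concatenated} window, $\psi_t^{(2^{\ell+1})}=[\psi_t^{(2^\ell)};\psi_{t-2^\ell}^{(2^\ell)}]$, using attention alone (a head at relative offset $2^\ell$ and a value map into a pre-reserved slot). You apply the nonlinearity once, at the end.

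Your version has three advantages. It keeps the residual stream at width $O(nN)$ until the final MLP. It isolates all multiplicative interaction in one sub-layer, matching the paper's own remark that attention routes and the MLP multiplies. It also fills in a routing step the paper glosses over: the paper's ``adjacent tokens'' at level $k$ actually sit $2^k$ positions apart, so its tree needs exactly your offset-$2^\ell$ lookups. The paper's tree is your doubling scheme with $\otimes$ in place of concatenation. Its trade-off is that each ReLU unit combines only two one-hot factors (fan-in $2$, bias $-1$) rather than $n$ blocks at once, at the cost of MLP width growing with the level.

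One small correction to your softmax remark. $\mathrm{ReLU}(2x-1)-\mathrm{ReLU}(2x-2)$ equals $2x-1$ on $[1/2,1]$, so a matched coordinate perturbed to $1-\epsilon$ is sent to $1-2\epsilon$, not to $1$. Exact recovery needs the ramp to sit strictly inside the gap between perturbed zeros and perturbed ones. For example, once entrywise errors are at most $1/4$, round each entry of $\psi_t^{(n)}$ with $\mathrm{ReLU}(4x-1)-\mathrm{ReLU}(4x-2)$ before applying $\vW_{kron}$. This only matters outside the idealized hard-attention model, which is the one the paper implicitly uses for this lemma.
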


\begin{proof}
First, we provide the proof of claim (a). Our main observation is that attention can gather the lagged symbols into $\psi_t^{(n)}$ as follows: Introduce a dedicated summary token $s$.
Using the positional encoding, assign one attention head to each lag $j\in\{0,\dots,n-1\}$. Choose the query of head $j$ at the summary token so that it attends only to the token carrying lag $j$.
Choose the value projection of that head to copy the content vector of that token into a reserved $N$-dimensional slot of the summary token. After concatenating the outputs of the $n$ heads, the summary token contains $\psi_t^{(n)} := [u_{t}^\top~~u_{t-1}^\top~~\cdots~~u_{t-n+1}^\top]^\top \in \R^{nN}$. Combining this with Lemma~\ref{lemma:lin_kron_conversion}, it is evident that, we need one feed-forward MLP layer to map $\psi_t^{(n)}$ to $\phi_t^{(n)}$. 
Thus, one multi-head attention layer suffices to recover the stacked lagged history exactly from the raw tokens, provided the positions are identifiable. Moreover, one multi-head attention layer with one feed-forward (ReLU) layer suffices to recover the Kronecker products of lagged history exactly.

To provide the proof of claim (b), we construct $\phi_t^{(n)}$ by a binary tree as follows: For simplicity, first we assume that $n=2^r$ for some integer $r>0$. This is without loss of generality and changes the depth only by a constant. At layer $0$, the Transformer stores the $n$ vectors
\[
    v_j^{(0)}:=u_{t-j}\in \mathbb{R}^N,
    \qquad j=0,1,\ldots,n-1.
\]
Assuming that a single attention head followed by a feed-forward MLP later can perform a tensor product (note that $u \otimes v = \operatorname{vec}(u v^\top)$) of two adjacent tokens, at layer $1$, it performs pairwise tensor products
\[
    v_j^{(1)}
    :=
    v_{2j}^{(0)}\otimes v_{2j+1}^{(0)}
    \in \mathbb{R}^{N^2},
    \qquad
    j=0,\ldots,n/2-1.
\]
Similarly, at layer $2$, it performs pairwise tensor products
\[
    v_j^{(2)}
    :=
    v_{2j}^{(1)}\otimes v_{2j+1}^{(1)}
    \in \mathbb{R}^{N^4}, \qquad
    j=0,\ldots,n/4-1
\]
Continuing in this way, 
after $r=\log_2(n)$ layers, there is a single vector
\[
    v_0^{(r)}
    =
    u_t\otimes u_{t-1}\otimes\cdots\otimes u_{t-n+1}
    =
    \phi_t^{(n)}.
\]

Lastly, we will prove our claim that a Transformer layer can perform pairwise tensor products as follows: Given two adjacent tokens $u, v$, from Lemma~\ref{lemma:lin_kron_conversion} a single ReLU-layer maps $(u,v)$ to $u \otimes v$. Hence, using an ReLU network in each layer of the transformer, we can guarentee that each Transformer layer can perform pairwise tensor products. This proves our claim that an $\mathcal{O}(\log(n))$-depth transformer, with feed-forward MLP sub-layers (specifically ReLU) can compute $\phi_t^{(n)}$ (hence, $\psi_t^{(n)}$ through a linear map) exactly. This completes the proof.
\end{proof}

\begin{remark}[Role of attention vs. MLP]
The attention layer performs routing, that is, it collects the lagged one-hot vectors into a common summary representation.
The nonlinear interaction across positions is carried out by the feed-forward ReLU layer.
In particular, the exact computation of $\phi_t^{(n)}$ is not a purely additive single-head attention phenomenon, it relies on higher-order cross-position interactions.
\end{remark}

\subsection{Transformer can Implement Gradient Descent on the Cross-Entropy Loss}
In the previous section, we showed that both linear and nonlinear features can be realized by Transformers. In this section, we will build on this tow show that Transformers can also implement Gradient Descent algorithm to minimize the cross-entropy loss of predicting $q_{t+1}$ from $\phi_t^{(n)}$ or $\psi_t^{(n)}$ via a linear map.

\begin{theorem}[Log-depth construction for CE-GD]\label{thrm_CE_GD}
Consider the same setup of Lemmas~\ref{lemma:kron_prediction}, and \ref{lemma:lin_kron_conversion}. Furthermore, consider a dataset
    \begin{align}
        \mathcal{D}_m=\{(\mathfrak{h}_t,y_t)\}_{t=n}^{n+m-1},\qquad
        \mathfrak{h}_t=(u_{t},u_{t-1},\ldots,u_{t-n+1}),
        \qquad y_i\in \{e_1,\ldots,e_N\}.
    \end{align}
     Let $\vW\in \mathbb{R}^{N\times N^n}$ be the parameter of multinomial logistic regression, $q_\vW(\cdot\mid \phi)=\operatorname{softmax}(\vW\phi)$, 
    trained by full-batch gradient descent on the empirical cross-entropy loss
    \[
        \mathcal{L}_m(\vW)
        :=
        \frac{1}{m}\sum_{t=n}^{n+m-1}
        \ell(\vW\phi_{t}^{(n)},y_t),
        \qquad
        \ell(z,y):=-y^\top \log \operatorname{softmax}(z),
    \]
    If the number of GD iterations satisfies $T=\mathcal{O}(\log(n))$, then there exists a
    Transformer of depth $\mathcal{O}(\log(n))$ that emulates $T$ steps of full-batch GD on the CE loss of the logistic-regression model $q_\vW(\cdot\mid \phi)=\operatorname{softmax}(\vW\phi)$.
\end{theorem}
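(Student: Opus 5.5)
The construction has two parts. First, a feature-extraction stage of depth $\mathcal{O}(\log(n))$ writes $\phi_t^{(n)}$ into a reserved block of every token position $t\in\{n,\dots,n+m-1\}$ (and the query position). This is supplied directly by Lemma~\ref{prop:transformer-realizes-phi}(b). Second, each GD iteration is emulated by a constant number of Transformer layers, so $T=\mathcal{O}(\log(n))$ iterations cost $\mathcal{O}(\log(n))$ further layers. The total depth is therefore $\mathcal{O}(\log(n))+\mathcal{O}(1)\cdot T=\mathcal{O}(\log(n))$. I also need the label $y_t=u_{t+1}$ at position $t$. One extra attention head with positional encoding copies $u_{t+1}$ from the next token into a label slot, and the previous-token routing already used in Lemma~\ref{prop:transformer-realizes-phi}(a) handles this in reverse.

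\textbf{Representing the iterate.} The iterate $\vW^{(s)}$ need not be stored explicitly. I will exploit the one-hot structure of $\phi_t^{(n)}$, noted in the remark after Lemma~\ref{lemma:kron_prediction}. The full-batch gradient is
\[
\nabla\mathcal{L}_m(\vW)=\frac1m\sum_{t}(\operatorname{softmax}(\vW\phi_t^{(n)})-y_t)\phi_t^{(n)\top},
\]
and the logit $\vW\phi_t^{(n)}$ is just the column of $\vW$ indexed by the $n$-gram at position $t$. Starting from $\vW^{(0)}=0$, every iterate has the form $\vW^{(s)}=\sum_{t'} c^{(s)}_{t'}\phi_{t'}^{(n)\top}$ with $c^{(s)}_{t'}\in\R^N$. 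So each token carries its own logit $z_t^{(s)}:=\vW^{(s)}\phi_t^{(n)}\in\R^N$ in a residual slot. One GD step then reads
\[
z_t^{(s+1)}=z_t^{(s)}-\frac{\eta}{m}\sum_{t'}\langle\phi_t^{(n)},\phi_{t'}^{(n)}\rangle\,g_{t'}^{(s)},\qquad g_{t'}^{(s)}:=\operatorname{softmax}(z_{t'}^{(s)})-y_{t'}.
\]

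\textbf{Emulating one step.} In the first sublayer, an MLP computes $g_{t'}^{(s)}$ tokenwise from $z_{t'}^{(s)}$ and $y_{t'}$. Softmax over $N$ coordinates is a fixed continuous function on a bounded domain, since the logits stay bounded over $T$ steps with bounded $\eta$. It can therefore be approximated to any accuracy $\epsilon$ by a ReLU MLP, or computed exactly if an attention head over $N$ auxiliary value slots is used. In the second sublayer, an attention head with query and key both equal to $\phi^{(n)}$ has inner-product score $\langle\phi_t^{(n)},\phi_{t'}^{(n)}\rangle=\mathbb{1}[\text{same }n\text{-gram}]$. Its value is $g_{t'}^{(s)}$, and the output is added residually to $z_t$ with scale $-\eta$.

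\textbf{Main obstacle.} The difficulty is that this step needs \emph{linear} (unnormalized) attention summing $\frac1m\sum_{t'}$, whereas standard attention is softmax-normalized. My first approach is to use linear/ReLU attention as in the in-context GD constructions of \citet{bai2023transformers}, which those works permit. If softmax attention is required, I will scale the keys by a large constant $\beta$. Attention then becomes uniform over the tokens sharing the $n$-gram, so the head returns their average. I then multiply back by the count $m_{\text{gram}}/m$, which a separate head can compute as the average of a constant-$1$ slot against a reference token with score $0$. Finally, I will propagate the MLP approximation errors through $T$ steps. Since the GD map is Lipschitz with constant $1+\eta$, the errors grow at most by a factor $(1+\eta)^T=\mathrm{poly}(n)$ for $T=\mathcal{O}(\log n)$, which can be absorbed by choosing $\epsilon$ polynomially small. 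The prediction at the query token is then $\operatorname{softmax}(z^{(T)})$.
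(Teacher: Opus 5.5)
Your argument is sound, but it is a genuinely different construction from the paper's.

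The paper uses the one-hot structure of $\phi_t^{(n)}$ to decouple GD column-wise. The query's prediction depends only on the column $\vW[:,\alpha_q]$, whose trajectory is an $N$-dimensional recursion driven by two sufficient statistics: the match count $N_{\alpha_q}$ and the label-count vector $c_{\alpha_q}$. The paper therefore computes the match indicators $\mathrm{ReLU}(\sum_{j}u_{t-j}^\top u_{q-j}-n+1)$ and aggregates $(N_{\alpha_q},c_{\alpha_q})$ into a single summary/query token once. It then unrolls the $T$ GD steps purely in token-local MLP blocks.

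You instead use the functional (kernel) form of GD from in-context-GD constructions. Every token carries its own logit $z_t^{(s)}$, and each step is one attention layer with kernel $\langle\phi_t^{(n)},\phi_{t'}^{(n)}\rangle$ followed by an MLP for the softmax residual.

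The paper's route is more economical. Its cross-token work collapses to a single aggregation into one token, so the normalization issue you flag arises once rather than at every step. Your route needs all-to-all attention at every step, because each training token must hold the full-batch iterate, and your label copy from position $t+1$ also looks forward. It therefore relies on bidirectional linear/ReLU attention. This is allowed by the statement, but under causal masking the training tokens would hold prefix-based iterates. The natural repair is exactly the paper's reduction to the query column.

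In exchange, your update $z_t^{(s+1)}=z_t^{(s)}-\frac{\eta}{m}\sum_{t'}\langle\phi_t^{(n)},\phi_{t'}^{(n)}\rangle g_{t'}^{(s)}$ never uses column decoupling. With linear attention and $\psi_t^{(n)}$ in place of $\phi_t^{(n)}$, the same construction yields full-batch CE-GD for the \texttt{Linear $n$-gram} model. The paper's argument cannot reach that case, because $\psi_t^{(n)}$ is not one-hot. You also supply the error-propagation bound across the $T$ approximate steps, which the paper leaves implicit.

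Two details to tidy. First, mask the query token's own label-less gradient, and the positions $t<n$, out of the sum. Second, replace the garbled count-rescaling sentence with a standard counting trick: for example, give a reference token the same score $\beta$ as the matching tokens, read off $1/(k+1)$, and invert it with an MLP on that finite set.
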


\begin{proof} First, from Lemma~\ref{prop:transformer-realizes-phi}, we know that Transformers can realize both $\phi_t^{(n)}$ and $\psi_t^{n}$. Next, we consider logistic regression on these features. Let $\phi_t^{(n)} \in \R^{N^n}$ is indexed by $\alpha_t := (k_0,\dots,k_{n-1}) \in [N]^n$. Then, using the fact that $\phi_t^{(n)}$ is a one-hot vector, we have
\begin{align}
    \vW\phi_t^{(n)}
    =
    \vW[:, (k_0,\dots,k_{n-1})] = \vW[:, \alpha_t].
\end{align}
Now group the samples according to their active history index $\alpha_t$, that is, collecting together all samples whose nonlinear feature equals the same basis vector $e_{\alpha_t}$, we have
\begin{align}
    \mathcal{L}_m(\vW)
    =
    \frac{1}{m}
    \sum_{t=n}^{n+m-1}
    \ell(\vW[:, \alpha_t],y_t)
    =
    \frac{1}{m}
    \sum_{\alpha\in[M]^L}
    \sum_{t=n}^{n+m-1} \mathbf{1}\{\alpha_t=\alpha\}
    \ell(\vW[:, \alpha],y_t).
\end{align}
Thus the GD dynamics are column-wise decoupled. For a single sample,
\begin{align}
    \nabla_\vW \ell(\vW\phi_t^{(n)},y_t)
    =
    \left(
        \operatorname{softmax}(\vW\phi_t^{(n)})-y_t
    \right)
    \left(\phi_t^{(n)}\right)^\top.
\end{align}
Using $\phi_t^{(n)}= e_{\alpha_t}$, this becomes
\begin{align}
    \nabla_\vW \ell(\vW\phi_t^{(n)},y_t)
    =
    \left(
        \operatorname{softmax}(\vW[:,\alpha_t])-y_t
    \right)
    e_{\alpha_t}^\top.
\end{align}
Hence, the sample $t$ updates only the column $\vW[:,\alpha_t]$. For the query-relevant column $\theta_{\alpha_q}$ (where $q=n+m$), define the matching-history count and label-count
vector
\begin{align}
    N_{\alpha_q}
    :=
    \sum_{t=n}^{n+m-1} \mathbf{1}\{\alpha_t=\alpha_q\},
    \qquad
    c_{\alpha_q}
    :=
   \sum_{t=n}^{n+m-1} \mathbf{1}\{\alpha_t=\alpha_q\} y_t
    \in \mathbb{R}^N.
\end{align}
Then the full-batch GD recursion for the query-relevant column is
\begin{align}
    \vW[:,\alpha_q]^{(s+1)}
    =
    \vW[:,\alpha_q]^{(s)}
    -
    \frac{\eta}{m}
    \left(
        N_{\alpha_q}\operatorname{softmax}(\vW[:,\alpha_q]^{(s)})
        -
        c_{\alpha_q}
    \right),
    \quad s=0,1,\ldots,T-1.
\end{align}
Moreover,
    $\operatorname{softmax}\!\left(\vW^{(T)}\phi_q^{(n)}\right)
    =
    \operatorname{softmax}\!\left(\vW^{(T)}e_{\alpha_q}\right)
    =
    \operatorname{softmax}\!\left(\vW[:,\alpha_q]^{(T)}\right)$. 
Thus to emulate GD for the query, the Transformer does not need to update all
$N^n$ columns of $\vW$. It only needs to compute the sufficient statistics
\begin{align}
    (N_{\alpha_q},c_{\alpha_q})
\end{align}
for the history class matching the query, and then simulate the $N$-dimensional recursion above. In the following, we will show that a Transformer can compute these sufficient statistics in $\mathcal{O}(\log (n))$ depth. For each training example $t$, the Transformer compares its history $\mathfrak{h}_t=(u_{t},\dots,u_{t-n+1})$
with the query history $\mathfrak{h}_q=(u_{q},\dots,u_{q-n+1})$. For every lag $j \in [n-1]$,
\begin{align}
    u_{t-j}^\top u_{q-j}
    =
    \begin{cases}
    1,
    \quad \text{iff} \quad
    u_{t-j}=u_{q-j} \\
    0, \quad \text{otherwise}
    \end{cases}
\end{align}
Therefore, we have
\begin{align}
    r_t
    :=
    \sum_{j=0}^{n-1} u_{t-j}^\top u_{q-j}
    \quad
    \begin{cases}
    =n,
    \quad \text{iff} \quad
    h_{t}=h_{q} \\
    < n, \quad \text{otherwise}
    \end{cases}
\end{align}
The sum defining $r_t$ can be computed by a parallel binary reduction
over the $n$ lags, using $\mathcal{O}(\log (n))$ layers. A thresholding feed-forward map then computes
\begin{align}
    \
    \mu_t
    :=
    \mathbf{1}\{\mathfrak{h}_t=\mathfrak{h}_q\}
    =
    \operatorname{ReLU}(r_t-n+1),
\end{align}
because $r_t$ is an integer in $\{0,1,\dots,n-1\}$. Once the selectors $\mu_t$ are available, attention aggregation computes
\begin{align}
    c_{\alpha_q}
    =
    \sum_{t=n}^{n+m-1} \mu_t y_t,
    \qquad
    N_{\alpha_q}
    =
    \sum_{t=n}^{n+m-1} \mu_t.
\end{align}
This aggregation can be performed by a summary token attending to the context examples with values
$(y_t,1)$. In the idealized hard-attention model this is exact; in the standard softmax
model it can be approximated arbitrarily well on the finite input domain. Finally, define the one-step GD update map
\begin{align}
    F(\vW[:, \alpha],c,N)
    :=
    \vW[:, \alpha]
    -
    \frac{\eta}{m}
    \left(
        N\operatorname{softmax}(\vW[:, \alpha])-c
    \right).
\end{align}
The map $F$ is smooth in $\vW[:, \alpha]$ and continuous in $(\vW[:, \alpha],c,N)$. Hence a feed-forward MLP layer can approximate it uniformly. In the idealized hard-attention model, the block computes it exactly. Applying this block recurrently for $T$ Transformer layers, and applying softmax to the final output layer gives
\begin{align}
    \operatorname{softmax}(\vW[:, \alpha_q]^{(T)})
    =
    \operatorname{softmax}\!\left(\vW^{(T)}\phi_q^{(n)}\right).
\end{align}

The history matching and nonlinear feature construction require $\mathcal{O}(\log (n))$ depth, while the unrolled GD simulation requires $\mathcal{O}(T)$ depth. Therefore the total depth is $\mathcal{O}(\log (n)+T)$. In particular, when $T=\mathcal{O}(\log (n))$, the total depth is $\mathcal{O}(\log (n))$. This completes the proof.
\end{proof}

%% file: appendix_approximation.tex
Recall from Appendix~\ref{app:bench:spectral} that the spectral learning algorithm
estimates low-order moment statistics from the observation prefix and forms observation
operators to recursively update a belief state. Following \citet{hsu2012spectral}, define
the uni-, bi-, and trigram moment matrices
\begin{align}
    [P_1]_i &:= \Pr[O_1 = i] \;\in \mathbb{R}^N, \\
    [P_{2,1}]_{ij} &:= \Pr[O_2 = i,\, O_1 = j] \;\in \mathbb{R}^{N \times N}, \\
    [P_{3,o,1}]_{ij} &:= \Pr[O_3 = i,\, O_2 = o,\, O_1 = j] \;\in \mathbb{R}^{N \times N},
    \quad o \in \mathcal{O},
\end{align}
where $N$ denotes the number of distinct observations and $M$ the number of hidden states.
We use the one-hot encodings $u_s := e_{o_s} \in \mathbb{R}^N$ as defined in
Appendix~\ref{app:$n$-gram_prediction}.

Let $U \in \mathbb{R}^{N \times M}$ be the matrix of left singular vectors of $P_{2,1}$
corresponding to its $M$ largest singular values. By Lemma~2 of \citet{hsu2012spectral},
$\operatorname{range}(U) = \operatorname{range}(\vB^\top)$, so $U$ spans the same subspace
as the emission matrix and $U^\top \vB^\top$ is invertible. The observable representation
is then defined by
\begin{align}
    \widetilde{b}_1 &:= U^\top P_1 \;\in \mathbb{R}^M, \\
    \widetilde{b}_\infty &:= (P_{2,1}^\top U)^+ P_1 \;\in \mathbb{R}^M, \\
    B_o &:= (U^\top P_{3,o,1})(U^\top P_{2,1})^+ \;\in \mathbb{R}^{M \times M},
    \quad o \in \mathcal{O},
\end{align}
where $(\cdot)^+$ denotes the Moore--Penrose pseudoinverse. By Lemma~3 of
\citet{hsu2012spectral}, the belief state $\widetilde{b}_t \in \mathbb{R}^M$ evolves as
$\widetilde{b}_{t+1} \propto B_{o_t} \widetilde{b}_t$, and the conditional predictive
satisfies
\begin{align}\label{eq:spectral}
    \Pr(O_{t+1} = o \mid O_{1:t} = o_1, \dots, o_t)
    \;\propto\;
    \widetilde{b}_\infty^\top B_{o_t} \cdots B_{o_1}\, \widetilde{b}_1.
\end{align}
In-context, $P_1$, $P_{2,1}$, and $P_{3,o,1}$ are replaced by their empirical estimates
from the observation prefix, and $U$ is estimated as the top-$M$ left singular vectors of
$\widehat{P}_{2,1}$.

The core computation in~\eqref{eq:spectral} decomposes into three primitive operations on
the observation sequence $o_{1:t}$:
\begin{itemize}
    \item $f_{\mathrm{uni}}(o_{1:t}) := U^\top P_1 \in \mathbb{R}^M$, a linear function of
          the empirical unigram frequencies.
    \item $f_{\mathrm{bi}}(o_{1:t},\, v) := (U^\top P_{2,1})^{-1} v \in \mathbb{R}^M$,
          applying the projected inverse bigram matrix to $v \in \mathbb{R}^M$.
    \item $f_{\mathrm{tri}}(o_{1:t},\, v,\, o) := U^\top P_{3,o,1}\, v \in \mathbb{R}^M$,
          applying the projected trigram operator for symbol $o$ to $v \in \mathbb{R}^M$.
\end{itemize}
Note that $B_o = f_{\mathrm{bi}}(o_{1:t},\, f_{\mathrm{tri}}(o_{1:t},\, \cdot,\, o))$,
so each belief-state update $\widetilde{b}_{t+1} \propto B_{o_t}\widetilde{b}_t$ is one
application of $f_{\mathrm{tri}}$ followed by $f_{\mathrm{bi}}$. The full
prediction~\eqref{eq:spectral} is thus a sequential composition of $\mathcal{O}(t)$ such
pairs, followed by an $\ell_1$ normalization. Explicitly:
\begin{enumerate}
    \item Initialize $\widetilde{b}_1 = f_{\mathrm{uni}}(o_{1:t})$.
    \item For $k = 1, \dots, t-1$:
    \begin{itemize}
        \item Set $\widetilde{b}_k' = f_{\mathrm{tri}}(o_{1:t},\, \widetilde{b}_k,\, o_k)$.
        \item Set $\widetilde{b}_{k+1} = f_{\mathrm{bi}}(o_{1:t},\, \widetilde{b}_k')$.
    \end{itemize}
    After this loop, $\widetilde{b}_t$ is the updated belief after processing
    $o_1, \dots, o_{t-1}$.
    \item For each $o \in \mathcal{O}$:
    \begin{itemize}
        \item Set $\widetilde{b}_{t+1}^o = f_{\mathrm{tri}}(o_{1:t},\, \widetilde{b}_t,\, o)$.
        \item Set $c^o = f_{\mathrm{bi}}(o_{1:t},\, \widetilde{b}_{t+1}^o)$.
    \end{itemize}
    \item Output for symbol $o$: $\widetilde{b}_\infty^\top c^o$, where
          $\widetilde{b}_\infty = (P_{2,1}^\top U)^+ P_1$ is the fixed terminal vector
          from the observable representation defined above. Normalize over $o \in \mathcal{O}$.
\end{enumerate}

We now argue that each primitive and their composition can be approximated by a Transformer.

\paragraph{Approximating $f_{\mathrm{uni}}$.}
Since $P_1 = \mathbb{E}[u_t]$ and $U$ is a fixed matrix (estimated once from the prefix
via SVD of $\widehat{P}_{2,1}$), the map $o_{1:t} \mapsto U^\top \widehat{P}_1$ is a
linear function of the empirical unigram counts, implementable by a single attention layer
that averages the token embeddings.

\paragraph{Approximating $f_{\mathrm{bi}}$.}
We need to implement $v \mapsto (U^\top P_{2,1})^{-1} v$ in-context. By the Neumann
series identity, $(U^\top P_{2,1})^{-1} v \approx (I - U^\top P_{2,1})^k v$ for
appropriately normalized $P_{2,1}$ and sufficiently large $k$, so it suffices to implement
$k$ repeated multiplications by $U^\top P_{2,1}$~\cite{akyurek2023learning}. The map
$v \mapsto U^\top P_{2,1} v$ amounts to an empirical bigram-weighted sum projected by $U$,
which can be implemented by a Transformer attention layer
following~\citet{ekbote2025one}.

\paragraph{Approximating $f_{\mathrm{tri}}$.}
We need to implement $v \mapsto U^\top P_{3,o,1}\, v$ in-context for a general belief
vector $v \in \mathbb{R}^M$. We proceed in two steps.

First, consider the case $v = u_s$ for some one-hot observation vector $u_s \in
\mathbb{R}^N$. Following the construction of~\citet{varre2025learning} (Section~4.1), a
trigram estimator for this case can be realized by a two-layer Transformer with $n-1$
attention heads and no residual connection (disentangled architecture). Concretely, their
construction computes
\[
    \frac{\displaystyle\sum_{\ell=3}^{t}
        u_{\ell-1}^\top u_t \cdot u_{\ell-2}^\top u_{t-1} \cdot u_\ell}
    {\displaystyle\sum_{\ell=3}^{t}
        u_{\ell-1}^\top u_t \cdot u_{\ell-2}^\top u_{t-1}},
\]
where $u_{\ell-2}^\top u_{t-1}$ acts as a $0/1$ indicator that selects context positions
whose second lag matches the query's second lag. This implements $U^\top P_{3,o,1}\, v$
when $v = u_{t-1}$ is itself a one-hot vector from the observed sequence.

Second, for a general belief vector $v \in \mathbb{R}^M$ already present in the residual
stream, we observe that $U^\top P_{3,o,1}\, v$ is \emph{linear} in $v$. Writing $v =
\sum_i v_i e_i$, we have $U^\top P_{3,o,1}\, v = \sum_i v_i\, U^\top P_{3,o,1}\, e_i$.
Since each $e_i$ is a one-hot vector, each term $U^\top P_{3,o,1}\, e_i$ is computable by
the construction above. The linear combination against the coefficients $v_i$ can then be
formed by a feed-forward layer using the belief vector already in the residual stream. Thus
$f_{\mathrm{tri}}$ is approximable by a Transformer for any $v \in \mathbb{R}^M$.

\paragraph{Approximating the composition.}
The full algorithm requires composing $\mathcal{O}(t)$ alternating applications of
$f_{\mathrm{tri}}$ and $f_{\mathrm{bi}}$, each acting on the current $M$-dimensional
belief vector $\widetilde{b}_k \in \mathbb{R}^M$. Crucially, the operators
$B_{o_k} = f_{\mathrm{bi}} \circ f_{\mathrm{tri}}(\,\cdot\,, o_k)$ are never explicitly
materialized as $M \times M$ matrices; instead, the binary tree construction
of~\citet{hu2024limitation} (Appendix~D, eq.~D.12 onwards) composes the $M$-dimensional
vector maps $\widetilde{b} \mapsto f_{\mathrm{bi}}(f_{\mathrm{tri}}(\widetilde{b}, o_k))$
directly, with the in-context bigram and trigram statistics reused at each node of the
tree. This yields $\mathcal{O}(\log t)$ depth for a length-$t$ composition. With a context
of $m$ in-context sequences each of history length $n$, the total prefix length satisfies
$t = \mathcal{O}(n + m)$, giving an overall Transformer depth of $\mathcal{O}(\log(n+m))$,
as stated in Remark~\ref{lemma:spectral}.

%% file: appendix_small_trained_transformer.tex
Given a HMM configuration ($A$, $B$, $\pi$), we generate 8k sequences with length $2{,}049$, train on 4k sequences, and validate on the rest 4k sequences.

\subsection{Ablation on Model Config}
\label{app:1_hmm_transformer}

We first fix an HMM configuration with 3 states, 12 observations, $0.25$ $A$ entropy, and $0.75$ $B$ entropy. We report the training loss, validation loss, and accuracy on location $\{2, 4, 8, ..., 1024, 2048\}$.

\begin{figure}[!htb]
  \centering
    \includegraphics[width=\linewidth]{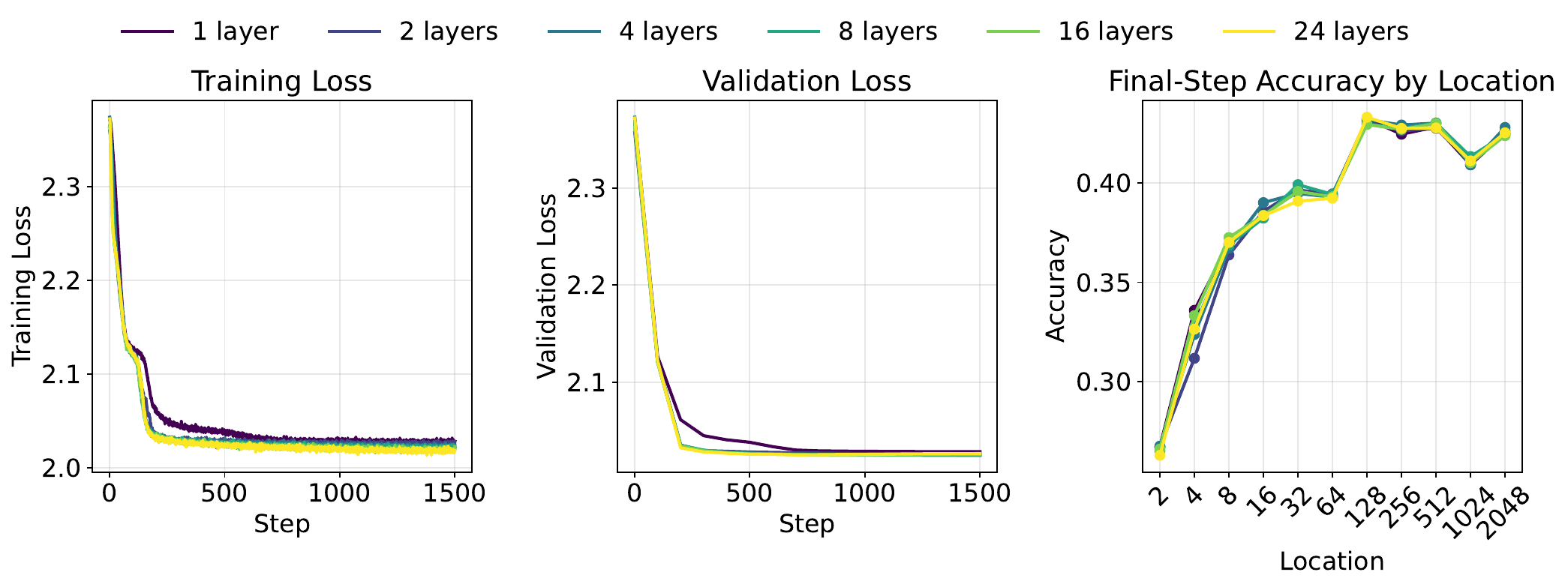}
  \caption{Training loss, validation loss, and accuracies with different number of layers.}
  \label{fig:layer_metrics}
\end{figure}

\textbf{More number of layers does not lead to better performance.} We fix the model dimension to $64$, feedforward dimension to $256$, and number of heads to $4$, and ablate the number of layers from $1$ to $24$. The results are shown in Figure~\ref{fig:layer_metrics}. Across all runs, the training loss, validation loss, and accuracies converge to a similar point.

\begin{figure}[!htb]
  \centering
    \includegraphics[width=\linewidth]{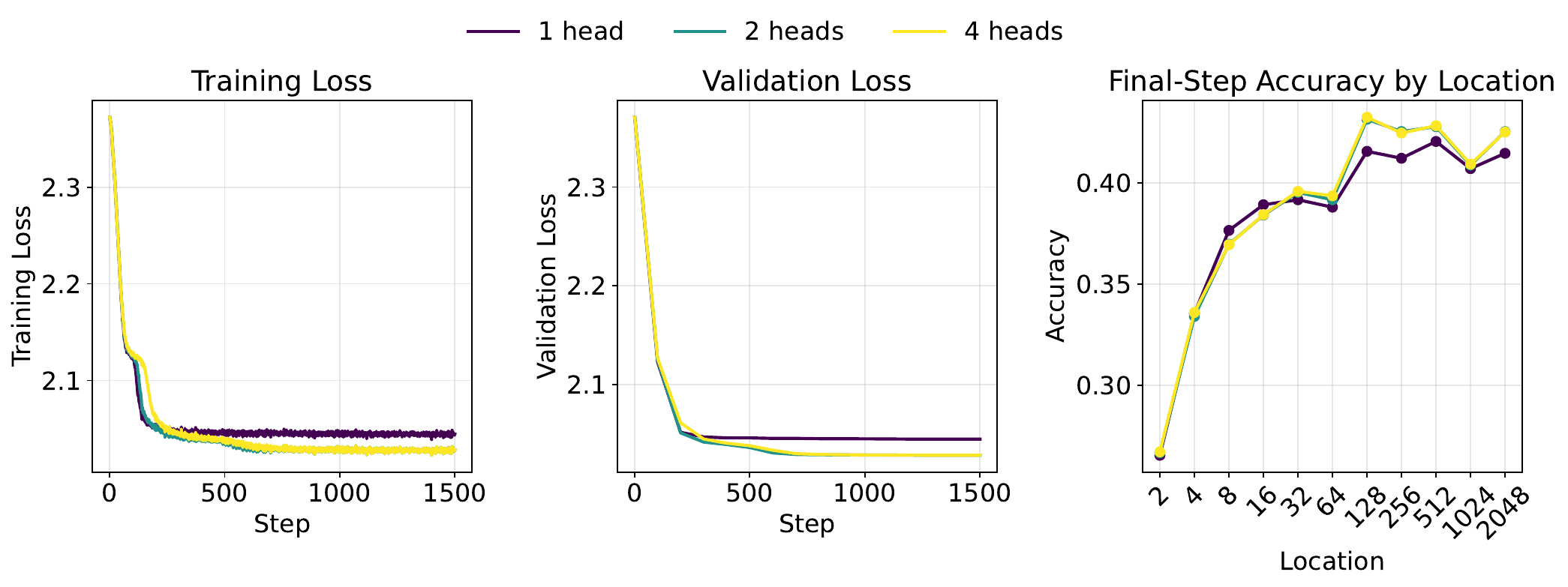}
  \caption{Training loss, validation loss, and accuracies with different number of heads.}
  \label{fig:heads_metrics}
\end{figure}

\textbf{More number of heads give slight performance improvement.} We fix the model dimension to $64$, feedforward dimension to $256$, and number of layers to $1$, and ablate the number of heads from $1$ to $4$. The results are shown in Figure~\ref{fig:heads_metrics}. Increasing number of heads above $1$ would have slightly improvement in  training / validation losses and accuracies.

\begin{figure}[!htb]
  \centering
    \includegraphics[width=\linewidth]{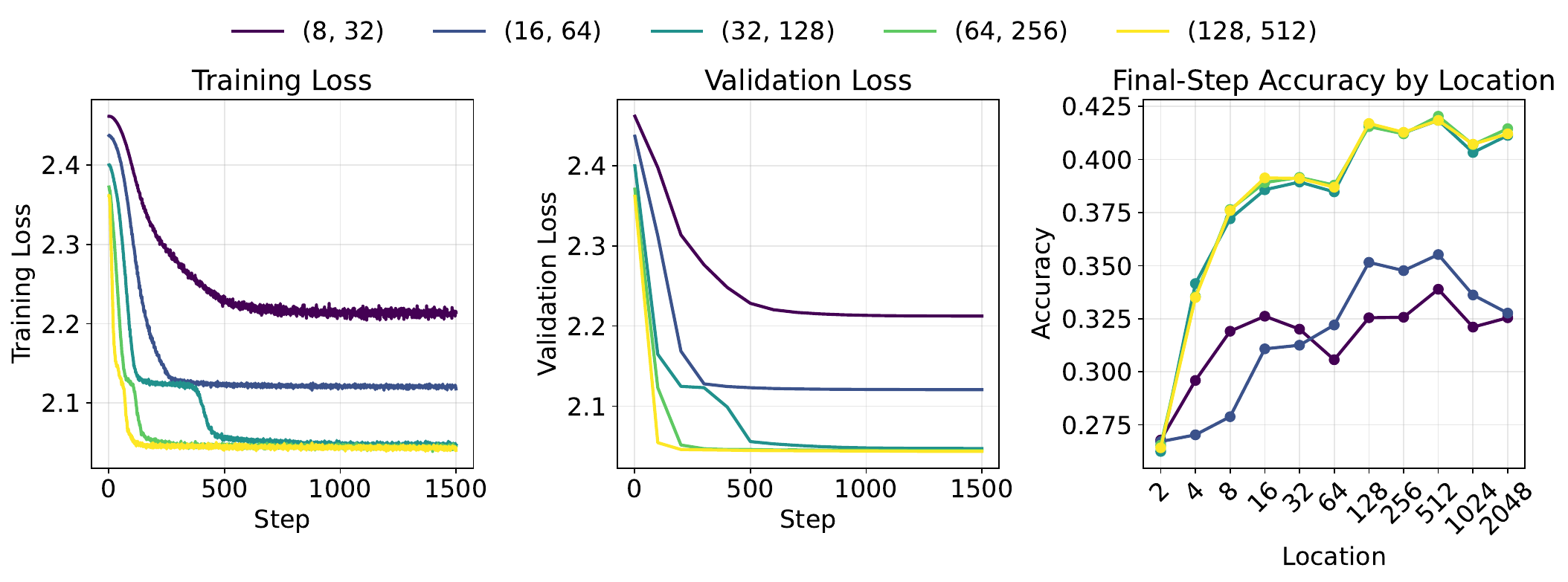}
  \caption{Training loss, validation loss, and accuracies with different model dimensions.}
  \label{fig:dim_metrics}
\end{figure}

\textbf{Smaller dimension leads to underfitting, while larger dimension does not lead to better performance.} We fix the number of layers to $1$ and number of heads to $1$, and ablate the (model dimension, feedforward dimension) from ($8$, $32$) to ($128$, $512$). The results are shown in Figure~\ref{fig:dim_metrics}. For size ($8$, $32$) and ($16$, $64$), we can see underfitting based on training and validation loss. We need at least ($32$, $128$), and increasing beyond ($64$, $256$) does not lead to better performance.

\subsection{Ablation on HMM Config}
\label{app:hmm_config}

Based on the previous section, we fix the model config to 1 head, 1 layer, 64 model dimension, and 256 feedforward dimension.

\begin{figure}[!htb]
  \centering
    \includegraphics[width=\linewidth]{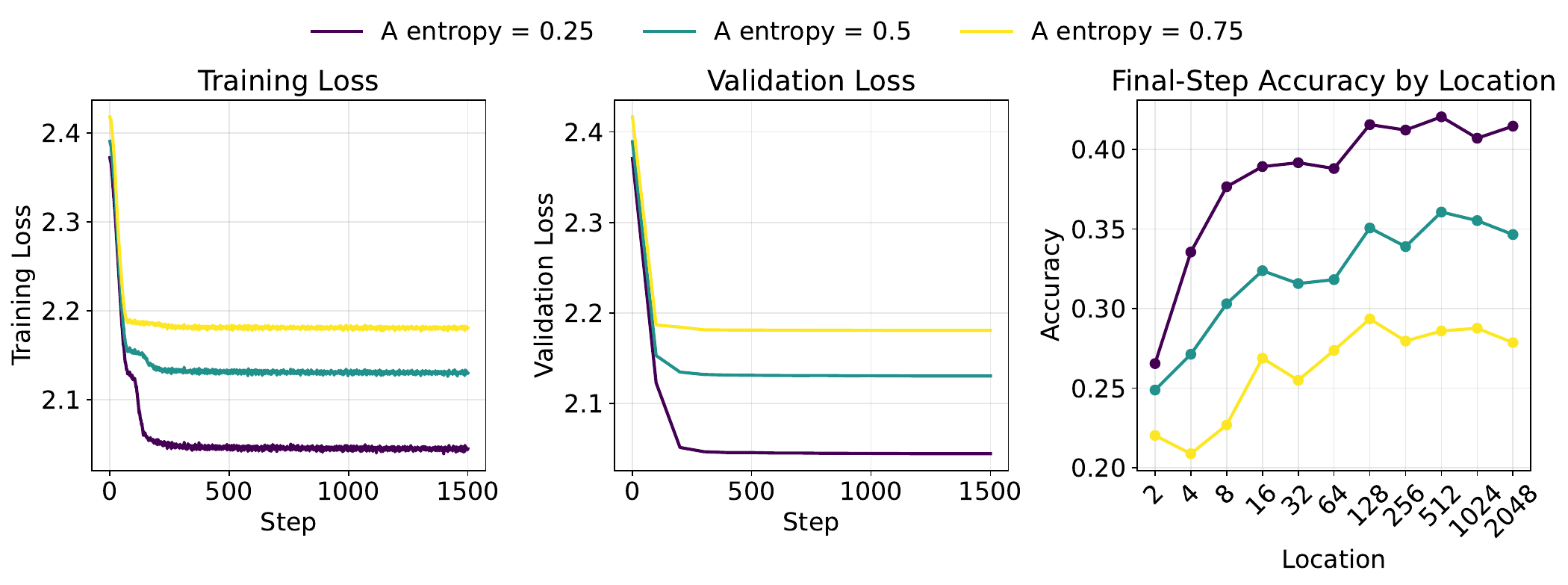}
  \caption{Training loss, validation loss, and accuracies with different A entropies.}
  \label{fig:A_entropy_metrics}
\end{figure}

\begin{figure}[!htb]
    \centering
    \includegraphics[width=\linewidth]{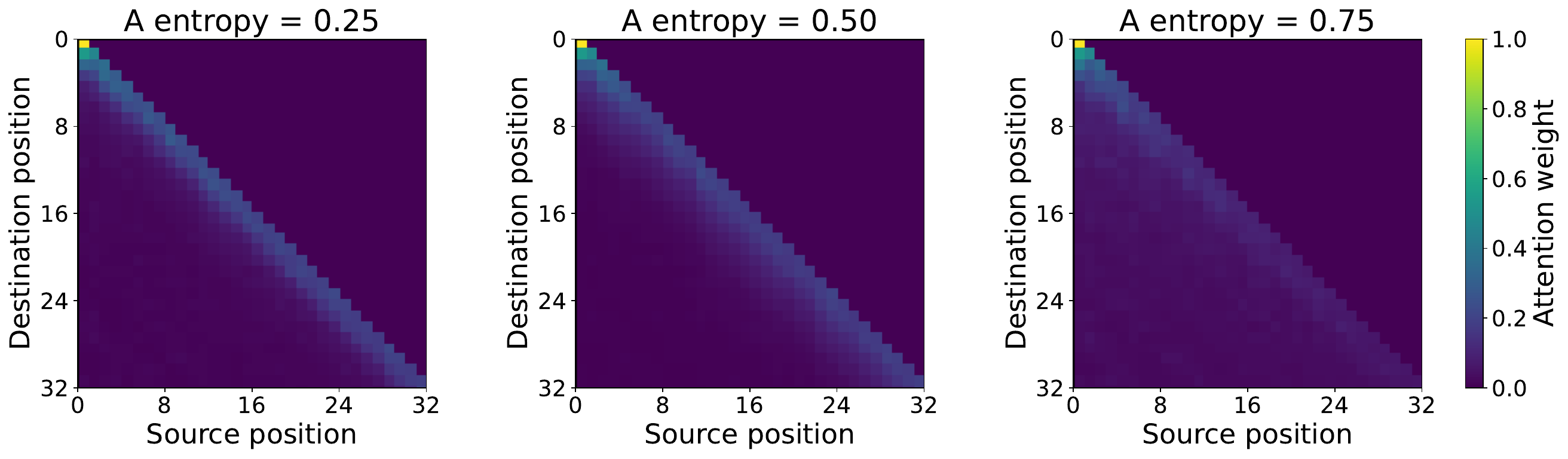}
    \caption{Attention pattern for different A entropies.}
    \label{fig:A_entropy_attention}
\end{figure}

\textbf{Slower mixing leads to longer attention window.} We fix the HMM config to 3 states and 12 observations with B entropy 0.75. We ablate the A entropy, $\{ 0.25, 0.5, 0.75\}$. The results are shown in Figure~\ref{fig:A_entropy_metrics}. As we increase A entropy, the training and validation losses become worse which is expected as the current task becomes harder. Therefore, the accuracy also decreases with larger A entropy. The attention visualization is provided in Figure~\ref{fig:A_entropy_attention}. From the attention visualization, we can see that, as A entropy increases (i.e. slower mixing), the attention window also expands. When the HMM has slower mixing, the model needs to incorporate a longer context in order to make accurate predictions.

\textbf{Larger B entropy leads to longer attention window.} We fix the HMM config to 3 states and 12 observations with A entropy 0.25. We ablate the B entropy, $\{ 0.25, 0.5, 0.75\}$. The results are shown in Figure~\ref{fig:B_entropy_metrics}. As we increase B entropy, again the training and validation losses become worse as the current task becomes harder. The accuracy also decreases with larger B entropy. The attention visualization is provided in Figure~\ref{fig:B_entropy_attention}. From the attention visualization, we can see that, as B entropy increases, the attention window also expands.

\begin{figure}[!htb]
  \centering
    \includegraphics[width=\linewidth]{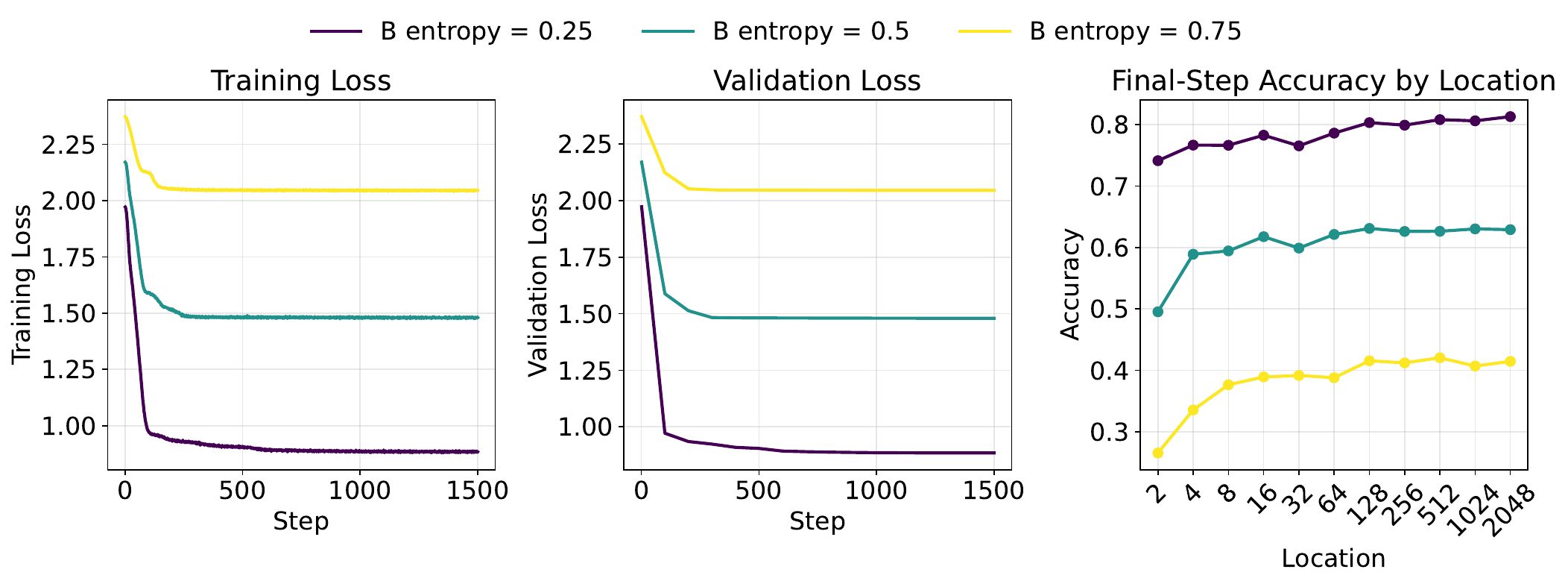}
  \caption{Training loss, validation loss, and accuracies with different B entropies.}
  \label{fig:B_entropy_metrics}
\end{figure}

\begin{figure}[!htb]
    \centering
    \includegraphics[width=\linewidth]{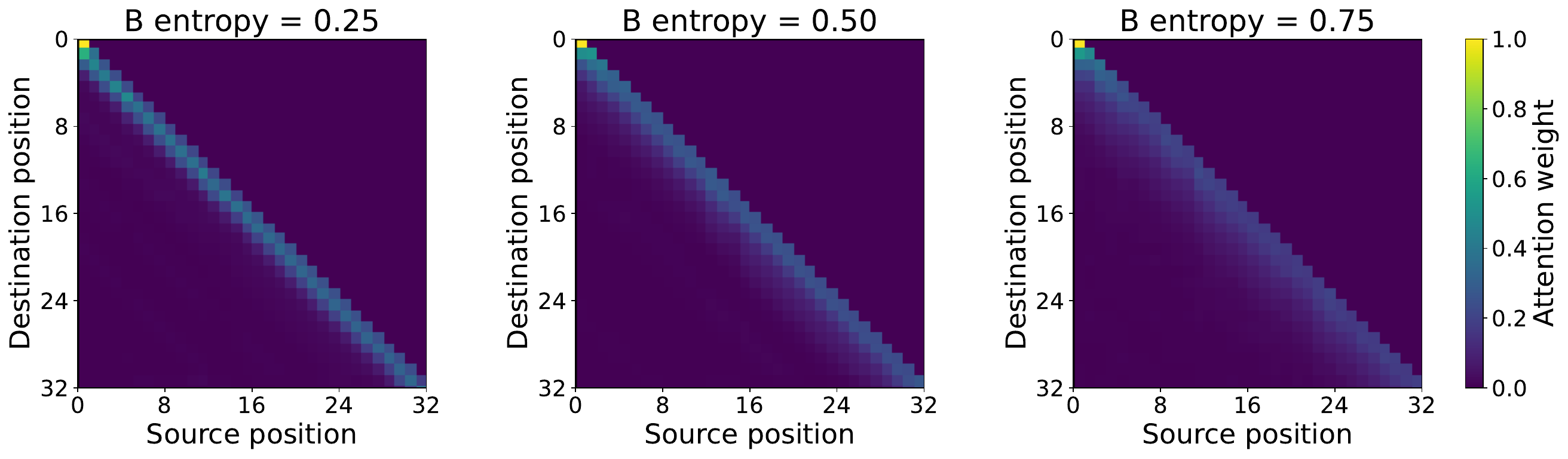}
    \caption{Attention pattern for different B entropies.}
    \label{fig:B_entropy_attention}
\end{figure}

\newpage
\textbf{Number of observations does not affect attention window.} We fix the HMM config to 3 states with A entropy 0.25 and B entropy 0.75. We ablate the number of observations, $\{3, 6, 12\}$. The results are shown in Figure~\ref{fig:num_obs_metrics}. As we increase the number of observations, again the training and validation losses become worse as the current task becomes harder. The accuracy also decreases with more number of observations. However, the attention visualizations show in Figure~\ref{fig:num_obs_attention} for different number of observations remain consistent. Attention pattern could be highly correlated with the entropies but not the number of observations.

\begin{figure}[h]
  \centering
    \includegraphics[width=\linewidth]{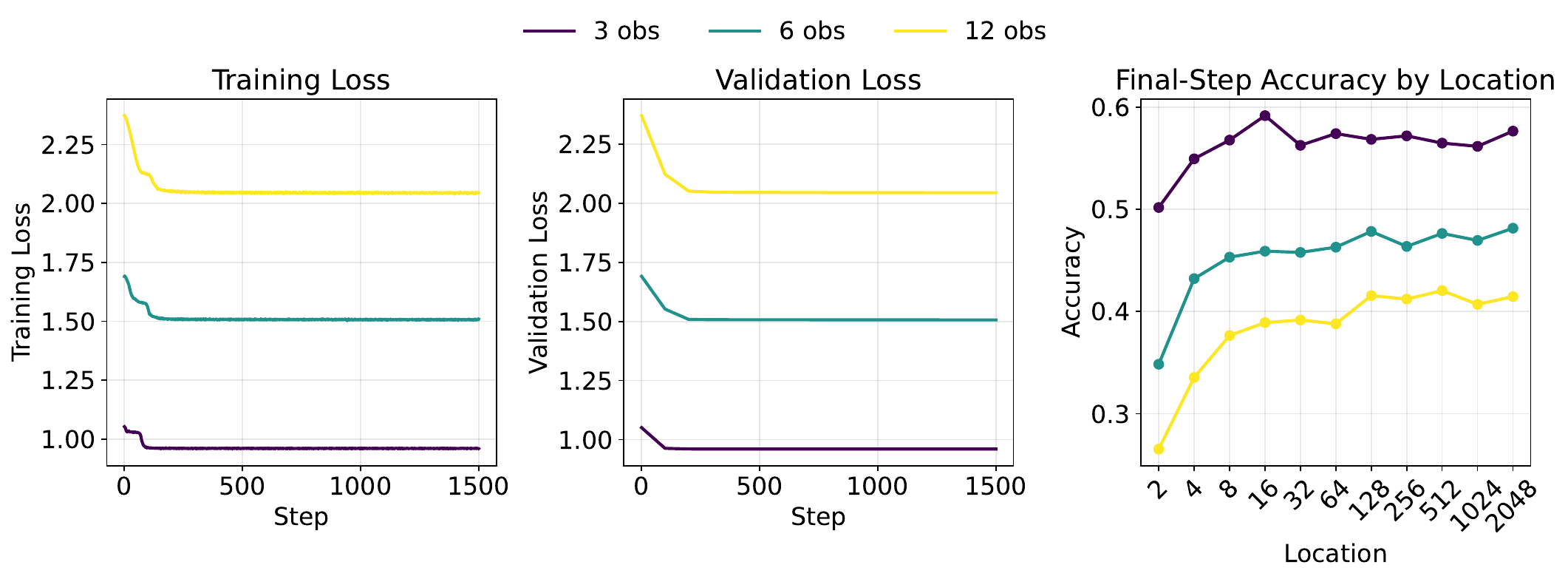}
  \caption{Training loss, validation loss, and accuracies with different number of observations.}
  \label{fig:num_obs_metrics}
\end{figure}

\begin{figure}[h]
    \centering
    \includegraphics[width=\linewidth]{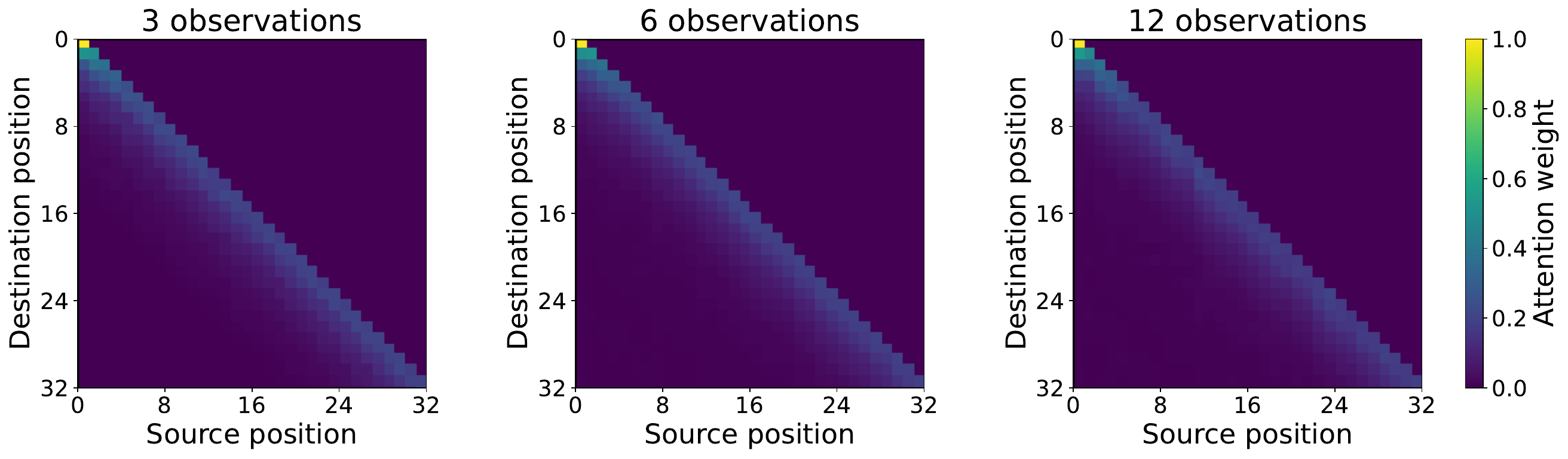}
    \caption{Attention pattern for different number of observations.}
    \label{fig:num_obs_attention}
\end{figure}

\subsection{Ablation on Window Size $n$}
\label{app:window_size}

\begin{figure}[h]
    \centering
    \includegraphics[width=0.4\textwidth]{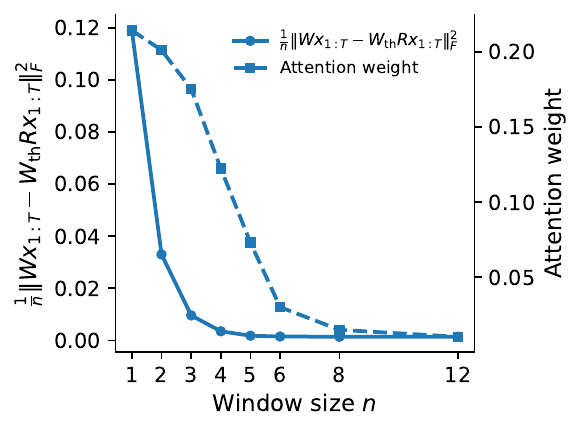}
    \vspace{-1em}
    \caption{Discrepancy between the two mappings and attention weights.}
    \label{fig:window_size_mse}
\end{figure}

\textbf{The effective attention window matches the finite-window predictor.}
We further vary the window size $n$ used by the theoretical finite-window predictor and measure the discrepancy between the direct hidden-to-logit map $Wx_{1:T}$ and the composed theoretical map $W_{\mathrm{th}}Rx_{1:T}$. Figure~\ref{fig:window_size_mse} shows that this discrepancy decreases rapidly as $n$ increases, and converges around $n=6$. This is consistent with the attention pattern, where the attention mass concentrates mostly on the previous $6$ tokens.

\subsection{Linear vs.\ Kronecker $n$-gram Feature Mappings}
\label{app:feature_compare}

\begin{figure}[h]
      \centering
      \begin{subfigure}{0.48\textwidth}
          \centering
          \includegraphics[width=\textwidth]{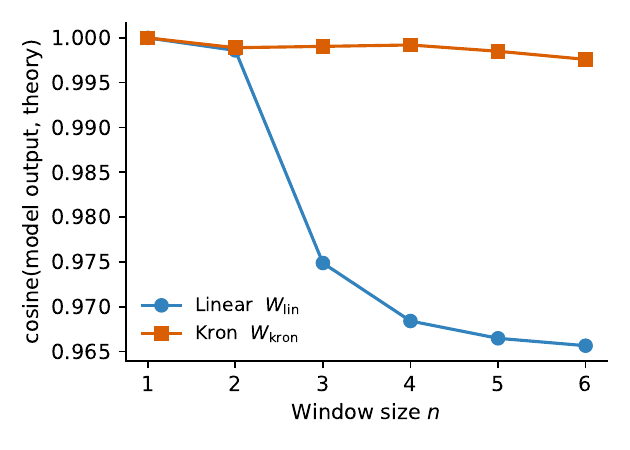}
          \caption{Output alignment}
          \label{fig:output_align}
      \end{subfigure}
      \hfill
      \begin{subfigure}{0.48\textwidth}
          \centering
          \includegraphics[width=\textwidth]{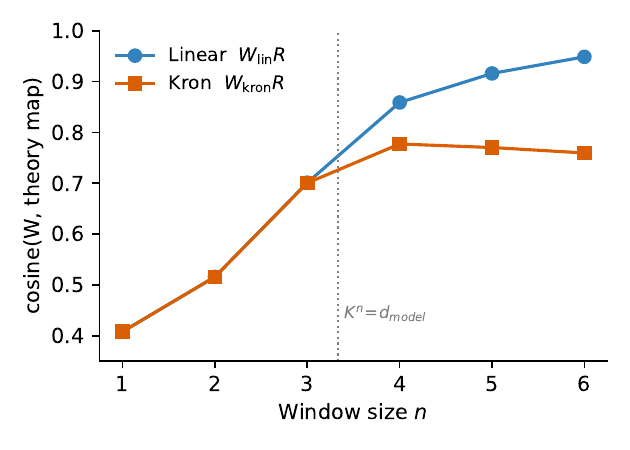}
          \caption{Representation alignment}
          \label{fig:repr_align}
      \end{subfigure}
      \vspace{-0.5em}
      \caption{\textbf{Output vs.\ representation alignment for linear (additive,
      dimension $Kn$) and Kronecker (tabular, dimension $K^n$) $n$-gram features.}
      (a) Cosine similarity between the model's next-token logits and each
      theoretical predictor, aggregated by length-$n$ context. (b) Cosine
      similarity between the learned hidden-to-logit readout $W$ and the
      theory-induced map $W_{\mathrm{th}}R$ at the post-attention residual stream.}
      \label{fig:features_compare}
  \end{figure}

\textbf{The transformer predicts like the tabular model, but its post-attention representation is additive.} We contrast two ways of representing the length-$n$ context: the \emph{linear} features, which stack the one-hot encodings of the past $n$ observations (dimension $Kn$), and the \emph{Kronecker} features, which form their full Cartesian product (dimension $K^n$, one coordinate per distinct context tuple). We probe these features at two levels. For \emph{output alignment} (Figure~\ref{fig:output_align}), we aggregate the model's next-token logits by context and measure the cosine similarity to each theory's per-context prediction ($W_{\mathrm{kron}}$, and the additive $W_{\mathrm{lin}}$ projected onto contexts). For \emph{representation alignment} (Figure~\ref{fig:repr_align}), we fit a linear map $R$ from the post-attention residual stream $x_{1:T}$ to the features and measure the cosine similarity between the directly learned readout $W$ and the composed map $W_{\mathrm{th}}R$. The probe $R$ is fit by ridge regression on every context position of the held-out validation split---$4096$ sequences of length $2049$, i.e.\ $4096\times(2049-n)\approx 8.4\times10^{6}$ state/feature pairs---so the fit is heavily over-determined relative to its effective rank $d_{\mathrm{model}}=64$.

At the output level, the model's predictions match the Kronecker (tabular) predictor almost perfectly at every window (cosine $\approx 1.0$), while the linear predictor's agreement decays with $n$ (down to $\approx 0.97$) as the additive restriction discards increasingly relevant context. At the representation level the ordering reverses: the linear mapping keeps improving (cosine $\to 0.95$), whereas the Kronecker mapping saturates at $\approx 0.76$ once the number of tabular contexts $K^n$ exceeds the model width $d_{\mathrm{model}}$; because the probe is trained on millions of examples, this gap is not data-limited but reflects the capacity of the residual stream.

\textbf{This double dissociation is exactly what our theory predicts.} The output is tabular-optimal (aligned with the Kronecker predictor), yet the post-attention representation is additive (it linearly encodes only the linear features). By Lemma~\ref{lemma:kron_prediction_main}, the linear $n$-gram features cannot be mapped to the Kronecker features---and hence to the tabular-optimal next-token distribution---by a linear transformation alone; a non-linear layer ($\mathrm{ReLU}$) is required. The attention block therefore assembles the additive features into the residual stream, and the subsequent non-linear MLP lifts them to the tabular prediction, which is why alignment is \emph{linear} before the MLP but \emph{Kronecker} at the output. Appendix~\ref{app:when_linear_acc} characterizes the regimes in which the linear-feature approximation already yields accurate output predictions, consistent with the residual ($\approx 0.97$) agreement of the linear predictor in Figure~\ref{fig:output_align}.

%% file: appendix_llm_emp.tex
We present detailed evaluation results for the experiment described in Section \ref{sec:llm_emp}. Recall the setup: for each HMM configuration, we sample observation sequences $\mathbf{o}_{1:T}$ from $\boldsymbol{\lambda} = (\boldsymbol{\pi}, \mathbf{A}, \mathbf{B})$ and evaluate next-observation prediction $o_{t+1}$ given $\mathbf{o}_{1:t}$. We fix number of hidden state $M = 4$ and vary three controls across $75$ configurations: transition entropy $\mathcal{E}(\mathbf{A}) \in \{0,\,0.25,\,0.5,\,0.75,\,1\}$, inducing mixing rates $|\lambda_2(\mathbf{A})| \in \{1, 0.9, 0.75, 0.5, 0\}$; emission entropy $\mathcal{E}(\mathbf{B}) \in \{0,\,0.25,\,0.5,\,0.75,\,1\}$, ranging from injective (when $N \geq M$) to uniform; and alphabet size $N \in \{2, 4, 8\}$, where $N < M$ induces hidden-state aliasing. The initial distribution is set to the stationary distribution, $\boldsymbol{\pi} = \boldsymbol{\mu}$. For each configuration, we sample $4{,}096$ sequences of length $4{,}097$ and evaluate at context lengths $\{4, 8, \ldots, 4096\}$.

We report two metrics averaged over $4{,}096$ samples per HMM configuration. \emph{Accuracy} is the fraction of positions at which the argmax prediction matches the ground truth. We additionally report Hellinger distance between predicted and reference next-token distributions, used in two ways: \emph{Hellinger-to-oracle} measures deviation from the Bayes-optimal posterior; and \emph{Hellinger-to-baseline} measures distributional similarity to each algorithmic baseline. In both cases, smaller values indicate better performance.

We evaluate $12$ open-weight pretrained LLMs drawn from the Qwen, Llama, Gemma, and OLMo families, ranging from $0.6$B to $8$B parameters, without any fine-tuning. Each model receives raw observation tokens via a fixed alphabet-to-vocabulary mapping; at every position, we restrict the next-token logits to the observation vocabulary and renormalize using softmax.

The report is separated into the following subsections:
\begin{itemize}
\item Appendix~\ref{app:llm_family}: Empirical Performance of Different LLM Families
\item Appendix~\ref{app:llm_fail_to_converge}: Cases When LLMs Fail to Converge to Oracle
\item Appendix~\ref{app:baseline_compare}: Baseline Comparison
\item Appendix~\ref{app:more_state}: More Hidden States ($M=12$)
\end{itemize}

\subsection{Empirical Performance of Different LLM Families}
\label{app:llm_family}
For each LLM family, we present three plots corresponding to alphabet sizes $N \in \{2, 4, 8\}$. Each plot contains $25$ subplots arranged over a grid of transition entropies $\mathcal{E}(\mathbf{A}) \in \{0,\,0.25,\,0.5,\,0.75,\,1\}$ and emission entropies $\mathcal{E}(\mathbf{B}) \in \{0,\,0.25,\,0.5,\,0.75,\,1\}$. Within each subplot, the upper panel shows top-1 prediction accuracy and the lower panel shows Hellinger-to-oracle distance, both as a function of context length. The \texttt{oracle} and \texttt{bigram} baselines are included in each subplot for reference.

Among the families, the Qwen models (Qwen2.5-1.5B, Qwen3-0.6B, Qwen3-1.7B, Qwen3-4B, Qwen3-8B) exhibit the most consistent convergence across model sizes, with larger models achieving higher accuracy at longer context lengths and converging closer to the oracle distribution. The Llama and Gemma families generally converge but show a larger gap from the oracle than Qwen under high-entropy conditions, with Llama3.1-8B suffering a sharp performance drop at context length $4096$. The OLMo family (OLMo2-1B, OLMo2-7B) performs the worst overall: it degrades markedly at longer context lengths ($T \in \{1024, 2048, 4096\}$), exhibiting a pronounced U-shaped trend, and underperforms other families when either transition or emission entropies is high.

\newpage
\begin{figure}[H]
    \centering
    \includegraphics[width=\linewidth]{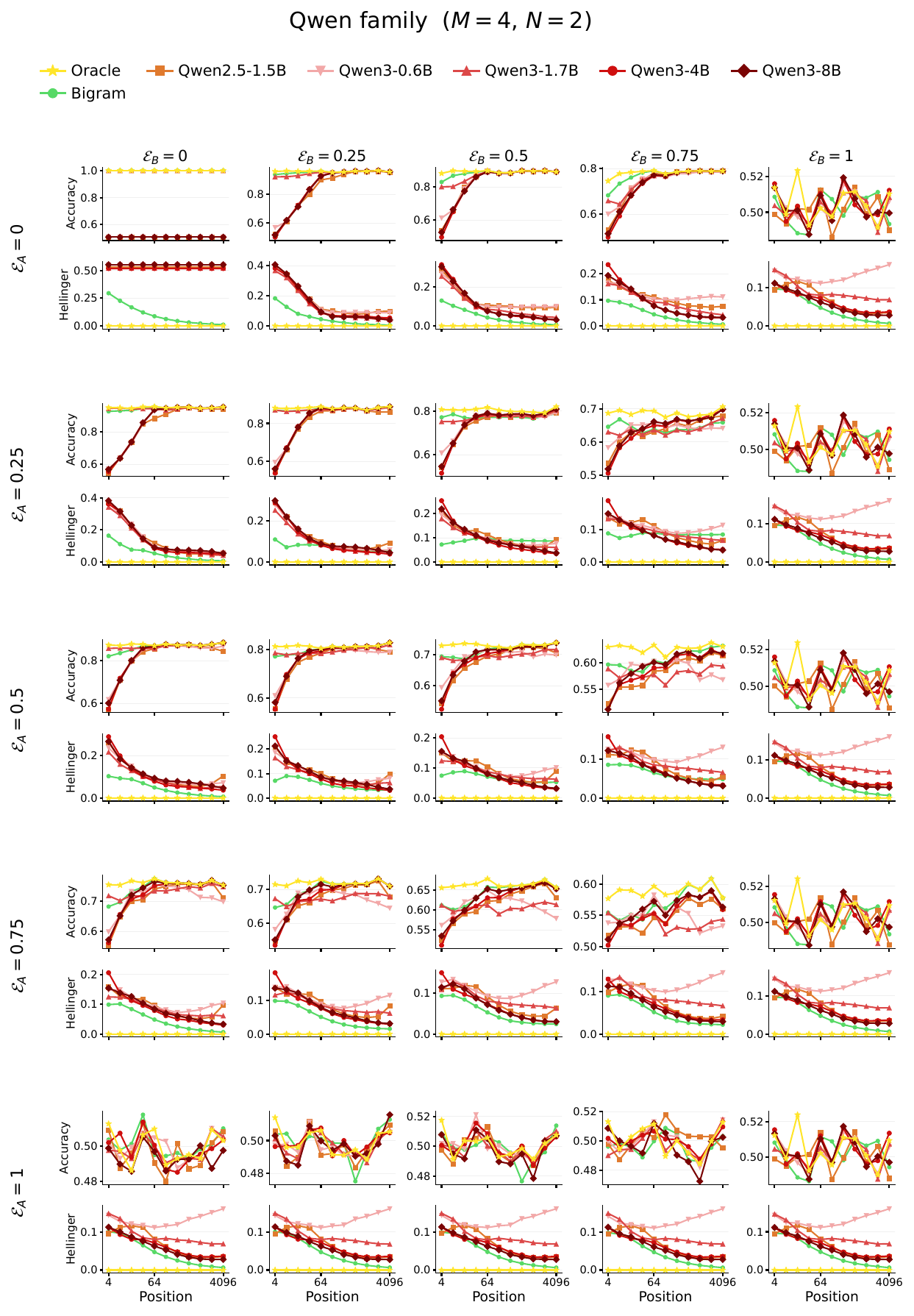}
    \caption{Qwen family LLMs evaluated on HMMs with $M=4$, $N=2$. Subplots vary transition entropy $\mathcal{E}(\mathbf{A})$ (rows) and emission entropy $\mathcal{E}(\mathbf{B})$ (columns); each shows top-1 accuracy (upper) and Hellinger-to-oracle distance (lower) versus context length, with \texttt{oracle} and \texttt{bigram} baselines.}
    \label{fig:qwen_m4_n2}
\end{figure}

\begin{figure}[H]
    \centering
    \includegraphics[width=\linewidth]{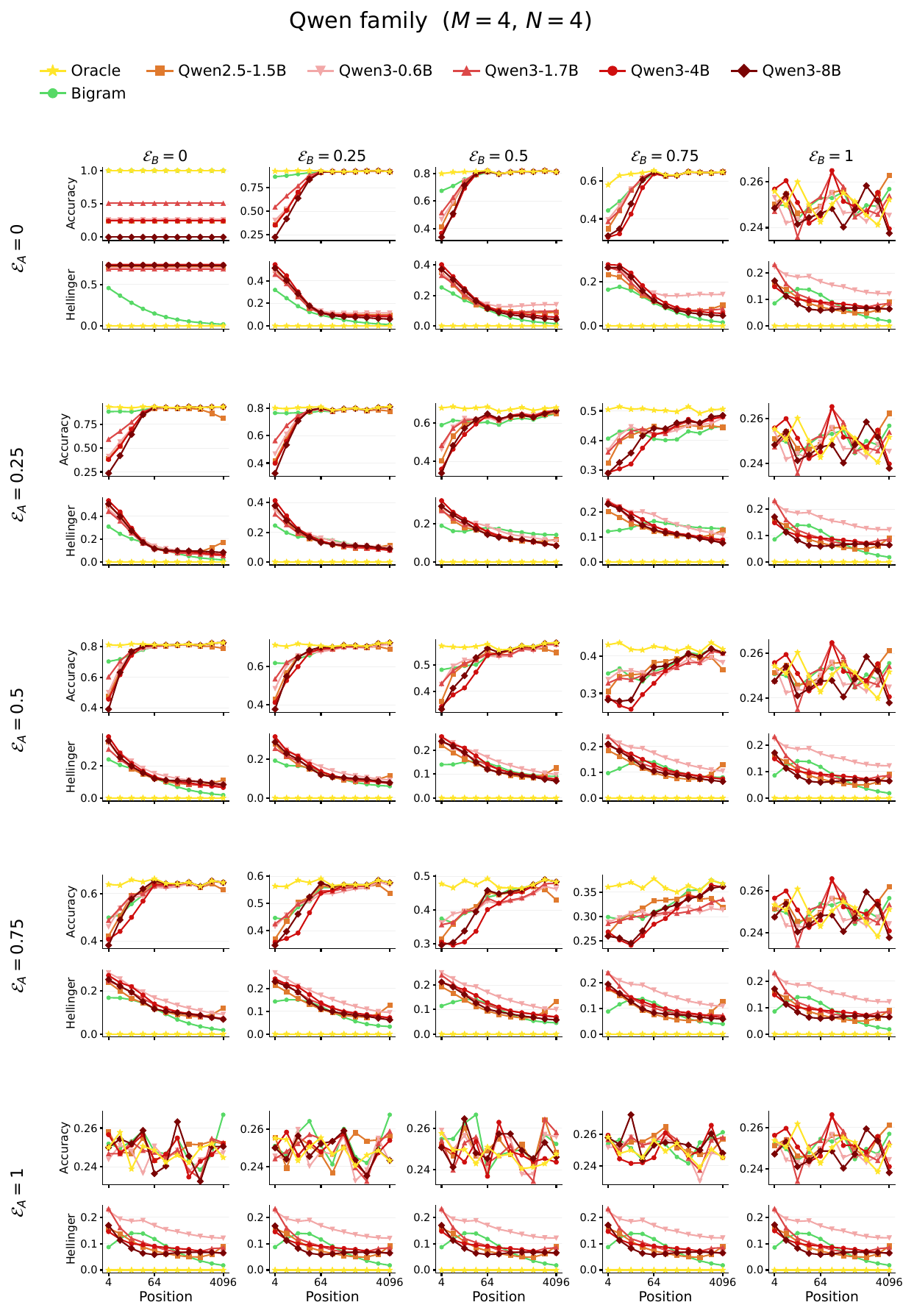}
    \caption{Qwen family LLMs evaluated on HMMs with $M=4$, $N=4$. Subplots vary transition entropy $\mathcal{E}(\mathbf{A})$ (rows) and emission entropy $\mathcal{E}(\mathbf{B})$ (columns); each shows top-1 accuracy (upper) and Hellinger-to-oracle distance (lower) versus context length, with \texttt{oracle} and \texttt{bigram} baselines.}
    \label{fig:qwen_m4_n4}
\end{figure}

\begin{figure}[H]
    \centering
    \includegraphics[width=\linewidth]{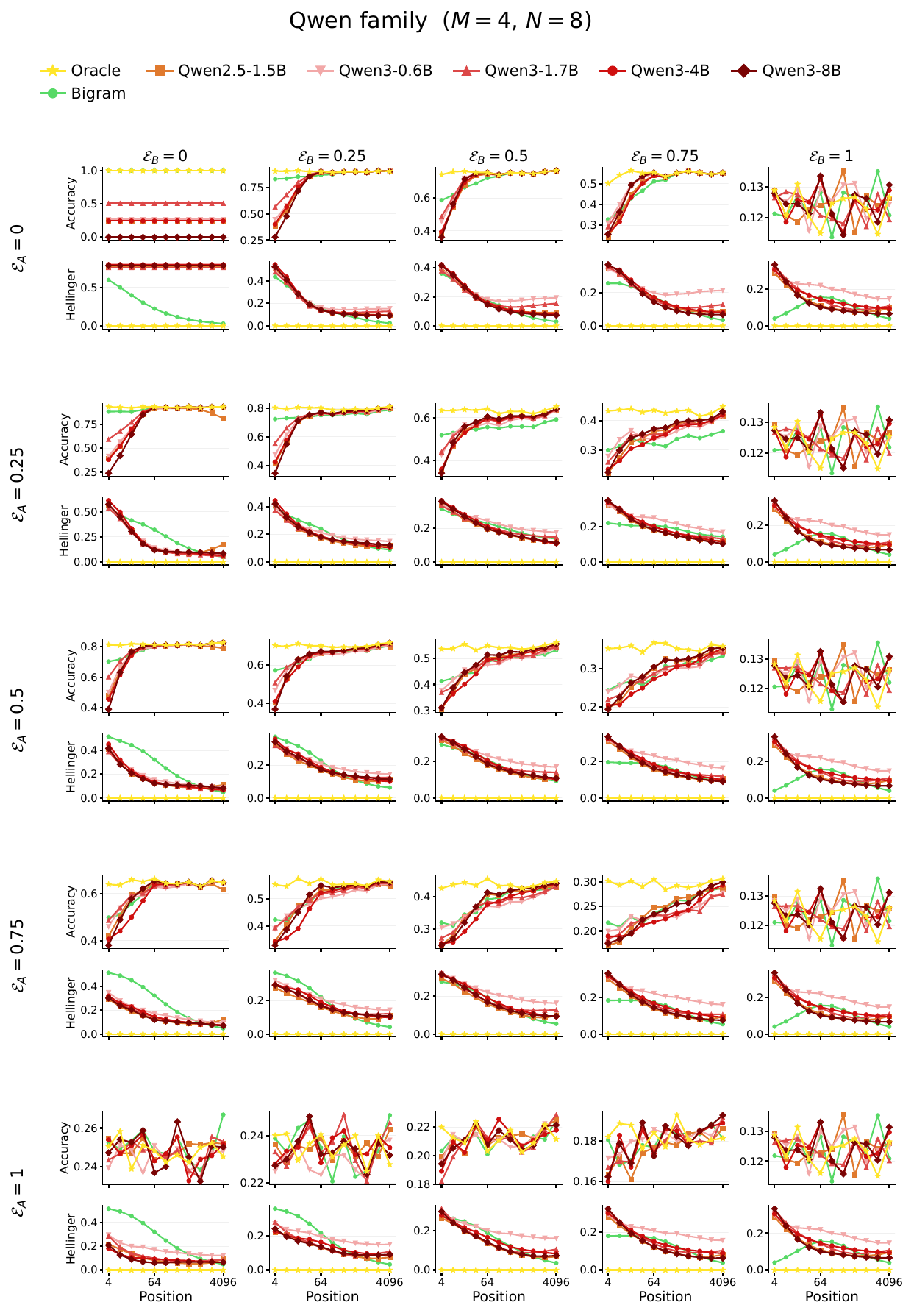}
    \caption{Qwen family LLMs evaluated on HMMs with $M=4$, $N=8$. Subplots vary transition entropy $\mathcal{E}(\mathbf{A})$ (rows) and emission entropy $\mathcal{E}(\mathbf{B})$ (columns); each shows top-1 accuracy (upper) and Hellinger-to-oracle distance (lower) versus context length, with \texttt{oracle} and \texttt{bigram} baselines.}
    \label{fig:qwen_m4_n8}
\end{figure}

\begin{figure}[H]
    \centering
    \includegraphics[width=\linewidth]{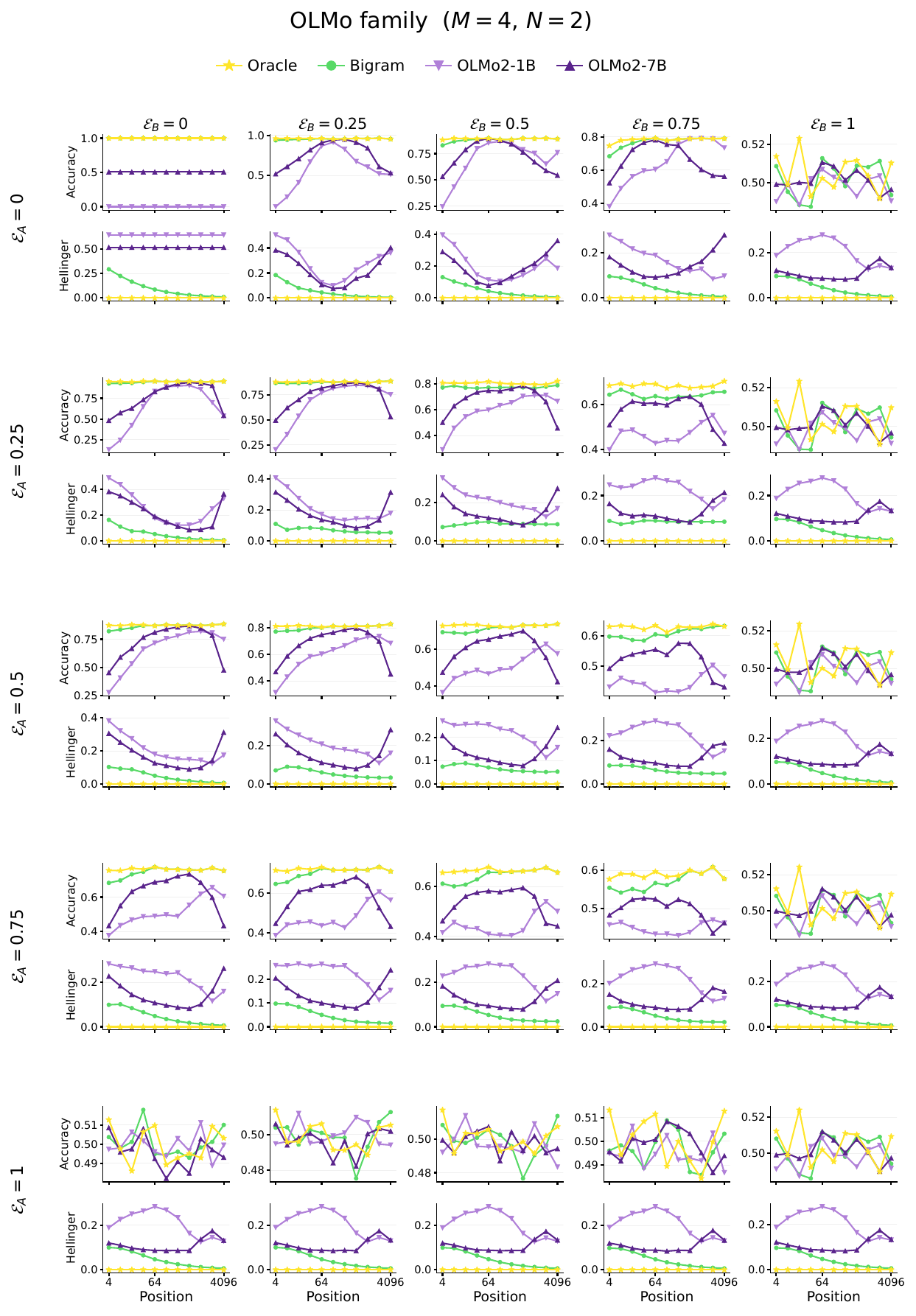}
    \caption{OLMo family LLMs evaluated on HMMs with $M=4$, $N=2$. Subplots vary transition entropy $\mathcal{E}(\mathbf{A})$ (rows) and emission entropy $\mathcal{E}(\mathbf{B})$ (columns); each shows top-1 accuracy (upper) and Hellinger-to-oracle distance (lower) versus context length, with \texttt{oracle} and \texttt{bigram} baselines.}
    \label{fig:olmo_m4_n2}
\end{figure}

\begin{figure}[H]
    \centering
    \includegraphics[width=\linewidth]{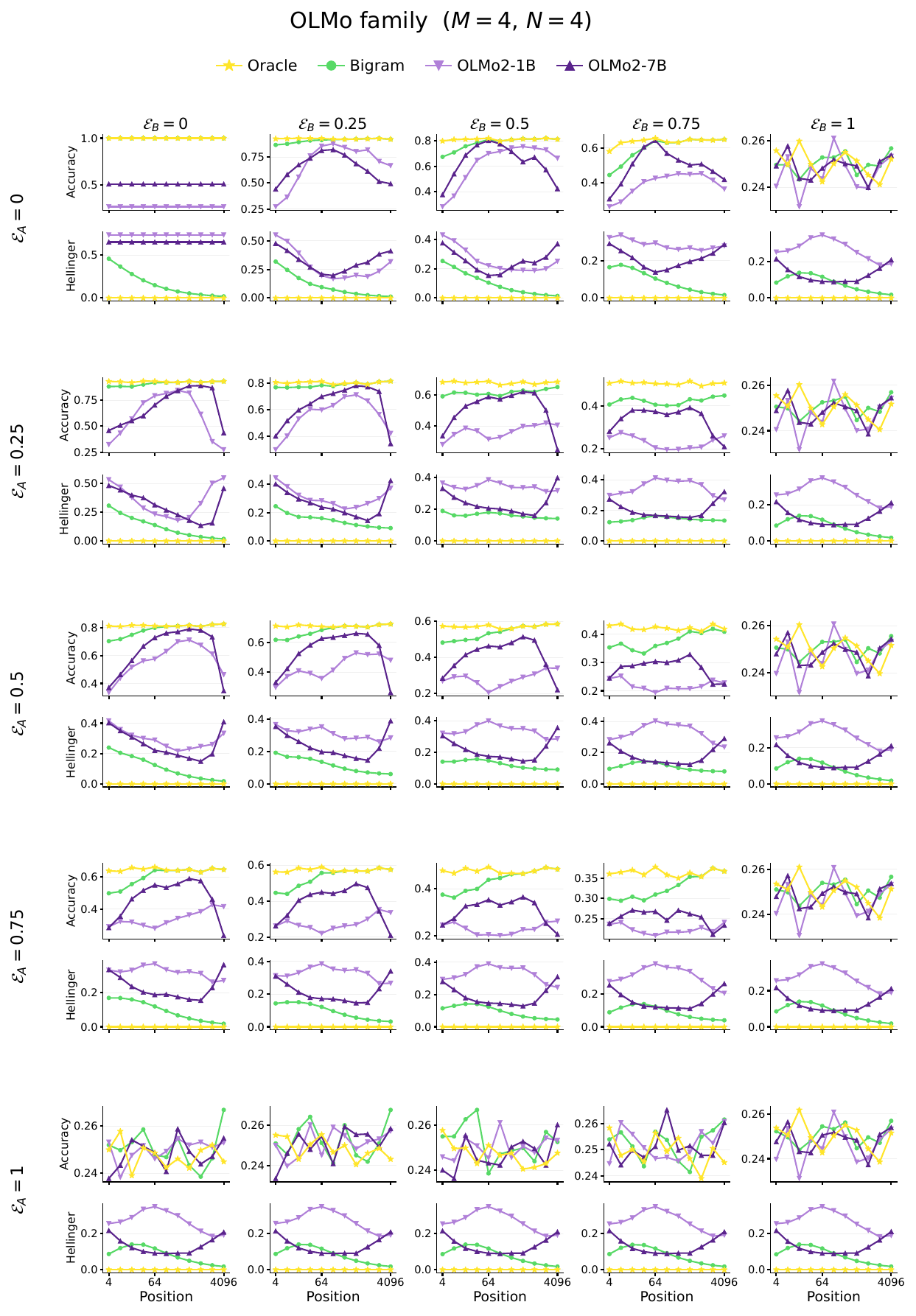}
    \caption{OLMo family LLMs evaluated on HMMs with $M=4$, $N=4$. Subplots vary transition entropy $\mathcal{E}(\mathbf{A})$ (rows) and emission entropy $\mathcal{E}(\mathbf{B})$ (columns); each shows top-1 accuracy (upper) and Hellinger-to-oracle distance (lower) versus context length, with \texttt{oracle} and \texttt{bigram} baselines.}
    \label{fig:olmo_m4_n4}
\end{figure}

\begin{figure}[H]
    \centering
    \includegraphics[width=\linewidth]{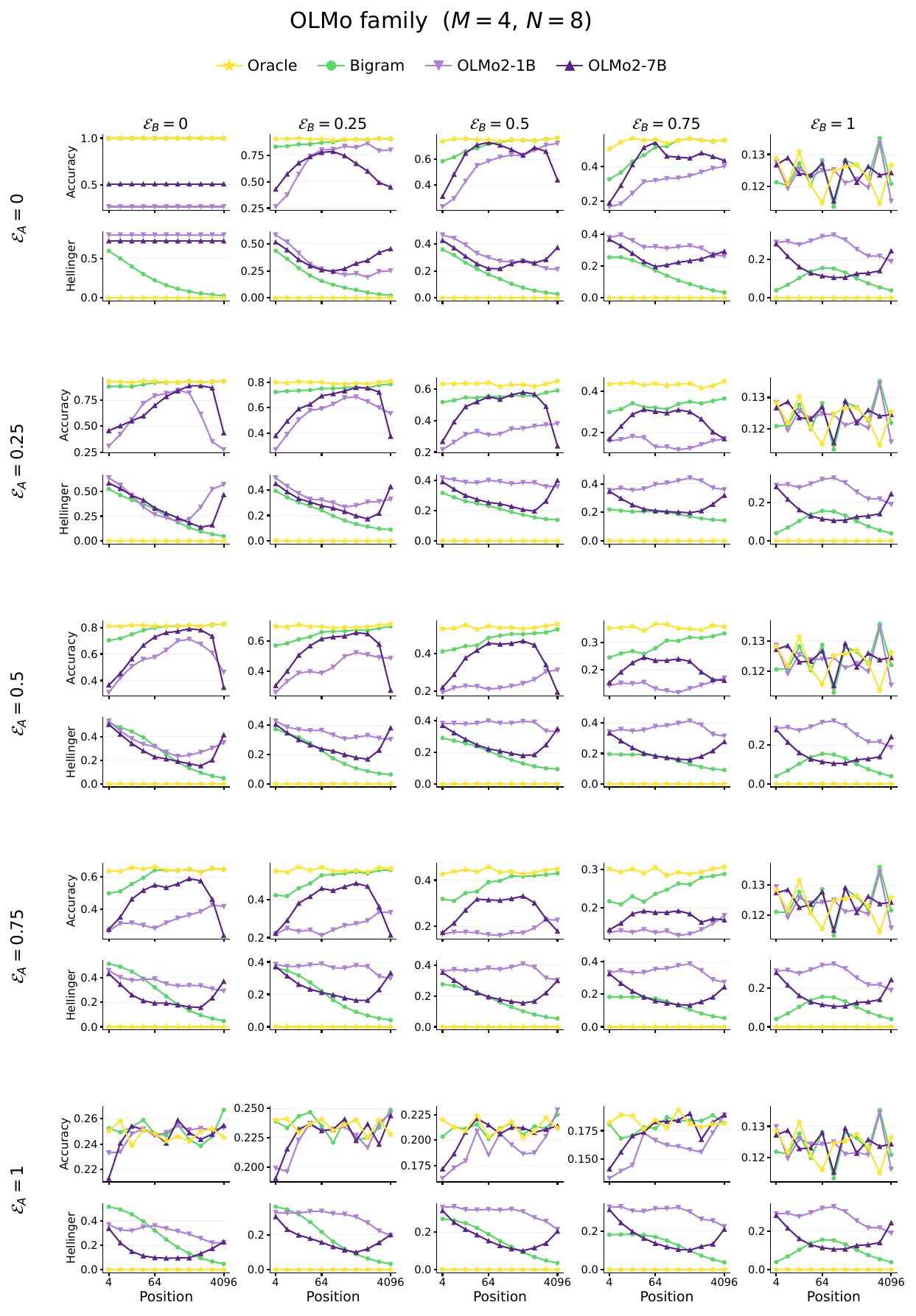}
    \caption{OLMo family LLMs evaluated on HMMs with $M=4$, $N=8$. Subplots vary transition entropy $\mathcal{E}(\mathbf{A})$ (rows) and emission entropy $\mathcal{E}(\mathbf{B})$ (columns); each shows top-1 accuracy (upper) and Hellinger-to-oracle distance (lower) versus context length, with \texttt{oracle} and \texttt{bigram} baselines.}
    \label{fig:olmo_m4_n8}
\end{figure}

\begin{figure}[H]
    \centering
    \includegraphics[width=\linewidth]{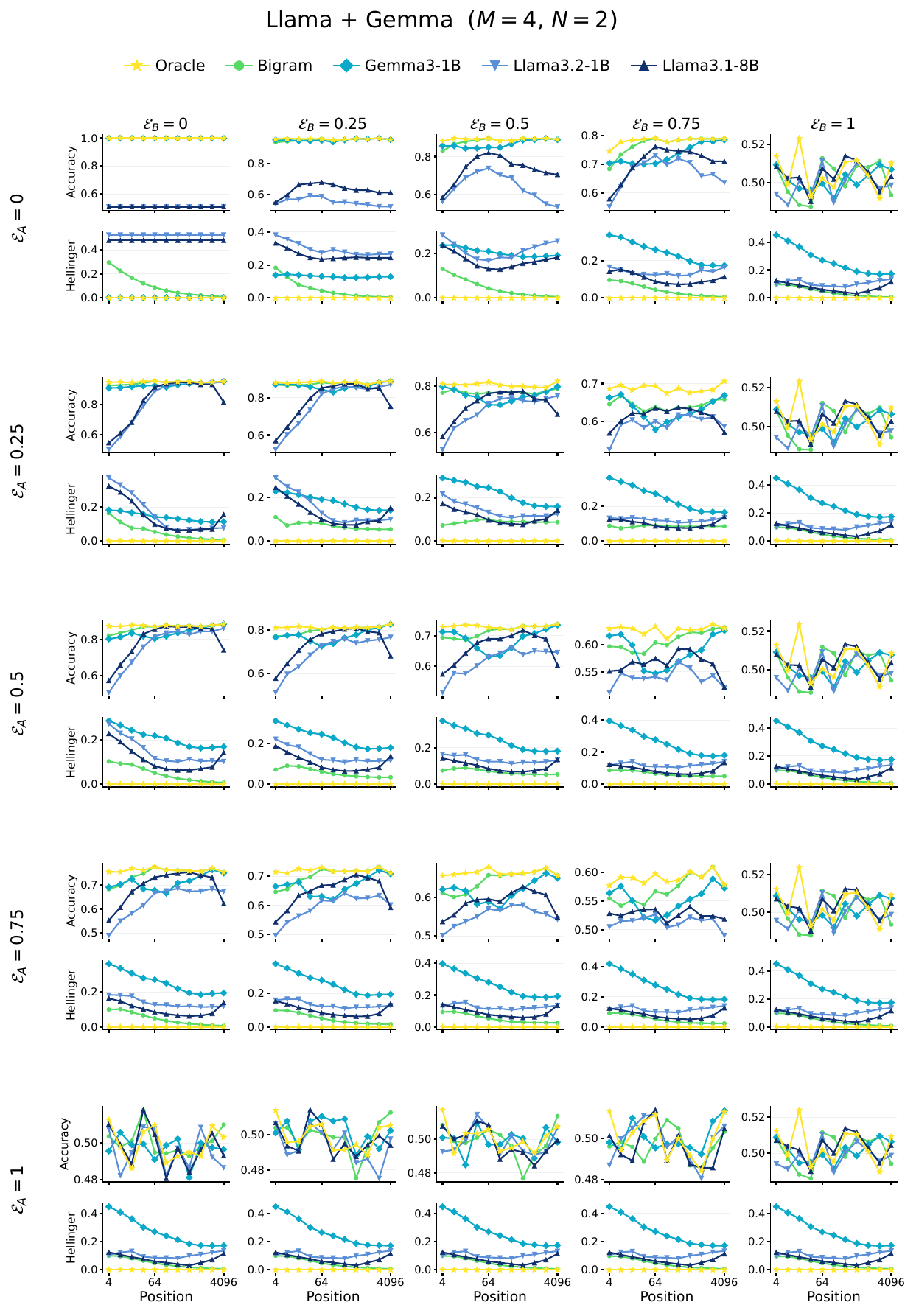}
    \caption{Llama and Gemma family LLMs evaluated on HMMs with $M=4$, $N=2$. Subplots vary transition entropy $\mathcal{E}(\mathbf{A})$ (rows) and emission entropy $\mathcal{E}(\mathbf{B})$ (columns); each shows top-1 accuracy (upper) and Hellinger-to-oracle distance (lower) versus context length, with \texttt{oracle} and \texttt{bigram} baselines.}
    \label{fig:llama_gemma_m4_n2}
\end{figure}

\begin{figure}[H]
    \centering
    \includegraphics[width=\linewidth]{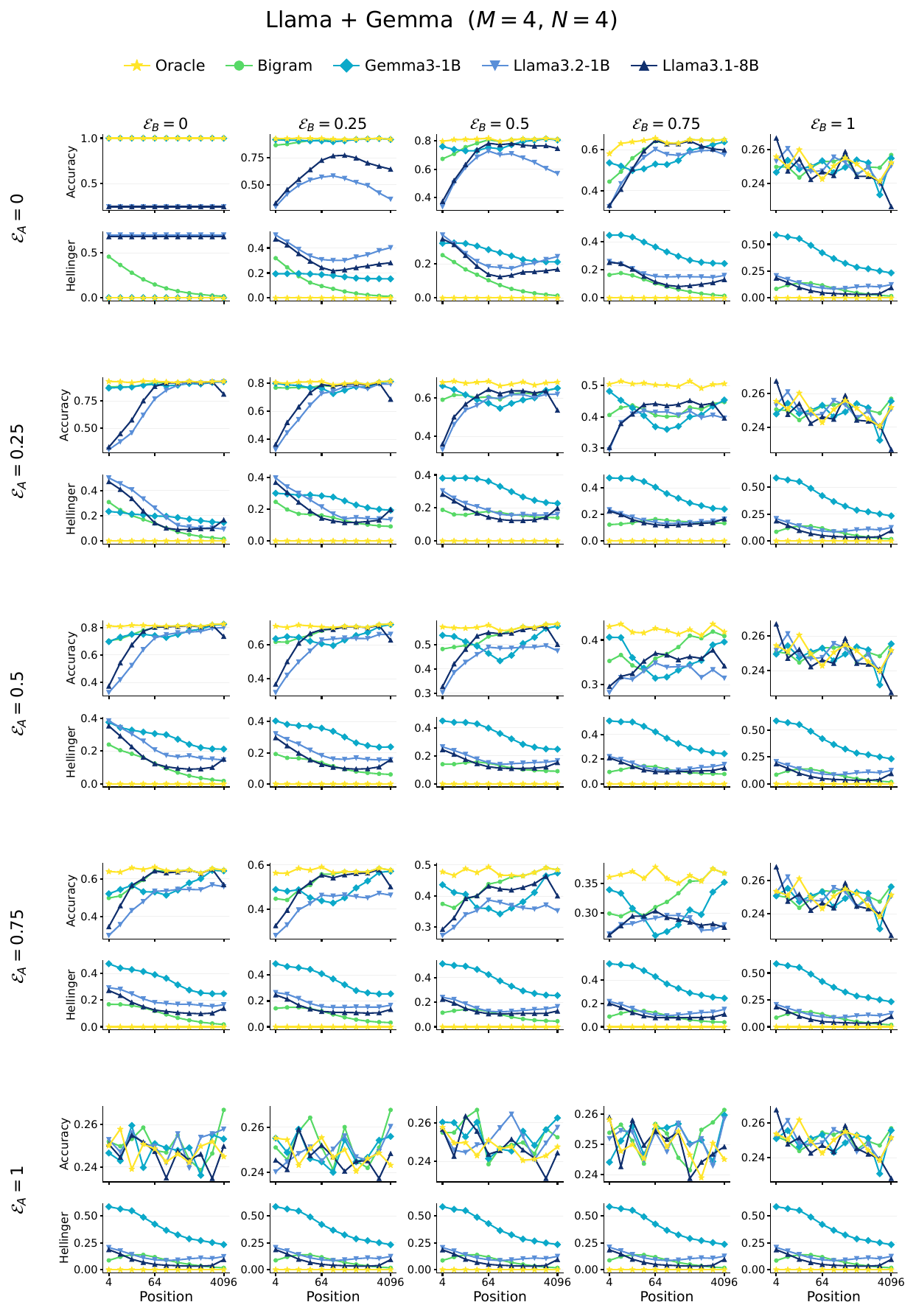}
    \caption{Llama and Gemma family LLMs evaluated on HMMs with $M=4$, $N=4$. Subplots vary transition entropy $\mathcal{E}(\mathbf{A})$ (rows) and emission entropy $\mathcal{E}(\mathbf{B})$ (columns); each shows top-1 accuracy (upper) and Hellinger-to-oracle distance (lower) versus context length, with \texttt{oracle} and \texttt{bigram} baselines.}
    \label{fig:llama_gemma_m4_n4}
\end{figure}

\begin{figure}[H]
    \centering
    \includegraphics[width=\linewidth]{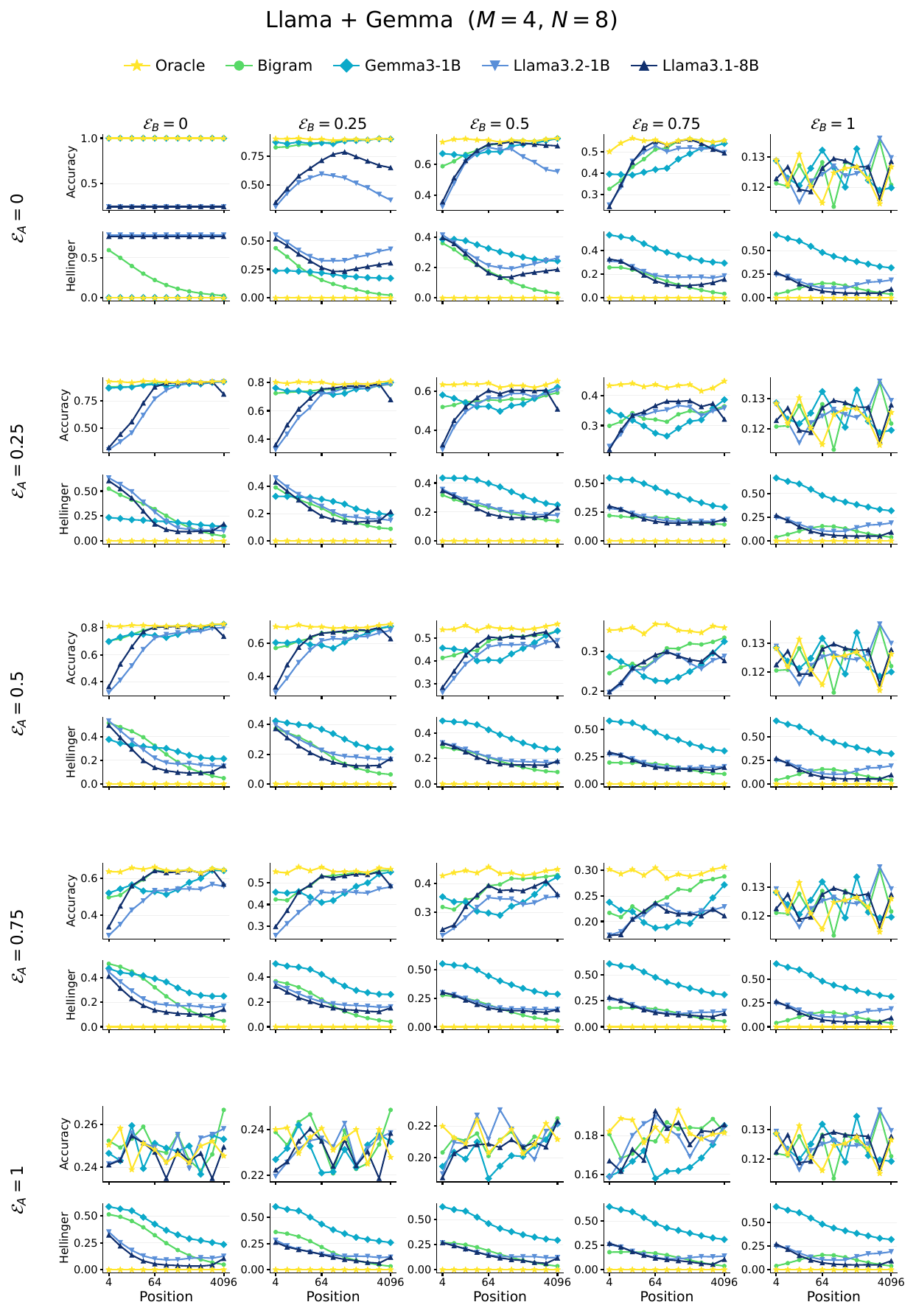}
    \caption{Llama and Gemma family LLMs evaluated on HMMs with $M=4$, $N=8$. Subplots vary transition entropy $\mathcal{E}(\mathbf{A})$ (rows) and emission entropy $\mathcal{E}(\mathbf{B})$ (columns); each shows top-1 accuracy (upper) and Hellinger-to-oracle distance (lower) versus context length, with \texttt{oracle} and \texttt{bigram} baselines.}
    \label{fig:llama_gemma_m4_n8}
\end{figure}

\subsection{Cases When LLMs Fail to Converge to Oracle}
\label{app:llm_fail_to_converge}
We first note an edge case when both transition entropy $\mathcal{E}(\mathbf{A})$ and emission entropy $\mathcal{E}(\mathbf{B})$ are zero. In this setting, the observation sequence reduces to a constant string (e.g., ``AAAA\ldots''), which should be trivially easy to predict. Nevertheless, most models fail to continue predicting the repeated token, likely as an artifact of natural language pretraining \cite{yona2025interpreting}. We exclude this configuration from subsequent analysis.

More broadly, convergence to the \texttt{oracle} can be assessed jointly via top-1 accuracy and Hellinger-to-oracle distance. We report both metrics across the remaining $74$ configurations in Figure~\ref{fig:suboptimal}. As discussed earlier, certain model families exhibit systematic suboptimality. Focusing on a strong performer such as Qwen3-8B, we find no clear correlation between suboptimality and entropy, though the Hellinger gap shows a mild increase as the number of observations grows.

\begin{figure}
    \centering
    \includegraphics[width=1.08\linewidth]{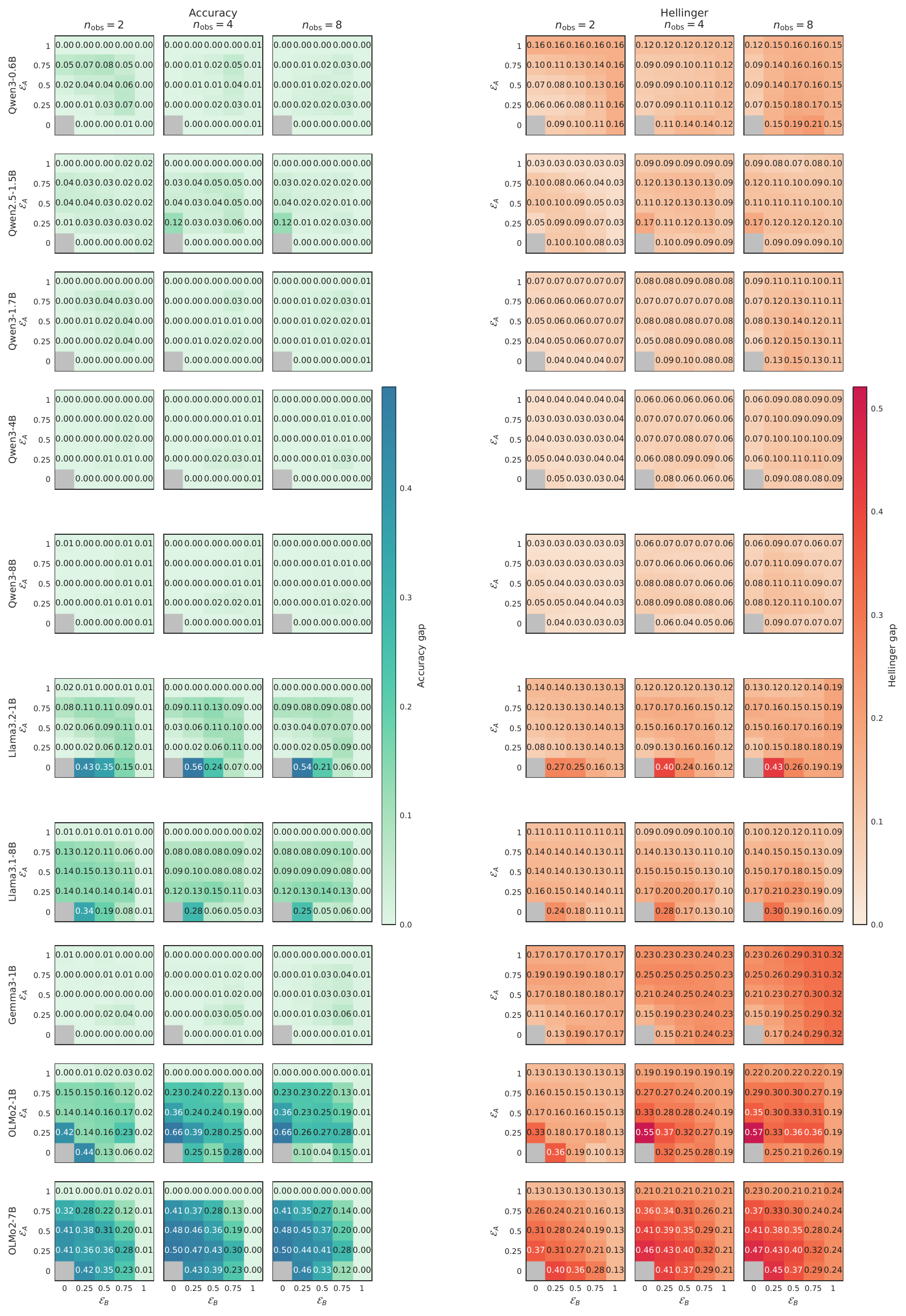}
    \caption{LLM suboptimality vs. the Bayes-optimal \texttt{oracle} at position $t{=}4096$ ($M{=}4$). Per model (rows): accuracy gap (left) and Hellinger distance to the oracle posterior (right); lower is better, $0$ = matches oracle. Columns are $n_{\mathrm{obs}}\in\{2,4,8\}$; each cell is a $5\times5$ sweep of transition entropy $\mathcal{E}_A$ (vertical) vs. emission entropy $\mathcal{E}_B$ (horizontal). Darker color means more performance gap.}
    \label{fig:suboptimal}
\end{figure}

\subsection{Baseline Comparison}
\label{app:baseline_compare}

We compare LLM performance against the baseline algorithms described in Section~\ref{sec:alg_descriptions} and Appendix~\ref{app:bench}, grouping them into the same four classes as in Figure~\ref{fig:llm_empirical}. All baselines are tuned via extensive hyperparameter search to ensure fair comparison. Specifically, Baum-Welch is run with $10$ random restarts and up to $200$ iterations each; the linear and non-linear $n$-gram models are tuned over the order $n$ and L2 regularization strength; and the spectral algorithms apply smoothing to the estimated bigram and trigram statistics.

The classical Baum-Welch algorithm (Figures~\ref{fig:classic_m4n2}, \ref{fig:classic_m4n4}, \ref{fig:classic_m4n8}) performs strongly on this task, but its behavior differs markedly from LLMs. Most notably, it achieves high accuracy even at very short context lengths and maintains stable performance as context grows, whereas LLMs typically require longer context to converge. Under high emission entropy, its accuracy becomes less stable across context lengths. Both observations reflect the iterative nature of Baum-Welch: given the number of hidden states $M$, it fits HMM parameters directly to the observed sequence, so its performance is governed by parameter estimation quality rather than in-context sequence modeling.

The linear and non-linear $n$-gram variants (Figures~\ref{fig:linear_m4n2}, \ref{fig:linear_m4n4}, \ref{fig:linear_m4n8}, \ref{fig:nonlinear_m4n2}, \ref{fig:nonlinear_m4n4}, \ref{fig:nonlinear_m4n8}) exhibit in-context learning behavior qualitatively similar to LLMs, with accuracy steadily improving over longer contexts. There are, however, notable differences among variants: the linear $n$-gram model trained with MSE loss fails to converge to the LLM or oracle performance, whereas the cross-entropy and ridge regression variants do.

The spectral algorithms (Figures~\ref{fig:spectral_m4n2}, \ref{fig:spectral_m4n4}, \ref{fig:spectral_m4n8}) show a somewhat different profile from the $n$-gram variants, tending to exhibit a U-shaped accuracy curve with more pronounced improvement at longer context lengths. This pattern is consistent with the fact that reliable estimation of the underlying bigram and trigram statistics requires a larger number of observations, suggesting that the spectral methods may become more competitive as context grows.

Taken together, none of the baseline algorithms can be confidently identified as matching LLM behavior across all HMM configurations. The $n$-gram variants and spectral methods each resemble LLMs in certain regimes but diverge in others. Baum-Welch, however, can be confidently ruled out: its ability to achieve strong performance from very short contexts reflects a fundamentally different learning mechanism rather than the gradual in-context generalization characteristic of LLMs.

\begin{figure}
    \centering
    \includegraphics[width=\linewidth]{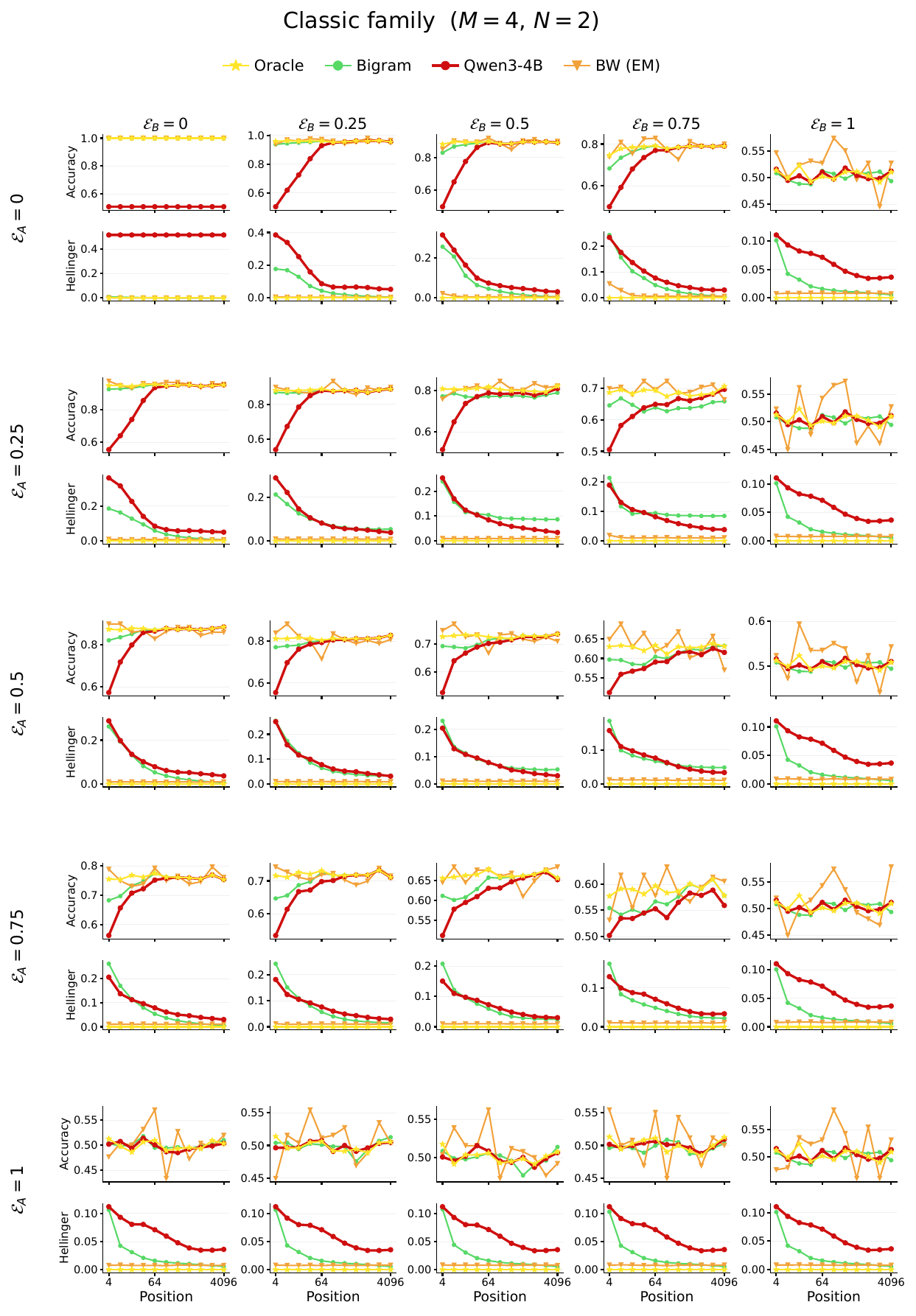}
    \caption{Grid over transition entropy $\mathcal{E}_A$ (rows) and emission entropy $\mathcal{E}_B$ (columns); each cell shows top-1 accuracy (top) and Hellinger distance to the oracle posterior (bottom) as a function of sequence position. Higher accuracy / lower Hellinger is better.}
    \label{fig:classic_m4n2}
\end{figure}

\begin{figure}
    \centering
    \includegraphics[width=\linewidth]{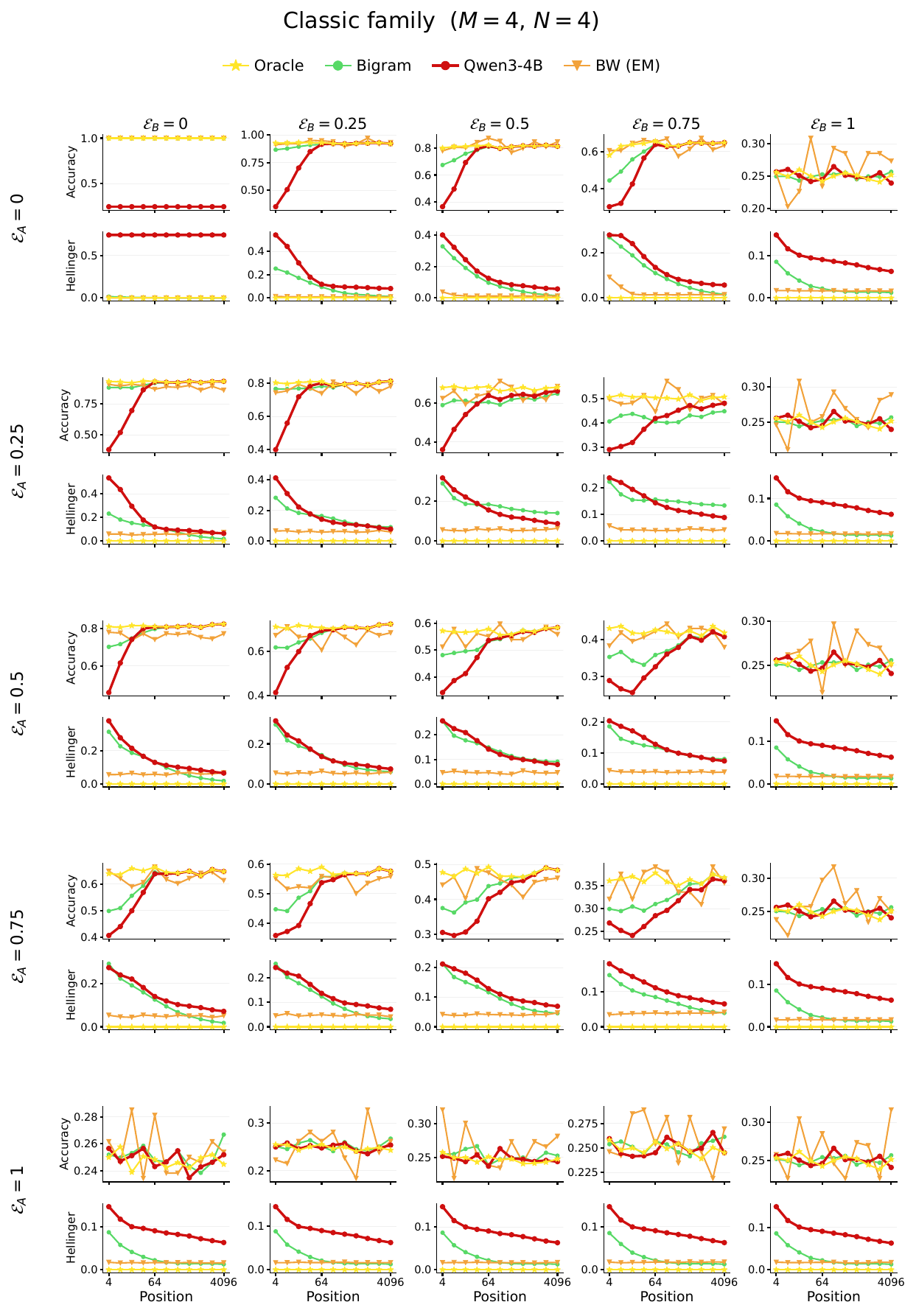}
    \caption{Grid over transition entropy $\mathcal{E}_A$ (rows) and emission entropy $\mathcal{E}_B$ (columns); each cell shows top-1 accuracy (top) and Hellinger distance to the oracle posterior (bottom) as a function of sequence position. Higher accuracy / lower Hellinger is better.}
    \label{fig:classic_m4n4}
\end{figure}

\begin{figure}
    \centering
    \includegraphics[width=\linewidth]{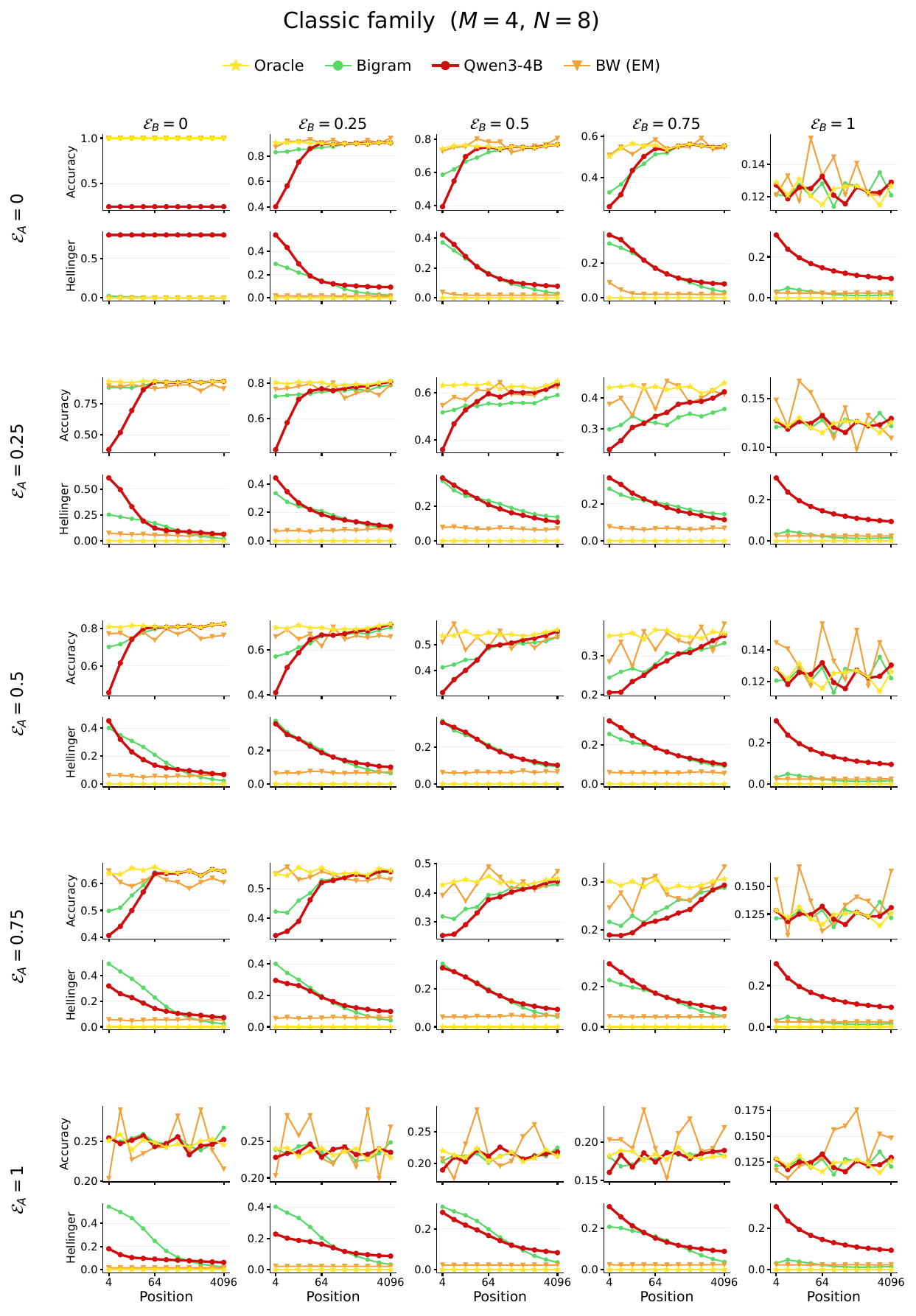}
    \caption{Grid over transition entropy $\mathcal{E}_A$ (rows) and emission entropy $\mathcal{E}_B$ (columns); each cell shows top-1 accuracy (top) and Hellinger distance to the oracle posterior (bottom) as a function of sequence position. Higher accuracy / lower Hellinger is better.}
    \label{fig:classic_m4n8}
\end{figure}

\begin{figure}
    \centering
    \includegraphics[width=\linewidth]{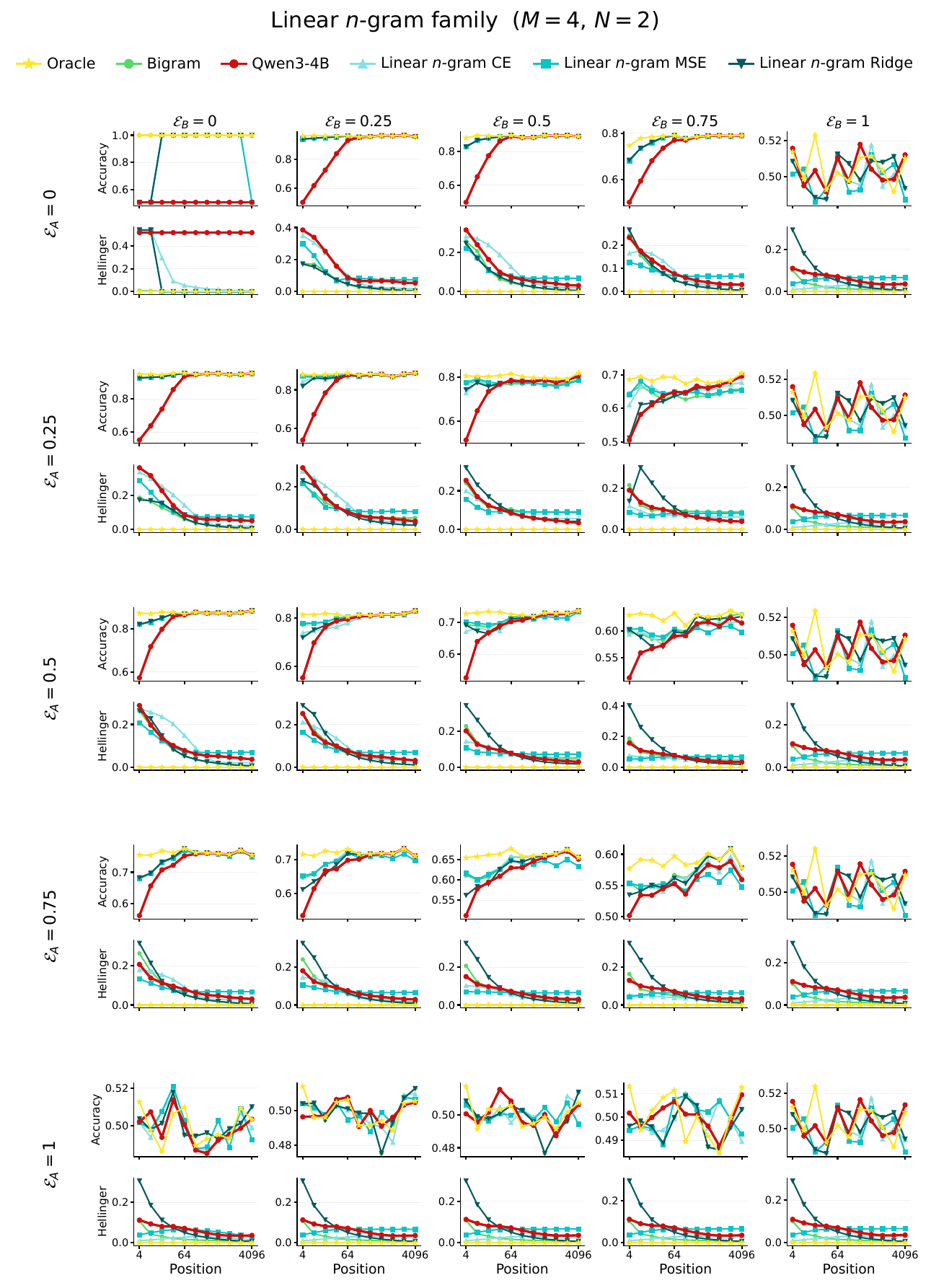}
    \caption{Grid over transition entropy $\mathcal{E}_A$ (rows) and emission entropy $\mathcal{E}_B$ (columns); each cell shows top-1 accuracy (top) and Hellinger distance to the oracle posterior (bottom) as a function of sequence position. Higher accuracy / lower Hellinger is better.}
    \label{fig:linear_m4n2}
\end{figure}

\begin{figure}
    \centering
    \includegraphics[width=\linewidth]{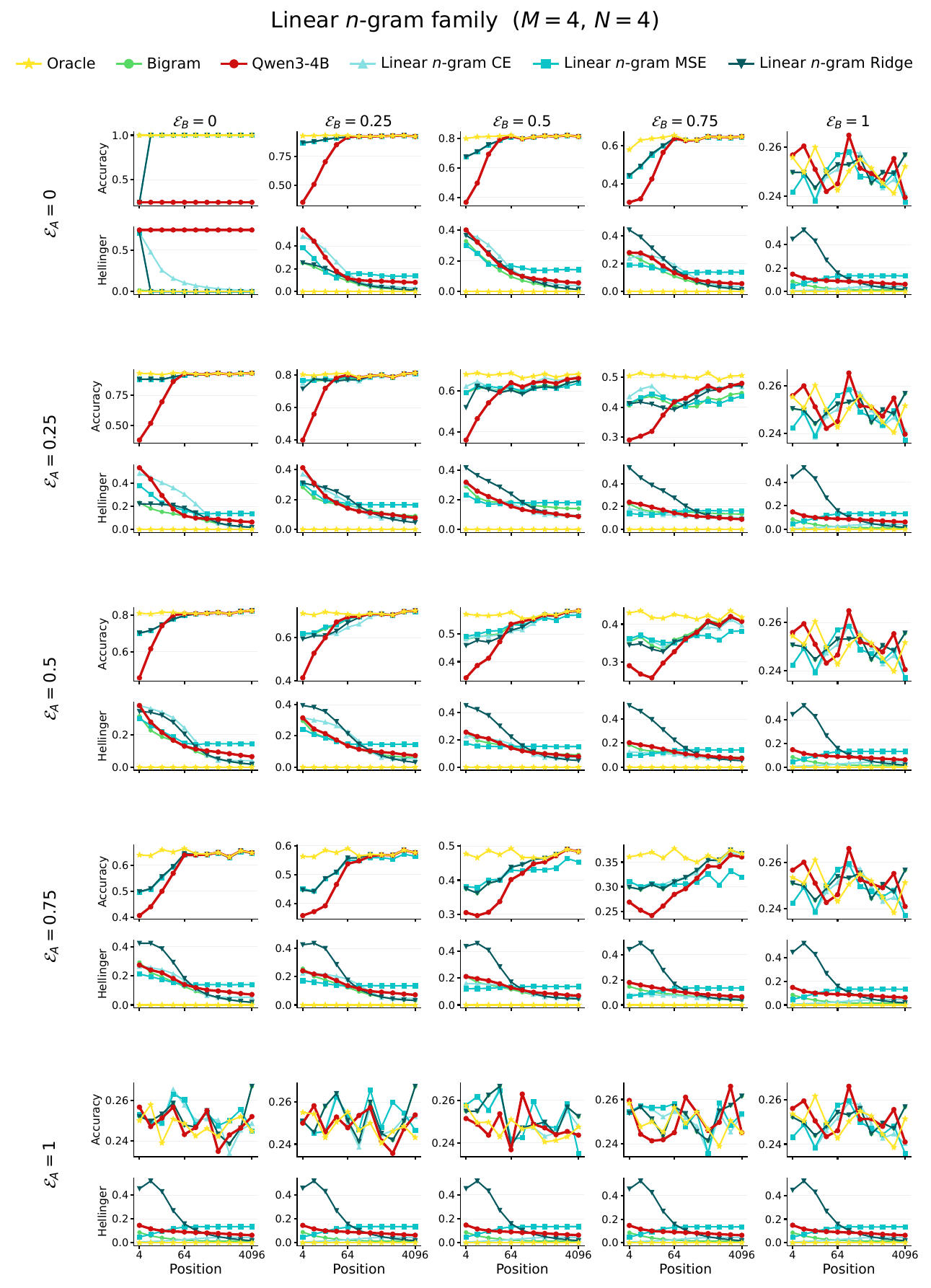}
    \caption{Grid over transition entropy $\mathcal{E}_A$ (rows) and emission entropy $\mathcal{E}_B$ (columns); each cell shows top-1 accuracy (top) and Hellinger distance to the oracle posterior (bottom) as a function of sequence position. Higher accuracy / lower Hellinger is better.}
    \label{fig:linear_m4n4}
\end{figure}

\begin{figure}
    \centering
    \includegraphics[width=\linewidth]{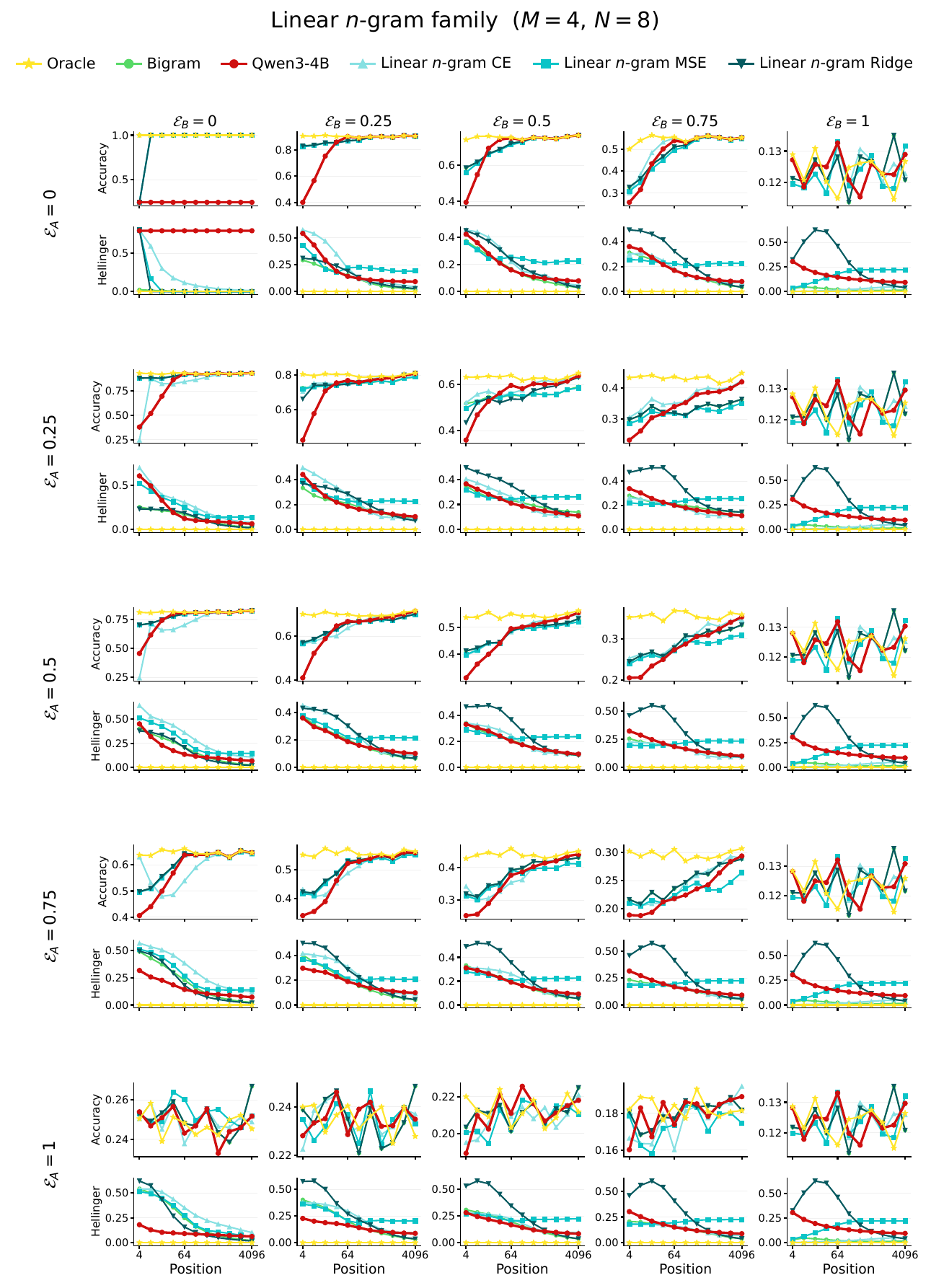}
    \caption{Grid over transition entropy $\mathcal{E}_A$ (rows) and emission entropy $\mathcal{E}_B$ (columns); each cell shows top-1 accuracy (top) and Hellinger distance to the oracle posterior (bottom) as a function of sequence position. Higher accuracy / lower Hellinger is better.}
    \label{fig:linear_m4n8}
\end{figure}

\begin{figure}
    \centering
    \includegraphics[width=\linewidth]{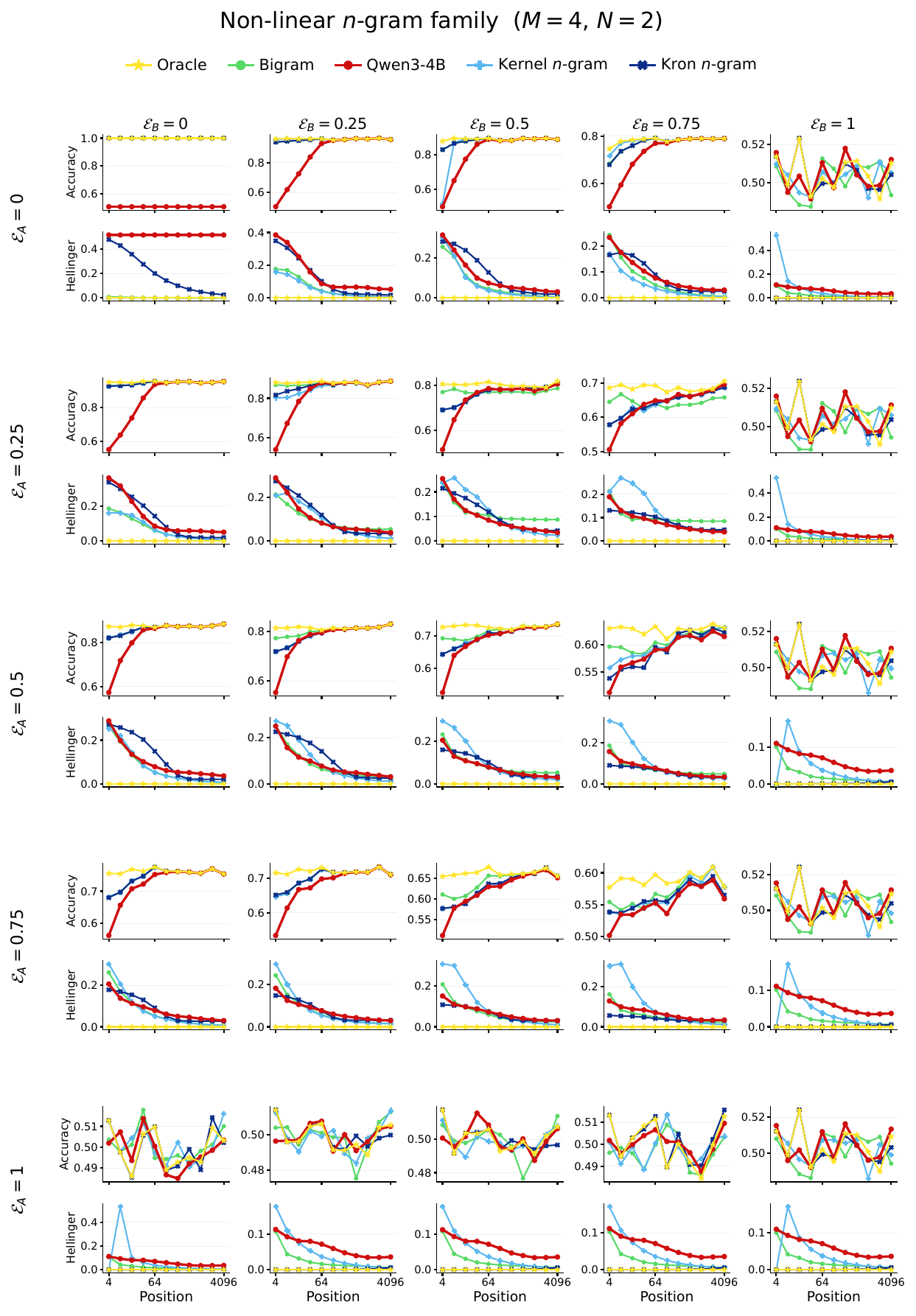}
    \caption{Grid over transition entropy $\mathcal{E}_A$ (rows) and emission entropy $\mathcal{E}_B$ (columns); each cell shows top-1 accuracy (top) and Hellinger distance to the oracle posterior (bottom) as a function of sequence position. Higher accuracy / lower Hellinger is better.}
    \label{fig:nonlinear_m4n2}
\end{figure}

\begin{figure}
    \centering
    \includegraphics[width=\linewidth]{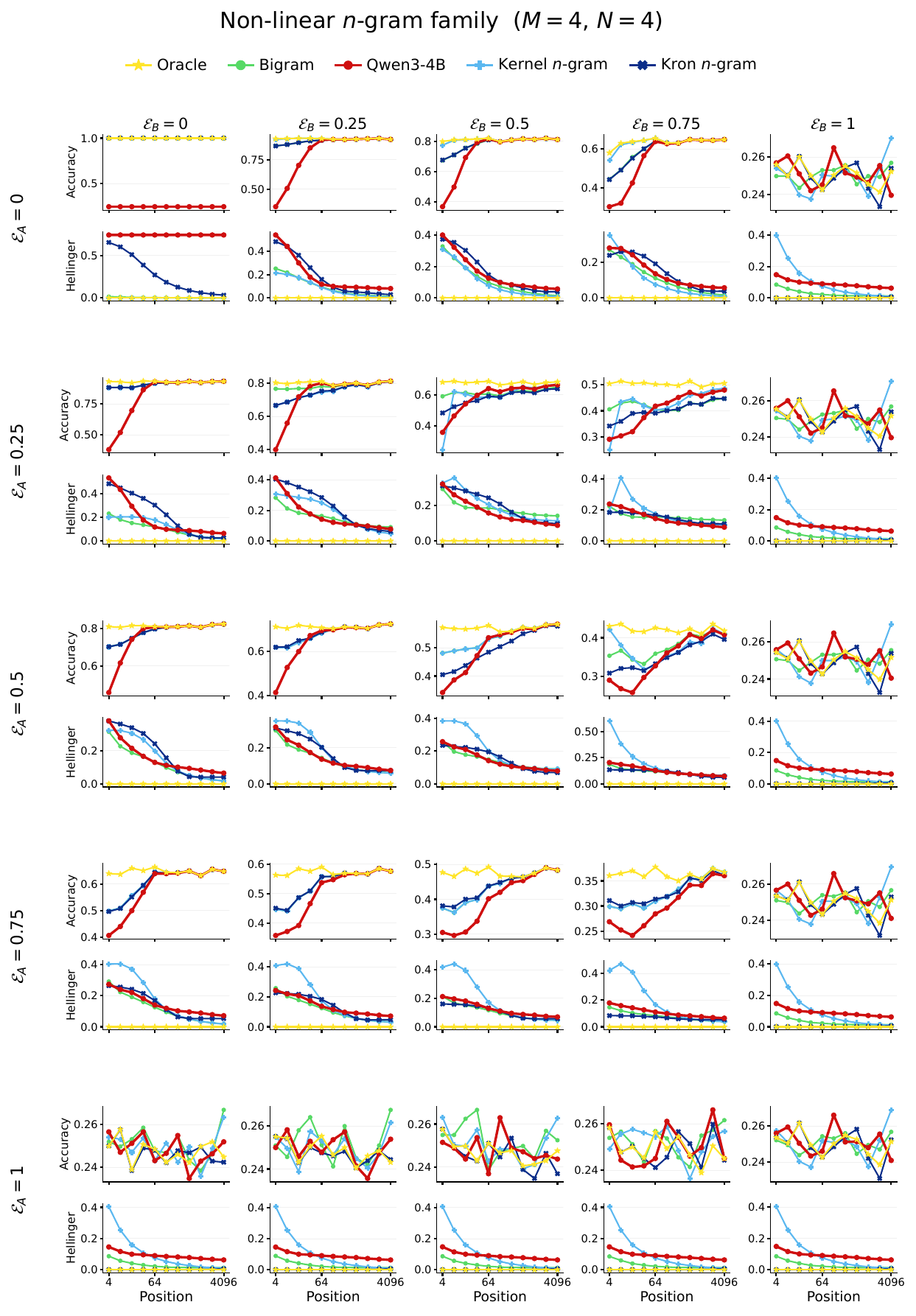}
    \caption{Grid over transition entropy $\mathcal{E}_A$ (rows) and emission entropy $\mathcal{E}_B$ (columns); each cell shows top-1 accuracy (top) and Hellinger distance to the oracle posterior (bottom) as a function of sequence position. Higher accuracy / lower Hellinger is better.}
    \label{fig:nonlinear_m4n4}
\end{figure}

\begin{figure}
    \centering
    \includegraphics[width=\linewidth]{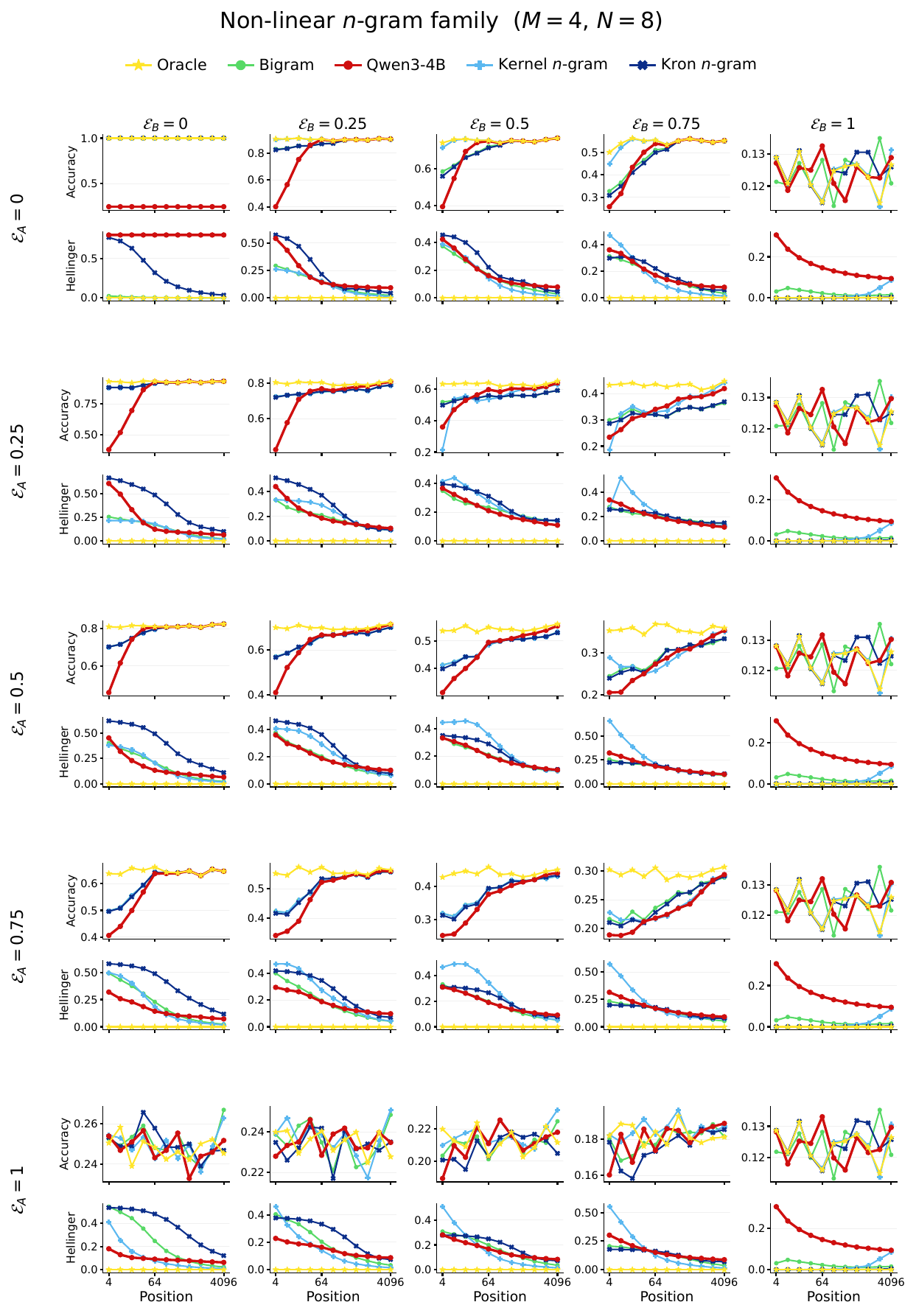}
    \caption{Grid over transition entropy $\mathcal{E}_A$ (rows) and emission entropy $\mathcal{E}_B$ (columns); each cell shows top-1 accuracy (top) and Hellinger distance to the oracle posterior (bottom) as a function of sequence position. Higher accuracy / lower Hellinger is better.}
    \label{fig:nonlinear_m4n8}
\end{figure}

\begin{figure}
    \centering
    \includegraphics[width=\linewidth]{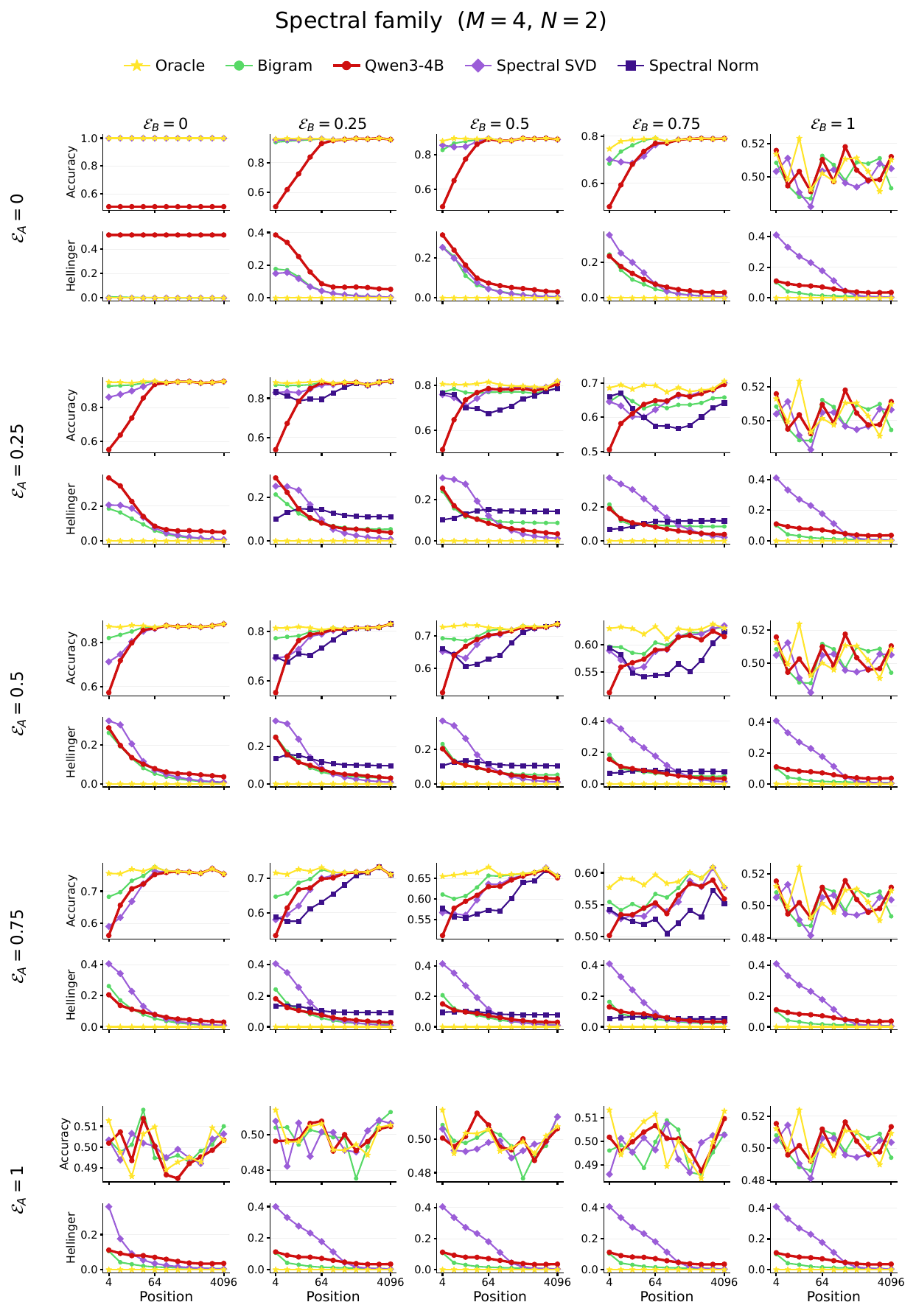}
    \caption{Grid over transition entropy $\mathcal{E}_A$ (rows) and emission entropy $\mathcal{E}_B$ (columns); each cell shows top-1 accuracy (top) and Hellinger distance to the oracle posterior (bottom) as a function of sequence position. Higher accuracy / lower Hellinger is better.}
    \label{fig:spectral_m4n2}
\end{figure}

\begin{figure}
    \centering
    \includegraphics[width=\linewidth]{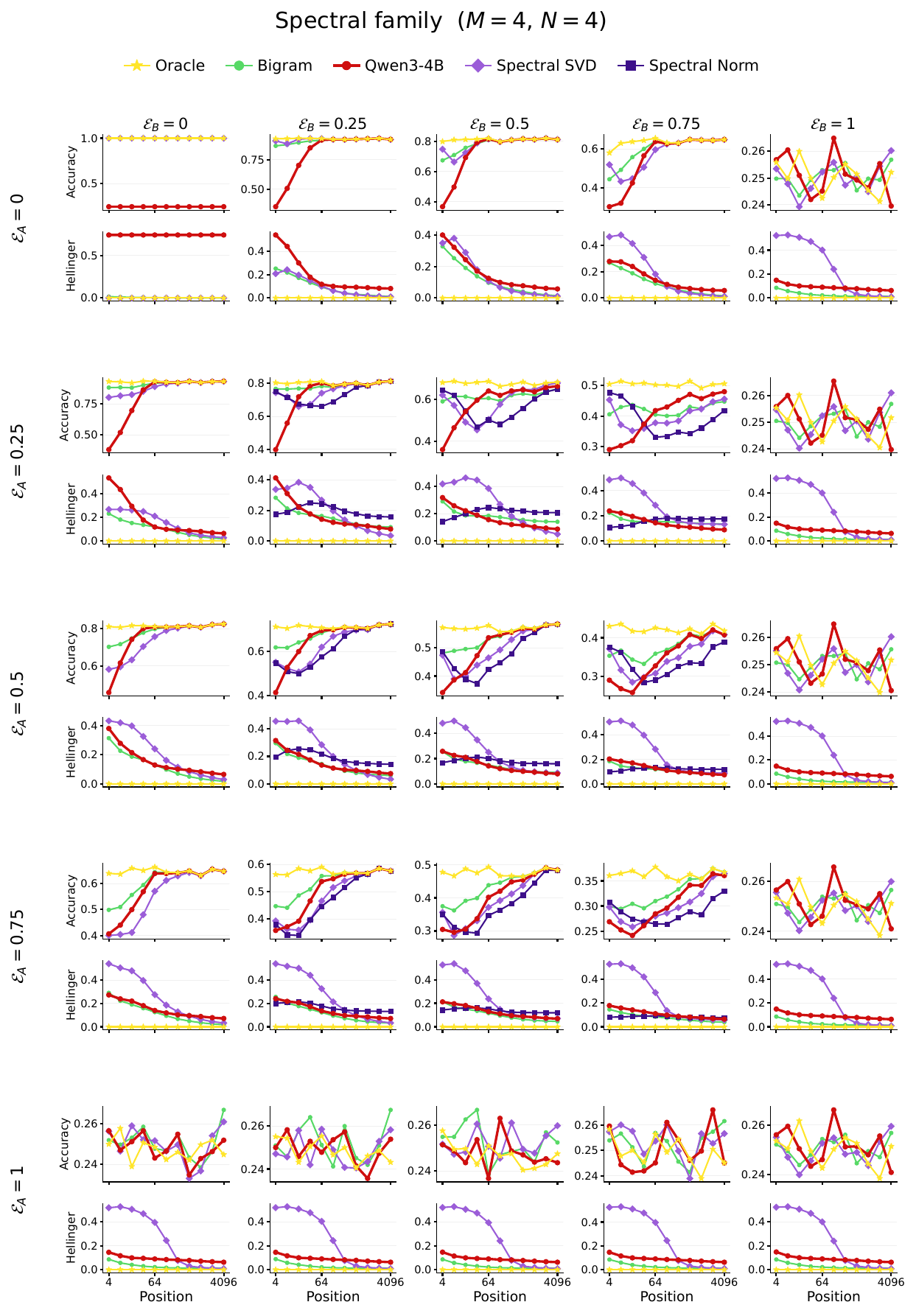}
    \caption{Grid over transition entropy $\mathcal{E}_A$ (rows) and emission entropy $\mathcal{E}_B$ (columns); each cell shows top-1 accuracy (top) and Hellinger distance to the oracle posterior (bottom) as a function of sequence position. Higher accuracy / lower Hellinger is better.}
    \label{fig:spectral_m4n4}
\end{figure}

\begin{figure}
    \centering
    \includegraphics[width=\linewidth]{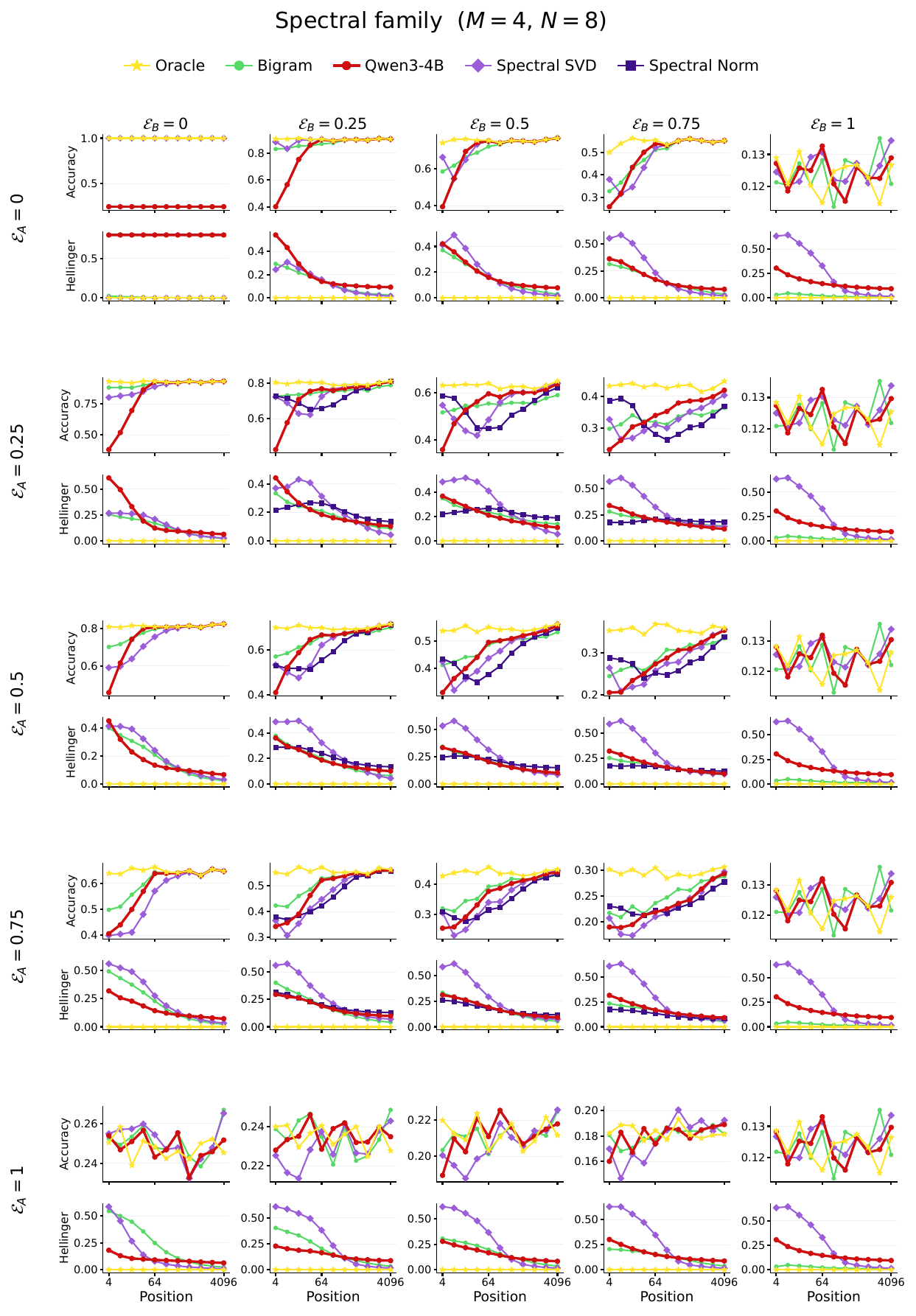}
    \caption{Grid over transition entropy $\mathcal{E}_A$ (rows) and emission entropy $\mathcal{E}_B$ (columns); each cell shows top-1 accuracy (top) and Hellinger distance to the oracle posterior (bottom) as a function of sequence position. Higher accuracy / lower Hellinger is better.}
    \label{fig:spectral_m4n8}
\end{figure}

\subsection{Soft $n$-gram Baselines with Varying $n$}

We examine how the order $n$ of the $n$-gram models affects predictive performance across HMM configurations. Recall that $n$ controls the length of the conditioning history: larger $n$ allows the model to capture longer-range dependencies in the observation sequence, at the cost of requiring more data to reliably estimate the associated statistics. We report results for both linear and non-linear variants across $N \in \{2, 4, 8\}$ to assess how the optimal order interacts with alphabet size and entropy.

\begin{figure}[H]
    \centering
    \includegraphics[width=0.82\linewidth]{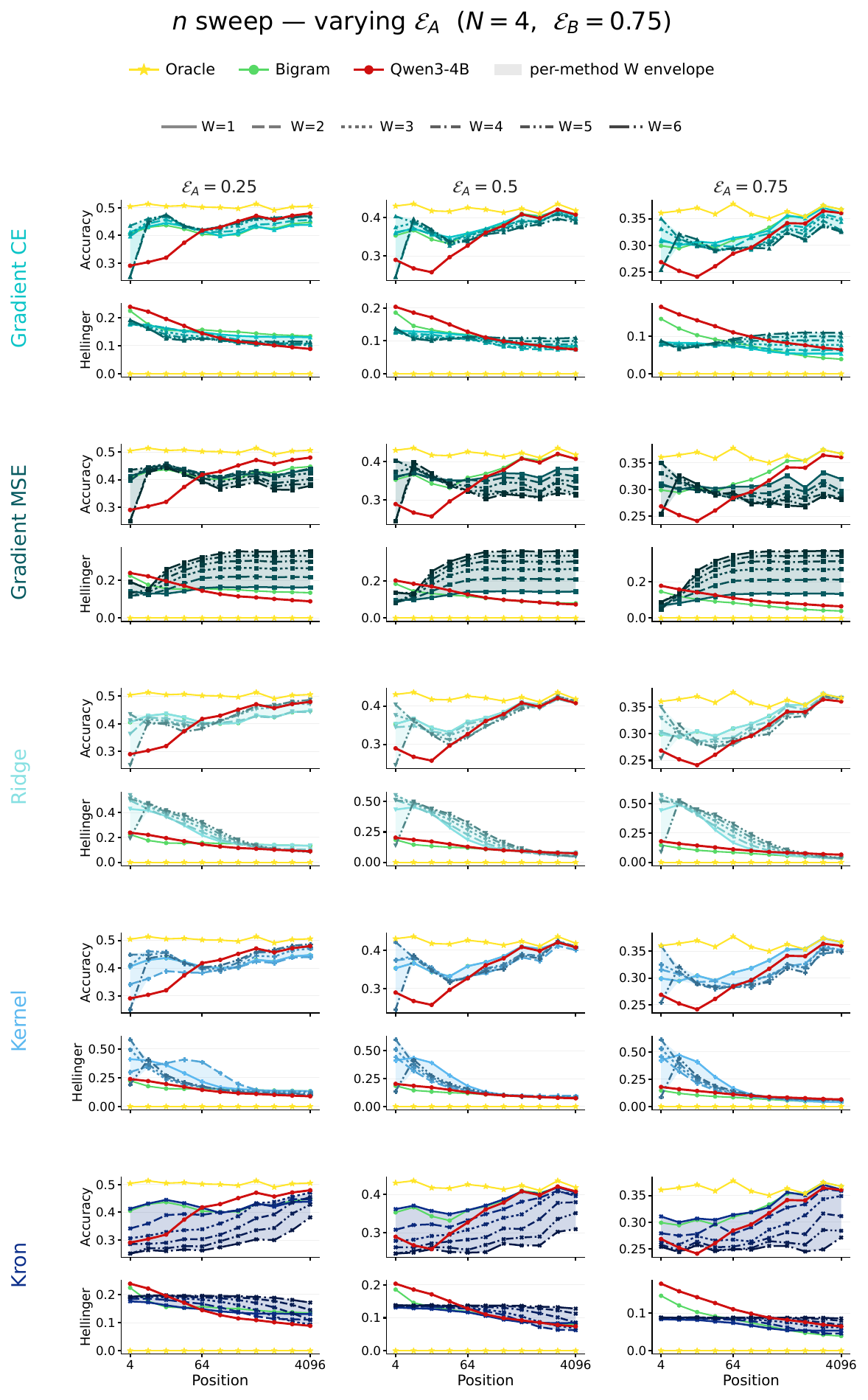}
    \caption{Rows are the windowed baselines (Gradient CE / MSE / Ridge / Kernel / Kron); columns vary the transition entropy $\mathcal{E}_A \in \{0.25, 0.5, 0.75\}$. Each cell shows top-1 accuracy (top) and Hellinger distance to the oracle posterior (bottom) vs. sequence position (log axis). Within a cell, the colored lines are the per-window fits ($n=1\ldots6$, distinguished by linestyle and a light→dark shade) and the shaded band is their min–max envelope; Oracle, Bigram, and Qwen3-4B are drawn for reference.}
    \label{fig:sweep_n_m4_A_e}
\end{figure}

\begin{figure}[H]
    \centering
    \includegraphics[width=0.82\linewidth]{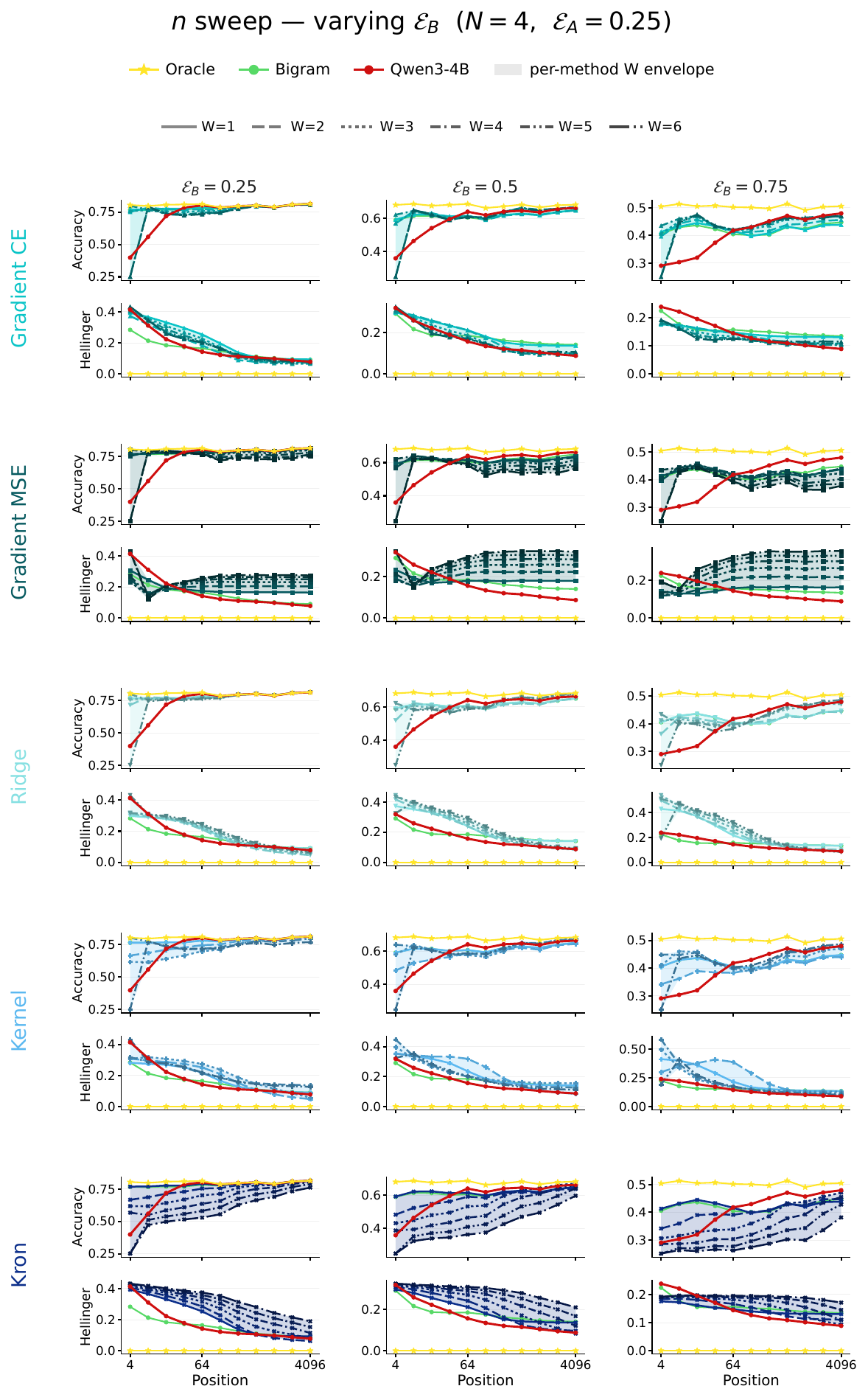}
    \caption{Rows are the windowed baselines (Gradient CE / MSE / Ridge / Kernel / Kron); columns vary the emission entropy $\mathcal{E}_B \in \{0.25, 0.5, 0.75\}$. Each cell shows top-1 accuracy (top) and Hellinger distance to the oracle posterior (bottom) vs. sequence position (log axis). Within a cell, the colored lines are the per-window fits ($n=1\ldots6$, distinguished by linestyle and a light→dark shade) and the shaded band is their min–max envelope; Oracle, Bigram, and Qwen3-4B are drawn for reference.}
    \label{fig:sweep_n_m4_B_e}
\end{figure}

\begin{figure}[H]
    \centering
    \includegraphics[width=\linewidth]{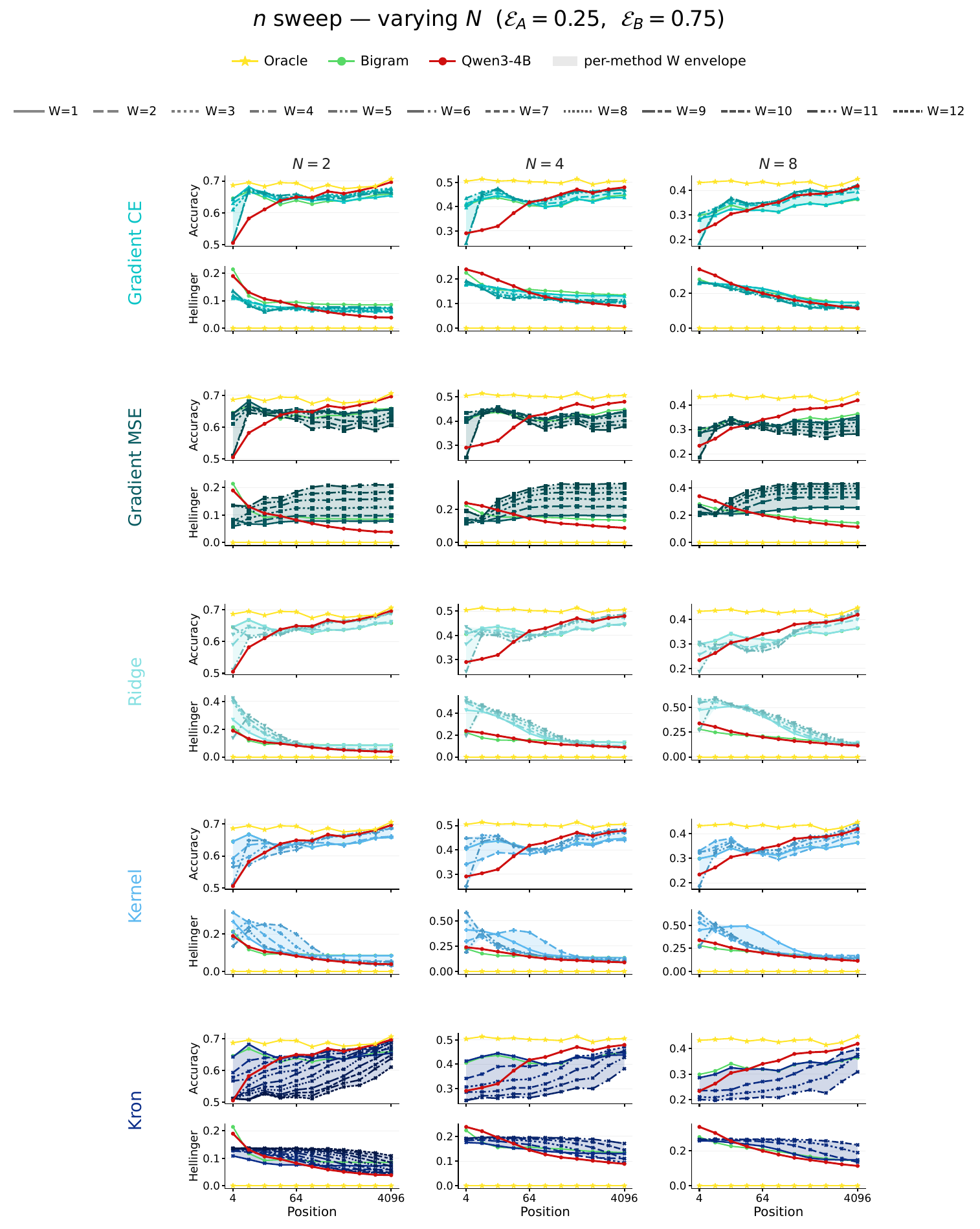}
    \caption{Rows are the windowed baselines (Gradient CE / MSE / Ridge / Kernel / Kron); columns vary the number of observations $N\in\{2,4,8\}$. Each cell shows top-1 accuracy (top) and Hellinger distance to the oracle posterior (bottom) vs. sequence position (log axis). Within a cell, the colored lines are the per-window fits ($n=1\ldots6$, distinguished by linestyle and a light→dark shade) and the shaded band is their min–max envelope; Oracle, Bigram, and Qwen3-4B are drawn for reference.}
    \label{fig:sweep_n_m4_A_e}
\end{figure}

\subsection{More Hidden States $(M=12)$}
\label{app:more_state}
For all the results we show above, they are done with a fixed number of hidden states $M=4$. We also compare LLM performance on a more complex HMM with 12 hidden states, to see if the insights we gain above is extensible. And we can also find if there's trend w.r.t. $M$. We mainly focus on baseline comparison, as it has potential to inform us something new and meaningful.

\begin{figure}[H]
    \centering
    \includegraphics[width=\linewidth]{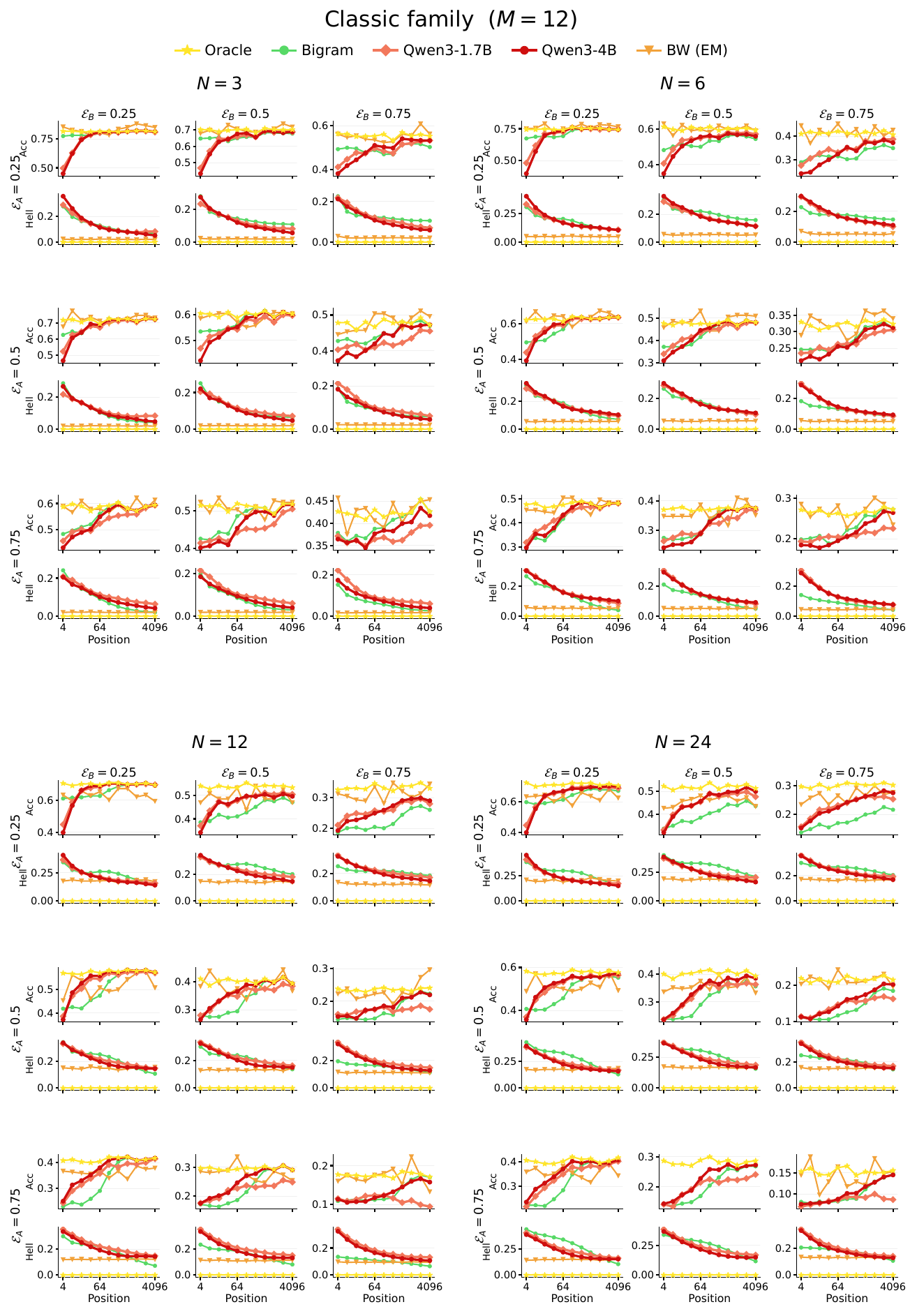}
    \caption{Classic baselines (Bigram, Baum--Welch) vs.\ position on 12-state HMMs ($M=12$).
    Each of the four blocks is a fixed observation-alphabet size $N\in\{3,6,12,24\}$; within a
    block, rows vary the transition entropy $\mathcal{E}_A$ and columns the emission entropy
    $\mathcal{E}_B$ (both $\in\{0.25,0.5,0.75\}$). Every cell shows top-1 accuracy (top) and
    Hellinger distance to the oracle posterior (bottom) versus sequence position (log axis).
    Oracle, Bigram, Qwen3-1.7B and Qwen3-4B are drawn for reference alongside Baum--Welch
    (BW/EM). Higher accuracy / lower Hellinger is better.}
    \label{fig:baselines_m12_classic}
\end{figure}

\begin{figure}[H]
    \centering
    \includegraphics[width=\linewidth]{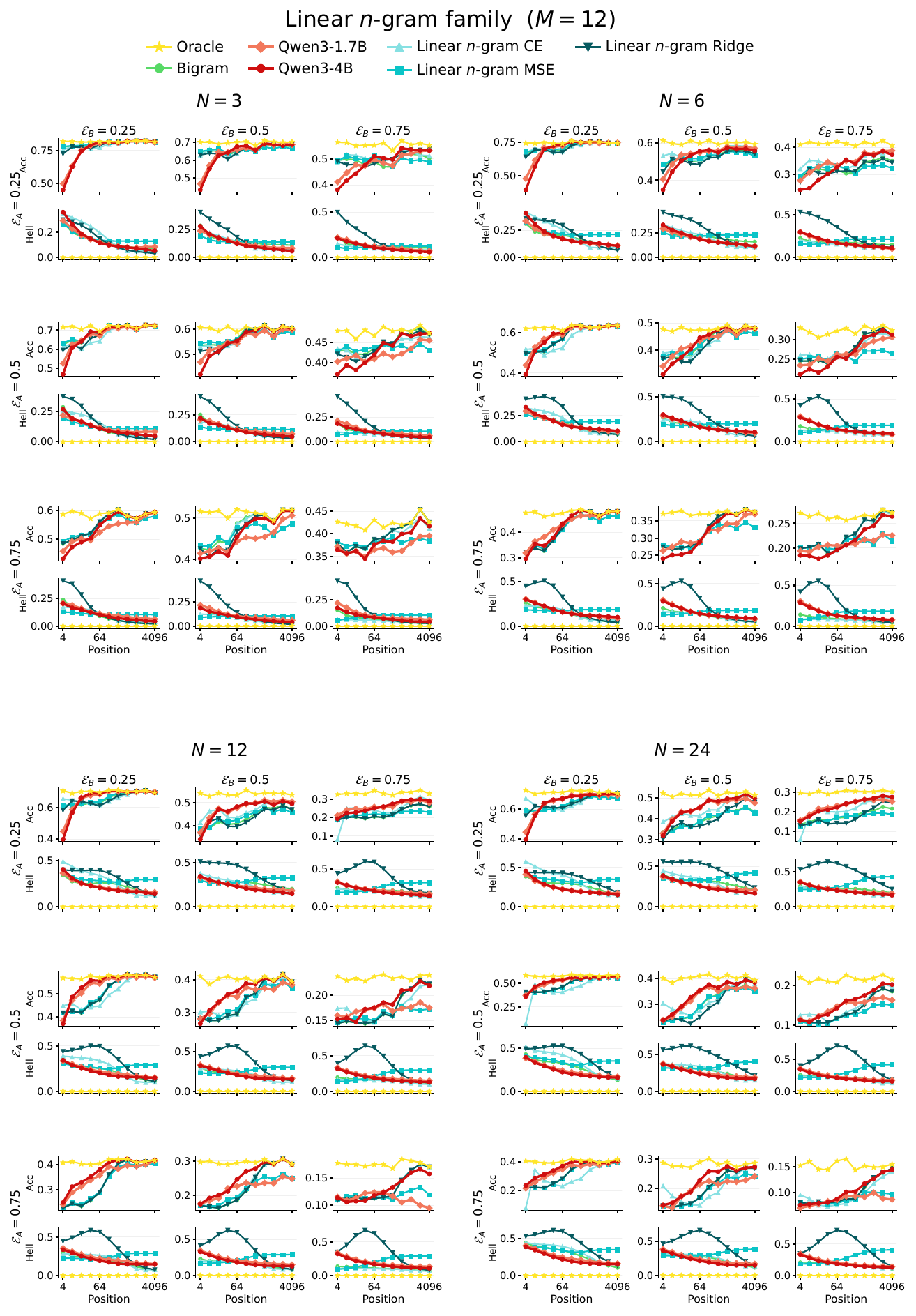}
    \caption{Linear $n$-gram family vs.\ position on 12-state HMMs ($M=12$).
    Each of the four blocks is a fixed observation-alphabet size $N\in\{3,6,12,24\}$; within a
    block, rows vary the transition entropy $\mathcal{E}_A$ and columns the emission entropy
    $\mathcal{E}_B$ (both $\in\{0.25,0.5,0.75\}$). Every cell shows top-1 accuracy (top) and
    Hellinger distance to the oracle posterior (bottom) versus sequence position (log axis).
    Oracle, Bigram, Qwen3-1.7B and Qwen3-4B are drawn for reference alongside Linear $n$-gram
    CE / MSE / Ridge; the $n$-gram window is selected per position by validation. Higher
    accuracy / lower Hellinger is better.}
    \label{fig:baselines_m12_linear}
\end{figure}

\begin{figure}[H]
    \centering
    \includegraphics[width=\linewidth]{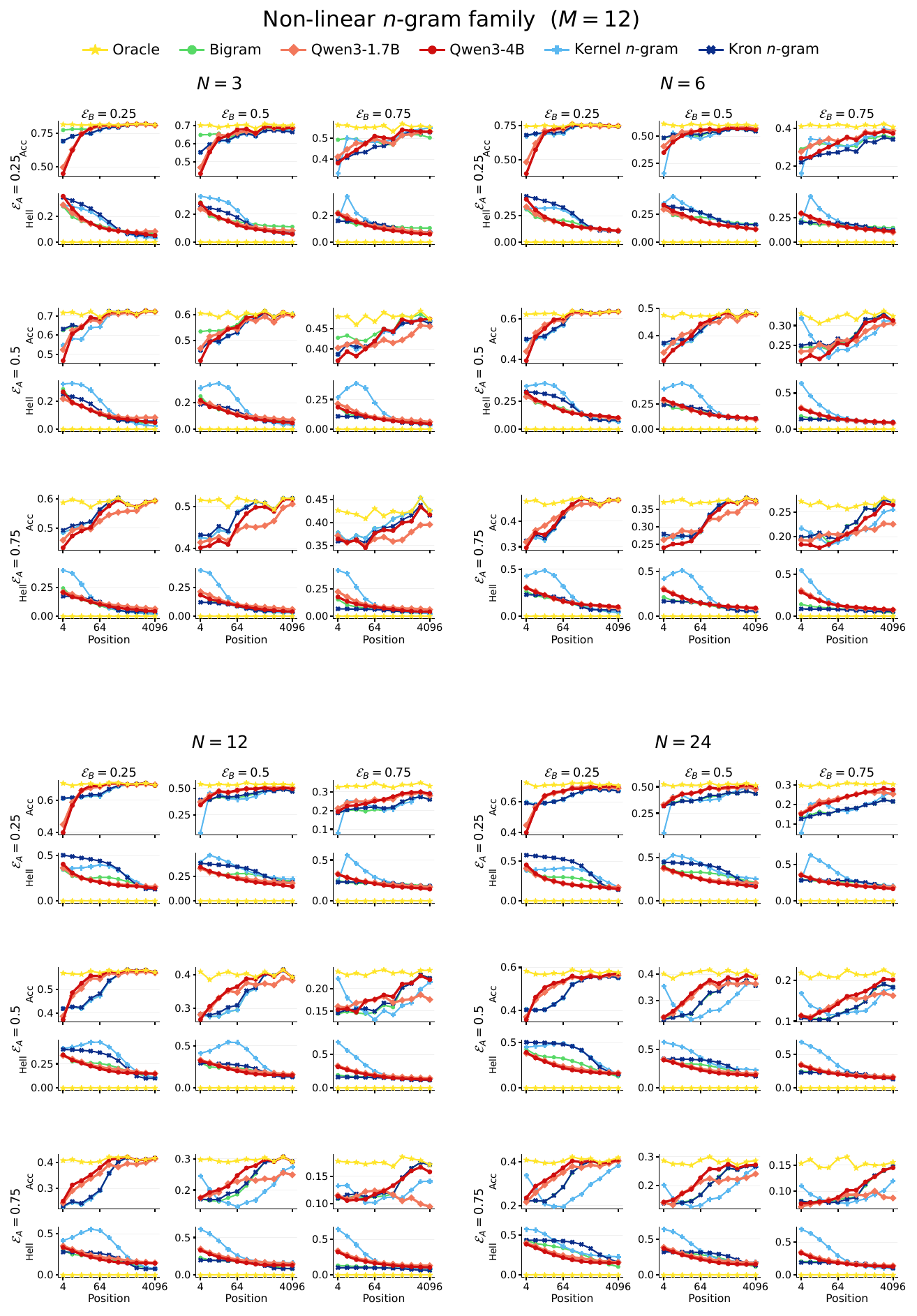}
    \caption{Non-linear $n$-gram family vs.\ position on 12-state HMMs ($M=12$).
    Each of the four blocks is a fixed observation-alphabet size $N\in\{3,6,12,24\}$; within a
    block, rows vary the transition entropy $\mathcal{E}_A$ and columns the emission entropy
    $\mathcal{E}_B$ (both $\in\{0.25,0.5,0.75\}$). Every cell shows top-1 accuracy (top) and
    Hellinger distance to the oracle posterior (bottom) versus sequence position (log axis).
    Oracle, Bigram, Qwen3-1.7B and Qwen3-4B are drawn for reference alongside Kernel $n$-gram
    and Kron $n$-gram; the $n$-gram window is selected per position by validation. Higher
    accuracy / lower Hellinger is better.}
    \label{fig:baselines_m12_nonlinear}
  \end{figure}

\begin{figure}[H]
    \centering
    \includegraphics[width=\linewidth]{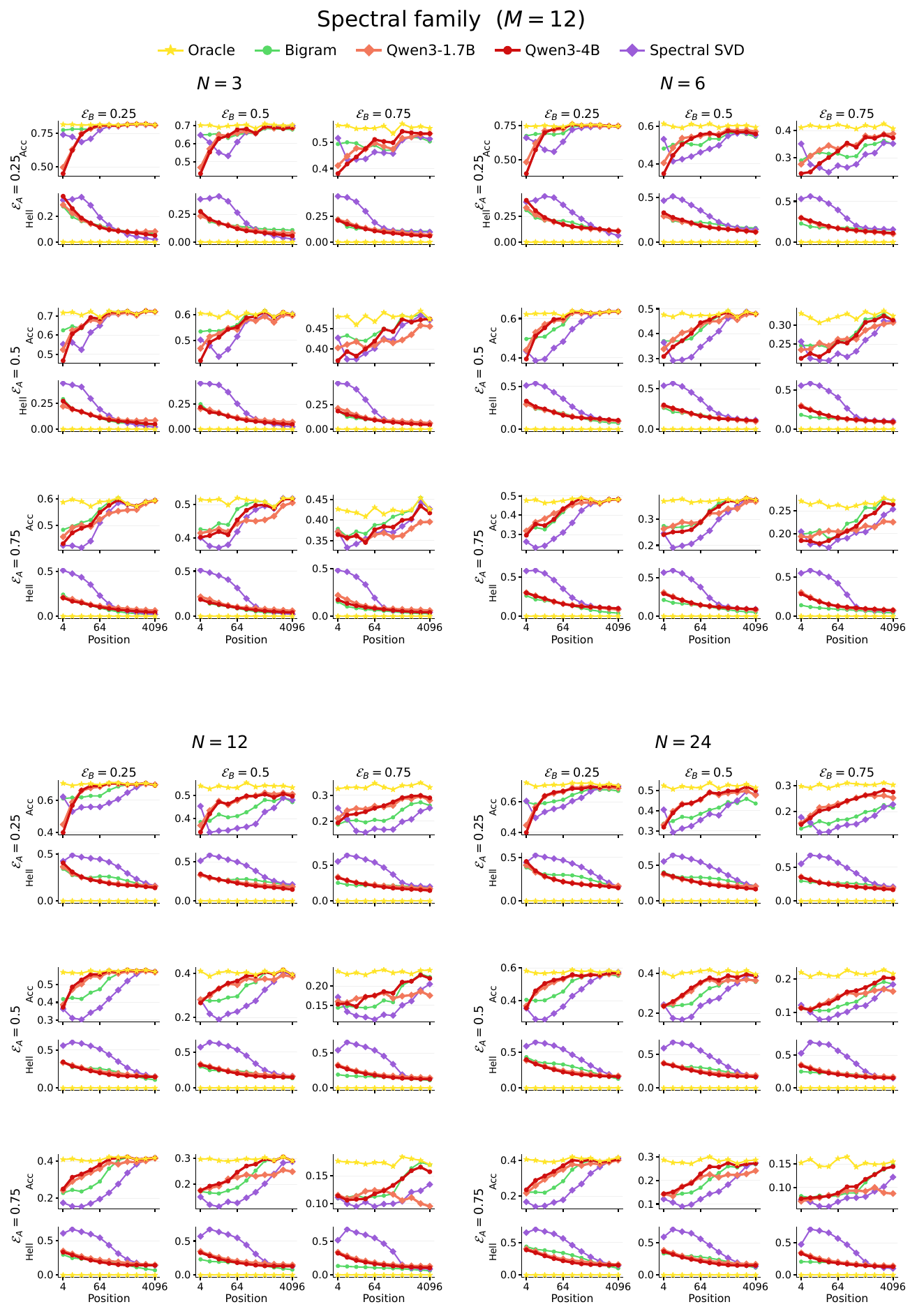}
    \caption{Spectral family vs.\ position on 12-state HMMs ($M=12$).
    Each of the four blocks is a fixed observation-alphabet size $N\in\{3,6,12,24\}$; within a
    block, rows vary the transition entropy $\mathcal{E}_A$ and columns the emission entropy
    $\mathcal{E}_B$ (both $\in\{0.25,0.5,0.75\}$). Every cell shows top-1 accuracy (top) and
    Hellinger distance to the oracle posterior (bottom) versus sequence position (log axis).
    Oracle, Bigram, Qwen3-1.7B and Qwen3-4B are drawn for reference alongside Spectral SVD
    (Spectral Norm was not run at $M=12$). Higher accuracy / lower Hellinger is better.}
    \label{fig:baselines_m12_spectral}
\end{figure}

%% file: appendix_principal_probe.tex
\subsection{Null test on linear probing the belief simplex.}

A high $R^2$ on a held-out split shows that the probe generalizes, but it does not by itself rule out that a sufficiently expressive linear map could fit any target from the residual stream. To check that the recovered geometry reflects alignment between the activations and the belief, we run a label-shuffling null test. For each activation $x_{\ell,t}$ in the training split we permute the corresponding belief label $b_t$ across examples, breaking the activation--belief correspondence while preserving the marginal distribution of beliefs. We then refit the probe on the shuffled pairs using the same PCA rank and ridge regularization as in the main experiments, and evaluate it on an unshuffled test split.

\begin{figure}[h]
  \centering
  \includegraphics[width=\textwidth]{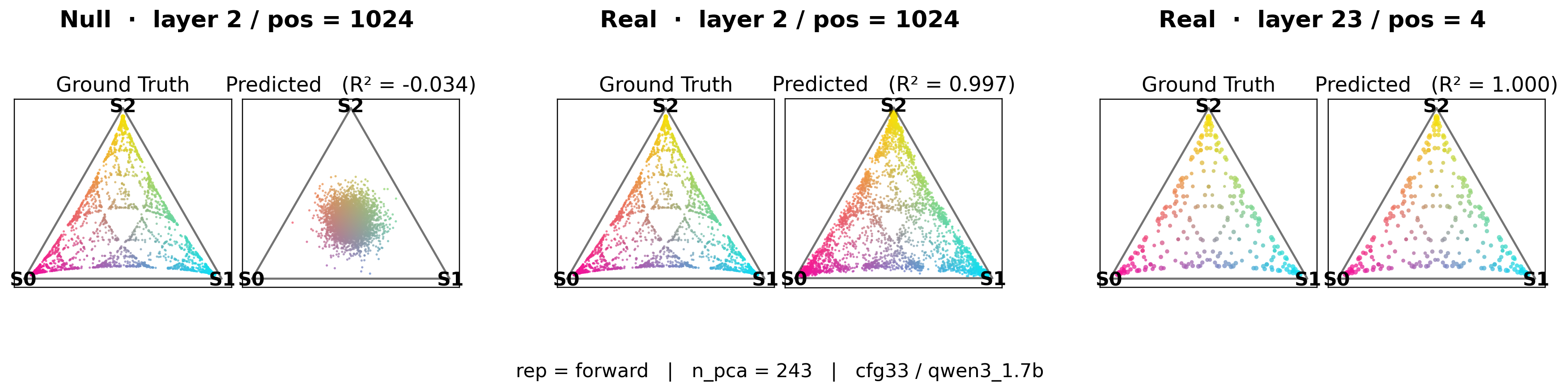}
  \caption{(Left) Null test of linear belief representation probing. (Middle) Comparison setting. (Right) Geometry when sequence length is 4.}
  \label{fig:app_simplex}
\end{figure}

\subsection{Belief probing at very short sequence length}

When sequence length is 4, the $R^2$ scores trend behaves differently than longer sequence. The reason is that the belief vectors has less diversity at such short sequence length (this HMM's mixing rate is 4.18). One can see the simplex in Figure \ref{fig:app_simplex}, right.

\subsection{Layers as stages of computation}
Both PAP and causal intervention results show strong indication that early layers and late layers have distinct computation roles. We further compute the centered Kernel alignment scores between each layers, as shown in Figure \ref{fig:layers}, left. We see that early-to-mid layers are aligned closely, and late layers form another group. We further investigate the projection from each layer residual to the output token space using the \texttt{lm\_head}, we calculate the mass of the valid observation tokens from the projected outputs. As shown in \ref{fig:layers} right, the later layers live in the output space. This partially explains the phenomenon we see in Section \ref{sec:llm_internal} on the belief-dominated regime --- patching (only) the late layers causally affect the LLM outputs.

\begin{figure}[h]
  \centering
  \vspace{-1em}
  \includegraphics[width=0.6\textwidth]{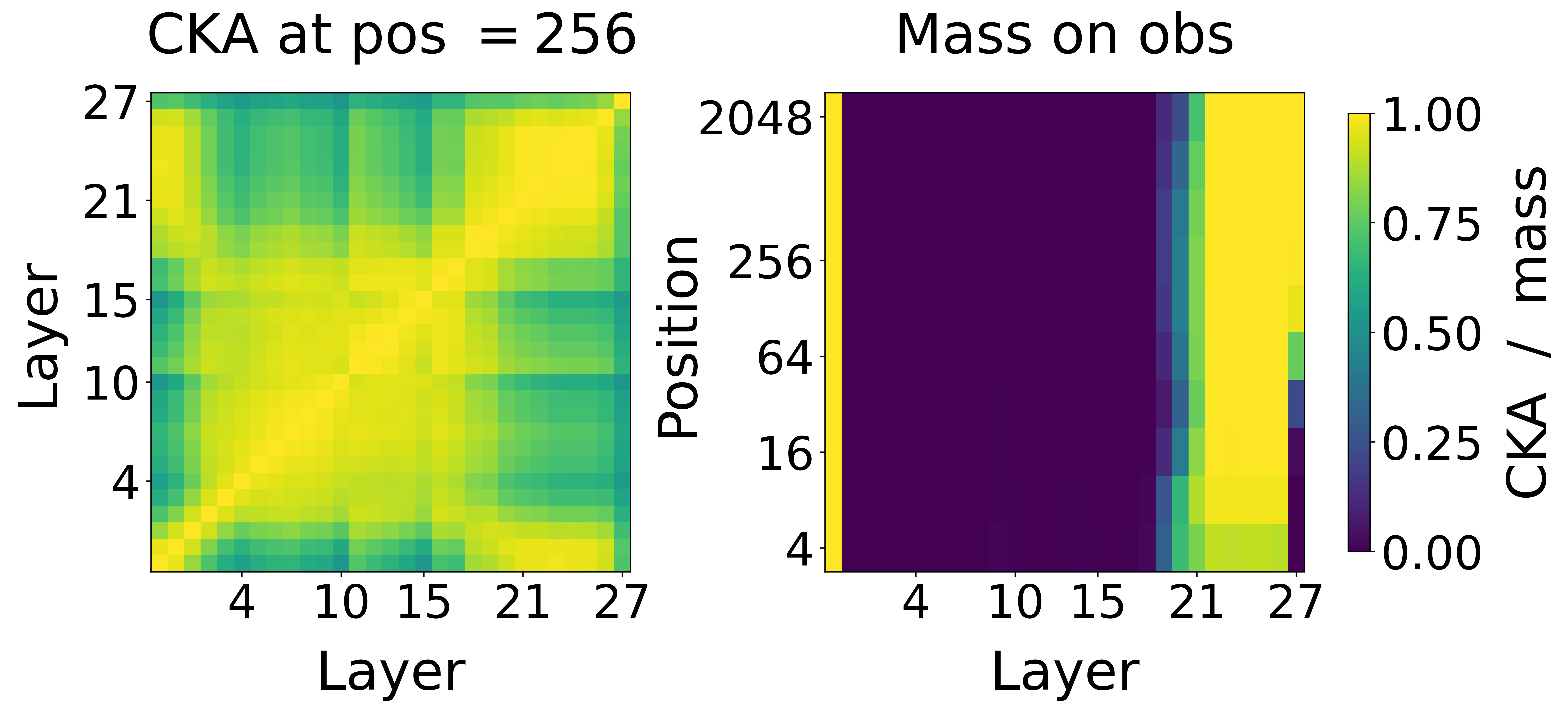}
  \caption{Left: CKA between layers. Right: Probability mass on valid observations.}
  \label{fig:layers}
  \vspace{-1em}
\end{figure}

\subsection{Quality at Varying top-$k$ Principal Components}

To see how much of the belief geometry lives in the leading directions of the residual stream, we project each layer's activations onto their top-$k$ principal components and fit a linear probe to recover the oracle (Bayesian) belief $b_t$ from that $k$-dimensional subspace. Figure~\ref{fig:probe_r2_kcols_forward_belief} reports the held-out probe $R^2$ as a function of transformer layer ($x$-axis) and token position ($y$-axis), across the nine HMM configurations with $\mathcal{E}(\mathbf{A}),\mathcal{E}(\mathbf{B})\in\{0.25,0.5,0.75\}$ and PCA dimensions $k\in\{4,8,16,32,64,128,2048\}$, where $k=2048$ effectively corresponds to the full residual stream. Three trends emerge: (i) probe performance saturates quickly with $k$, with most belief information captured by roughly $64$--$128$ principal components and little improvement beyond that; (ii) belief information is distributed non-uniformly across depth, as small-$k$ probes perform worst in the middle layers, suggesting that belief representations are spread across many directions there but become concentrated into leading components in early and late layers; and (iii) the number of components required for saturation increases with inference difficulty, with low-entropy regimes saturating in very few PCs and higher emission entropy $\mathcal{E}(\mathbf{B})$ requiring more components. Together, these results suggest that the top-$k$ PCA subspace provides a faithful low-dimensional summary of the belief representation, motivating the use of modest values of $k$ in the PCA-based causal interventions below.

\begin{figure}[H]
    \centering
    \includegraphics[width=\linewidth]{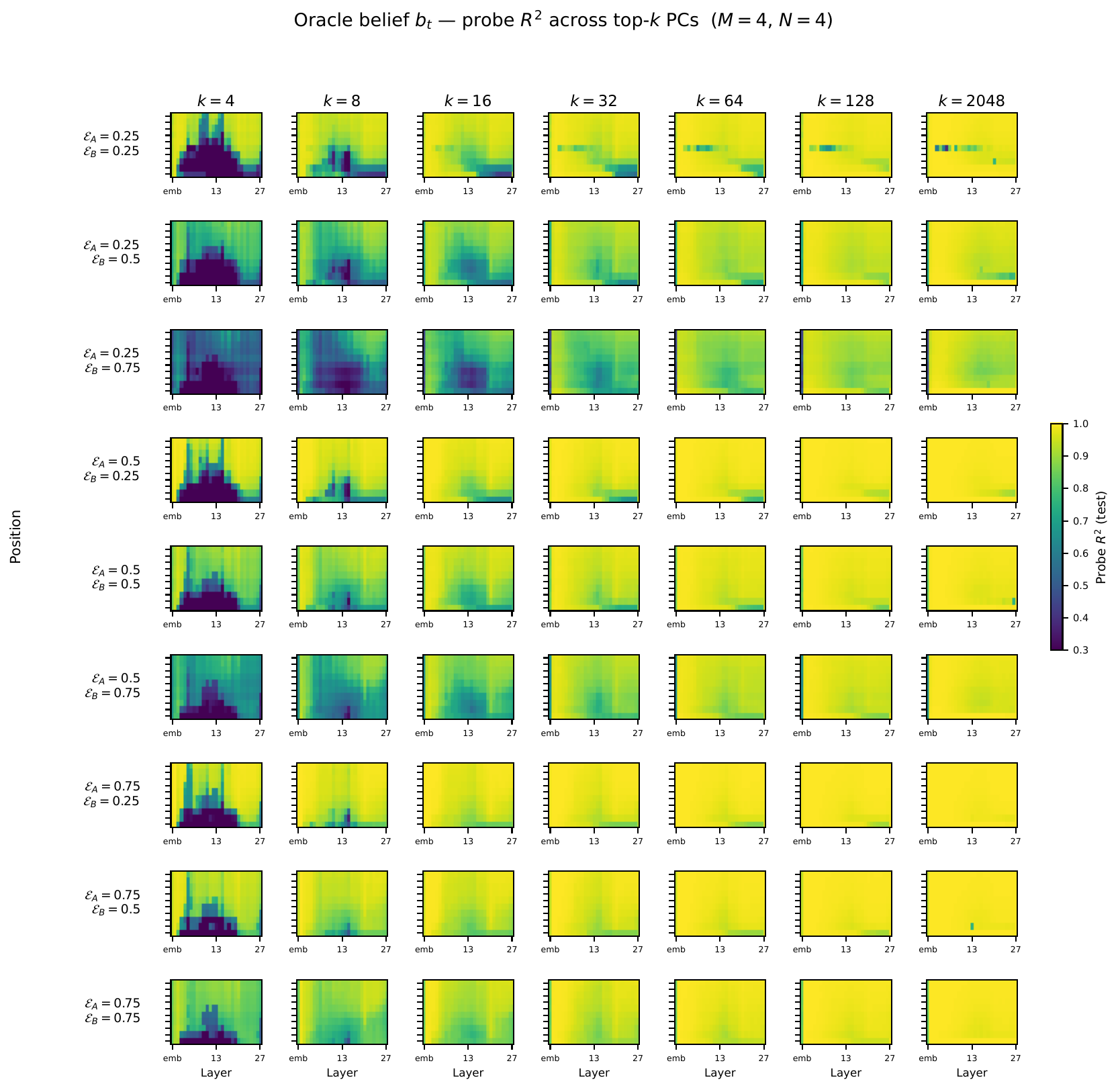}
    \caption{\textbf{Probe $R^2$ for recovering the oracle belief from top-$k$ principal components.} Each panel shows the held-out linear-probe $R^2$ for predicting the oracle belief $b_t$ from the top-$k$ PCA subspace of Qwen3-1.7B residual activations, as a function of transformer layer ($x$: \texttt{emb}, $0,\ldots,27$) and token position ($y$: $4,\ldots,2048$). Rows correspond to the nine HMM configurations with $\mathcal{E}(\mathbf{A}),\mathcal{E}(\mathbf{B})\in\{0.25,0.5,0.75\}$, and columns vary the retained PCA dimension $k$, with $k=2048$ corresponding to the full residual stream.}
    \label{fig:probe_r2_kcols_forward_belief}
\end{figure}

\newpage
\subsection{Additional Results for Causal Interventions}

In Section~\ref{sec:llm_internal} of the main paper, we illustrated principal activations patching (PAP) using two representative HMM configurations, corresponding to $belief$-dominated and $observation$-dominated regimes. Here, we extend the analysis to the full family of HMMs spanning transition entropy $\mathcal{E}(\mathbf{A}) \in \{0.25, 0.5, 0.75\}$, emission entropy $\mathcal{E}(\mathbf{B}) \in \{0.25, 0.5, 0.75\}$, and observation alphabet size $N \in \{2,4,8\}$.

For the PCA-subspace and probe-inverse interventions, we fix the probe dimension to $k=8$. Results are shown for Qwen3-1.7B. Across all settings, we compare the oracle-belief probe $R^2$ with the interchange-intervention accuracy (IIA) obtained from full-residual, PCA-subspace, and probe-inverse patching. This allows us to assess how faithfully low-dimensional belief representations capture the causal variables used by the model during inference, as shown in Figures~\ref{fig:oracle_belief_patch_nobs2}--\ref{fig:oracle_belief_patch_nobs8}.

\begin{figure}
    \centering
    \includegraphics[width=\linewidth]{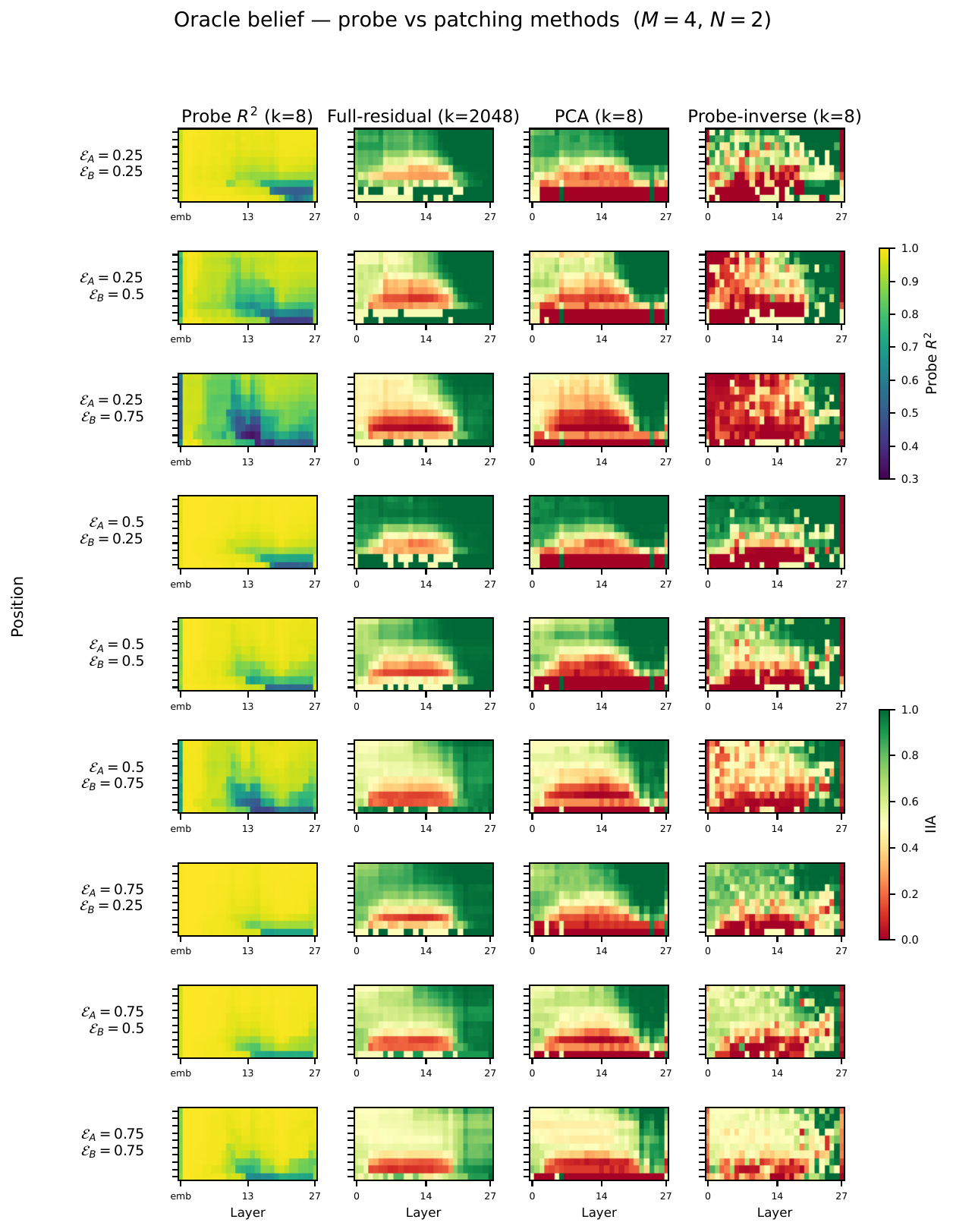}
    \caption{\textbf{Causal interventions for $N=2$.} Rows are HMM configs (transition entropy $\mathcal{E}(\mathbf{A})$, emission entropy $\mathcal{E}(\mathbf{B})$). Columns: linear-probe $R^2$ for the oracle belief $\mathbb P(h_t|\mathbf{o}_{1:t},\boldsymbol{\lambda})$ ($k{=}8$), then interchange-intervention accuracy (IIA) for full-residual ($k{=}2048$), PCA ($k{=}8$), and probe-inverse ($k{=}8$) patches. Each panel is a layer\,$\times$\,position heatmap.}
    \label{fig:oracle_belief_patch_nobs2}
\end{figure}

\begin{figure}
    \centering
    \includegraphics[width=\linewidth]{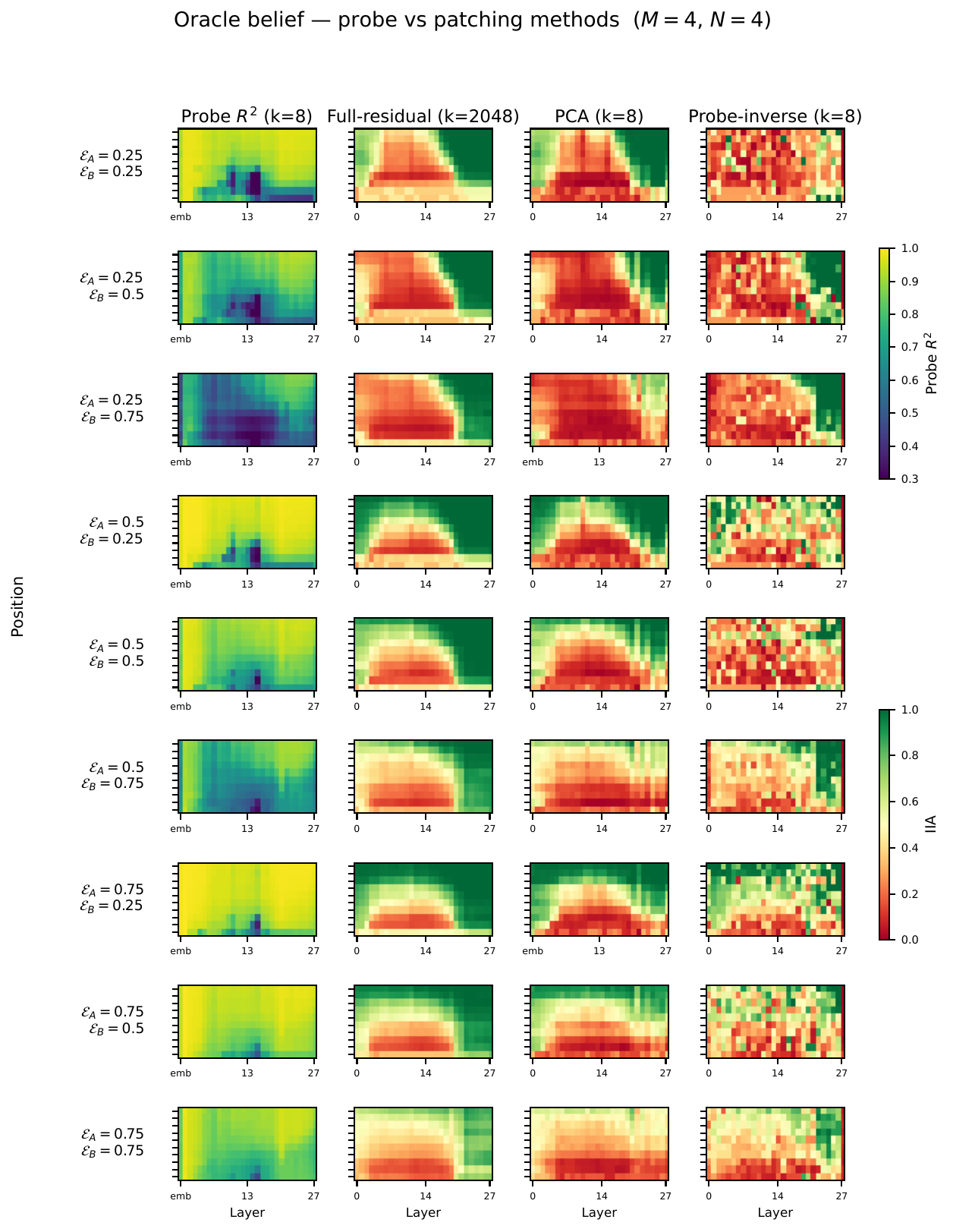}
    \caption{\textbf{Causal interventions for $N=4$.} Rows are HMM configs (transition entropy $\mathcal{E}(\mathbf{A})$, emission entropy $\mathcal{E}(\mathbf{B})$). Columns: linear-probe $R^2$ for the oracle belief $\mathbb P(h_t|\mathbf{o}_{1:t},\boldsymbol{\lambda})$ ($k{=}8$), then interchange-intervention accuracy (IIA) for full-residual ($k{=}2048$), PCA ($k{=}8$), and probe-inverse ($k{=}8$) patches. Each panel is a layer\,$\times$\,position heatmap.}
    \label{fig:oracle_belief_patch_nobs4}
\end{figure}

\begin{figure}
    \centering
    \includegraphics[width=\linewidth]{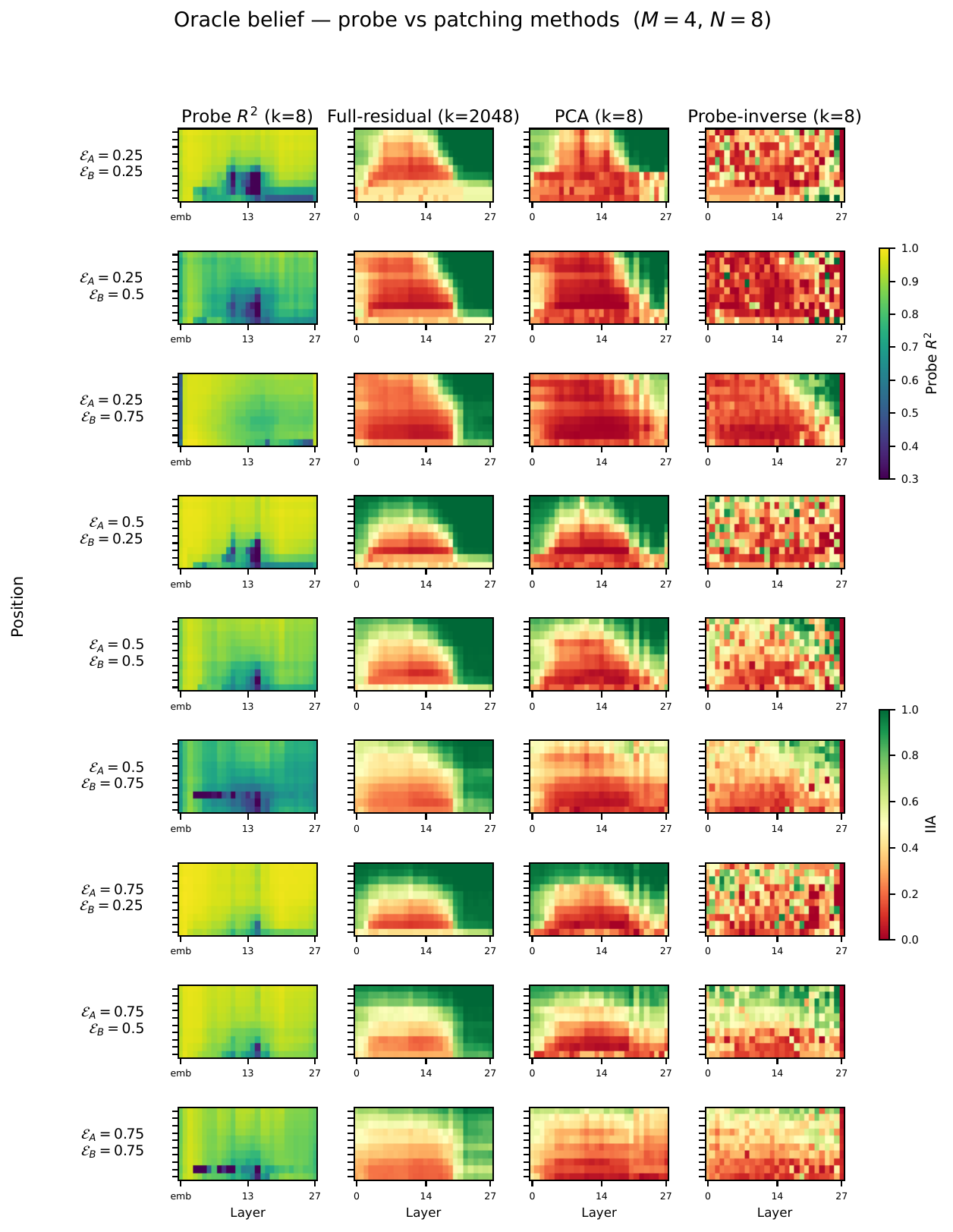}
    \caption{\textbf{Causal interventions for $N=8$.} Rows are HMM configs (transition entropy $\mathcal{E}(\mathbf{A})$, emission entropy $\mathcal{E}(\mathbf{B})$). Columns: linear-probe $R^2$ for the oracle belief $\mathbb P(h_t|\mathbf{o}_{1:t},\boldsymbol{\lambda})$ ($k{=}8$), then interchange-intervention accuracy (IIA) for full-residual ($k{=}2048$), PCA ($k{=}8$), and probe-inverse ($k{=}8$) patches. Each panel is a layer\,$\times$\,position heatmap.}
    \label{fig:oracle_belief_patch_nobs8}
\end{figure}

\subsection{Additional Results on Algorithm Beliefs}
We consider four families of algorithm beliefs: classical $n$-gram models (Bigram, Trigram), linear $n$-gram learners (Gradient CE, Gradient MSE, Ridge MSE), non-linear $n$-gram learners (Kernel, Kron), and spectral predictors (Norm and Norm (GT)). For each algorithm, we train a linear probe with dimension $k=8$ to predict that algorithm's next-token distribution from the residual stream. We then evaluate both the held-out probe $R^2$ and the interchange-intervention accuracy (IIA) obtained from a probe-inverse intervention, which patches the corresponding $k=8$ latent subspace from a source sequence into a target sequence and measures the resulting agreement with the algorithm's own prediction. Using the same subspace for both decoding and intervention enables a direct comparison between representational accessibility and causal influence.

Results are shown for Qwen3-1.7B across the nine HMM configurations defined by transition entropy $\mathcal{E}(\mathbf{A}) \in \{0.25, 0.5, 0.75\}$ and emission entropy $\mathcal{E}(\mathbf{B}) \in \{0.25, 0.5, 0.75\}$, with $M=N=4$, as shown in Figures~\ref{fig:alg_belief_inverse_ngram}--\ref{fig:alg_belief_inverse_spectral}.

\begin{figure}
    \centering
    \includegraphics[width=0.9\linewidth]{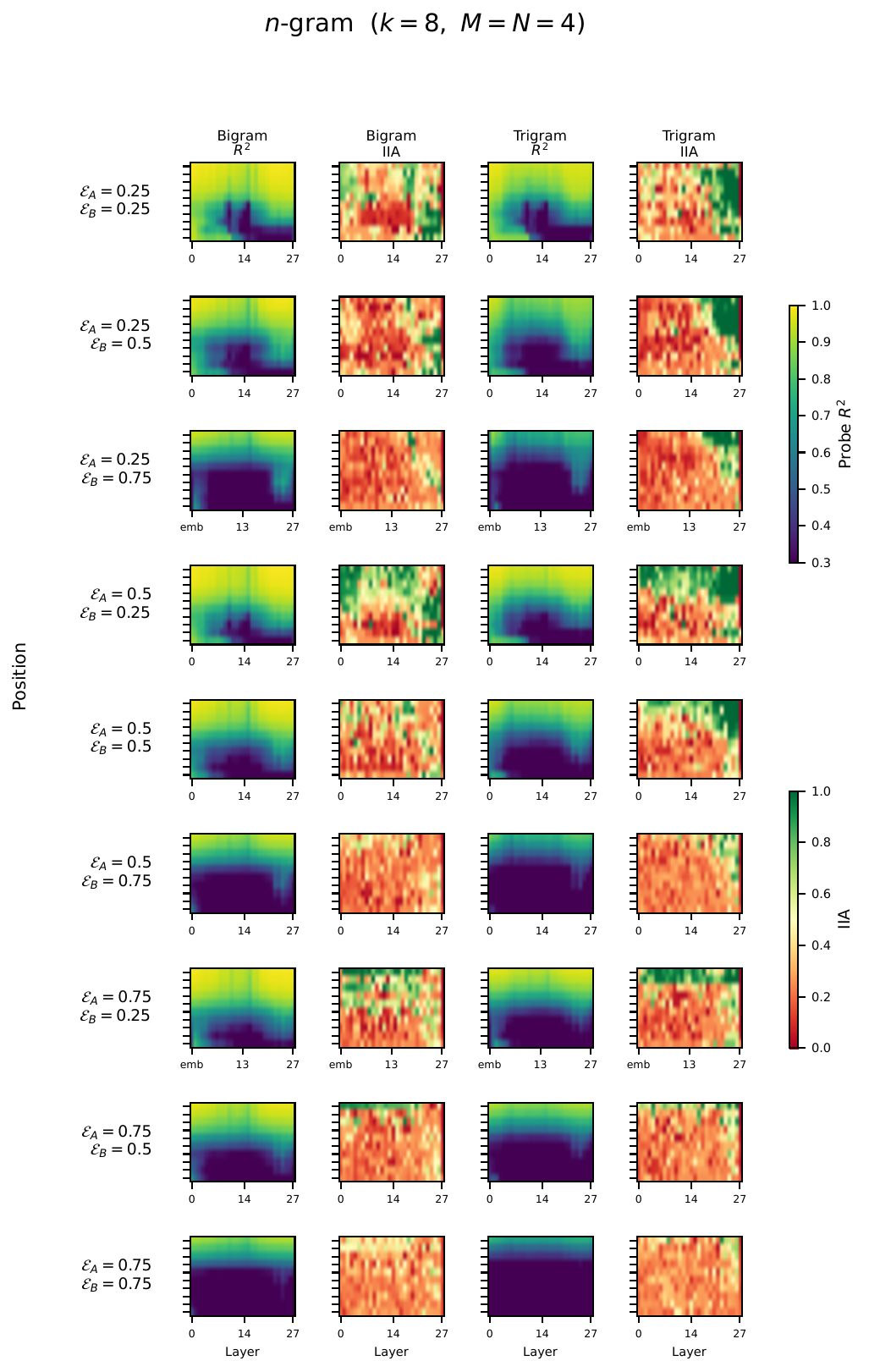}
    \caption{\textbf{Algorithm beliefs: classical $n$-gram predictors ($k=8$, $M=N=4$).} Rows correspond to HMM configurations indexed by transition entropy $\mathcal{E}(\mathbf{A})$ and emission entropy $\mathcal{E}(\mathbf{B})$. For each target predictor (Bigram, Trigram), the first column shows the held-out linear-probe $R^2$ for decoding the predictor's next-token distribution from the residual stream (\texttt{viridis}, $[0.3,1]$), and the second column shows the probe-inverse interchange-intervention accuracy (IIA) obtained by patching the corresponding $k=8$ latent subspace (\texttt{RdYlGn}, $[0,1]$). Each panel is a layer$\times$position heatmap. Results are shown for Qwen3-1.7B.}
    \label{fig:alg_belief_inverse_ngram}
\end{figure}

\begin{figure}
    \centering
    \includegraphics[width=\linewidth]{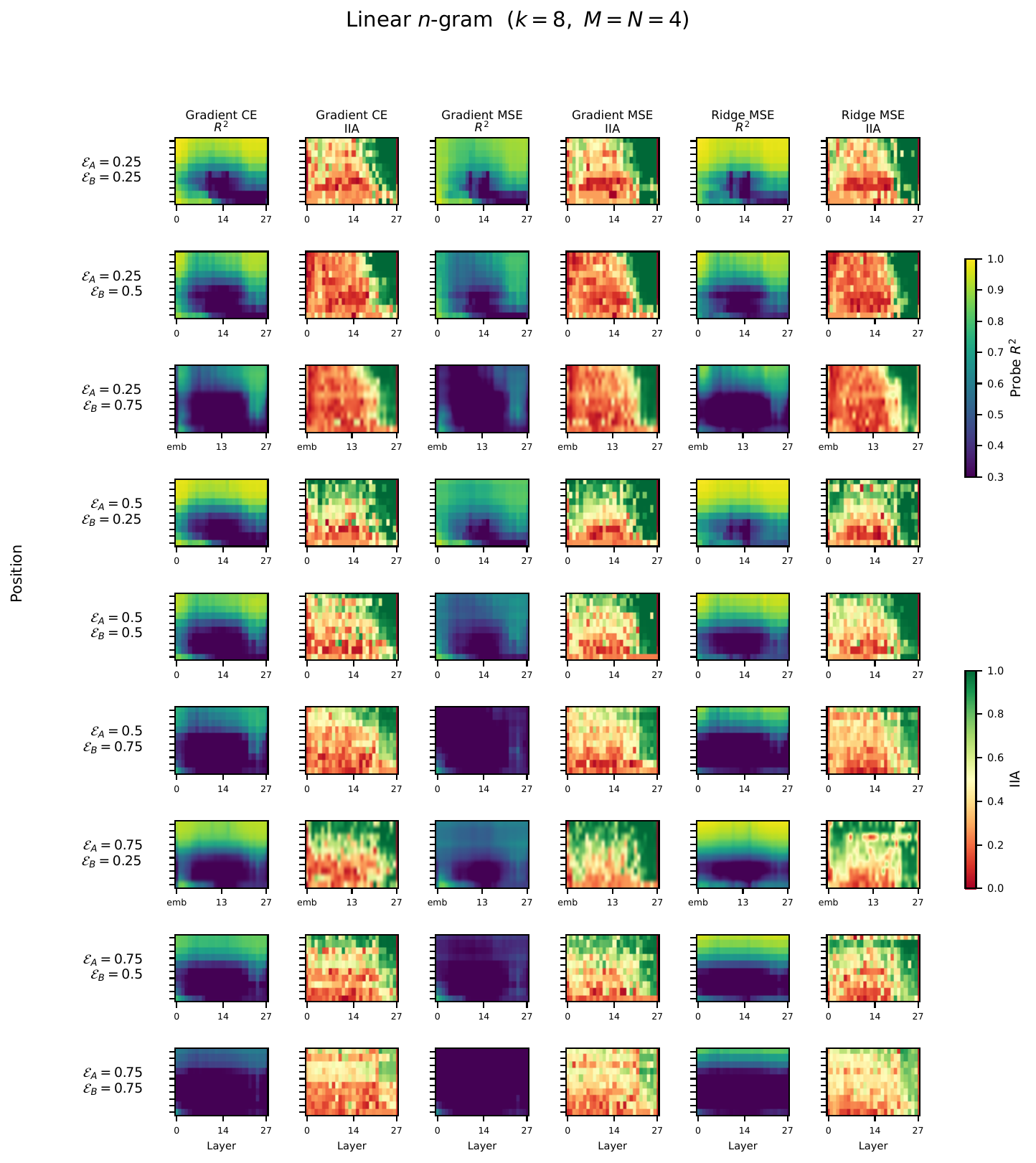}
    \caption{\textbf{Algorithm beliefs: linear $n$-gram learners ($k=8$, $M=N=4$).} Same format as Fig.~\ref{fig:alg_belief_inverse_ngram}, for the linear predictors Gradient CE, Gradient MSE, and Ridge MSE.}

    \label{fig:alg_belief_inverse_gradlinear}
\end{figure}

\begin{figure}
    \centering
    \includegraphics[width=0.9\linewidth]{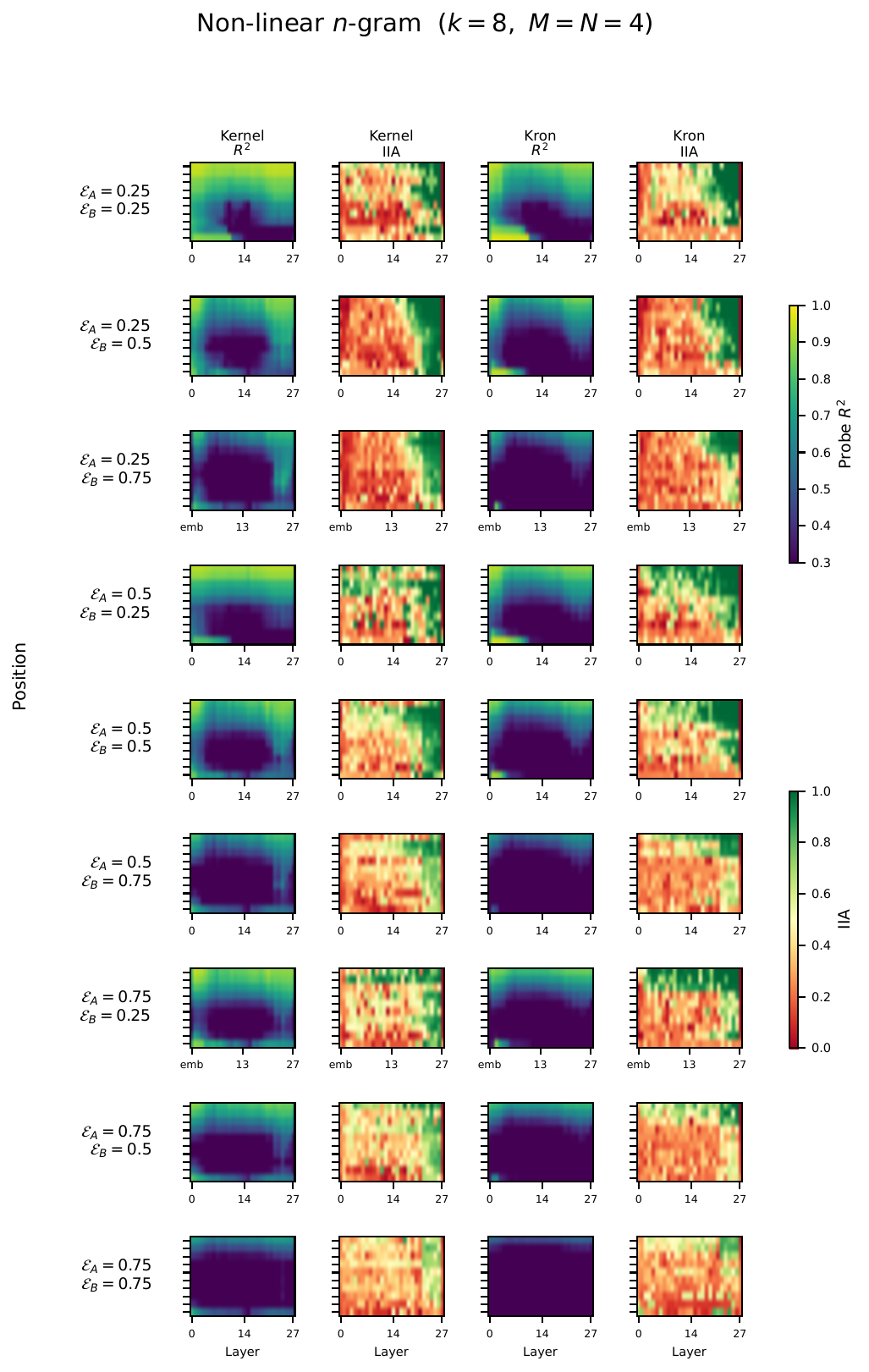}
    \caption{\textbf{Algorithm beliefs: non-linear $n$-gram learners ($k=8$, $M=N=4$).} Same format as Fig.~\ref{fig:alg_belief_inverse_ngram}, for the non-linear predictors Kernel and Kron.}

    \label{fig:alg_belief_inverse_kron_kernel}
\end{figure}

\begin{figure}
    \centering
    \includegraphics[width=0.9\linewidth]{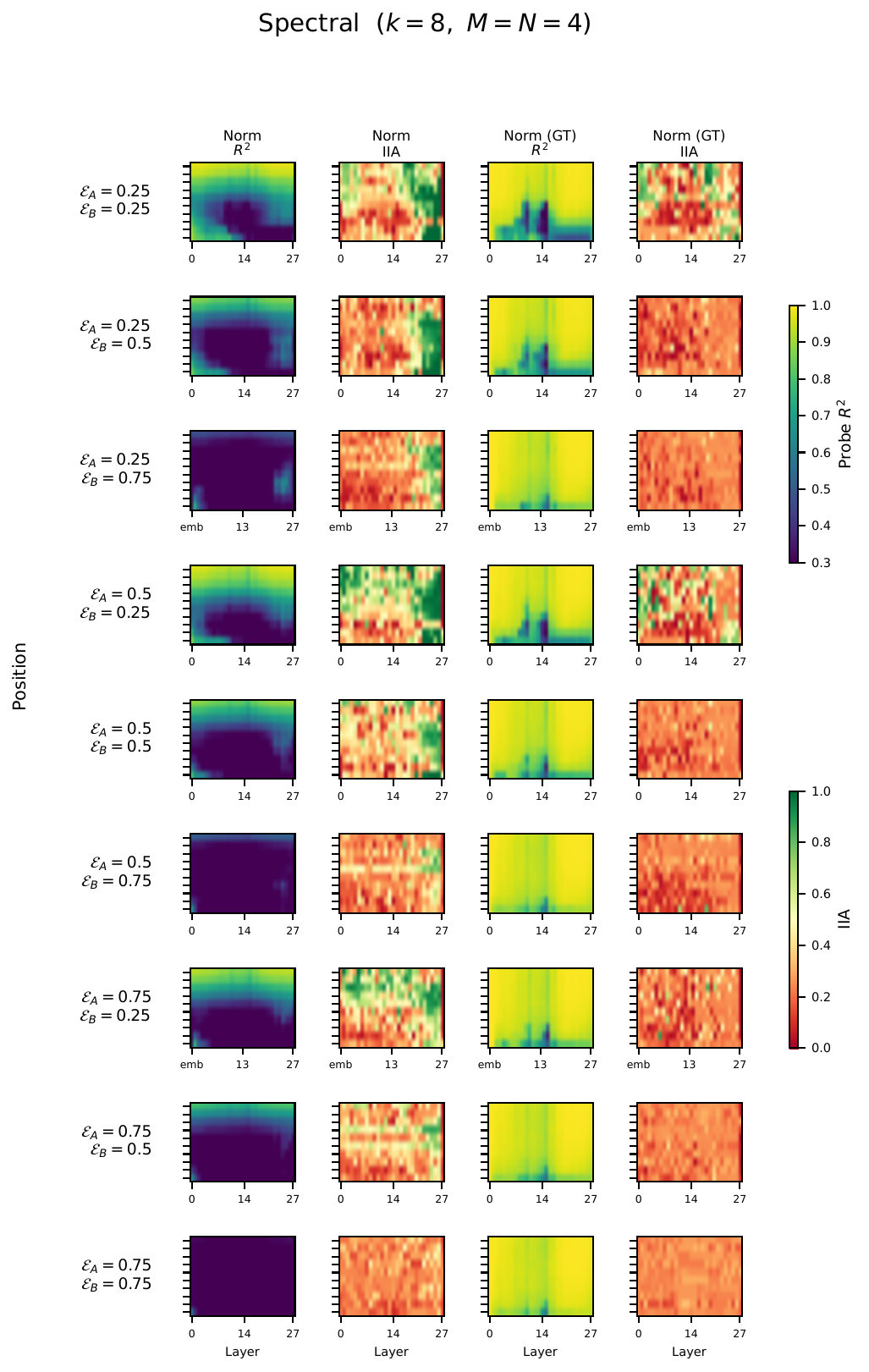}
    \caption{\textbf{Algorithm beliefs: spectral predictors ($k=8$, $M=N=4$).} Same format as Fig.~\ref{fig:alg_belief_inverse_ngram}, for the spectral next-token predictors Norm (normalized observable-operator recursion using empirically estimated moments) and Norm (GT), which uses the same recursion with ground-truth moments.}
    \label{fig:alg_belief_inverse_spectral}
\end{figure}

\subsection{More Examples of Algorithm Operator Probing}
Complementing the algorithm \emph{beliefs}, we probe for the algorithm
\emph{operators}---the weight matrices that parameterize each predictor---to test
whether the residual stream linearly encodes the operators themselves, not just
their outputs. For each operator we fit a PCA($k$)\,+\,ridge probe at every layer (token position $64$) and report the held-out MSE of the recovered operator, one
curve per probe dimension $k$. We consider three operators: the linear soft $n$-gram weight $W\in\mathbb{R}^{N\times Nn}$ (window $n{=}4$, lags $o_{t-3:t}$), the spectral observation operator $B^{\mathrm{norm}}_k$, and the non-linear
Kronecker $n$-gram weight (window $n{=}4$). A representative ground-truth vs. recovered operator matrix appears in the main paper (Fig.~\ref{fig:LLM_W}); here we summarize the recovery MSE across the nine non-trivial entropy configurations at $N{=}4$ (Qwen3-1.7B). MSE decreases monotonically with the probe dimension $k$ and is lowest at the early and late layers (worst in the middle), mirroring the belief and prediction probes, as shown in Figures~\ref{fig:op_probe_W}--\ref{fig:op_probe_kron}.

\begin{figure}
  \centering
  \includegraphics[width=0.7\linewidth]{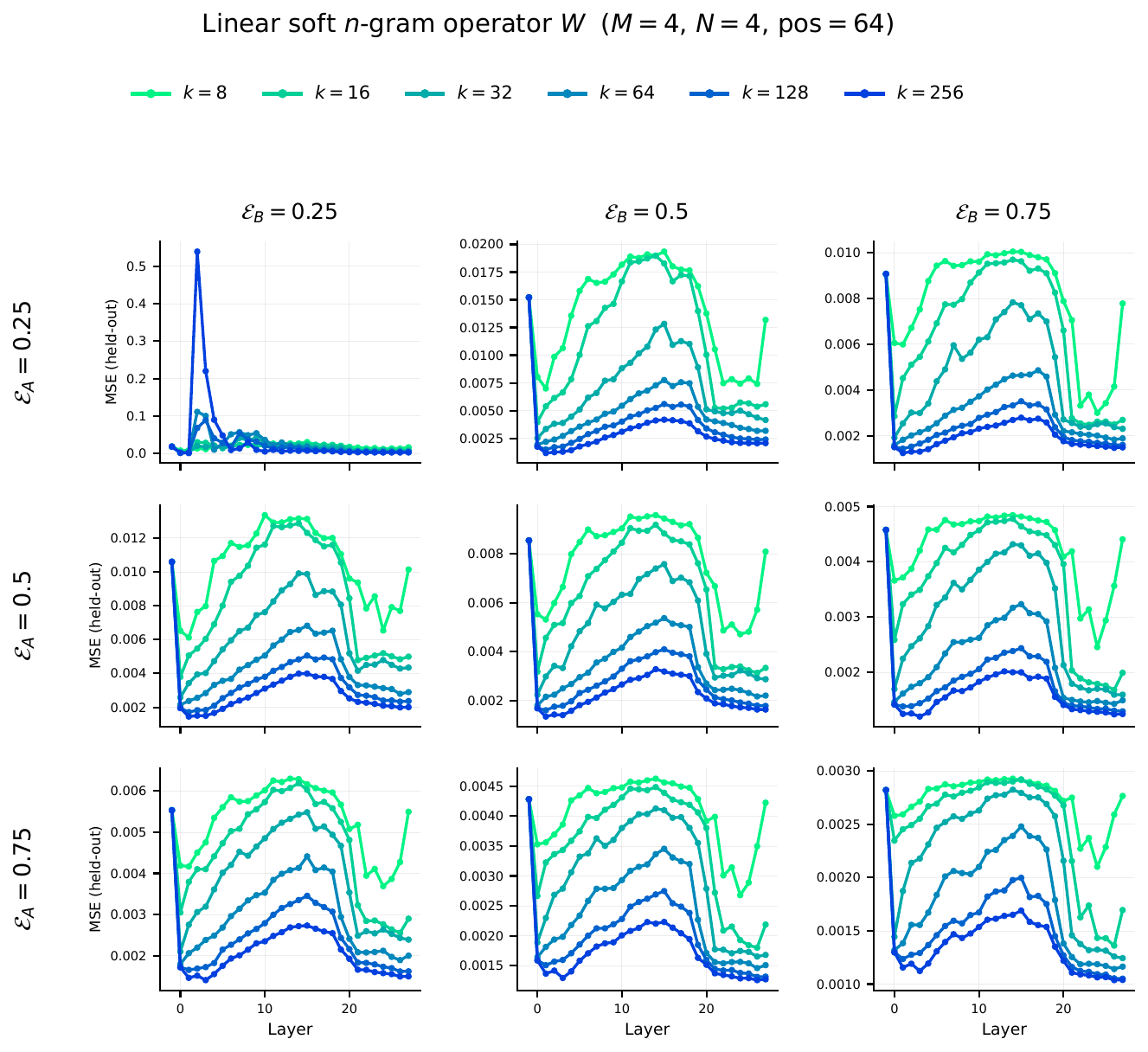}
  \caption{\textbf{Linear soft $n$-gram operator $W$ (window $4$, $M{=}N{=}4$).}
  Rows are transition entropy $\mathcal{E}(\mathbf{A})$, columns emission entropy
  $\mathcal{E}(\mathbf{B})$. Each cell shows the held-out MSE of the recovered weight $W\in\mathbb{R}^{K\times K\cdot4}$ vs.\ layer at position $64$, one curve per probe dimension $k$. $y$-axes are autoscaled per cell. Qwen3-1.7B.}
  \label{fig:op_probe_W}
\end{figure}

\begin{figure}
  \centering
  \includegraphics[width=0.7\linewidth]{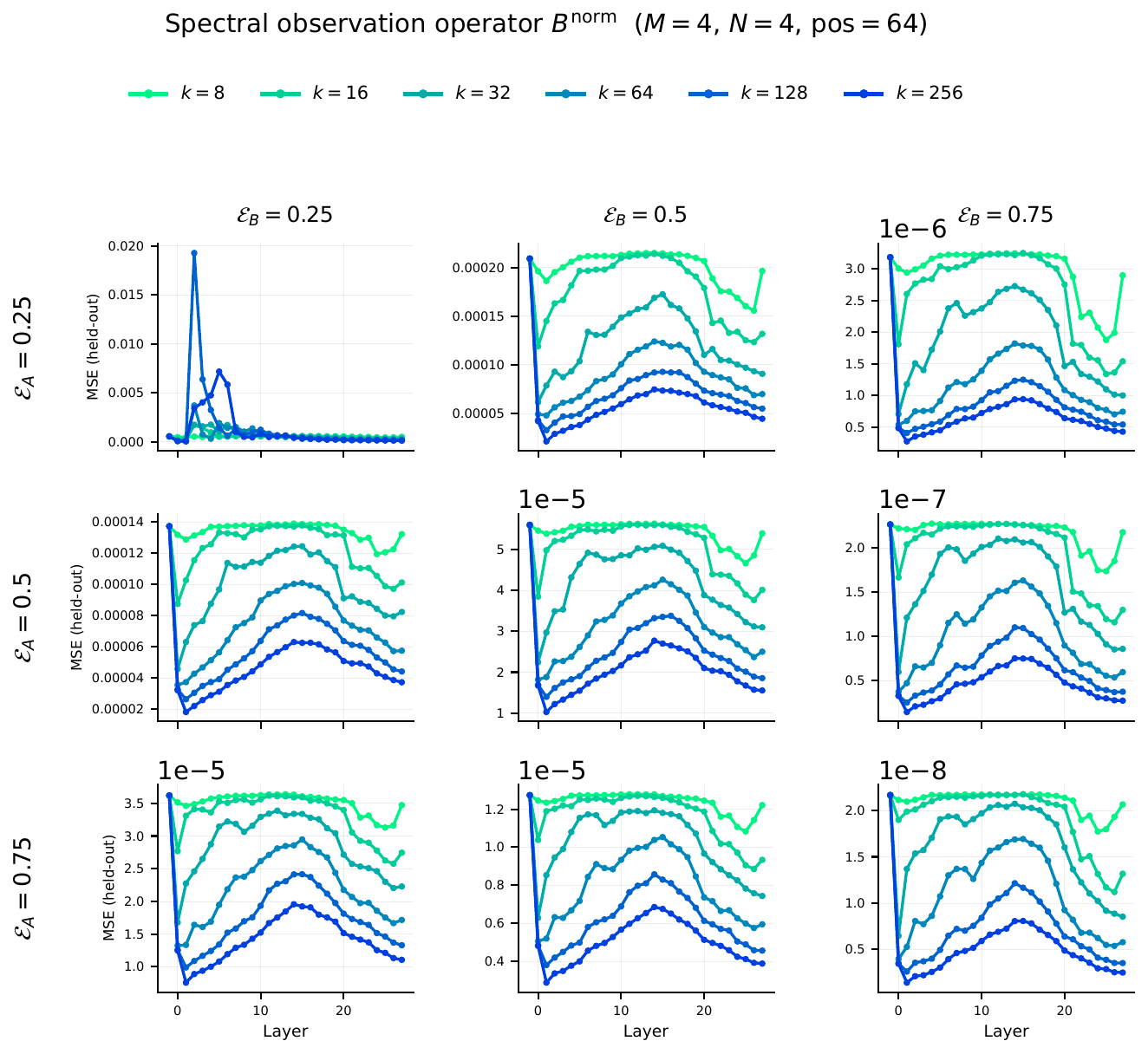}
  \caption{\textbf{Spectral observation operator $B^{\mathrm{norm}}$  ($M{=}N{=}4$).} As Fig.~\ref{fig:op_probe_W}, for the normalized observation operator $B^{\mathrm{norm}}_k=\hat{P_2}^{\!\top}\hat{P_3}[k]$. Note the MSE scale
  shrinks sharply with emission entropy (from $\sim\!10^{-4}$ at $\mathcal{E}(\mathbf{B}){=}0.25$ to $\sim\!10^{-8}$ at $0.75$) as the operator
  becomes near-uniform.}
  \label{fig:op_probe_B}
\end{figure}

\begin{figure}
  \centering
  \includegraphics[width=0.7\linewidth]{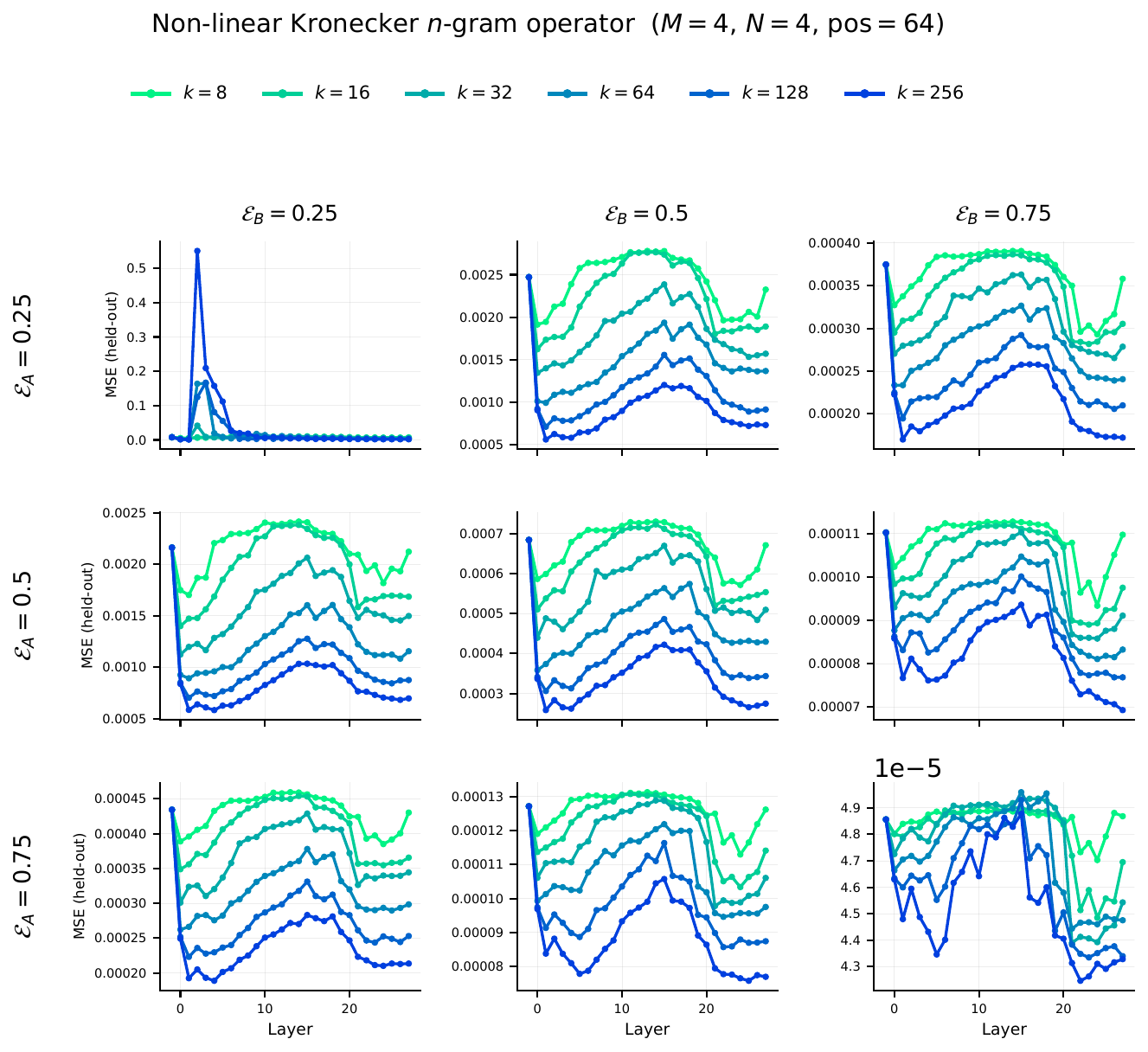}
  \caption{\textbf{Non-linear Kronecker $n$-gram operator (window $4$, $M{=}N{=}4$).} As  Fig.~\ref{fig:op_probe_W}, for the Kronecker $n$-gram weight (product
  features over the last $4$ observations, $K^4$ columns).}
  \label{fig:op_probe_kron}
\end{figure}

\subsection{Different LLM Sizes}
To assess whether our findings depend on model scale, we repeat the oracle-belief probing and full-residual interchange-intervention experiments using Qwen3-4B and compare the results to Qwen3-1.7B. Figure~\ref{fig:qwen3_4b_vs_1.7b_r2_iia_full_residual_appendix_k8} shows that the qualitative patterns are highly consistent across model sizes. In particular, the oracle belief remains strongly decodable from the residual stream, and belief patching continues to exhibit high interchange-intervention accuracy in the same regions of layer-position space. While the larger model generally achieves slightly higher probe $R^2$ and IIA, the overall geometry and causal localization of the belief representation remain largely unchanged, suggesting that the mechanisms identified in this work are robust across model scale.

\begin{figure}
    \centering
    \includegraphics[width=0.9\linewidth]{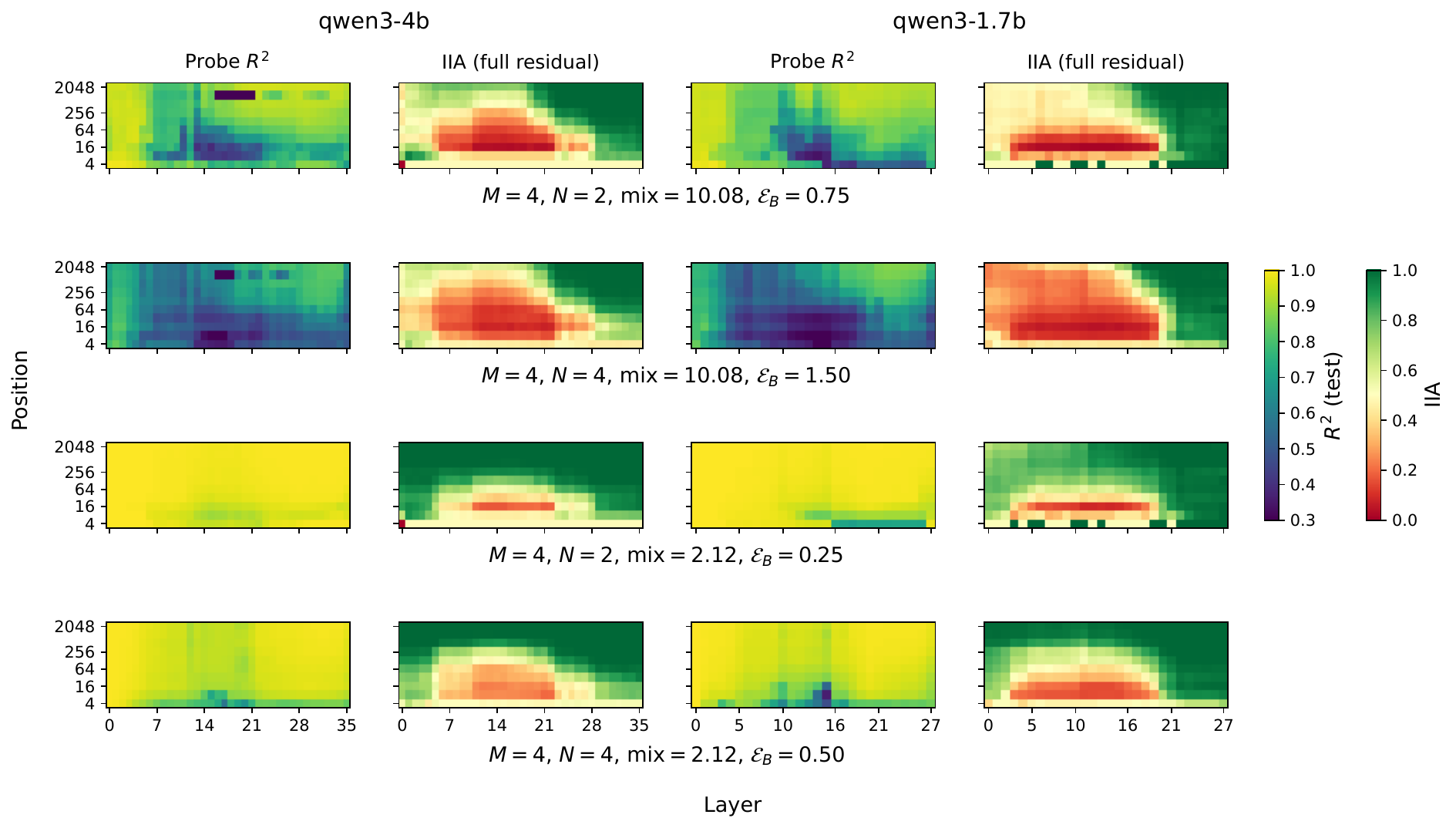}
    \caption{\textbf{Comparison across model sizes.} Oracle-belief probe $R^2$ and full-residual interchange-intervention accuracy (IIA) for Qwen3-1.7B and Qwen3-4B.}
    \label{fig:qwen3_4b_vs_1.7b_r2_iia_full_residual_appendix_k8}
\end{figure}

\newpage
\subsection{Patching at Different Timesteps $t$}
\label{sec:window_patch}

The interventions in the main text patch a single token position. To study where in the sequence the belief driving the next-token prediction is stored, we instead patch contiguous windows of residual activations. For a given source-target pair, we overwrite the target residual stream with the source residual stream at a fixed layer over a window of $n$ consecutive positions and then evaluate the resulting next-observation prediction at timestep $t$. We use full-residual interventions ($k=d_{\mathrm{model}}$), corresponding to direct residual replacement. Results are reported as \emph{IIA efficacy}.

We compare two window placements. In the \emph{non-shifted} setting, the patch window $[t-n+1:t]$ includes the query position $t$ (Fig.~\ref{fig:window_patch_nonshifted}). In the \emph{shifted} setting, the window $[t-n:t-1]$ excludes $t$ and patches only the preceding positions (Fig.~\ref{fig:window_patch_shifted}). Comparing these interventions isolates the extent to which predictive information is stored locally at the current token versus distributed across earlier context. We sweep window sizes $n\in\{1,2,4,6\}$ across all layers and timesteps for the nine HMM configurations with $\mathcal{E}(\mathbf{A}),\mathcal{E}(\mathbf{B})\in\{0.25,0.5,0.75\}$ at $N=4$.

Two consistent patterns emerge. First, patching windows that include the query position almost completely transfers the source prediction, even for $n=1$, with efficacy increasing further as the window expands. This indicates that the belief used for prediction is largely represented in the residual stream at the current token. Second, shifted windows excluding the query position have little effect when $n=1$, but their efficacy grows substantially with window size, particularly at later layers and later timesteps. This growth is most pronounced in high-emission-entropy regimes, where the current observation provides less information about the latent state and belief must be accumulated over longer contexts. In these settings, patching the preceding $4$--$6$ positions is often sufficient to transplant the source belief. The contrast between shifted and non-shifted interventions therefore quantifies the degree to which predictive information is stored locally versus distributed across context, mirroring the observation-dominated and belief-dominated regimes discussed in the main text.

\newpage
\begin{figure}[H]
    \centering
    \includegraphics[width=0.9\linewidth]{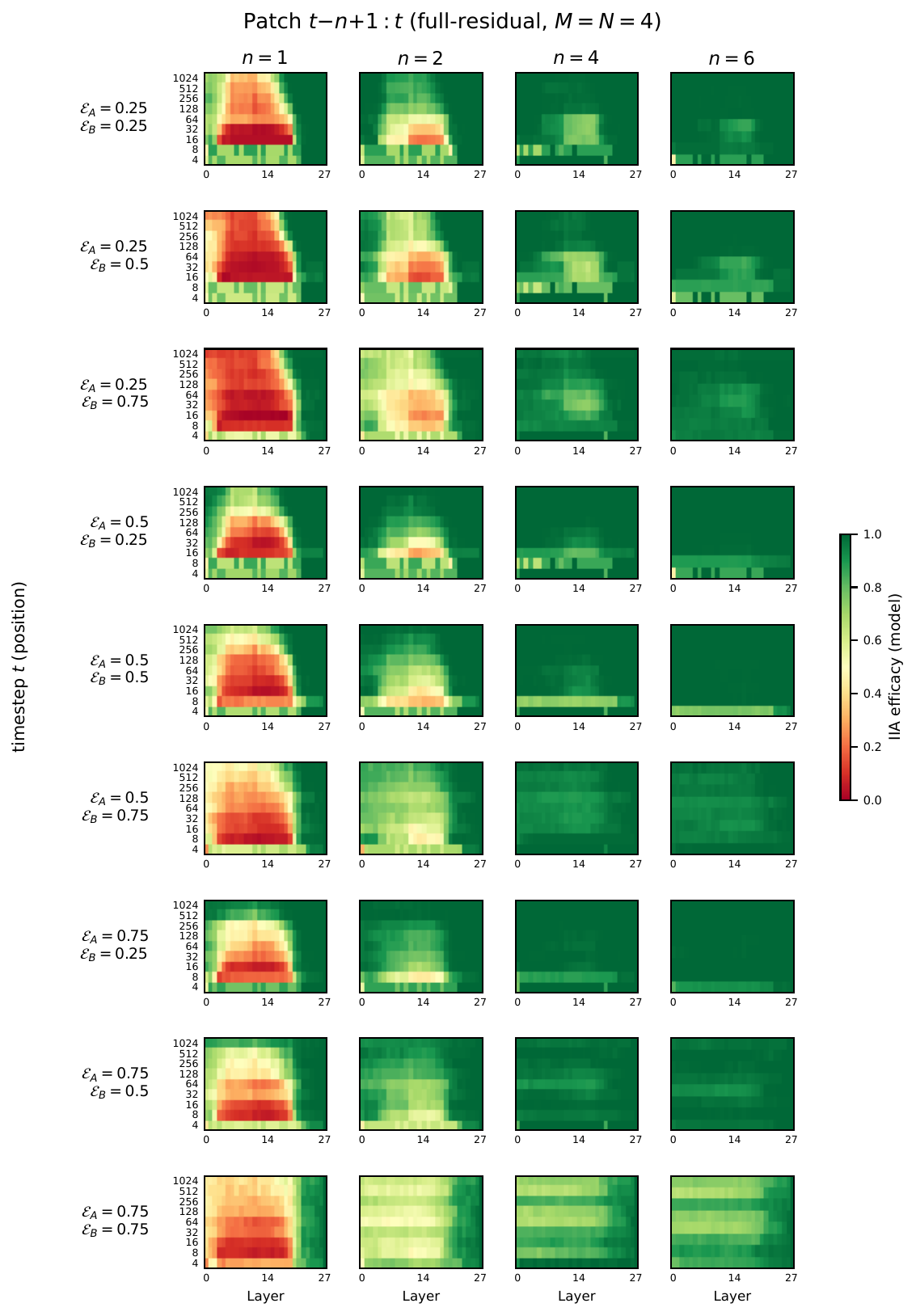}
    \caption{\textbf{Window patching including the query timestep.} IIA efficacy of a full-residual interchange intervention that overwrites the target residual stream over the window $[t-n+1:t]$, which includes the query position $t$, and evaluates the resulting next-observation prediction at $t$. Rows correspond to the nine HMM configurations with $\mathcal{E}(\mathbf{A}),\mathcal{E}(\mathbf{B})\in\{0.25,0.5,0.75\}$ at $N=4$, while columns vary the window size $n\in\{1,2,4,6\}$. Each panel is a layer ($x$: $0,\ldots,27$) $\times$ query timestep ($y$: $4,\ldots,1024$) heatmap of IIA efficacy (RdYlGn; green $=1$ indicates complete recovery of the source prediction, red $=0$ indicates no effect). Efficacy is high even for $n=1$, demonstrating that the prediction-relevant belief is largely encoded in the residual representation at the current token. Results are shown for Qwen3-1.7B.}
    \label{fig:window_patch_nonshifted}
\end{figure}

\begin{figure}[H]
    \centering
    \includegraphics[width=0.9\linewidth]{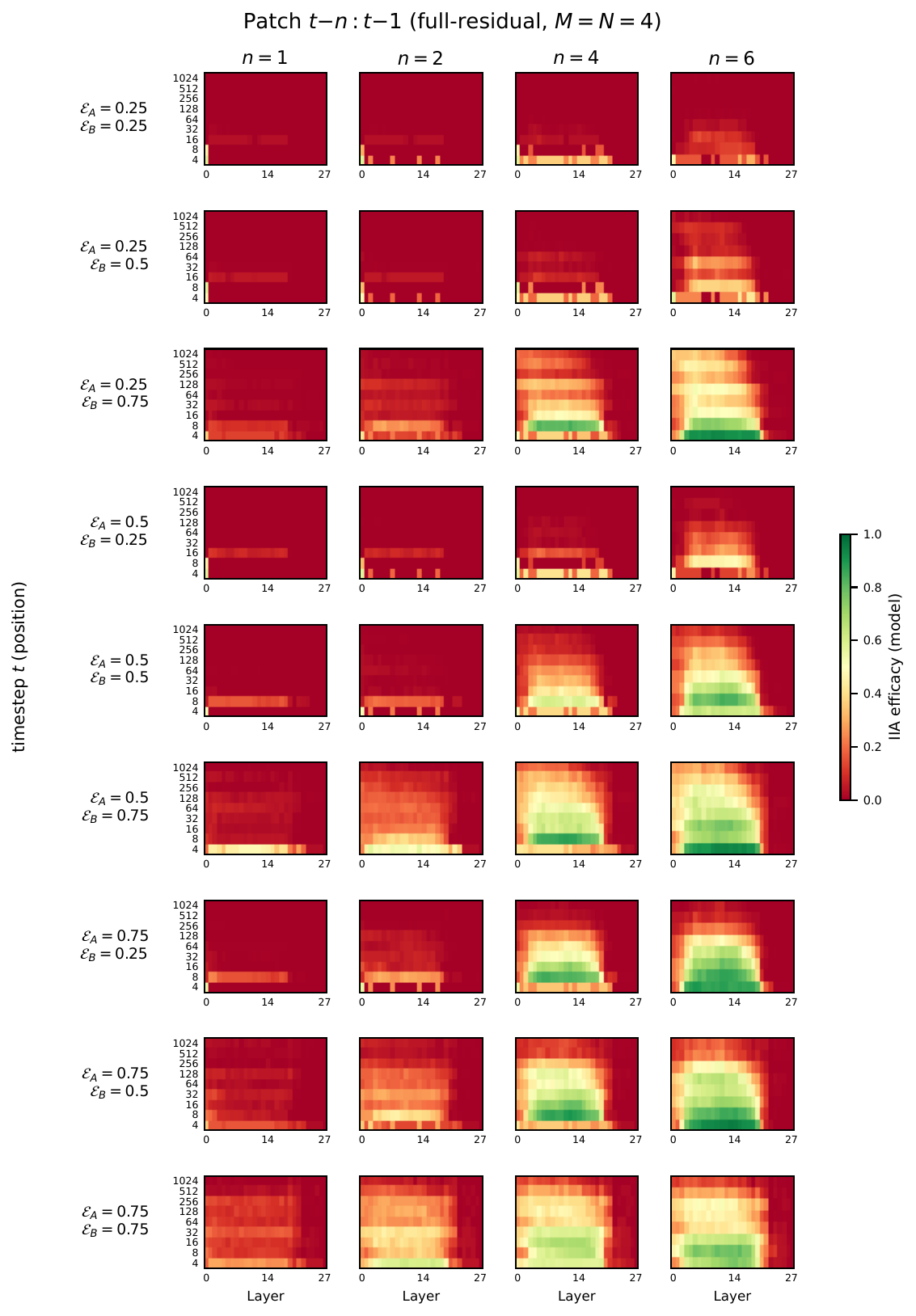}
    \caption{\textbf{Window patching excluding the query timestep.} As in Fig.~\ref{fig:window_patch_nonshifted}, but the patched window $[t-n:t-1]$ is shifted one step left so that the query position $t$ remains \emph{untouched} and only the $n$ preceding positions are overwritten, while the readout is still performed at $t$. At $n=1$, the intervention is near-inert (mostly red), indicating that the prediction does not rely solely on the immediately preceding token. Efficacy increases with window size $n$, with stronger effects at later layers and timesteps, particularly in high-emission-entropy regimes (larger $\mathcal{E}_B$, lower rows), where belief must be integrated over a longer context. Results are shown for Qwen3-1.7B using full-residual swaps ($k=d_{\mathrm{model}}$).} \label{fig:window_patch_shifted}
\end{figure}

%% file: checklist.tex
\section*{NeurIPS Paper Checklist}

The checklist is designed to encourage best practices for responsible machine learning research, addressing issues of reproducibility, transparency, research ethics, and societal impact. Do not remove the checklist: {\bf The papers not including the checklist will be desk rejected.} The checklist should follow the references and follow the (optional) supplemental material.  The checklist does NOT count towards the page
limit. 

Please read the checklist guidelines carefully for information on how to answer these questions. For each question in the checklist:
\begin{itemize}
    \item You should answer \answerYes{}, \answerNo{}, or \answerNA{}.
    \item \answerNA{} means either that the question is Not Applicable for that particular paper or the relevant information is Not Available.
    \item Please provide a short (1--2 sentence) justification right after your answer (even for \answerNA). 
\end{itemize}

{\bf The checklist answers are an integral part of your paper submission.} They are visible to the reviewers, area chairs, senior area chairs, and ethics reviewers. You will also be asked to include it (after eventual revisions) with the final version of your paper, and its final version will be published with the paper.

The reviewers of your paper will be asked to use the checklist as one of the factors in their evaluation. While \answerYes{} is generally preferable to \answerNo{}, it is perfectly acceptable to answer \answerNo{} provided a proper justification is given (e.g., error bars are not reported because it would be too computationally expensive'' or ``we were unable to find the license for the dataset we used''). In general, answering \answerNo{} or \answerNA{} is not grounds for rejection. While the questions are phrased in a binary way, we acknowledge that the true answer is often more nuanced, so please just use your best judgment and write a justification to elaborate. All supporting evidence can appear either in the main paper or the supplemental material, provided in appendix. If you answer \answerYes{} to a question, in the justification please point to the section(s) where related material for the question can be found.

IMPORTANT, please:
\begin{itemize}
    \item {\bf Delete this instruction block, but keep the section heading ``NeurIPS Paper Checklist"},
    \item  {\bf Keep the checklist subsection headings, questions/answers and guidelines below.}
    \item {\bf Do not modify the questions and only use the provided macros for your answers}.
\end{itemize}


\begin{enumerate}

\item {\bf Claims}
    \item[] Question: Do the main claims made in the abstract and introduction accurately reflect the paper's contributions and scope?
    \item[] Answer: \answerYes{}{} 
    \item[] Justification: Yes the abstract clearly states the claims that are made in the paper, with evidence from theoretical and expiremental results in the main text and appendix.
    \item[] Guidelines:
    \begin{itemize}
        \item The answer \answerNA{} means that the abstract and introduction do not include the claims made in the paper.
        \item The abstract and/or introduction should clearly state the claims made, including the contributions made in the paper and important assumptions and limitations. A \answerNo{} or \answerNA{} answer to this question will not be perceived well by the reviewers. 
        \item The claims made should match theoretical and experimental results, and reflect how much the results can be expected to generalize to other settings. 
        \item It is fine to include aspirational goals as motivation as long as it is clear that these goals are not attained by the paper. 
    \end{itemize}

\item {\bf Limitations}
    \item[] Question: Does the paper discuss the limitations of the work performed by the authors?
    \item[] Answer: \answerYes{} 
    \item[] Justification: Limtations of this work are discussed in the appendix.
    \item[] Guidelines:
    \begin{itemize}
        \item The answer \answerNA{} means that the paper has no limitation while the answer \answerNo{} means that the paper has limitations, but those are not discussed in the paper. 
        \item The authors are encouraged to create a separate ``Limitations'' section in their paper.
        \item The paper should point out any strong assumptions and how robust the results are to violations of these assumptions (e.g., independence assumptions, noiseless settings, model well-specification, asymptotic approximations only holding locally). The authors should reflect on how these assumptions might be violated in practice and what the implications would be.
        \item The authors should reflect on the scope of the claims made, e.g., if the approach was only tested on a few datasets or with a few runs. In general, empirical results often depend on implicit assumptions, which should be articulated.
        \item The authors should reflect on the factors that influence the performance of the approach. For example, a facial recognition algorithm may perform poorly when image resolution is low or images are taken in low lighting. Or a speech-to-text system might not be used reliably to provide closed captions for online lectures because it fails to handle technical jargon.
        \item The authors should discuss the computational efficiency of the proposed algorithms and how they scale with dataset size.
        \item If applicable, the authors should discuss possible limitations of their approach to address problems of privacy and fairness.
        \item While the authors might fear that complete honesty about limitations might be used by reviewers as grounds for rejection, a worse outcome might be that reviewers discover limitations that aren't acknowledged in the paper. The authors should use their best judgment and recognize that individual actions in favor of transparency play an important role in developing norms that preserve the integrity of the community. Reviewers will be specifically instructed to not penalize honesty concerning limitations.
    \end{itemize}

\item {\bf Theory assumptions and proofs}
    \item[] Question: For each theoretical result, does the paper provide the full set of assumptions and a complete (and correct) proof?
    \item[] Answer: \answerYes{} 
    \item[] Justification: Full proof is given in the appendix.
    \item[] Guidelines:
    \begin{itemize}
        \item The answer \answerNA{} means that the paper does not include theoretical results. 
        \item All the theorems, formulas, and proofs in the paper should be numbered and cross-referenced.
        \item All assumptions should be clearly stated or referenced in the statement of any theorems.
        \item The proofs can either appear in the main paper or the supplemental material, but if they appear in the supplemental material, the authors are encouraged to provide a short proof sketch to provide intuition. 
        \item Inversely, any informal proof provided in the core of the paper should be complemented by formal proofs provided in appendix or supplemental material.
        \item Theorems and Lemmas that the proof relies upon should be properly referenced. 
    \end{itemize}

    \item {\bf Experimental result reproducibility}
    \item[] Question: Does the paper fully disclose all the information needed to reproduce the main experimental results of the paper to the extent that it affects the main claims and/or conclusions of the paper (regardless of whether the code and data are provided or not)?
    \item[] Answer: \answerYes{} 
    \item[] Justification: Yes, detailed explanation is given both in the main text and appendix.
    \item[] Guidelines:
    \begin{itemize}
        \item The answer \answerNA{} means that the paper does not include experiments.
        \item If the paper includes experiments, a \answerNo{} answer to this question will not be perceived well by the reviewers: Making the paper reproducible is important, regardless of whether the code and data are provided or not.
        \item If the contribution is a dataset and\slash or model, the authors should describe the steps taken to make their results reproducible or verifiable. 
        \item Depending on the contribution, reproducibility can be accomplished in various ways. For example, if the contribution is a novel architecture, describing the architecture fully might suffice, or if the contribution is a specific model and empirical evaluation, it may be necessary to either make it possible for others to replicate the model with the same dataset, or provide access to the model. In general. releasing code and data is often one good way to accomplish this, but reproducibility can also be provided via detailed instructions for how to replicate the results, access to a hosted model (e.g., in the case of a large language model), releasing of a model checkpoint, or other means that are appropriate to the research performed.
        \item While NeurIPS does not require releasing code, the conference does require all submissions to provide some reasonable avenue for reproducibility, which may depend on the nature of the contribution. For example
        \begin{enumerate}
            \item If the contribution is primarily a new algorithm, the paper should make it clear how to reproduce that algorithm.
            \item If the contribution is primarily a new model architecture, the paper should describe the architecture clearly and fully.
            \item If the contribution is a new model (e.g., a large language model), then there should either be a way to access this model for reproducing the results or a way to reproduce the model (e.g., with an open-source dataset or instructions for how to construct the dataset).
            \item We recognize that reproducibility may be tricky in some cases, in which case authors are welcome to describe the particular way they provide for reproducibility. In the case of closed-source models, it may be that access to the model is limited in some way (e.g., to registered users), but it should be possible for other researchers to have some path to reproducing or verifying the results.
        \end{enumerate}
    \end{itemize}

\item {\bf Open access to data and code}
    \item[] Question: Does the paper provide open access to the data and code, with sufficient instructions to faithfully reproduce the main experimental results, as described in supplemental material?
    \item[] Answer: \answerNo{} 
    \item[] Justification: Because code is not hard to reproduce given the detailed descriptions, we do not supply code at submission time, but commit to releasing code upon acceptance.
    \item[] Guidelines:
    \begin{itemize}
        \item The answer \answerNA{} means that paper does not include experiments requiring code.
        \item Please see the NeurIPS code and data submission guidelines (\url{https://neurips.cc/public/guides/CodeSubmissionPolicy}) for more details.
        \item While we encourage the release of code and data, we understand that this might not be possible, so \answerNo{} is an acceptable answer. Papers cannot be rejected simply for not including code, unless this is central to the contribution (e.g., for a new open-source benchmark).
        \item The instructions should contain the exact command and environment needed to run to reproduce the results. See the NeurIPS code and data submission guidelines (\url{https://neurips.cc/public/guides/CodeSubmissionPolicy}) for more details.
        \item The authors should provide instructions on data access and preparation, including how to access the raw data, preprocessed data, intermediate data, and generated data, etc.
        \item The authors should provide scripts to reproduce all experimental results for the new proposed method and baselines. If only a subset of experiments are reproducible, they should state which ones are omitted from the script and why.
        \item At submission time, to preserve anonymity, the authors should release anonymized versions (if applicable).
        \item Providing as much information as possible in supplemental material (appended to the paper) is recommended, but including URLs to data and code is permitted.
    \end{itemize}

\item {\bf Experimental setting/details}
    \item[] Question: Does the paper specify all the training and test details (e.g., data splits, hyperparameters, how they were chosen, type of optimizer) necessary to understand the results?
    \item[] Answer: \answerYes{} 
    \item[] Justification: This is detailed in the appendix.
    \item[] Guidelines:
    \begin{itemize}
        \item The answer \answerNA{} means that the paper does not include experiments.
        \item The experimental setting should be presented in the core of the paper to a level of detail that is necessary to appreciate the results and make sense of them.
        \item The full details can be provided either with the code, in appendix, or as supplemental material.
    \end{itemize}

\item {\bf Experiment statistical significance}
    \item[] Question: Does the paper report error bars suitably and correctly defined or other appropriate information about the statistical significance of the experiments?
    \item[] Answer: \answerYes{} 
    \item[] Justification: The crucial experiments were done using linear regression which has a closed form solution.
    \item[] Guidelines:
    \begin{itemize}
        \item The answer \answerNA{} means that the paper does not include experiments.
        \item The authors should answer \answerYes{} if the results are accompanied by error bars, confidence intervals, or statistical significance tests, at least for the experiments that support the main claims of the paper.
        \item The factors of variability that the error bars are capturing should be clearly stated (for example, train/test split, initialization, random drawing of some parameter, or overall run with given experimental conditions).
        \item The method for calculating the error bars should be explained (closed form formula, call to a library function, bootstrap, etc.)
        \item The assumptions made should be given (e.g., Normally distributed errors).
        \item It should be clear whether the error bar is the standard deviation or the standard error of the mean.
        \item It is OK to report 1-sigma error bars, but one should state it. The authors should preferably report a 2-sigma error bar than state that they have a 96\% CI, if the hypothesis of Normality of errors is not verified.
        \item For asymmetric distributions, the authors should be careful not to show in tables or figures symmetric error bars that would yield results that are out of range (e.g., negative error rates).
        \item If error bars are reported in tables or plots, the authors should explain in the text how they were calculated and reference the corresponding figures or tables in the text.
    \end{itemize}

\item {\bf Experiments compute resources}
    \item[] Question: For each experiment, does the paper provide sufficient information on the computer resources (type of compute workers, memory, time of execution) needed to reproduce the experiments?
    \item[] Answer: \answerYes{} 
    \item[] Justification: Please see appendix.
    \item[] Guidelines:
    \begin{itemize}
        \item The answer \answerNA{} means that the paper does not include experiments.
        \item The paper should indicate the type of compute workers CPU or GPU, internal cluster, or cloud provider, including relevant memory and storage.
        \item The paper should provide the amount of compute required for each of the individual experimental runs as well as estimate the total compute. 
        \item The paper should disclose whether the full research project required more compute than the experiments reported in the paper (e.g., preliminary or failed experiments that didn't make it into the paper). 
    \end{itemize}
    
\item {\bf Code of ethics}
    \item[] Question: Does the research conducted in the paper conform, in every respect, with the NeurIPS Code of Ethics \url{https://neurips.cc/public/EthicsGuidelines}?
    \item[] Answer: \answerYes{}{} 
    \item[] Justification: Yes. We have reviewed the code of ethics.
    \item[] Guidelines:
    \begin{itemize}
        \item The answer \answerNA{} means that the authors have not reviewed the NeurIPS Code of Ethics.
        \item If the authors answer \answerNo, they should explain the special circumstances that require a deviation from the Code of Ethics.
        \item The authors should make sure to preserve anonymity (e.g., if there is a special consideration due to laws or regulations in their jurisdiction).
    \end{itemize}

\item {\bf Broader impacts}
    \item[] Question: Does the paper discuss both potential positive societal impacts and negative societal impacts of the work performed?
    \item[] Answer: \answerNo{} 
    \item[] Justification: We do not feel there are significant immediate negative societal impacts of this work.
    \item[] Guidelines:
    \begin{itemize}
        \item The answer \answerNA{} means that there is no societal impact of the work performed.
        \item If the authors answer \answerNA{} or \answerNo, they should explain why their work has no societal impact or why the paper does not address societal impact.
        \item Examples of negative societal impacts include potential malicious or unintended uses (e.g., disinformation, generating fake profiles, surveillance), fairness considerations (e.g., deployment of technologies that could make decisions that unfairly impact specific groups), privacy considerations, and security considerations.
        \item The conference expects that many papers will be foundational research and not tied to particular applications, let alone deployments. However, if there is a direct path to any negative applications, the authors should point it out. For example, it is legitimate to point out that an improvement in the quality of generative models could be used to generate Deepfakes for disinformation. On the other hand, it is not needed to point out that a generic algorithm for optimizing neural networks could enable people to train models that generate Deepfakes faster.
        \item The authors should consider possible harms that could arise when the technology is being used as intended and functioning correctly, harms that could arise when the technology is being used as intended but gives incorrect results, and harms following from (intentional or unintentional) misuse of the technology.
        \item If there are negative societal impacts, the authors could also discuss possible mitigation strategies (e.g., gated release of models, providing defenses in addition to attacks, mechanisms for monitoring misuse, mechanisms to monitor how a system learns from feedback over time, improving the efficiency and accessibility of ML).
    \end{itemize}
    
\item {\bf Safeguards}
    \item[] Question: Does the paper describe safeguards that have been put in place for responsible release of data or models that have a high risk for misuse (e.g., pre-trained language models, image generators, or scraped datasets)?
    \item[] Answer: \answerNA{} 
    \item[] Justification: No such risk.
    \item[] Guidelines:
    \begin{itemize}
        \item The answer \answerNA{} means that the paper poses no such risks.
        \item Released models that have a high risk for misuse or dual-use should be released with necessary safeguards to allow for controlled use of the model, for example by requiring that users adhere to usage guidelines or restrictions to access the model or implementing safety filters. 
        \item Datasets that have been scraped from the Internet could pose safety risks. The authors should describe how they avoided releasing unsafe images.
        \item We recognize that providing effective safeguards is challenging, and many papers do not require this, but we encourage authors to take this into account and make a best faith effort.
    \end{itemize}

\item {\bf Licenses for existing assets}
    \item[] Question: Are the creators or original owners of assets (e.g., code, data, models), used in the paper, properly credited and are the license and terms of use explicitly mentioned and properly respected?
    \item[] Answer: \answerYes{} 
    \item[] Justification: Yes, all assets are properly credited or copyright of the authors.
    \item[] Guidelines:
    \begin{itemize}
        \item The answer \answerNA{} means that the paper does not use existing assets.
        \item The authors should cite the original paper that produced the code package or dataset.
        \item The authors should state which version of the asset is used and, if possible, include a URL.
        \item The name of the license (e.g., CC-BY 4.0) should be included for each asset.
        \item For scraped data from a particular source (e.g., website), the copyright and terms of service of that source should be provided.
        \item If assets are released, the license, copyright information, and terms of use in the package should be provided. For popular datasets, \url{paperswithcode.com/datasets} has curated licenses for some datasets. Their licensing guide can help determine the license of a dataset.
        \item For existing datasets that are re-packaged, both the original license and the license of the derived asset (if it has changed) should be provided.
        \item If this information is not available online, the authors are encouraged to reach out to the asset's creators.
    \end{itemize}

\item {\bf New assets}
    \item[] Question: Are new assets introduced in the paper well documented and is the documentation provided alongside the assets?
    \item[] Answer: \answerNA{} 
    \item[] Justification: At the time of submission, the paper does not release new assets.
    \item[] Guidelines:
    \begin{itemize}
        \item The answer \answerNA{} means that the paper does not release new assets.
        \item Researchers should communicate the details of the dataset\slash code\slash model as part of their submissions via structured templates. This includes details about training, license, limitations, etc. 
        \item The paper should discuss whether and how consent was obtained from people whose asset is used.
        \item At submission time, remember to anonymize your assets (if applicable). You can either create an anonymized URL or include an anonymized zip file.
    \end{itemize}

\item {\bf Crowdsourcing and research with human subjects}
    \item[] Question: For crowdsourcing experiments and research with human subjects, does the paper include the full text of instructions given to participants and screenshots, if applicable, as well as details about compensation (if any)? 
    \item[] Answer: \answerNA{}{} 
    \item[] Justification: Does not involve crowdsourcing.
    \item[] Guidelines:
    \begin{itemize}
        \item The answer \answerNA{} means that the paper does not involve crowdsourcing nor research with human subjects.
        \item Including this information in the supplemental material is fine, but if the main contribution of the paper involves human subjects, then as much detail as possible should be included in the main paper. 
        \item According to the NeurIPS Code of Ethics, workers involved in data collection, curation, or other labor should be paid at least the minimum wage in the country of the data collector. 
    \end{itemize}

\item {\bf Institutional review board (IRB) approvals or equivalent for research with human subjects}
    \item[] Question: Does the paper describe potential risks incurred by study participants, whether such risks were disclosed to the subjects, and whether Institutional Review Board (IRB) approvals (or an equivalent approval/review based on the requirements of your country or institution) were obtained?
    \item[] Answer: \answerNA{}{} 
    \item[] Justification: The paper does not involve crowdsourcing nor research with human subjects.
    \item[] Guidelines:
    \begin{itemize}
        \item The answer \answerNA{} means that the paper does not involve crowdsourcing nor research with human subjects.
        \item Depending on the country in which research is conducted, IRB approval (or equivalent) may be required for any human subjects research. If you obtained IRB approval, you should clearly state this in the paper. 
        \item We recognize that the procedures for this may vary significantly between institutions and locations, and we expect authors to adhere to the NeurIPS Code of Ethics and the guidelines for their institution. 
        \item For initial submissions, do not include any information that would break anonymity (if applicable), such as the institution conducting the review.
    \end{itemize}

\item {\bf Declaration of LLM usage}
    \item[] Question: Does the paper describe the usage of LLMs if it is an important, original, or non-standard component of the core methods in this research? Note that if the LLM is used only for writing, editing, or formatting purposes and does \emph{not} impact the core methodology, scientific rigor, or originality of the research, declaration is not required.
    \item[] Answer: \answerNo{} 
    \item[] Justification: LLMs were used to refine writing but not other reasons.
    \item[] Guidelines:
    \begin{itemize}
        \item The answer \answerNA{} means that the core method development in this research does not involve LLMs as any important, original, or non-standard components.
        \item Please refer to our LLM policy in the NeurIPS handbook for what should or should not be described.
    \end{itemize}

\end{enumerate}